\newtheorem{theorem}{Theorem}
\newtheorem{claim}{Claim}
\newtheorem{example}{Example}
\newtheorem{lemma}{Lemma}
\newtheorem{remark}{Remark}
\numberwithin{equation}{section}
\newcommand{\calR}{\ensuremath{\mathcal{R}}}
\newcommand{\calU}{\ensuremath{\mathcal{U}}}
\newcommand{\norm}[1]{\|{#1}\|}
\newcommand{\abs}[1]{\left|{#1}\right|}
\newcommand{\set}[1]{\left\{{#1}\right\}}
\newcommand{\dotprod}[2]{\langle#1,#2\rangle}
\newcommand{\est}[1]{\widehat{#1}}
\newcommand{\expec}{\ensuremath{\mathbb{E}}}
\newcommand{\matR}{\ensuremath{\mathbb{R}}}
\newcommand{\prob}{\ensuremath{\mathbb{P}}}
\newcommand{\mb}[1]{\mbox{\boldmath$#1$}}
\newcommand{\wtil}{\ensuremath{\widetilde{w}}}
\newcommand{\util}{\ensuremath{\widetilde{u}}}
\newcommand{\rtil}{\ensuremath{\widetilde{r}}}
\newcommand{\uhat}{\ensuremath{{\widehat{u}}}}
\newcommand{\Uhat}{\ensuremath{{\widehat{U}}}}
\newcommand{\Vhat}{\ensuremath{{\widehat{V}}}}
\newcommand{\Sighat}{\ensuremath{{\widehat{\Sigma}}}}
\newcommand{\sighat}{\ensuremath{{\widehat{\sigma}}}}
\newcommand{\Dbar}{\ensuremath{\bar{D}}}
\newcommand{\Hssym}{\ensuremath{H_{\text{ss}}}}
\newcommand{\Deltil}{\ensuremath{\widetilde{\Delta}}}
\newcommand{\Ztil}{\ensuremath{\widetilde{Z}}}
\newcommand{\pitil}{\ensuremath{\widetilde{\pi}}}
\newcommand{\minsep}{\ensuremath{\rho}}
\newcommand{\uinferr}{\ensuremath{\Upsilon}}
\newcounter{ale}
\newenvironment{liste}{\begin{itemize}}{\end{itemize}}
\newcommand{\aliste}{\begin{liste} \setcounter{ale}{1}}
\newcommand{\zliste}{\end{liste}}
\newcommand\x{\times}
\newcommand\bigzero{\makebox(0,0){\text{\huge0}}}
\newcommand*{\bord}{\multicolumn{1}{c|}{}}
\newcommand{\ER}{\text{ Erd\H{o}s-R\'{e}nyi} }
\begin{document}

\title{Ranking and synchronization from pairwise measurements via SVD}

\author{Alexandre d'Aspremont\footnotemark[1], Mihai~Cucuringu\footnotemark[2] \footnotemark[4], 
Hemant Tyagi \footnotemark[3]}

\renewcommand{\thefootnote}{\fnsymbol{footnote}}
\footnotetext[1]{CNRS \& Ecole Normale Sup\'erieure, Paris, France. E-mail address: aspremon@ens.fr}
\footnotetext[2]{Department of Statistics and Mathematical Institute, University of Oxford, Oxford, UK. Email: mihai.cucuringu@stats.ox.ac.uk}
\footnotetext[3]{INRIA Lille-Nord Europe, Lille, France; MODAL project-team.  Email: hemant.tyagi@inria.fr}
\footnotetext[4]{The Alan Turing Institute, London, UK. This work was supported by EPSRC grant EP/N510129/1.}
\renewcommand{\thefootnote}{\arabic{footnote}}

\maketitle

\begin{abstract}
Given a measurement graph $G= (V,E)$ and an unknown signal $r \in \matR^n$, we investigate algorithms for recovering $r$ from pairwise measurements of the form $r_i - r_j$; $\set{i,j} \in E$. This problem arises in a variety of applications, such as ranking teams in sports data and time synchronization of distributed networks. 
Framed in the context of ranking, the task is to  recover the ranking of $n$ teams (induced by $r$) given a small subset of noisy pairwise rank offsets. We propose a simple SVD-based algorithmic pipeline for both the problem of time synchronization and ranking. We provide a detailed theoretical analysis in terms of robustness against both sampling sparsity and noise perturbations with outliers, using results from matrix perturbation and random matrix theory. Our theoretical findings are complemented by a detailed set of numerical experiments on both synthetic and real data, showcasing the competitiveness of our proposed algorithms with other  state-of-the-art methods.
\end{abstract}


\textbf{Keywords:} ranking, angular  synchronization, spectral algorithms, matrix perturbation theory, singular value decomposition, random matrix theory, low-rank matrix completion.

{
  \hypersetup{linkcolor=black}
  \tableofcontents
}

\section{Introduction}

Let $r = (r_1,\dots,r_n)^T \in \matR^n$ be an unknown signal and $G = ([n],E)$ be 
an undirected measurement graph. Given a subset of noisy pairwise measurements of the form $r_i-r_j$ for each $\set{i,j} \in E$, the goal is to estimate the original vector $r$. Clearly, this is only possible only up to a global shift. Moreover, when measurements are exact without any measurement noise, one can recover the strength vector $r$ if and only if the graph $G$ is connected, by simply considering a spanning tree, fixing the value of the root node, and traversing the tree while propagating the information by summing the given offsets. For simplicity, we assume the graph is connected, otherwise it is not possible to estimate the offset values between nodes belonging to different connected components of the graph. 

Instantiations of the above problem are ubiquitous in engineering, machine learning and  computer vision, and have received a great deal of attention in the recent literature. There are two main classes of applications where this problem arises.
\begin{itemize}
 \item \emph{Time synchronization of wireless networks.} 
 A popular application arises in engineering, and is known as time synchronization of distributed networks \cite{timeoneSync, timetwoSync}, where clocks measure noisy time offsets $r_i - r_j$, and the goal is to recover $r_1, \ldots, r_n \in \mathbb{R}$.
 \item \emph{Ranking.} A fundamental problem in information retrieval is that of recovering the ordering induced by the latent strengths or scores $r_1, \ldots, r_n \in \mathbb{R}$ of a set of $n$ players, that is best reflected by the given set of pairwise comparisons $r_i - r_j$. We refer the reader to \cite{syncRank} and references therein for a detailed overview.
\end{itemize}

A naive approach by sequentially propagating the information along spanning trees is doomed to fail in the presence of noise, due to accumulation of the errors. To this end, in order to increase the robustness to noise, one typically aims to simultaneously integrate all the pairwise measurements in a globally consistent framework. This line of thought appears in the literature in the context of the \textit{group synchronization} problem \cite{sync}, for recovering group elements from noisy pairwise measurements of their ratios. We briefly review the current literature in group synchronization in Section \ref{sec:relatedWork_sync}, which almost exclusively pertains to synchronization over compact groups. The problem we study in this paper can be construed as synchronization over the real line, hence a non-compact group. There exists a very rich literature on ranking, and it is beyond the scope of our work to provide an extensive review of it. Instead, in Section  \ref{sec:relatedWork_ranking}, we give a brief overview of the literature on ranking and time synchronization, and point out some of the methods that relate to our work. In particular, we focus on spectral methods and approaches that leverage low-rank matrix completion as a pre-processing step.

\paragraph{Contributions.} We propose \textsc{SVD-RS}, a simple spectral algorithm for ranking and synchronization from pairwise comparisons, along with a normalized version denoted \textsc{SVD-NRS}\footnote{By normalized, we mean that the measurement matrix is normalized by the degree matrix of the graph, see Section \ref{sec:SVD_algo_analysis}.}. We provide a detailed theoretical consistency analysis for both algorithms for a random measurement model (see Section \ref{sec:problemSetup}) in terms of robustness against sampling sparsity of the measurement graph and noise level. Additionally, we provide extensive numerical experiments on both synthetic and real data, showing that in certain noise and sparsity regimes, our proposed algorithms perform comparable or better than state-of-the-art methods. 

On the theoretical side, our specific contributions can be summarized as follows (see also Section \ref{subsec:summary_theo_results}).
\begin{itemize}
\item For \textsc{SVD-RS}, we provide $\ell_2$  and $\ell_{\infty}$ recovery guarantees for the \textbf{score} vector $r$ (see Theorem \ref{thm:score_rec_main_thm}). For instance, in the setting when $r_i = i$,  $\Omega(n \log n)$ measurements suffice for $\ell_2$ recovery, while $\Omega(n^{4/3} (\log n)^{2/3} )$ measurements suffice for $\ell_{\infty}$ recovery. 

\item The $\ell_{\infty}$ analysis of \textsc{SVD-RS} leads to guarantees for \textbf{rank} recovery in terms of the maximum displacement error between the recovered ranking and the ground truth, as in Theorem \ref{thm:rank_recov_infty}. For e.g., when $r_i=i$,  $\Omega(n^{4/3} (\log n)^{2/3} )$ measurements suffice.

\item For \textsc{SVD-NRS}, we provide in Theorem \ref{thm:score_rec_l2_svdn}, $\ell_2$ recovery guarantees for the \textbf{score} vector $r$, and leave the $\ell_{\infty}$ guarantees for future work, essentially by following a similar, though more intricate, pipeline. Similar to \textsc{SVD-RS}, for $r_i = i$, $\Omega(n \log n)$ measurements suffice for $\ell_2$ recovery.
\end{itemize}

We remark that all the above recovery results hold with high probability. 


%

\paragraph{Outline.} 
The remainder of this paper is organized as follows.
Section \ref{sec:relatedWork} is a brief survey of the relevant literature, with a focus on group synchronization and ranking.  Section \ref{sec:problemSetup} starts with the formal setup of the problem, presents the gist of our SVD-based approach along with the two algorithms \textsc{SVD-RS}, \textsc{SVD-NRS}, and summarizes our main theoretical results. Section \ref{sec:SVD_algo_analysis}  details and interprets our theoretical results, with the main steps of the proofs outlined in Section \ref{sec:SVD_proofs}. Section \ref{sec:matrixCompletion} discusses the low-rank matrix completion problem and its applicability in the setting of this paper. Section \ref{sec:num_experiments} contains numerical experiments on various synthetic and real data sets. Finally, Section \ref{sec:conclusion} summarizes our findings along with future research directions. The Appendix contains additional technical details for the theoretical results, as well as further numerical experiments.

\paragraph{Notation.}
Vectors and matrices are denoted by lower case and upper case letters respectively. 
For a matrix $A \in \matR^{m \times n}$ where $m \geq n$, we denote its singular values by 
$\sigma_1 \geq \dots \geq \sigma_n$ and the corresponding left (resp. right) singular vectors by $u_i$ (resp. $v_i$).  
$\norm{A}_2$ denotes the spectral norm (largest singular value), $\norm{A}_{*}$ denotes the nuclear norm (sum of the singular values), and $\norm{A}_{\max} := \max_{i,j} \abs{A_{ij}}$ denotes the max-norm of $A$.
$\calR(A)$ denotes the range space of its columns. We denote $ e $ to be the all ones column vector. 
The symbol $\calU$ is used to denote the uniform distribution. For positive numbers $a,b$, 
we denote $a \lesssim b$ (resp. $a \gtrsim b$) to mean that there exists an absolute constant $C > 0$ such that $a \leq C b$ (resp. $a \geq C b$).

\section{Related work}   \label{sec:relatedWork}
This section is a brief survey of the relevant literature for the tasks we consider out in this paper. The first part focuses on the group synchronization problem, a relevant instance of it being that of synchronization of clocks arising often in engineering, while the second part surveys the very rich ranking literature, with an emphasis on spectral methods and approaches that leverage low-rank matrix completion.

\subsection{Synchronization literature}   \label{sec:relatedWork_sync}
Much of the engineering literature has focused on  least-squares approaches for solving the time synchronization problem. In its simplest terms, the approach can be formulated as follows. Let $m = |E|$ denote the number of edges in $G$, and $B$ denote the edge-vertex incidence matrix of size $m \times n$. For $l = \set{i,j} \in E$, with $i < j$, the entries $B_{li}$ and $B_{lj}$ are given by $B_{li} = 1$, $B_{lj} = -1$. If $i' \notin \set{i,j}$ then $B_{l i'} = 0$. 
Let $w \in \matR^m$ encode the pairwise rank measurements $C_{ij}$, for all edges $\set{i,j} \in E$. The least-squares solution to the ranking problem can be obtained by solving 
\begin{equation}
	\underset{ x \in \mathbb{R}^n }{\text{ minimize } } \;\; || B x - w ||_2^2, 
\label{rank_LS}
\end{equation} 
where $B$ denotes the design matrix with two non-zero entries per row corresponding to the edge $\set{i,j}$ indexed by $l$.
In a related line of work, Hirani et al. \cite{hirani2010least} show that the problem of least-squares ranking on graphs has far-reaching rich connections with seemingly unrelated areas, such as spectral graph theory and multilevel methods for graph Laplacian systems, Hodge decomposition theory and random clique complexes in topology. 


The problem we consider in this paper can also be framed in the context of the \textit{group synchronization} problem, of finding group elements from noisy measurements of their ratios. 
For example,  synchronization over the special orthogonal group $SO(d)$ 
consists of estimating a set of  $n$ unknown $d \times d$  matrices $R_1,\ldots,R_n \in $ SO($d$) from noisy measurements of a subset of the pairwise ratios $R_i R_j^{-1}$ via
\begin{align}
  \underset{R_1,\ldots,R_n \in SO(d)}{\text{minimize}}  \sum_{\set{i,j} \in E}^{ } w_{ij} \|  R_i^{-1} R_j - R_{ij} \|_{F}^{2},  
\end{align}
where $||\cdot||_F$ denotes the Frobenius norm, and $w_{ij}$ are non-negative weights representing the confidence in the  noisy pairwise measurements $R_{ij}$. Spectral and  semidefinite programming (SDP)
relaxations for solving an instance of the above synchronization problem were originally introduced and analyzed by Singer \cite{sync},  
in the context of angular synchronization over the group SO(2) of planar rotations. 
There, one is asked to estimate $n$ unknown angles $\theta_1,\ldots,\theta_n \in [0,2\pi)$ given $m$ noisy measurements of their offsets $\theta_i - \theta_j \mod 2\pi$. The difficulty of the problem is amplified, on one hand, by the amount of \textbf{noise} in the offset measurements, and on the other hand by \textbf{sparsity} - the fact that $m \ll {n \choose 2}$, i.e., only a very small subset of all possible pairwise offsets are measured. In general, one may consider other groups $\mathcal{G}$ (such as SO($d$), O($d$)) for which there are available noisy measurements $g_{ij}$ of ratios between the group elements
\begin{equation}
g_{ij} = g_i g_j^{-1}, g_i, g_j \in \mathcal{G}.  
\end{equation}
The set $E$ of pairs $\set{i,j}$ for which a ratio of group elements is available can be realized as the edge set of a graph $G=(V,E)$, $|V|=n, |E|=m$, with vertices corresponding to the group elements $g_1,\ldots,g_n$, and edges to the available pairwise measurements $ g_{ij} = g_i g_j^{-1}$. 
As long as the group $\mathcal{G}$ is compact and has a real or complex representation, one may construct a real or Hermitian matrix (which may also be a matrix of matrices) where the element in the position $\set{i,j}$ is the matrix representation of the measurement $g_{ij}$ (possibly a matrix of size $1 \times 1$, as is the case for $\mathbb{Z}_2$), or the zero matrix if there is no direct measurement for the ratio of $g_i$ and $g_j$. For example, the rotation group SO(3) has a real representation using $3 \times 3$ rotation matrices, and the group SO(2) of planar rotations has a complex representation as points on the unit circle. 


The setting we consider in this paper, namely that of recovering points on the real line from a subset of noisy pairwise differences, is essentially synchronization over the non-compact group $\mathbb{R}$.
In recent work in the context of ranking from pairwise cardinal and ordinal measurements  \cite{syncRank}, the real line was compactified by wrapping it over the upper half of the unit circle,  making the problem amenable to standard synchronization over the compact group SO($2$). The estimated solution allowed for the recovery of the player rankings, after a post-processing step of modding out the best circular permutation. Note that the proposed approach only focused on recovering the individual rankings, and not the magnitude (i.e., strength) of each player, as we propose to do in this paper.

Very recently, Ozyesil et al. \cite{ozyesil2018synchronization} proposed an approach that allows one to consider the synchronization problem in the setting of non-compact groups. The authors leverage a compactification process, that relies on a mapping from a non-compact domain into a compact one, for  solving the synchronization problem. The contraction mapping enables one to transform measurements from a Cartan motion group to an associated compact group, whose synchronization solution provides a solution for the initial synchronization problem over the original domain.

\subsection{Ranking literature}   \label{sec:relatedWork_ranking}
Fogel et al. introduced \textsc{Serial-Rank} \cite{serialRank, fogel2016spectral}, a ranking algorithm that explicitly leverages the connection with seriation, a classical ordering problem that considers the setting where the user has available a similarity matrix (with $\pm 1$ entries) between a set of $n$ items, and assumes that there exists an underlying one-dimensional ordering such that the similarity between items decreases with their distance. 
The authors propose a spectral algorithm \textsc{Serial-Rank}, and show that under an Erd\"os-Renyi  random graph model and a given noise model, it recovers the underlying ranking with $\Omega(n^{3/2} \log^4 n)$ comparisons where the estimation error is bounded in the $\ell_{\infty}$ norm. 

A popular theme in the literature is the rank aggregation setting (where players meet in multiple matches), where there exists a latent probability matrix $P \in [0,1]^{n \times n}$, wherein $P_{ij}$ denotes the probability that player $i$ defeats player $j$, with $P_{ij} + P_{ji} = 1$. For each pair $(i,j)$, one observes $Y_{ij} \in \{0,1\} \sim \text{Bern}(P_{ij})$. A number of so-called Random Utility models have been considered in the literature, starting with the seminal work of Bradley-Terry-Luce (BTL) \cite{BradleyTerry1952}, which is by far the most popular model considered. In the most basic version of the BTL model, the probability that player $i$ beats player $j$ is given by $P_{ij} = \frac{w_i}{w_i + w_j}$, where the vector $w \in \mathbb{R}_{+}^n$ is the parameter vector to be inferred from the data, with $w_i$ being a proxy for the score or strength associated to player $i$. In the Thurstone model \cite{RUMmodel}, $P_{ij} = \Phi(s_i - s_j)$, where $\Phi$ denotes the standard normal cumulative distribution function (CDF), and $s \in \mathbb{R}^n$ is the score vector.  
Negahban et al. proposed  \textsc{Rank-Centrality} in the context of the rank aggregation problem from multiple ranking systems, under a Bradley-Terry-Luce (BTL) model \cite{RankCentrality, Negahban_RankCentrality_2017}. Rank-Centrality is an iterative algorithm that estimates the ranking  scores from the stationary distribution of a certain random walk on the graph of players. Edges encode the outcome of pairwise comparisons, and are directed $i \xrightarrow{} j $ towards the winner, where the weight  captures the proportion of times $j$ defeated $i$ in case of multiple direct matches. 
The authors show that under some assumptions on the connectivity of the underlying graph, their algorithm estimates the underlying score vector of the BTL model with $\Omega(n \text{poly}(\log n))$ comparisons. 
Very recently, Agarwal et al. \cite{Accelerated_Spectral_Ranking} propose Accelerated Spectral Ranking, a provably faster spectral ranking algorithm in the setting of the multinomial logit (MNL) and BTL models. The authors considered a random walk that has a faster mixing time than the random walks associated with previous algorithms (including Rank-Centrality \cite{Negahban_RankCentrality_2017}), along with improved sample complexity bounds (of the order $n \text{poly} (\log n)$) for recovery of the MNL and BTL parameters.


Cucuringu \cite{syncRank} introduced \textsc{Sync-Rank}, formulating the problem of ranking with incomplete noisy information as an instance of the group synchronization problem over the group SO(2) of planar rotations. 
\textsc{Sync-Rank} starts by making the ansatz that items are embedded on the upper-half  of the unit circle in the complex plane, and transforms the pairwise input measurements into pairwise angle offsets. It then  solves the spectral or SDP relaxation of synchronization,  and finally chooses amongst the circular permutations of the ranking  induced by the synchronized solution by minimizing the number of upsets. The method was shown to compare favorably with a number 
of state-of-the-art methods, including \textsc{Rank-Centrality} \cite{RankCentrality}, \textsc{Serial-Rank} \cite{fogel2016spectral},  and a SVD-based ranking algorithm similar to the one we consider in this paper, but without the projection step and its normalized extension,  and without any theoretical guarantees (see Remark \ref{rem:comp_cucu_svd} for details). 
Along the same lines, and inspired by the group synchronization framework, Fanu\"el and Suykens \cite{FANUEL2017_JACHA} propose a certain deformation of the combinatorial Laplacian, in particular, the so-called dilation Laplacian whose spectrum is leveraged for ranking in directed networks of pairwise comparisons. The method is reported to perform well on both synthetic and real data, and enjoys the property that it can place an emphasis of the top-$k$ items in the ranking.

More recently, De Bacco et al. \cite{CaterinaDeBacco_Ranking} proposed \textsc{Spring-Rank}, an efficient physically-inspired algorithm, based on solving a linear system of equations, for the task of inferring hierarchical rankings in directed networks, which also comes with a statistical significance test for the inferred hierarchy. The model considered also incorporates the assumption that interactions are more likely to occur between individuals with similar ranks, which is sometimes the case in real applications, such as sport competitions. 
The authors compare \textsc{Spring-Rank} to a wealth of algorithms, including the above-mentioned \textsc{Sync-Rank} and \textsc{Serial-Rank}, in terms of an accuracy defined as the fraction of edges whose direction is consistent with the inferred ranking, and conclude that, on average across various synthetic and real data sets, \textsc{Spring-Rank} and \textsc{Sync-Rank} have the highest accuracy.

Volkovs and Zemel \cite{Volkovs_JMLR} consider two instances (in the unsupervised and  supervised settings) of the preference aggregation problem where the task is to combine multiple preferences (either in the form of binary comparisons or as score differences) over objects into a single consensus ranking. Both settings rely on a newly introduced Multinomial Preference model (MPM) which uses a multinomial generative process to model the observed preferences. 
Dalal et al. \cite{dalal2012multi} consider a multi-objective rank aggregation problem (where pairwise information is in the form of score differences), and rely on the framework of the combinatorial Hodge decomposition \cite{jiang2011statistical}. In particular, they first formulate the problem of reducing global inconsistencies and then propose techniques for identifying local observations which can reduce such global inconsistencies.


\paragraph{Low-rank matrix completion approaches for ranking.}
Gleich and Lim \cite{gleich2011rank} proposed a new method for ranking a set of items from pairwise observations in the form $r_i - r_j$. They first use matrix completion to fill in a partial skew-symmetric matrix, followed by a simple row-sum to recover the rankings. Using standard recovery results from matrix completion literature, it follows that in the absence of noise, and under some coherence assumptions on the score vector $r$, the true ranking can be recovered exactly from a random subset of pairs of size $\Omega(n \text{poly}(\log n))$.  In case the observations are given as a rating matrix (users and items), they show how to transform it into a pairwise comparison matrix involving score differences so that their method can be applied. 
Ye et al. \cite{ye2012robust} proposed a rank-minimization approach to aggregate the predicted confidence scores of multiple models. The authors cast the score fusion problem as that of finding a shared rank-2 pairwise relationship matrix that renders each of the original score matrices from the different models to be decomposed into the common rank-2 matrix and the sparse error component.

The recent work of  Levy et al \cite{levy2018ranking} relies on matrix completion in the context of the rank aggregation problem from a small subset of noisy pairwise comparisons, wherein the user observes $L$ repeated comparisons  between the same set of players. The authors work in the setting of the Bradley-Terry-Luce model (BTL) that assumes that a set of latent scores (or strengths) underlies all players, and each individual pairwise comparison is a probabilistic outcome, as a function of the underlying scores. The resulting pipeline is reported to improve over state-of-the-art in both simulated scenarios and real data. 
Prior to that, Kang et al.  \cite{kang2016top} also relied on matrix completion and proposed an algorithm for top-$N$ recommender systems. The authors filled in the user-item matrix based on a low-rank assumption, by  considering a nonconvex relaxation, as opposed to the usual nuclear  norm, and argued that it provides a better rank approximation and empirically leads to an accuracy superior to that of any state-of-the-art algorithm for the top-$N$ recommender problem, on a comprehensive set of experiments on real data sets.

Another relevant line of work is that of Massimino and Davenport \cite{massimino2013one}, who consider an adaptation of the one-bit matrix completion framework \cite{davenport20141} to the setting of pairwise comparison matrices. One observes  measurements $Y_{ij}  = \pm 1$ with probability $f(M_{ij})$, where $M_{ij} = r_i - r_j$, and $f(M_{ij}) = \mathbb{P}(M_{ij} > 0)$; for example, the authors consider $f(x) = (1 + e^{-x})^{-1}$.
On a related note, Yang and Wakin \cite{Yang_Wakin} considered the rank aggregation setting, extending the work of Gleich and Lim \cite{gleich2011rank} to the setup of non-transitive matrices, for which it does not necessarily hold true that $ Y_{i,j} =  Y_{i,k} +  Y_{k,j}, \forall i,j,k$, as is the case in the special setting when $ Y_{i,j} = r_i - r_j $ (and thus $Y = r e^T - e r^T$), for a score vector $r$. In particular, Yang and Wakin were interested in modeling and recovering $Y$ itself as opposed to the one-dimensional ranking, and introduced a model for non-transitive pairwise comparisons $Y_{ij} = r_i a_j - r_j a_i$, where, for example, $r_i$ could denote offensive strength of player $i$, and $a_j$ defensive strength of player $j$, thus giving  $Y_{ij}$ the interpretation of the anticipated margin of victory for player $i$ over player $j$. 
The authors then propose a low-rank matrix completion approach based on alternating minimization, along with  recovery guarantees for the estimate $ \hat{Y} $ of the form $|| \hat{Y} - Y ||_F \leq  \epsilon$, after $\log 1/ \epsilon$ iterations.  

The recent seminal work of Rajkumar and Agarwal \cite{rajkumar_agarwal_2016} considered the question of recovering rankings from $O(n \log n )$  comparisons between the items. Their proposed pipeline starts by mapping the input binary entries of the comparison matrix to a low-rank matrix. More specifically, a \textit{link function} is applied entry-wise to the input comparison matrix, which renders the resulting incomplete matrix to be of low rank. Standard low-rank matrix completion is applied to complete the matrix, followed by the inverse link function applied entry wise, which effectively maps the data back to binary pairwise measurements. Finally, the Copeland ranking procedure \cite{copeland} is applied, that ranks items by their Copeland scores, which is effectively given by the number of wins in the comparison  matrix after  thresholding the entries with respect to $0.5$. This approach can be seen as a generalization of the pipeline proposed by Gleich and Lim \cite{gleich2011rank}, in the sense that both lines of work rely on matrix completion followed by row-sum, the main difference being that  \cite{rajkumar_agarwal_2016} uses the link function that maps the binary input matrix to a low-rank one. 
Borrowing tools from  matrix completion, restricted strong convexity and prior work on Random Utility Models, Negahban et al. \cite{negahban2017learning}  propose a nuclear norm regularized optimization problem for learning the  parameters of the MultiNomial Logit (MNL) model that best explains the data, typically given as user preference  in the form of choices and comparisons. The authors show that the convex relaxation for learning the MNL model is minimax optimal up to a logarithmic factor, by comparing its performance to a fundamental lower bound.



\section{Problem setup and main results}  \label{sec:problemSetup}
Our formal setup is as follows. Consider an undirected graph $G = ([n], E)$ and 
an unknown vector $r \in \matR^n$, where $r_i$ is the score associated with node $i$. In particular, 
$G$ is assumed to be a $G(n,p)$, i.e., the popular Erd\H{o}s-R\'{e}nyi random graph model, where edges between 
vertices are present independently with probability $p$. Moreover, we assume $r_i$ to be bounded uniformly, i.e., 
$r_i \in [0,M]$ for each $i$. Hence $r_i - r_j \in [-M,M]$ for all $i,j$. 
Requiring $r_i \geq 0$ is only for convenience and w.l.o.g. Importantly, $M$ is \emph{not assumed} to be known to the algorithm.

For each $\set{i,j} \in E$, we are given noisy, independent measurements $R_{ij}$ where
\begin{equation}
\hspace{-3mm}  
R_{ij} = \left\{
 \begin{array}{rl}
 r_i - r_j; & \text{ w.p } \eta \\
 \sim \calU [-M,M];  & \text{ w.p } (1-\eta).	
     \end{array}
   \right.
\label{ERoutliers}
\end{equation}
The parameter $\eta \in [0,1]$ controls the level of noise; we will denote the noise level explicitly by $\gamma = 1-\eta$. It is important to note that the parameters $\eta,p$ are not assumed to be known to the algorithm. This model will be referred to 
as the \textbf{Erd\H{o}s-R\'{e}nyi Outliers} model, or in short, ERO($n,p, \eta$).
It was also considered in previous works in ranking \cite{syncRank} and  angular synchronization \cite{sync}, and is amenable to a theoretical analysis. Alternatives to this noise model include the multiplicative uniform noise model, as considered in  \cite{syncRank}.
Our goal is two fold - we would like to recover\footnote{Clearly, this is possible only up to a global shift.} the score vector $r$, and also the ranking $\pi$ induced by $r$. 
\begin{remark}
The above model is only for the purpose of theoretically analyzing the statistical performance of our methods. Other statistical models could also be considered of course, and we will see that our methods are completely model-independent. 
\end{remark}

\subsection{Main idea: SVD-based spectral algorithm} \label{sec:main_idea}
We start by forming the measurement matrix $H \in \matR^{n \times n}$, where
\begin{itemize}
  \setlength\itemsep{0em}
\item $H_{ii}=0, \forall i = 1, \ldots, n$, 
\item $H_{ij} = R_{ij} $ and $H_{ji} = -R_{ij}$, if $\set{i,j} \in E$, and
\item $H_{ij} = 0$, if $\set{i,j} \notin E$.
\end{itemize}
If $G$ is the complete graph and the measurements are noise free, it holds true that $H = r e^T - e r^T$, which is a rank $2$ skew-symmetric matrix. Denoting $\alpha = \frac{r^T e}{n}$, one can verify that the two non-zero left singular vectors are $u_1 = e/\sqrt{n}, u_2 =  \frac{r-\alpha e}{|| r-\alpha e ||_2}$ with equal non-zero singular 
values $\sigma_1 = \sigma_2 = \norm{r-\alpha e}_2 \sqrt{n}$ (see Lemma \ref{lem:sing_vals_C}). 
Therefore, given any orthonormal basis for span$\set{u_1,u_2}$, we can simply find a vector orthonormal to $e/\sqrt{n}$ and which lies in span$\set{u_1,u_2}$; this will give us candidate solutions $\pm \frac{r-\alpha e}{|| r-\alpha e ||_2}$. 
Multiplying these candidates by $\sigma_1/\sqrt{n}$ recovers the scale information of $r$, while the sign 
ambiguity is resolved by selecting the candidate which is most consistent with the measurements. 
If one is interested only in ranking the items, then there is of course no need to estimate the scale above.

If $G$ is not complete (thus there are missing edges) and the measurements are noisy, then
$H$ will typically not be rank $2$, but as will see shortly, it can be thought of as a perturbation of the rank-2 matrix $\eta p (r e^T - e r^T)$. 
Proceeding similarly to the noiseless case, we can find the top two singular vectors of $H$ (denoted by $\hat u_1, \hat u_2$),  project $u_1 = e/\sqrt{n}$ on to span$\set{\hat u_1, \hat u_2}$ to obtain $\bar u_1$, and 
then find a unit vector in span$\set{\hat u_1, \hat u_2}$ which is orthonormal to 
$\bar u_1$ (call this $\tilde u_2$). If the noise level is not too large and we have sufficiently many 
edges in $G$, then one can imagine that $\tilde u_2$ will be close (up to a sign flip) to $u_2$, and $\est{\sigma_i} \approx \eta p \sigma_i$. Hence we can recover the ranking from $\tilde u_2$ after resolving the sign ambiguity (as mentioned earlier).  
This also implies that the centered version of $\frac{\est{\sigma_1} \tilde u_2}{\eta p \sqrt{n}}$ will be close (up to a sign flip) to $r - \alpha e$. But since we do not know $\eta,p$, we will resort to other data-driven approaches for recovering the scale parameter, see Section \ref{sec:scaleRecovery}. 

The above approach is outlined formally as Algorithm \ref{algo:SVD_Rank_sync}, namely \textsc{SVD-RS} (SVD-Ranking and Synchronization) which is a spectral method 
for recovering the ranks and scores (up to a global shift) of a collection of $n$ items. Additionally, 
we also consider a ``normalized'' version of \textsc{SVD-RS}, wherein $H$ is replaced by 
$\Hssym = \Dbar^{-1/2} H \Dbar^{-1/2}$ with $\Dbar$ being a diagonal matrix and 
$\Dbar_{ii} = \sum_{j=1}^n \abs{H_{ij}}$. Clearly, $\Hssym$ is also skew symmetric. Such a normalization step 
is particularly useful when the degree distribution is skewed, and is commonly employed in other problem domains 
involving spectral methods, such as clustering (see for eg. \cite{kunegis2010spectral}). The ensuing 
algorithm, namely \textsc{SVD-NRS} (SVD-Normalized Ranking and Synchronization), is outlined as Algorithm \ref{algo:SVDN_Rank_sync}.

\begin{algorithm}[!ht]
\caption{\textsc{SVD-RS}} \label{algo:SVD_Rank_sync} 
\begin{algorithmic}[1] 
\State \textbf{Input:} Measurement graph $G = ([n], E)$ and pairwise measurements $R_{ij}$ for $\set{i,j} \in E$. 
 
\State \textbf{Output:} Rank estimates: $\est{\pi}$ and score estimates $\est{r} \in \matR^n$.


\State Form measurement matrix $H \in \matR^{n \times n}$ using $R_{ij}$ as outlined in Section \ref{sec:main_idea}.

\State Find the top two left singular vectors (resp. singular values) of $H$, namely $\hat u_1, \hat u_2$ (resp. $\est{\sigma}_1,\est{\sigma}_2$).  


\State Obtain vector $\bar{u}_1$ as the  orthogonal projection of $u_1 = e/\sqrt{n}$ on to $\text{span}\set{\hat u_1, \hat u_2}$.

\State Obtain a unit vector $\util_2 \in \text{span}\set{\hat u_1, \hat u_2}$ such that $\util_2 \perp \bar{u}_1$. \label{eq:u2tilde_algo}

\State \textbf{Rank recovery:} Obtain ranking $\widetilde{\pi}$ induced by $\tilde{u}_2$ (up to global sign ambiguity) and reconcile its global sign by minimizing the number of upsets. Output ranking estimate $\est{\pi}$. \label{step:rank_recv_svd}

\State \textbf{Score recovery:} Use $\tilde u_2$, $H$ to recover the scale $\tau \in \mathbb{R}$ as in Section \ref{sec:scaleRecovery}. Output $\est{r} = \tau \util_2 -  \frac{e^T (\tau \util_2)}{n} e$. \label{step:score_recv_svd}


\end{algorithmic}
\end{algorithm}

%
\begin{algorithm}[!ht]
\caption{\textsc{SVD-NRS}} \label{algo:SVDN_Rank_sync} 
\begin{algorithmic}[1] 
\State \textbf{Input:} Measurement graph $G = ([n], E)$ and pairwise measurements $R_{ij}$ for $\set{i,j} \in E$. 
 
\State \textbf{Output:} Rank estimates: $\est{\pi}$ and score estimates $\est{r} \in \matR^n$.


\State Form $\Hssym = \Dbar^{-1/2} H \Dbar^{-1/2}$ with $H$ formed using $R_{ij}$ as outlined in Section \ref{sec:main_idea}. 
$\Dbar$ is a diagonal matrix with $\Dbar_{ii} = \sum_{j=1}^n \abs{H_{ij}}$.

\State Find the top two left singular vectors (resp. singular values) of $\Hssym$, namely $\hat u_1, \hat u_2$ (resp. $\est{\sigma}_1,\est{\sigma}_2$).  


\State Obtain vector $\bar{u}_1$ as the  orthogonal projection of $u_1 = \frac{\Dbar^{-1/2} e}{\norm{\Dbar^{-1/2} e}_2}$ on to $\text{span}\set{\hat u_1, \hat u_2}$.

\State Obtain a unit vector $\util_2 \in \text{span}\set{\hat u_1, \hat u_2}$ such that $\util_2 \perp \bar{u}_1$. \label{eq:u2tilde_algo_svdn}

\State \textbf{Rank recovery:} Obtain ranking $\widetilde{\pi}$ induced by $\Dbar^{1/2} \util_2$ (up to global sign ambiguity) and reconcile its global sign by minimizing the number of upsets. Output ranking estimate $\est{\pi}$. 

\State \textbf{Score recovery:} Use $\util_2$, $H$ to recover the scale $\tau \in \mathbb{R}$ as in Section \ref{sec:scaleRecovery}. Output $\est{r} = \tau \Dbar^{1/2} \util_2 -  \frac{e^T (\tau \Dbar^{1/2} \util_2)}{n} e$. \label{eq:score_rec_algo_svdn}


\end{algorithmic}
\end{algorithm}

\begin{remark}
Algorithms \ref{algo:SVD_Rank_sync} and \ref{algo:SVDN_Rank_sync} are model-independent, and only require as input (a) the measurement graph $G$, and (b) the corresponding pairwise measurements $R_{ij}$. 
\end{remark}

\begin{remark} \label{rem:comp_cucu_svd}
Note that the SVD-based algorithm introduced in \cite{syncRank} considers the four possible rankings induced by the singular vectors $\set{\pm \est{u}_1, \pm \est{u}_2}$, and chooses the one that minimizes the number of upsets. In contrast, SVD-RS first computes $\bar{u}_1$ as the orthogonal projection of the all ones vector $e$ on to $\text{span}\set{\est{u}_1, \est{u}_2}$, finds a vector $\util_2 \in \text{span}\set{\est{u}_1, \est{u}_2}$ perpendicular to $\bar{u}_1$, and finally extracts the ranking induced by $\util_2$. Furthermore, unlike \cite{syncRank}, SVD-NRS introduces an additional normalization step of the measurement matrix $H$ to alleviate  potential issues arising from skewed degree distributions. Such a normalization step is fairly common for spectral methods in general. For example, this operator was considered in \cite{asap2d} in the context of angular synchronization \cite{sync} and the graph realization problem, and \cite{Singer_Hautieng_VDM} who introduced Vector Diffusion Maps for nonlinear dimensionality reduction and explored the interplay with the Connection-Laplacian operator for vector fields over manifolds.
\end{remark}

\paragraph{Computational complexity.}  Our proposed SVD-based methods rely on computing the top two singular vectors of the input matrix which in practice can be done very efficiently. In general, the leading singular values and singular vectors can be computed  using iterative techniques at a typical cost of $O(p n^2)$, using for example, a simple power method (since all iterations only require a matrix-vector product at a cost of $ p n^2 $, where $p$ is the sampling probability). In the sparse setting, the computational cost is essentially linear in the number of nonzero entries in the matrix, thus making the approach scalable to large measurement graphs. The data-driven approach for scale recovery will also require $O(p n^2)$ cost, as will be seen in the next Section \ref{sec:scaleRecovery}.

\begin{remark}
The computational complexity of our proposed SVD-based algorithms is on par with that of other competitive methods, detailed and compared against in the numerical experiments Section \ref{sec:num_experiments}. For example, the Serial-Rank approach \cite{serialRank} first computes a shifted version of the similarity matrix $H H^T$, then considers the graph Laplacian, and computes its Fiedler eigenvector, much like our SVD-based approach, which can be done at a typical cost of $O(p n^2)$. The Row-Sum method computes the sum of the entries in each row of $H$, which has complexity $O(p n^2)$, though Gleich and Lim \cite{gleich2011rank} first rely on a matrix completion step to fill in a partial skew-symmetric matrix, before computing the row sums. However, performing a low-rank matrix completion step can be computationally expensive, depending on the approach used; in general, alternating minimization has been proven empirically to be one of the most accurate and efficient methods for matrix completion \cite{jain2012lowrank}. 
In the numerical experiments reported in Section \ref{sec:num_experiments}, we ran into computational issues when running matrix completion for values of $n$ significantly larger than $1000$, even for very sparse graphs (eg, $p = 0.01$).

For the Least-Squared-based ranking approach, also considered in \cite{syncRank}, the setup is the following. Each (potentially noisy) pairwise comparison $r_i - r_j$ contributes with a row to a design matrix $T$ of size $m \times n$, where $m$ ($\approx pn^2$) denotes the number of edges in the graph $G$. Each row of $T$ has only two non-zero elements, in particular, a $+1$ in column $i$, and a $-1$ in column $j$. To estimate the score vector $s$ of length $n$, we solve the linear  system $ T s = b $ in the least-squares sense, where $b$ is a column vector with the outputs of the pairwise comparisons. One potential approach for solving such linear least-squares problems of the form $T s = b$ is to use conjugate gradient iterations applied to the normal equations $T^T T s = T^T b$ (which can be achieved  without explicitly performing the expensive calculation of the matrix $T^T T$). Note that the rate of convergence of the gradient iterations is dictated by the condition number $\kappa$ of the matrix $T^TT$, and the number of iterations required for convergence is $O(\sqrt{\kappa})$ \cite{trefethen97}. For matrices that are sparse or have exploitable structure (like the sparse matrix $T$ with only two nonzero entries per row), each conjugate gradient iteration has complexity as low as $O(m)$.  

\end{remark}

%
\subsection{Recovering the global scale: data-driven procedures}
\label{sec:scaleRecovery}

To recover the scaling factor $\tau \in \matR$,  we consider two possible approaches: a first one based on a median estimator, and a second one that relies on a least-squares estimator. We briefly review both methods, and remark that we only consider the median-based estimator throughout the rest of the paper, due to its additional robustness. We compare results on two problem instances of varying levels of noise, and also compare with the ground truth, as detailed below.

\begin{figure}[!ht]
\captionsetup[subfigure]{skip=1pt}
\centering
\subcaptionbox[]{ Histogram of pairwise ratios in absolute value (on a log scale), for $\gamma = 0.02$.
}[ 0.324\columnwidth ]
{\includegraphics[width=0.324\columnwidth] {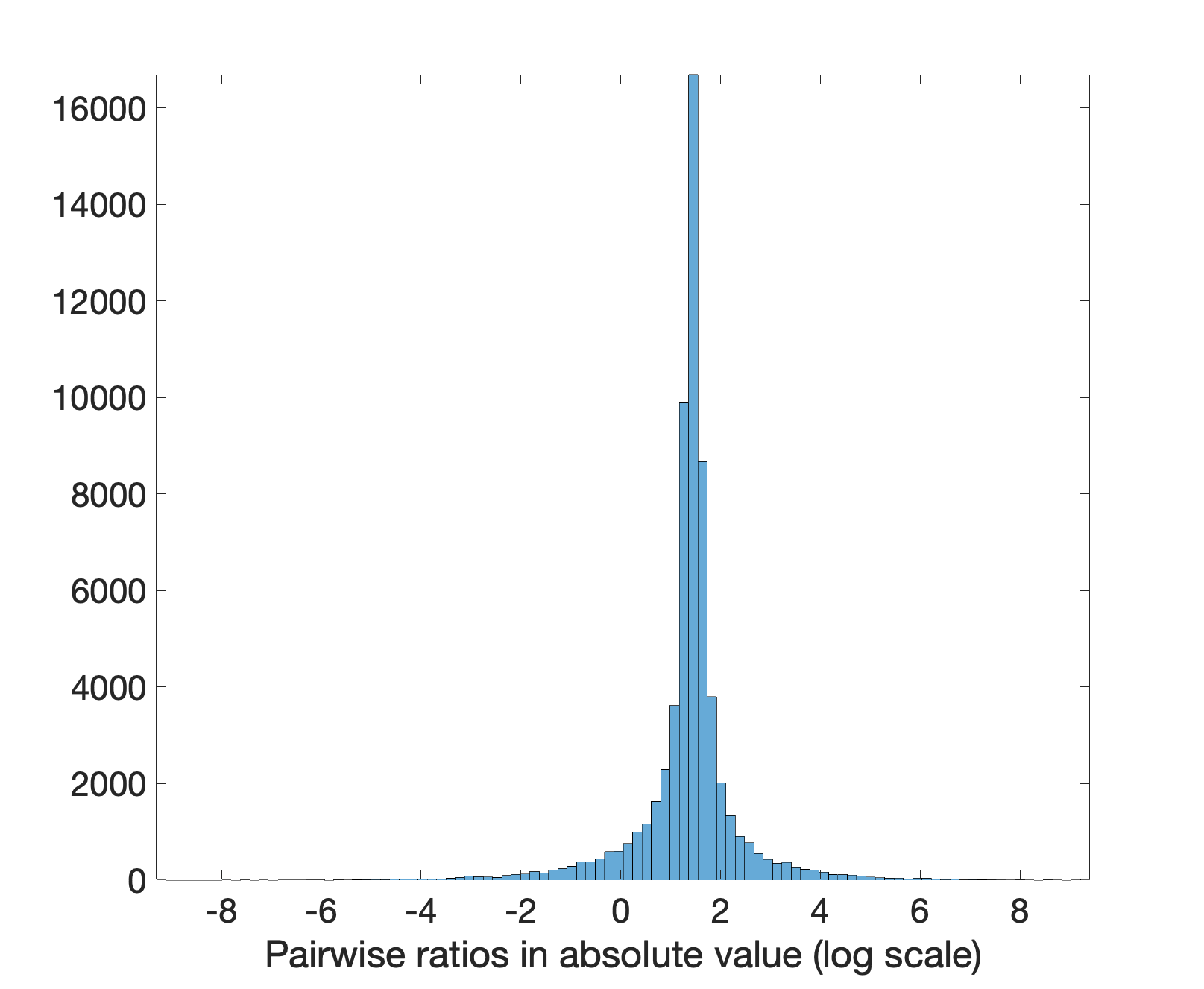} }
%
\subcaptionbox[]{ Histogram of pairwise ratios in absolute value, for $\gamma = 0.02$.
}[ 0.324\columnwidth ]
{\includegraphics[width=0.324\columnwidth] {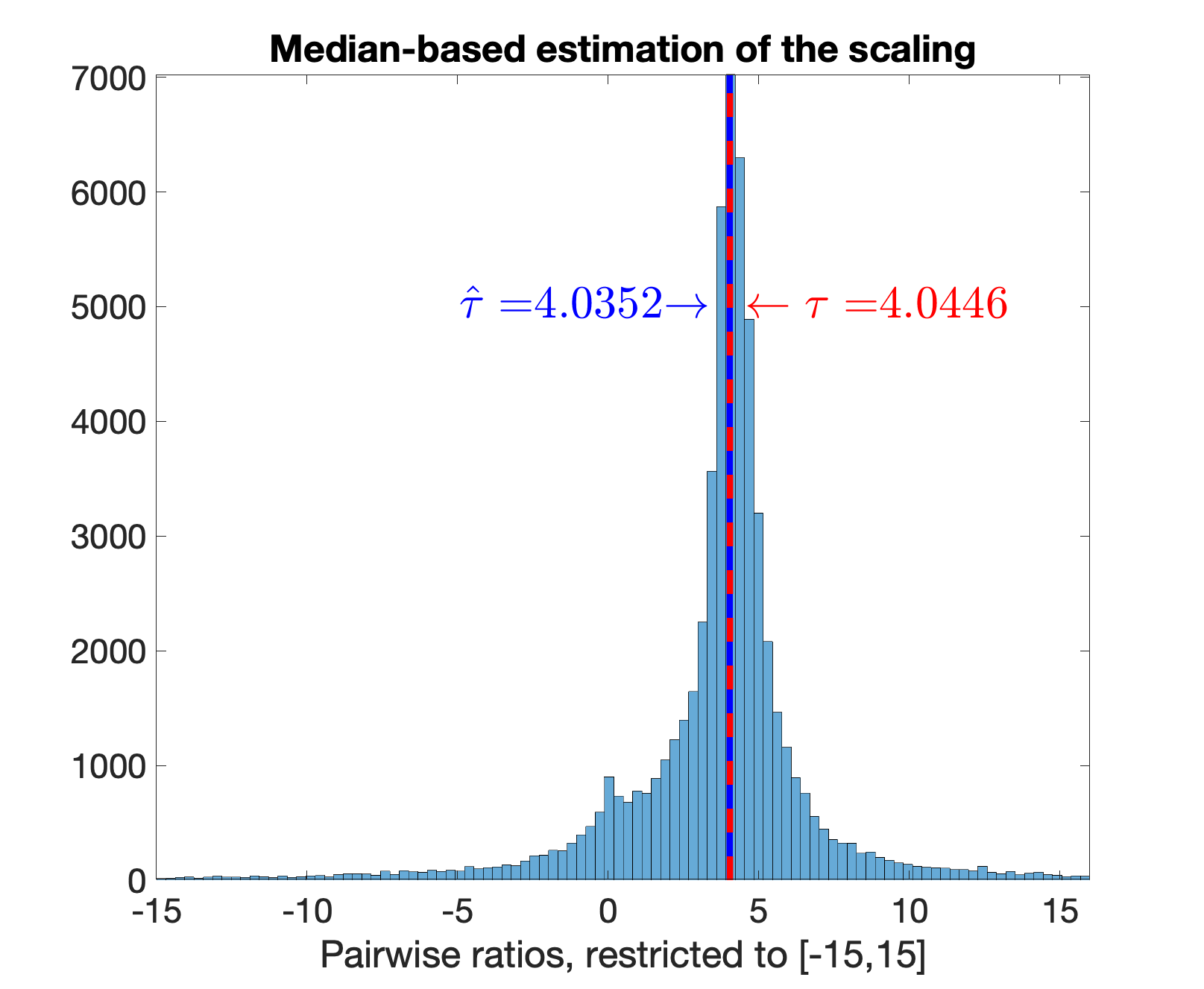} }
%
\subcaptionbox[]{ Regression-based estimator of the scaling factor, for $\gamma = 0.02$.
}[ 0.324\columnwidth ]
{\includegraphics[width=0.324\columnwidth] {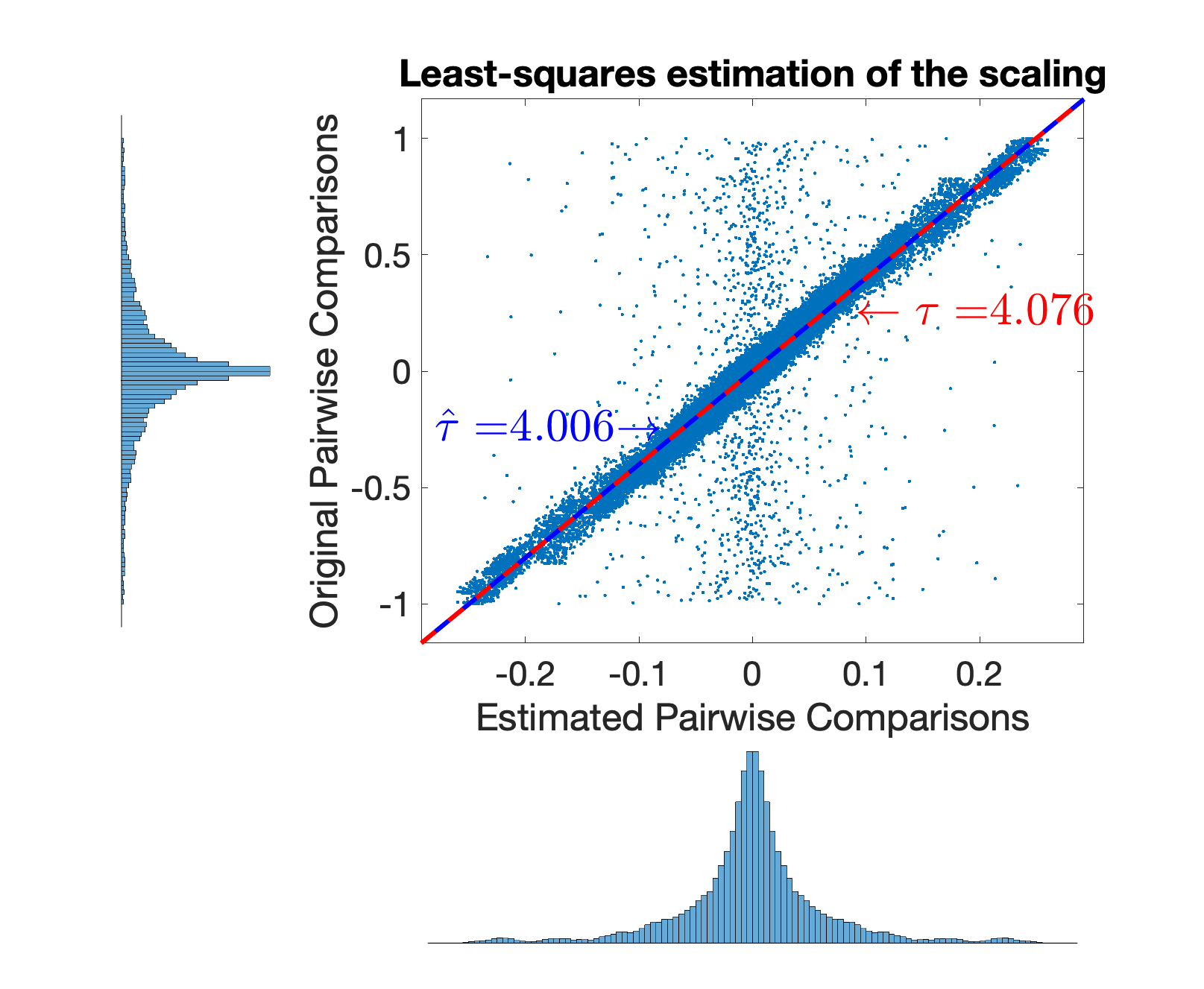} }
%
\subcaptionbox[]{ Histogram of pairwise ratios in absolute value  (on a log scale), for $\gamma = 0.30$.
}[ 0.324\columnwidth ]
{\includegraphics[width=0.324\columnwidth] {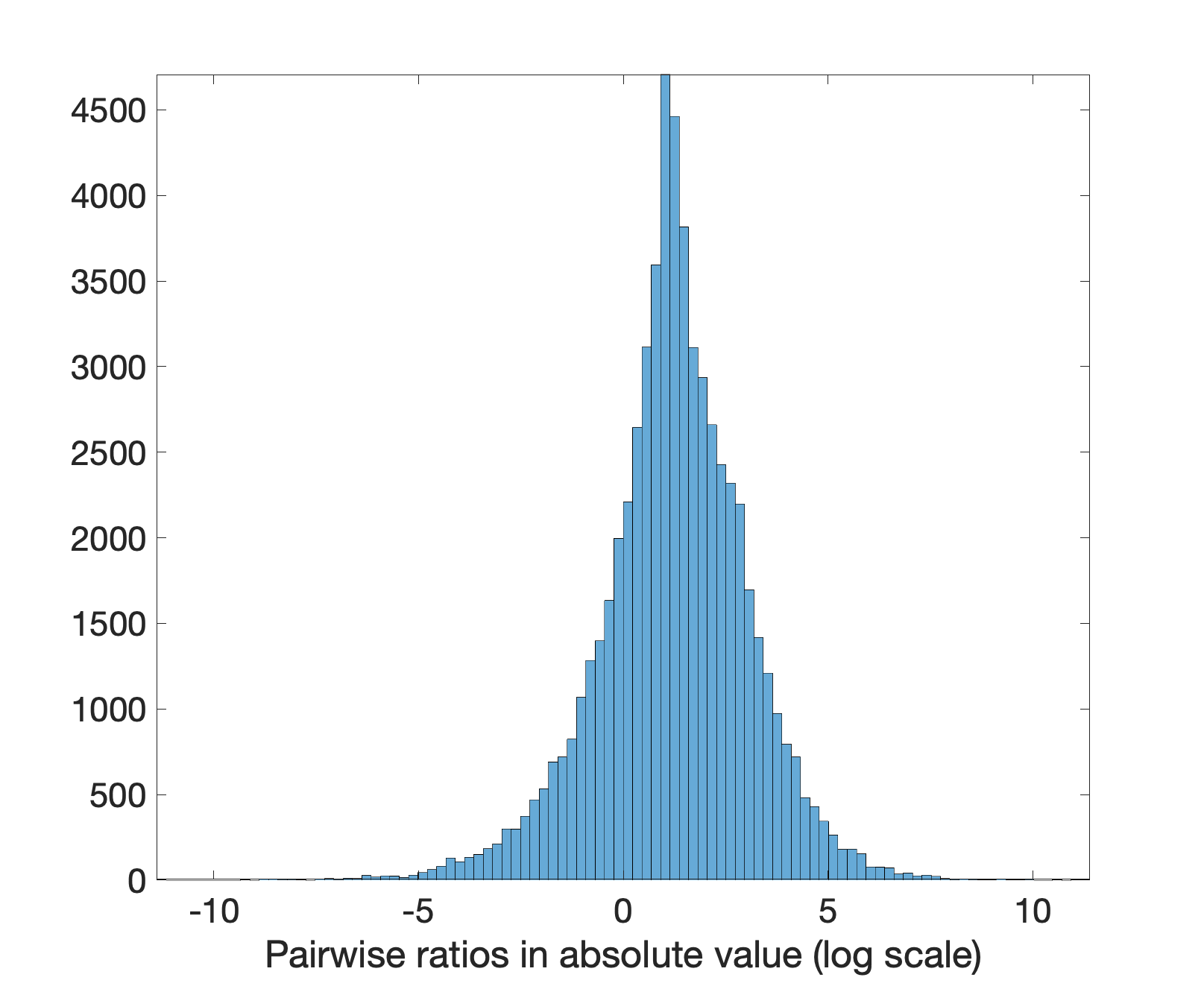} }
%
\subcaptionbox[]{ Histogram of pairwise ratios (zoom in [-15,15]), with the median estimator, for $\gamma = 0.30$.
}[ 0.324\columnwidth ]
{\includegraphics[width=0.324\columnwidth] {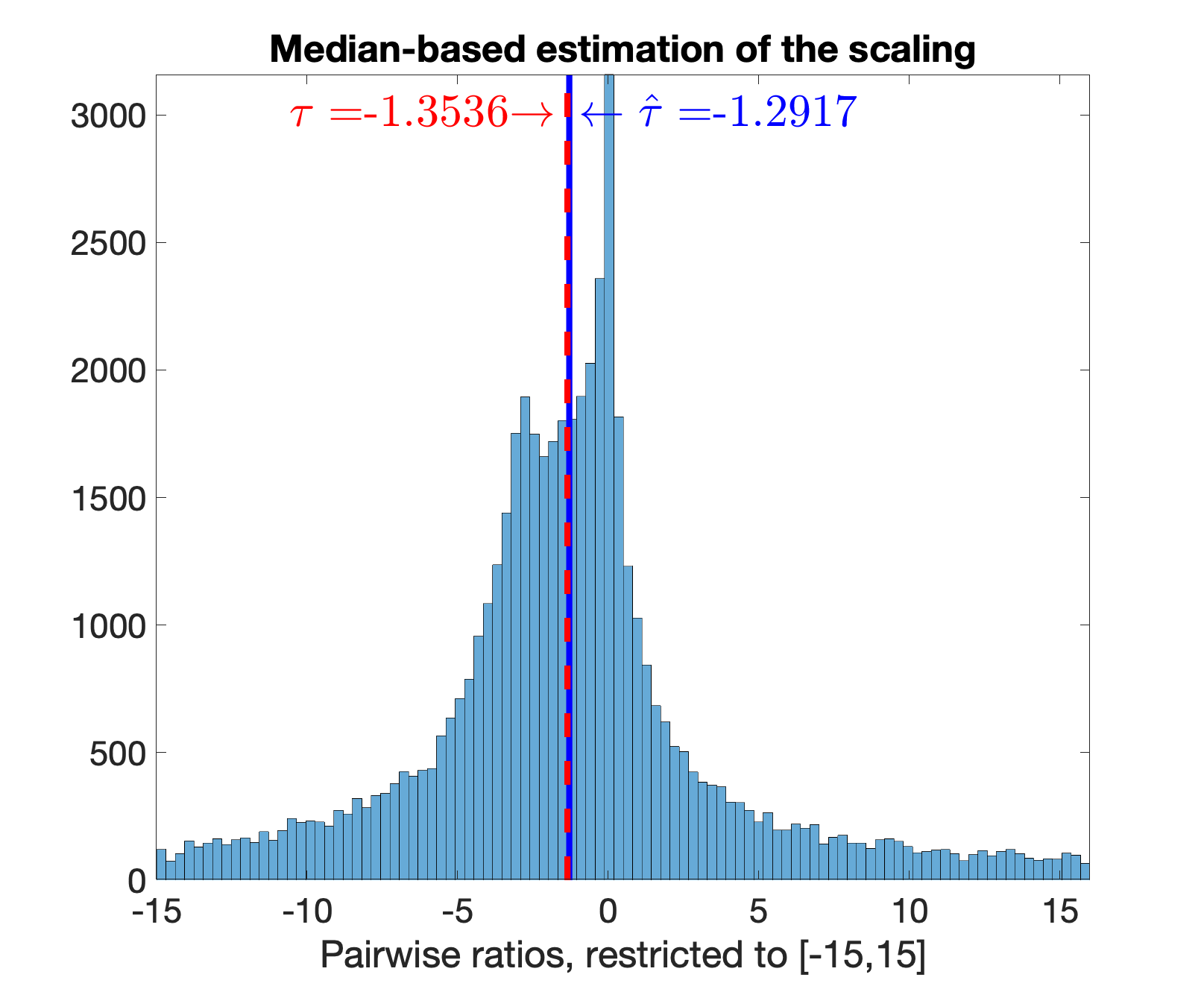} }
%
\subcaptionbox[]{ Regression-based estimator of the scaling factor, for $\gamma = 0.30$.
}[ 0.324\columnwidth ]
{\includegraphics[width=0.324\columnwidth] {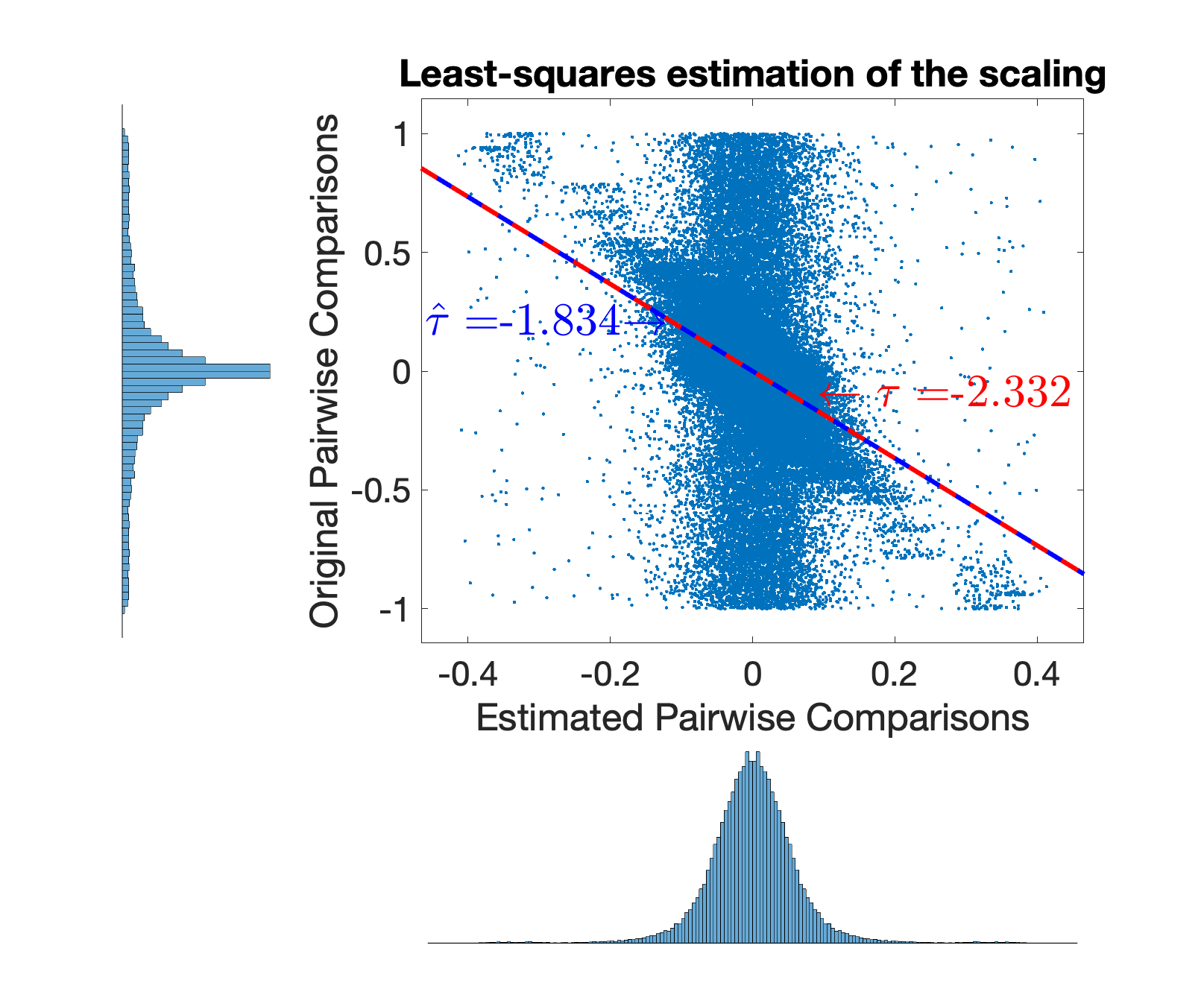} }
%
%
\captionsetup{width=1.01\linewidth}
\caption[Short Caption]{Two instances (top: low-noise regime $\gamma=0.02$, bottom: high-noise regime  $\gamma=0.30$) showcasing the median and least-squares based estimators for scale recovery (shown in blue), along with their respective ground truth values (shown in red). Note that the bottom instance is also reconciliating the global sign.  Scores are gamma distributed, with $n=500$, and edge density $p = 0.25$.
}
\label{fig:ScalingExample}
\end{figure}

To begin with, we define the entrywise ratio of offsets/comparisons as 

\vspace{-2mm} 
\begin{equation}
\hspace{-3mm}  
\Pi_{i,j}= \left\{
 \begin{array}{rl}
  \frac{ H_{i,j} }{ S_{i,j} },  & \set{i,j} \in E \\
 0; & \text{ otherwise,}
     \end{array}
   \right.
\label{eq:PI_ij}
\end{equation}
\vspace{-3mm} 

\noindent  where $H_{i,j}$ denotes the pairwise comparison measurement initially available, and $S_{i,j}$ denotes the recovered/estimated pairwise offset given by $S_{i,j} = s_i - s_j$, where $s_i$ denotes the recovered score by either SVD-RS and SVD-NRS. In particular, $s_i = (\util_2)_i$ for SVD-RS, and $s_i = (\Dbar^{1/2} \util_2)_i$ for SVD-NRS. 
%
%
This scale recovery method is broadly applicable to other ranking methods we compare against, and we further detail this in the experiments section. If $|E|=m$, we next consider the vector $\rho$ of length $m$ with the pairwise ratios, i.e. nonzero entries in the matrix $\Pi_{i,j}$ in \eqref{eq:PI_ij} corresponding to the edges in the measurement graph $G$. 
The median estimator is given by 
\begin{equation}   \label{eq:medianScaling}
 \hat{\tau}^{Median} = \text{median} \left( \rho_1, \ldots, \rho_m  \right), 
\end{equation}
as illustrated in the middle plot in Figure \ref{fig:ScalingExample}. 
From a computational perspective, the median-based estimator only requires forming the pairwise ratios for the $m$ edges in the graph, and computing the median element can be performed in linear time.

An alternative approach is to consider the least-squares formulation

\vspace{-2mm}
\begin{equation} \label{eq:fitScaling}
 \hat{\tau}^{LS} = \text{arg} \min_{\tau \in \mathbb{R} } \sum_{ \set{i,j} \in E} (H_{i,j}  - \tau S_{i,j})^2 = \frac{\sum_{ \set{i,j} \in E} H_{i,j}}{\sum_{ \set{i,j} \in E} S_{i,j}}, 
\end{equation}
\vspace{-2mm}

\noindent as shown in the rightmost column of Figure \ref{fig:ScalingExample}. In the low-noise regime, the least-squares estimator is close to the median-based estimator, and the discrepancy increases as the noise levels becomes larger. As shown in the left plot of  Figure \ref{fig:ScalingExample}, which contains a histogram of the nonzero entries $\Pi_{i,j} \neq 0, \{i,j\} \in E$ on a log-scale, gross outliers can affect the regression formulation \eqref{eq:fitScaling}. As expected, we have observed in our simulations that the median estimator \eqref{eq:medianScaling} leads to lower recovery errors compared to the regression-based estimator. In  
Figure \ref{fig:ScalingExample},  we show in red the recovered scaling when employing the ground truth data, in particular

\begin{equation*}   
\hspace{-3mm}  
\Pi_{i,j}^{\text{(ground truth)}}   = \left\{
\begin{array}{ll}
\frac{ r_i - r_j }{ S_{i,j} },  &  \set{i,j} \in E \\
 0; & \text{ otherwise,} 
\end{array}
\right.
\end{equation*} 
%
%
Similarly to   \eqref{eq:medianScaling} and \eqref{eq:fitScaling},  we compute $\tau^{Median}$, respectively, $\tau^{LS}$, construed as the ground truth, and shown in red in the middle and right plots of Figure \ref{fig:ScalingExample}. Note that, in both instances, the median-based estimator is closer to the ground truth than the least-squares based estimator. In the low-noise regime $ \gamma = 0.02 $ (top plot of Figure \ref{fig:ScalingExample}), $\hat{\tau}^{Median}$ is within $0.23\%$ from its ground true counterpart, while $\hat{\tau}^{LS}$ is within $1.7\%$. However, the performance gap widens as we increase the noise level. In the high noise regime $\gamma = 0.30$ (bottom plot of Figure \ref{fig:ScalingExample}), $\hat{\tau}^{Median}$ is off by $4.5\%$ with respect to the ground truth, while $\hat{\tau}^{LS}$ is off by $27.1 \%$ from its ground truth counterpart. Note that in the latter problem instance, the recovered scaling is negative, emphasizing that the global sign has been reconciled. Due to its increased robustness, for the remainder of the numerical experiments in this paper, we henceforth rely only on the median-based estimator to perform the scale recovery step.

\FloatBarrier

%
%
\subsection{Summary of main theoretical results} \label{subsec:summary_theo_results}
Broadly speaking, the bulk of our analysis revolves around bounding the distance between the unit norm vectors $u_2 = \frac{r-\alpha e}{\norm{r - \alpha e}_2}$ and $\util_2$ (up to a global sign), the latter obtained in SVD-RS (Step \ref{eq:u2tilde_algo}) and SVD-NRS (Step \ref{eq:u2tilde_algo_svdn}). For SVD-RS, these error estimates are stated in Theorem \ref{thm:main_thm_l2_ERO} (resp. Theorem \ref{thm:main_thm_linfty_ERO}) for the $\ell_2$ (resp. $\ell_{\infty}$) norm, while Theorem \ref{thm:main_svdn_l2_ERO} states the corresponding $\ell_2$ error bound for SVD-NRS. Below, we outline what this implies for rank and score recovery.
\begin{enumerate}
\item  Theorem \ref{thm:rank_recov_infty} gives guarantees for rank recovery (up to a global order reversal) for \textsc{SVD-RS} in terms of the maximum displacement error (defined in \eqref{eq:def_max_disp_rank}) between the ranking $\widetilde{\pi}$ obtained in Step \ref{step:rank_recv_svd} of Algorithm \ref{algo:SVD_Rank_sync}, and the ground truth ranking $\pi$. Denoting $\alpha = \frac{r^T e}{n}$ to be the average score, it states that if 
\begin{equation} \label{eq:mainres_tmp1}
\norm{r-\alpha e}_2 \gtrsim  \frac{M}{\eta \sqrt{p}} , \quad p \gtrsim \frac{\log n}{n},      
\end{equation}
and if $n$ is large enough, then $\norm{\widetilde{\pi} - \pi}_{\infty} \lesssim \frac{\norm{r - \alpha e}_2}{\minsep} \uinferr(n,M,\eta,p,\varepsilon,r)$ w.h.p. Here $\uinferr(n,M,\eta,p,\varepsilon,r)$ is the bound on the $\ell_{\infty}$ error between $\util_2$ and $u_2$ (up to a global sign) in Theorem \ref{thm:main_thm_linfty_ERO}. For concreteness, if $r_i = i$, we show in Example \ref{ex:linf_SVD_RS_rank} that for a fixed $\delta \in (0,1)$, if 
$p \gtrsim \frac{1}{\eta^2\delta^{2/3}}\left(\frac{(\log n)^{2/3}}{n^{2/3}}\right)$ and $n$ is large enough, then 
$\norm{\pitil - \pi}_{\infty} \lesssim \delta \frac{n}{\log n}$ w.h.p.
Note that this implies that $\Omega(n^{4/3} (\log n)^{2/3})$ measurements suffice.

\item  Theorem \ref{thm:score_rec_main_thm} provides $\ell_2$  and $\ell_{\infty}$ bounds (up to a global sign and shift) for \textsc{SVD-RS} for the score vector $r$ and a vector $\est{r}$ (derived from $\util_2$) where 
\begin{equation} \label{eq:svdrs_ideal_rhat}
 \tilde{w} = \frac{\est{\sigma_1}}{\eta p \sqrt{n}} \tilde u_2;  \quad \rtil := \tilde{w} - \frac{e^T \tilde{w}}{n} e.   
\end{equation}
If one knew the product $\eta p$, then \eqref{eq:svdrs_ideal_rhat} corresponds to choosing the scale $\tau = \frac{\est{\sigma_1}}{\eta p \sqrt{n}}$ in Step \ref{step:score_recv_svd}.
%
%
%
\begin{enumerate}
    \item  If the conditions in \eqref{eq:mainres_tmp1} hold, then for $\rtil$ as in \eqref{eq:svdrs_ideal_rhat}, $\exists \beta \in \set{-1,1}$ such that 
\begin{align*} 
 \norm{\rtil - \beta(r - \alpha e)}_2 
 \leq \frac{M}{\eta\sqrt{p}} + \sqrt{\frac{M \norm{r - \alpha e}_2}{\eta p^{1/2}}}
\end{align*}
holds w.h.p. If $r_i = i$, we show in Example \ref{ex:score_rec_main_thm} that
if $p \gtrsim \frac{\log n}{\eta^2\delta^2 n}$ for a fixed $\delta \in (0,1)$, then 
$ \norm{\rtil - \beta(r - \alpha e)}_2 \lesssim \frac{n^{3/2}}{(\log n)^{1/4}} \sqrt{\delta}$ w.h.p. Hence $\Omega(n \log n)$ measurements suffice for $\ell_2$ recovery.

\item  If the conditions in \eqref{eq:mainres_tmp1} hold and $n$ is large enough, then for $\est{r}$ as in \eqref{eq:svdrs_ideal_rhat}, there exists $\beta \in \set{-1,1}$ such that 
\begin{align*} 
 \norm{\rtil - \beta(r - \alpha e)}_{\infty} 
 \lesssim \norm{r-\alpha e}_2 [ C(n,M,\eta,p,r) + \sqrt{n} C^2(n,M,\eta,p,r)] + \frac{M(M-\alpha)}{\eta \sqrt{p} \norm{r-\alpha e}_2}, 
\end{align*}
holds with $ C(n,M,\eta,p,r)$ as defined in  \eqref{eq:C_linfty_ERO_exp}. When $r_i = i$, we show in Example \ref{ex:score_rec_main_thm} that if 
$p \gtrsim \frac{1}{\eta^2\delta^{2/3}}\left(\frac{(\log n)^{2/3}}{n^{2/3}}\right)$ for a fixed $\delta \in (0,1)$, and $n$ is large enough, then  $\norm{\rtil - \beta(r - \alpha e)}_{\infty} \lesssim \frac{n}{\log n} \delta$ holds w.h.p.
\end{enumerate}

\item In the same spirit as Theorem \ref{thm:score_rec_main_thm}, Theorem \ref{thm:score_rec_l2_svdn} provides a $\ell_2$ error bound (up to a global sign and shift) for \textsc{SVD-NRS} for the score vector $r$ and a vector $\rtil$ (derived from $\util_2$ in Step \ref{eq:u2tilde_algo_svdn}) where
\begin{equation} \label{eq:svdnrs_ideal_rhat}
 \tilde{w} = \frac{\est{\sigma_1}}{\eta p \norm{\Dbar^{-1/2} e}_2} \Dbar^{1/2} \tilde{u}_2;  \quad \rtil := \tilde{w} - \frac{e^T \tilde{w}}{n} e.   
\end{equation}
Again, if one knew the product $\eta p$, then \eqref{eq:svdnrs_ideal_rhat} corresponds to choosing the scale $\tau = \frac{\est{\sigma_1}}{\eta p \norm{\Dbar^{-1/2} e}_2}$ in Step \ref{step:score_recv_svd}.
While the conditions on $p$ are more convoluted than those in Theorem \ref{thm:score_rec_main_thm} -- in large part due to the normalization -- the essence is the same. In the setting where $r_i = i$, we can derive a simplified version of the theorem as shown in Example \ref{ex:score_rec_l2_svdn}. Let us now denote $\alpha = \frac{r^T (\expec[\Dbar])^{-1} e}{e^T (\expec[\Dbar])^{-1} e}$ and $\alpha' = \frac{e^T(r-\alpha e)}{n}$. Theorem \ref{thm:score_rec_l2_svdn} states that if 
$p \gtrsim  \frac{\log n}{\eta^2 \delta^2 n}$ for fixed $\delta \in (0,1)$,  then for $\rtil$ as in \eqref{eq:svdnrs_ideal_rhat}, there exists $\beta \in \set{-1,1}$ such that w.h.p,
$$\norm{\rtil - \beta(r - (\alpha + \alpha') e)}_2  \lesssim  n^{3/2} \eta^{1/4} \delta^{1/2} + \delta^{1/2} \frac{n^{3/2}}{(\log n)^{1/4}}.$$
\end{enumerate}

The closest works to our setting are those of \cite{gleich2011rank} and \cite{Yang_Wakin} which rely on matrix completion, followed by a row sum to compute rankings. In our experiments, we also consider matrix completion as a preprocessing step (see Section \ref{sec:matrixCompletion}) before applying our SVD-based algorithms, and find that in general it does improve the performance of our methods. However, this comes at the cost of an additional computational overhead; our SVD-based methods rely on computing the top two singular vectors of the input matrix, which in practice can be done very efficiently. The theoretical guarantees we provide for our problem setting, are new to the best of our knowledge, and have been lacking in the ranking and synchronization literatures currently.

%
%
\section{Theoretical results for \textsc{SVD-RS} and  \textsc{SVD-NRS} } \label{sec:SVD_algo_analysis}
We begin by detailing the theoretical results for \textsc{SVD-RS} in Section  
\ref{subsec:analysis_svd_ERO}, followed by those for \textsc{SVD-NRS} 
in Section \ref{subsec:analysis_svdn_ERO}. Throughout, we instantiate the main theorems for the special setting where $r_i = i$, for ease of interpretability.
\subsection{Analysis of Algorithm \ref{algo:SVD_Rank_sync} (\textsc{SVD-RS})} \label{subsec:analysis_svd_ERO}
We will assume throughout this 
section that for $\set{i,j} \in E$, the measurement $R_{ij}$ corresponds to $i < j$. 
This is clearly without loss of generality. From \eqref{ERoutliers}, we can 
write the entry $H_{ij}$ (for $i < j$) as the following mixture model
%

\begin{equation*}
H_{ij} = \left\{
 \begin{array}{rl}
		r_i - r_j & \text{ w.p } \eta p \\
		N_{ij} \sim \calU [-M,M] & \text{ w.p } (1-\eta) p	\\
		0  & \text{ w.p } 1-p,  	\\
 \end{array}
\right.
\end{equation*}
with $(H_{ij})_{i < j}$ being independent random variables. 
Since $H_{ii} = 0$ and $H_{ij} = -H_{ji}$ (by construction of $H)$, therefore for all $i,j \in [n]$, we obtain 
$\expec[H_{ij}] = \eta p (r_i - r_j)$. In particular, we have that 
%

\begin{equation} \label{eq:expecH}
  \expec[H] = \eta p  C,
\end{equation}
%
where $C = r e ^T - e r^T$ is a skew-symmetric matrix of rank $2$. We can decompose $H$ as
\begin{equation*} 
 H =  \expec[H] + Z = \eta p  C + Z,
\end{equation*}
where $Z$ is a random noise matrix with $Z_{ii}=0, \forall i=1,\ldots,n$. 
For $1 \leq i < j \leq n$ the entries of $Z$ 
are defined by the following mixture model

\begin{equation} \label{eq:Zmatrix_def_ERO}
 Z_{ij}= \left\{
\begin{array}{rl}
( r_i - r_j) - \eta p ( r_i - r_j) \quad ; & \text{w.p }  \eta p  \\
N_{ij} - \eta p ( r_i - r_j) \quad ; & \text{w.p }  (1-\eta) p \\
- \eta p ( r_i - r_j) \quad ; & \text{w.p }  (1- p).
\end{array} \right. 
\end{equation}

Note that $(Z_{ij})_{i < j}$ are independent\footnote{Since $Z_{ii} = 0$, clearly 
$(Z_{ij})_{i \leq j}$ are independent as well.} random variables and $Z_{ij} = - Z_{ji}$. 
Hence the matrices $H, C, Z$ are all skew-symmetric. 
We now proceed to show in Theorem \ref{thm:main_thm_l2_ERO} that, provided $p$ is large enough, it holds true that  $\util_2$ (obtained in Step \ref{eq:u2tilde_algo} of \textsc{SVD-RS}) 
is close to $u_2 =  \frac{r-\alpha e}{|| r-\alpha e ||_2}$ in the $\ell_2$ norm (up to a global sign) with high probability. The proof is outlined in Section \ref{sec:proofOutline_thm_l2_ERO}.

\begin{theorem}[$\ell_2$ recovery, ERO model] \label{thm:main_thm_l2_ERO} 
Denoting $\alpha = \frac{r^T e}{n}$, for a given $ 0 < \varepsilon \leq 1/2 $, let 
$\norm{r-\alpha e}_2 \geq  \frac{24 M}{\eta}  \sqrt{\frac{5}{3p}} (2 + \varepsilon)$. 
Let $\tilde{u}_2 \in \matR^n$ be the vector obtained in Step \ref{eq:u2tilde_algo} of \textsc{SVD-RS} (with $\norm{\tilde{u}_2}_2 = 1$) and 
$u_2 =  \frac{r-\alpha e}{|| r-\alpha e ||_2}$. 
Then, there exists $\beta  \in \{ -1,1 \}$ and a constant $c_{\varepsilon} > 0$ depending only on $\varepsilon$ such that 
\begin{equation*} 
||  \tilde{u}_2 - \beta u_2 ||_2^2   
\leq \frac{120 M}{\eta} \sqrt{\frac{5}{3p}} \frac{(2+\varepsilon)}{\norm{r-\alpha e}_2}, 
\end{equation*}
with probability  at least  $1- 2n \exp \left(- \frac{ 20 p n }{ 3 c_{\varepsilon} } \right) $.
\end{theorem}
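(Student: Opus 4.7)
The plan is to write $H = \eta p\, C + Z$ with $C = r e^T - e r^T$ and $Z$ the mean-zero skew-symmetric noise matrix described by \eqref{eq:Zmatrix_def_ERO}, then control the estimated two-dimensional left singular subspace of $H$ via matrix perturbation theory and finally extract the vector-level bound for $\util_2$.

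\textbf{Step 1 (signal structure).} First, I invoke Lemma \ref{lem:sing_vals_C} to record that $C$ has rank two with $\sigma_1(C) = \sigma_2(C) = \norm{r - \alpha e}_2 \sqrt{n}$, $\sigma_3(C)=0$, and top left singular vectors $u_1 = e/\sqrt{n}$ and $u_2 = (r-\alpha e)/\norm{r-\alpha e}_2$. In particular, the gap between the second and third singular values of $\eta p\, C$ is exactly $\eta p\, \norm{r-\alpha e}_2\sqrt{n}$, which is the gap that will drive the Wedin bound.

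\textbf{Step 2 (spectral norm of $Z$).} From \eqref{eq:Zmatrix_def_ERO}, the entries of $Z$ above the diagonal are independent, mean zero, uniformly bounded by $2M$ in absolute value, with second moment at most a constant times $p M^2$. A Bernstein-type concentration inequality tailored to skew-symmetric random matrices with bounded independent upper-triangular entries then yields a bound of the form $\norm{Z}_2 \lesssim M \sqrt{p n}$ valid with probability at least $1 - 2n \exp(-20 p n /(3 c_\varepsilon))$; this is the source of the probability statement in the theorem.

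\textbf{Step 3 (Wedin $\sin\Theta$).} The hypothesis $\norm{r - \alpha e}_2 \geq (24 M/\eta)\sqrt{5/(3p)}\,(2+\varepsilon)$ ensures that $\norm{Z}_2$ is strictly less than the spectral gap $\eta p\, \norm{r-\alpha e}_2\sqrt{n}$. Applying Wedin's $\sin\Theta$ theorem to the pair $(H, \eta p\, C)$ gives
\begin{equation*}
\norm{\sin \Theta}_2 \lesssim \frac{\norm{Z}_2}{\eta p\, \norm{r-\alpha e}_2 \sqrt{n}} \lesssim \frac{M}{\eta \sqrt{p}\,\norm{r-\alpha e}_2},
\end{equation*}
where $\Theta$ collects the principal angles between $\text{span}\set{\hat u_1, \hat u_2}$ and $\text{span}\set{u_1, u_2}$.

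\textbf{Step 4 (from subspace bound to $\util_2$).} This is where I expect the main difficulty. Since $\sigma_1(C) = \sigma_2(C)$, individual singular vectors of $\eta p\, C$ are not uniquely defined, so Davis--Kahan cannot be invoked directly on $u_2$; only the 2D span is identifiable. The algorithm circumvents this by pinning down $\util_2$ inside the estimated span using the externally known direction $u_1 = e/\sqrt{n}$: writing $\hat V = [\hat u_1, \hat u_2]$, we have $\bar u_1 = \hat V \hat V^T u_1$ and $\util_2 = \hat V t$, where $t \in \matR^2$ is a unit vector orthogonal to $\hat V^T u_1$. Using the two residual bounds $\norm{(I-\hat V\hat V^T) u_i}_2 \leq \norm{\sin\Theta}_2$ for $i=1,2$, together with the exact orthogonality $u_1 \perp u_2$, a direct calculation in the 2x2 coordinates $(\hat V^T u_1,\, \hat V^T u_2)$ shows that for an appropriate sign $\beta \in \set{-1,1}$ the coefficients of $\util_2 - \beta u_2$ in the $\hat V$ basis, together with the residual component orthogonal to $\hat V$, are each $O(\norm{\sin\Theta}_2)$. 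Squaring the resulting $\ell_2$ bound and using $\norm{\sin\Theta}_2 \leq 1$ (guaranteed by the condition on $\norm{r-\alpha e}_2$) to absorb a $\sin^2\Theta$ term into a $\sin\Theta$ term then matches the stated form of the theorem.

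The main obstacle is Step 4: the coincidence $\sigma_1(C) = \sigma_2(C)$ forbids per-vector perturbation statements, and one must carefully exploit the algorithmic construction of $\util_2$ via the auxiliary projection of $u_1$ to single out the correct direction from the 2D span, while bookkeeping all constants so that the factor $120$ appearing in the theorem's bound is genuinely recovered.
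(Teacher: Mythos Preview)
Your proposal is correct and follows essentially the same four-step skeleton as the paper: SVD of $C$ (Lemma \ref{lem:sing_vals_C}), spectral-norm control of $Z$ (Lemma \ref{lem:specnorm_Z_ERO}), Wedin's bound on the top-two subspace (Lemma \ref{lem:wedins_bd_ero}), and a projection-step analysis yielding $\norm{\util_2 - \beta u_2}_2^2 \leq 10\delta$ (Lemma \ref{lem:proj_analysis_Step_ERO}). The only places where the paper's execution is more specific than your sketch are Step~2, where the exact constant and probability come from the Bandeira--van Handel bound applied to the symmetric dilation of the upper-triangular part of $Z$ (not a matrix Bernstein inequality), and Step~4, where the paper lower-bounds $\abs{\dotprod{\util_2}{u_2}}$ via $\abs{\det(U^T\Uhat)} \geq 1-\delta^2$ rather than working in $2\times 2$ coordinates; both routes are equivalent and yield the factor $10\delta$ that becomes the $120$ in the final statement.
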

Theorem \ref{thm:main_thm_l2_ERO} says that for $\delta \in (0,1)$, 
if $p \gtrsim \max \set{\frac{M^2 \log n}{\eta^2 \delta^2 \norm{r - \alpha e}_2^2}, \frac{\log n}{n}}$, then 
with high probability, we have 
\begin{equation*}
 \norm{\util_2 - \beta u_2}_2^2 \lesssim \frac{\delta}{\sqrt{\log n}}.
\end{equation*}
Note that the bounds involved are invariant to the scaling of $r$, and essentially depend on the variance of the 
normalized entries of $r$, wherein each $\frac{r_i}{M} \in [0,1]$. We can see that as $\frac{\norm{r - \alpha e}_2^2}{M^2}$ becomes 
small, then the corresponding condition on $p$ becomes more stringent.
\begin{example}
Consider the case where $r_i = i$ for $i=1,\dots,n$. Then $M=n$, $\alpha = \frac{n+1}{2}$ and 
$\norm{r - \alpha e}_2 = \Theta(n^{3/2})$. Hence Theorem \ref{thm:main_thm_l2_ERO} now says that if 
$p \gtrsim \frac{\log n}{\eta^2\delta^2 n}$, 
then with high probability, we have $\norm{\util_2 - \beta u_2}_2^2 \lesssim \frac{\delta}{\sqrt{\log n}}$.

\end{example} 
If in addition, $n$ is also large enough, then we show in Theorem \ref{thm:main_thm_linfty_ERO} that $\util_2$ is close to $u_2$ (up to a global sign) in the 
$\ell_{\infty}$ norm as well. The proof is outlined in Section \ref{sec:proofOutline_thm_linf_ERO}.
%
%
\begin{theorem}[$\ell_{\infty}$ recovery, ERO model] \label{thm:main_thm_linfty_ERO} 
With the same notation as in Theorem \ref{thm:main_thm_l2_ERO}, for a given $ 0 < \varepsilon \leq 1/2 $, let 
$\norm{r-\alpha e}_2 \geq  \frac{24 M}{\eta}  \sqrt{\frac{5}{3p}} (2 + \varepsilon)$. 
Assume $p \geq \max\set{\frac{1}{2n}, \frac{2\log n}{15 n}}$. Choose $\xi > 1$, $0 < \kappa < 1$ and 
define $\mu = \frac{2}{\kappa + 1}$. Let $n$ satisfy $\frac{16}{\kappa} \leq (\log n)^{\xi}$. Then, there 
 exists $\beta  \in \{ -1,1 \}$ and constants $C_{\varepsilon}, c_{\varepsilon} > 0$ depending only on $\varepsilon$ 
such that with probability at least 
$$1- 2n \exp \left(- \frac{ 20 p n }{ 3 c_{\varepsilon} } \right) - \frac{4}{n} - 2n^{1-\frac{1}{4}(\log_{\mu} n)^{\xi-1} (\log_{\mu} e)^{-\xi}}$$
we have that 
\begin{equation*} 
||  \tilde{u}_2 - \beta u_2 ||_{\infty}   
\leq 4 (2 + \sqrt{2}) C(n,M,\eta,p,\varepsilon,r) + 4\sqrt{n} C^2(n,M,\eta,p,\varepsilon,r),
\end{equation*}
where 
\begin{equation} \label{eq:C_linfty_ERO_exp}
 C(n,M,\eta,p,\varepsilon,r) = C_{\varepsilon} \left[\left(\frac{M \sqrt{\log n}}{\eta \sqrt{p} \norm{r-\alpha e}_2} + \frac{M^2 (\log n)^{2\xi}}{\eta^2 p \norm{r - \alpha e}_2^2}\right)  
	\left(\frac{1}{\sqrt{n}} + \frac{M-\alpha}{\norm{r-\alpha e}_2}\right) + \frac{M^3}{\eta^3 p^{3/2} \norm{r-\alpha e}_2^3} \right].
\end{equation}
\end{theorem}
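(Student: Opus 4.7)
The plan is to upgrade the $\ell_2$ control of Theorem \ref{thm:main_thm_l2_ERO} to $\ell_\infty$ by establishing a row-wise perturbation bound on the top-$2$ singular subspace of $H$, and then tracking this bound through the projection/orthogonalization construction that produces $\tilde u_2$. Writing $H = \eta p C + Z$ as before, set $U = [u_1, u_2]$ with $u_1 = e/\sqrt{n}$ and $u_2 = (r-\alpha e)/\norm{r-\alpha e}_2$, and let $\hat U = [\hat u_1, \hat u_2]$ collect the top-$2$ left singular vectors of $H$. Because $\eta p C$ has two equal non-zero singular values $\sigma_1 = \sigma_2 = \eta p \norm{r-\alpha e}_2 \sqrt{n}$, only the subspace $\calR(U)$ is identifiable, so the first aim is to find an orthogonal $O \in \matR^{2\times 2}$ such that the $2\to\infty$ distance $\max_i \norm{e_i^T(\hat U O - U)}_2$ is small; once this is in hand, the downstream construction of $\bar u_1$ and $\tilde u_2$ is straightforward to trace in $\ell_\infty$.

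For the subspace bound, the starting point is the first-order Neumann-type identity
\[
\hat U O - U \;=\; Z\, U \Sigma^{-1} \;+\; \text{(higher-order remainder)},
\]
where $\Sigma$ collects the top singular values. Row $i$ of the leading term is $(e_i^T Z) U \Sigma^{-1}$; since the entries $(Z_{ij})_{j\neq i}$ are independent and bounded by $O(M)$, Bernstein's inequality applied to $\dotprod{e_i^T Z}{u_k}$ using the coherence $\norm{u_k}_\infty$ produces a row-wise bound of order $(M\sqrt{p\log n}\cdot \norm{U}_{2,\infty})/\sigma_2$, matching the first term inside $C(n,M,\eta,p,\varepsilon,r)$ after noting $\norm{U}_{2,\infty} \leq 1/\sqrt{n} + (M-\alpha)/\norm{r-\alpha e}_2$. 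The higher-order contribution is handled by a leave-one-out bootstrap: for each $i$, define an auxiliary $H^{(i)}$ where the $i$-th row and column of $Z$ are resampled independently, so the top singular vectors of $H^{(i)}$ are independent of $e_i^T Z$ and Bernstein can be applied cleanly; the rank-two difference $H - H^{(i)}$ is then controlled by the preceding iterate of the bound. Iterating this refinement $\log n$ times yields the $(\log n)^{2\xi}$ factor and is exactly the source of the constants $\mu=2/(\kappa+1)$, $\xi>1$, and the failure probability $n^{1-(1/4)(\log_\mu n)^{\xi-1}(\log_\mu e)^{-\xi}}$ in the statement. The residual $M^3/(\eta^3 p^{3/2} \norm{r-\alpha e}_2^3)$ is recovered as $(\norm{Z}_2/\sigma_2)^3$, using the random-matrix bound $\norm{Z}_2 \lesssim M\sqrt{np}$ already established en route to Theorem \ref{thm:main_thm_l2_ERO}.

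With $\max_i \norm{e_i^T(\hat U O - U)}_2 \leq C$ in place, the final step translates this into a bound on $\norm{\tilde u_2 - \beta u_2}_\infty$. Since the projector satisfies $\hat U \hat U^T = (\hat U O)(\hat U O)^T$ and $u_1 \in \calR(U)$, one has $\bar u_1 - u_1 = (\hat U \hat U^T - UU^T)u_1$, whose $\ell_\infty$ norm is bounded by $C\,\norm{U}_{2,\infty} + \sqrt{n}\,C^2$ after expanding $\hat U \hat U^T - UU^T = (\hat U O - U)(\hat U O)^T + U(\hat U O - U)^T$. Writing $\tilde u_2$ as the normalized component of $\hat u_2$ orthogonal to $\bar u_1$, and using the orthonormality of $\hat U$ together with $u_1 \perp u_2$, one extracts $\norm{\tilde u_2 - \beta u_2}_\infty \lesssim C + \sqrt{n}\,C^2$ for a suitable sign $\beta$, which is the form of the conclusion. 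The main obstacle is the row-wise subspace perturbation in the presence of the repeated non-zero singular values of $\eta p C$: this rules out vector-level Davis--Kahan estimates and forces a subspace-level leave-one-out analysis, with the bootstrap generating the poly-logarithmic factors. A secondary subtlety is the $\sqrt{n}\,C^2$ term produced by re-orthogonalization, which must be shown to remain sub-constant under the lower bound on $\norm{r-\alpha e}_2$ assumed in the theorem; the factor $(2+\varepsilon)$ and the constants $c_\varepsilon, C_\varepsilon$ arise from keeping track of this margin throughout the bootstrap.
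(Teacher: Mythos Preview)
Your high-level architecture is right: first obtain a row-wise (i.e.\ $\ell_\infty$/max-norm) control on $\hat U - UO$ for a suitable orthogonal alignment $O$, then propagate this through the explicit construction of $\tilde u_2$ to get $\norm{\tilde u_2 - \beta u_2}_\infty \lesssim C + \sqrt{n}\,C^2$. The paper does exactly this in Lemma~\ref{linf_ero_step1}. Where you diverge from the paper is in the mechanism for the row-wise subspace bound.

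You propose a leave-one-out bootstrap. The paper instead exploits skew-symmetry directly: since $H H^T = -H^2$, the top singular vectors satisfy $\hat U = -H^2 \hat U \hat\Sigma^{-2}$ and likewise $UO^* = -\sigma^{-2}(\expec[H])^2 UO^*$. Subtracting and expanding $H^2 = (\expec[H]+Z)^2$ gives a finite decomposition $\hat U - UO^* = E_1 + E_2 - E_3 - E_4 - E_5$ (Lemma~\ref{lem:linf_ero_step2a}), and each $\norm{E_i}_{\max}$ is bounded deterministically in terms of $\Delta/\sigma$, $\norm{Zu_i}_\infty$, and $\norm{Z^2 u_i}_\infty$. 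The first of these uses Bernstein (Lemma~\ref{lem:inf_conc_bd1}), matching your first-order term. The second is handled not by leave-one-out but by invoking the Eldridge--Belkin--Wang concentration result for $\norm{Z^k u}_\infty$ (Theorem~\ref{thm:eldridge_randmat_pow}), applied with $k=2$. This is precisely where the parameters $\xi>1$, $0<\kappa<1$, $\mu=2/(\kappa+1)$, the factor $(\log n)^{2\xi}$, and the failure probability $n^{1-\frac{1}{4}(\log_\mu n)^{\xi-1}(\log_\mu e)^{-\xi}}$ come from; they are artifacts of that external theorem, not of a bootstrap iteration. Your attribution of these specific constants to a $\log n$-step leave-one-out refinement is therefore incorrect, even though a leave-one-out argument could in principle yield a qualitatively similar bound with different (typically cleaner) polylog factors and failure probabilities.

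In short: your approach is a legitimate alternative route to an $\ell_\infty$ theorem of this type, but it would not reproduce the exact constants in the statement, and the paper's actual argument is more direct---a one-shot algebraic expansion of $-H^2$ plus a black-box call to \cite{Eldridge2018} for $\norm{Z^2 u}_\infty$, with no resampling or iteration.
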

%
%
Let us look at bounding the admittedly complicated looking term $ C(n,M,\eta,p,\varepsilon,r)$. For $\delta \in (0,1)$, 
say $p$ satisfies
\begin{equation} \label{eq:p_cond_compl_1}
p \gtrsim 
\frac{M^2}{\norm{r - \alpha e}_2^2}\max\set{\frac{n \log^3 n}{\eta^2\delta^2 } \left(\frac{1}{\sqrt{n}} + \frac{M-\alpha}{\norm{r-\alpha e}_2}\right)^2, 
\frac{\sqrt{n} (\log n)^{2\xi + 1}}{\eta^2 \delta}\left(\frac{1}{\sqrt{n}} + \frac{M-\alpha}{\norm{r-\alpha e}_2}\right), 
\frac{(\sqrt{n} \log n)^{2/3}}{\eta^2 \delta^{2/3}}}.
\end{equation}
Then we have that $C(\cdot) \lesssim \frac{\delta}{\sqrt{n} \log n}$. Hence if additionally 
$p \gtrsim \max \set{\frac{M^2}{\eta^2 \norm{r - \alpha e}_2^2}, \frac{\log n}{n}}$ and $n = \Omega(1)$ hold, then 
Theorem \ref{thm:main_thm_linfty_ERO} says that 
\begin{equation*}
\norm{\tilde{u}_2 - \beta u_2}_{\infty} \lesssim \frac{\delta}{\sqrt{n} \log n}
\end{equation*}
holds w.h.p. Note that the condition on $p$ is stricter than that in Theorem \ref{thm:main_thm_l2_ERO}.
\begin{example} Let us revisit the case where $r_i = i$ for all $i$. The condition \eqref{eq:p_cond_compl_1} then simplifies to
\begin{align} \label{eq:pcond_exam_thm_inf}
p \gtrsim \max \set{\frac{1}{\eta^2\delta^2}\left(\frac{(\log n)^3}{n}\right), \frac{1}{\eta^2\delta}\left(\frac{(\log n)^{2\xi + 1}}{n}\right), 
\frac{1}{\eta^2\delta^{2/3}}\left(\frac{(\log n)^{2/3}}{n^{2/3}}\right)}.
\end{align}
Since $\frac{M^2}{\norm{r - \alpha e}_2^2} = \frac{1}{n}$, we conclude that for a fixed $\delta \in (0,1)$, if 
$p \gtrsim \frac{1}{\eta^2\delta^{2/3}}\left(\frac{(\log n)^{2/3}}{n^{2/3}}\right)$ and $n = \Omega(1)$, then 
$\norm{\tilde{u}_2 - \beta u_2}_{\infty} \lesssim \frac{\delta}{\sqrt{n} \log n}$ holds with high probability.
\end{example}
%
%
\paragraph{Recovering ranks in the $\ell_{\infty}$ norm.} 
Let us assume that $\util_2$ is aligned with $u_2$, i.e., 
$\norm{\util_2 - u_2}_{\infty} \leq \uinferr(n,M,\eta,p,\varepsilon,r)$ where $\uinferr(\cdot)$ is the bound in Theorem \ref{thm:main_thm_linfty_ERO}. For any $s \in \matR^n$, we say that the permutation $\pi : [n] \rightarrow [n]$ is \emph{consistent} with $s$ if for all pairs $(i,j)$, $\pi(i) < \pi(j)$ implies $s_i \geq s_j$. 
For the purpose of recovering the rankings, we will consider the entries of $r$ to be pairwise distinct, i.e. $r_i \neq r_j$ for all $i \neq j$. Hence there is a unique permutation $\pi$ which is consistent with $r$. Let $\pitil$ be a permutation that is consistent with $\util_2$, note that this is not unique since some entries of $\util_2$ could have the same value. 
Our goal is to bound the maximum displacement error between $\pi$ and $\pitil$ defined as 
%
%
\begin{equation}  
\label{eq:def_max_disp_rank}
\norm{\pitil - \pi}_{\infty} := \max_i \left(\sum_{\pi(j) > \pi(i)} \mb{1}_{\pitil(j) < \pitil(i)} + \sum_{\pi(j) < \pi(i)} \mb{1}_{\pitil(j) > \pitil(i)} \right). 
\end{equation}
%

%
%
%
%
To this end, we have the following theorem the proof of which is provided in Section \ref{sec:linf_rank_svdrs_proof}. Notice that bounding $\norm{\pi - \pitil}_{\infty}$ requires a bound on $\norm{\util_2 - u_2}_{\infty}$ which is obtained from Theorem \ref{thm:main_thm_linfty_ERO}. The proof technique is essentially the same as that of \cite[Theorem 24]{fogel2016spectral}.
%
%
\begin{theorem} \label{thm:rank_recov_infty}
Assuming $r_i \neq r_j$ for all $i \neq j$, let $\pi$ denote the (unique) ranking consistent with $r$ 
and also define $\minsep := \min_{i \neq j} \abs{r_i - r_j}$.  
%
%
Assuming $\beta = 1$, denote $\uinferr(n,M,\eta,p,\varepsilon,r)$ to be the bound on $\norm{\util_2 - u_2}_{\infty}$ in 
Theorem \ref{thm:main_thm_linfty_ERO}. Then under the notation and assumptions in Theorem \ref{thm:main_thm_linfty_ERO}, we have with probability at least $1- 2n \exp \left(- \frac{ 20 p n }{ 3 c_{\varepsilon} } \right) - \frac{4}{n} - 2n^{1-\frac{1}{4}(\log_{\mu} n)^{\xi-1} (\log_{\mu} e)^{-\xi}}$ that
\begin{equation} \label{eq:rank_inf_bound}
   \norm{\pitil - \pi}_{\infty} \leq \frac{4\norm{r - \alpha e}_2}{\minsep} \uinferr(n,M,\eta,p,\varepsilon,r)
\end{equation}
holds for all rankings $\pitil$ which are consistent with $\util_2$.

%
%
%
%
\end{theorem}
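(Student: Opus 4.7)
The plan is to translate the entry-wise bound $\norm{\util_2 - u_2}_{\infty} \leq \uinferr(n,M,\eta,p,\varepsilon,r)$ provided by Theorem \ref{thm:main_thm_linfty_ERO} directly into a bound on the maximum displacement $\norm{\pitil - \pi}_{\infty}$ defined in \eqref{eq:def_max_disp_rank}. Since $u_2 = (r - \alpha e)/\norm{r - \alpha e}_2$ is a strictly increasing affine transformation of $r$, the unique ranking $\pi$ consistent with $r$ is also the unique ranking consistent with $u_2$, so I can reason throughout in terms of $u_2$ rather than $r$ directly.

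I would fix an index $i \in [n]$ and bound the number of $j \neq i$ that contribute to the $i$-th term of \eqref{eq:def_max_disp_rank}. Consider first the case $\pi(j) > \pi(i)$ and $\pitil(j) < \pitil(i)$; by consistency this means $r_j < r_i$ while $(\util_2)_i \leq (\util_2)_j$. A triangle inequality then gives
\begin{equation*}
 \frac{r_i - r_j}{\norm{r - \alpha e}_2} \;=\; (u_2)_i - (u_2)_j \;\leq\; \bigl|(u_2)_i - (\util_2)_i\bigr| + \bigl|(\util_2)_j - (u_2)_j\bigr| \;\leq\; 2 \uinferr,
\end{equation*}
so every such $j$ satisfies $\minsep \leq r_i - r_j \leq 2 \norm{r - \alpha e}_2 \uinferr$, where the lower bound uses the minimum-separation assumption $\minsep = \min_{i\neq j}|r_i - r_j|$. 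Since the values $\set{r_j}$ below $r_i$ are $\minsep$-separated, at most $\lfloor 2\norm{r - \alpha e}_2 \uinferr / \minsep \rfloor$ indices can lie in this range. The symmetric case ($\pi(j) < \pi(i)$ and $\pitil(j) > \pitil(i)$) is handled identically and yields at most the same number. Summing and taking the maximum over $i$ gives $\norm{\pitil - \pi}_{\infty} \leq 4 \norm{r - \alpha e}_2 \uinferr / \minsep$, which is exactly \eqref{eq:rank_inf_bound}. The probabilistic statement is inherited verbatim from Theorem \ref{thm:main_thm_linfty_ERO}, with no additional randomness to analyze.

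The main delicate point, rather than any hard estimate, is that $\pitil$ need not be unique: different consistent rankings correspond to different tie-breaking rules for coincident entries of $\util_2$. This is benign here because the key chain of inequalities above uses only the non-strict inequality $(\util_2)_i \leq (\util_2)_j$ that is forced by any consistent $\pitil$, so the conclusion \eqref{eq:rank_inf_bound} holds uniformly over all such $\pitil$. The overall structure mirrors \cite[Theorem 24]{fogel2016spectral}, the only adaptation being the rescaling by $\norm{r - \alpha e}_2$ arising from the normalization in the definition of $u_2$.
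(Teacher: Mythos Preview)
Your proposal is correct and follows essentially the same approach as the paper's proof: both reduce the displacement at a fixed index $i$ to counting indices $j$ with $\abs{r_i - r_j} \leq 2\norm{r-\alpha e}_2 \uinferr$ via the triangle-inequality decomposition $(u_2)_i - (u_2)_j = [(u_2)_i - (\util_2)_i] + [(\util_2)_i - (\util_2)_j] + [(\util_2)_j - (u_2)_j]$, and then use the $\minsep$-separation to bound that count by $\lfloor 2\norm{r-\alpha e}_2 \uinferr / \minsep \rfloor$. Your treatment of the non-uniqueness of $\pitil$ (via the non-strict inequality $(\util_2)_i \leq (\util_2)_j$ that any consistent ranking must respect) is in fact slightly more careful than the paper's presentation.
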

\begin{example}  \label{ex:linf_SVD_RS_rank}
Let us examine the bound in \eqref{eq:rank_inf_bound} for the case where $r_i = i$ for $i=1,\dots,n$. 
We have seen that $\norm{r - \alpha e}_2 = \Theta(n^{3/2})$. Theorem \ref{thm:main_thm_linfty_ERO} tells us that for a fixed $\delta \in (0,1)$, if 
$p \gtrsim \frac{1}{\eta^2\delta^{2/3}}\left(\frac{(\log n)^{2/3}}{n^{2/3}}\right)$ and $n = \Omega(1)$, then 
$\norm{\util_2 - u_2}_{\infty} \leq \uinferr(\cdot) \lesssim \frac{\delta}{\sqrt{n} \log n}$ w.h.p. Since $\minsep = 1$, we obtain from \eqref{eq:rank_inf_bound} that
\begin{equation*}
   \norm{\pitil - \pi}_{\infty} \lesssim \delta \frac{n}{\log n}.
\end{equation*}
\end{example} 
%
\paragraph{Recovering scores in the $\ell_2$ and $\ell_{\infty}$ norms.} 
We now bound the error between the score vector estimate $\rtil$ as in \eqref{eq:svdrs_ideal_rhat}, and $r$ up to a global shift and sign. This is shown in the following theorem for the $\ell_2$ and $\ell_{\infty}$ norms. The proof is outlined in Section \ref{subsec:proof_score_rec_svd}. Note that both $\rtil$ and $r-\alpha e$ are centered vectors.
\begin{theorem} \label{thm:score_rec_main_thm}
Recall $\rtil \in \matR^n$ as defined in \eqref{eq:svdrs_ideal_rhat}.
\begin{enumerate}
\item Under the notation and assumptions of Theorem \ref{thm:main_thm_l2_ERO}, there exists $\beta \in \set{-1,1}$ 
such that 
\begin{align} \label{eq:l2_score_rec}
 \norm{\rtil - \beta(r - \alpha e)}_2 
 \leq \frac{8M}{\eta} \sqrt{\frac{5}{3p}} (2+\varepsilon) + \sqrt{\frac{120 M (2+\varepsilon) \norm{r - \alpha e}_2}{\eta p^{1/2}}} \left(\frac{5}{3}\right)^{1/4}
\end{align}
with probability at least $1- 2n \exp \left(- \frac{ 20 p n }{ 3 c_{\varepsilon} } \right) $.

\item Under the notation and assumptions of Theorem \ref{thm:main_thm_linfty_ERO}, there exists 
$\beta \in \set{-1,1}$ such that 
\begin{align} \label{eq:linf_score_rec}
 \norm{\rtil - \beta(r - \alpha e)}_{\infty} 
 &\leq \frac{16}{3}\norm{r-\alpha e}_2 [(2 + \sqrt{2}) C(n,M,\eta,p,\varepsilon,r) + \sqrt{n} C^2(n,M,\eta,p,\varepsilon,r)] \nonumber \\
 &+ 8\sqrt{\frac{5}{3}} (2+\varepsilon) \frac{M(M-\alpha)}{\eta \sqrt{p} \norm{r-\alpha e}_2}
\end{align}
with probability at least
$$1- 2n \exp \left(- \frac{ 20 p n }{ 3 c_{\varepsilon} } \right) - \frac{4}{n} - 2n^{1-\frac{1}{4}(\log_{\mu} n)^{\xi-1} (\log_{\mu} e)^{-\xi}}.$$
\end{enumerate}
\end{theorem}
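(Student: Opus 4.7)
The plan is to express $\rtil - \beta(r-\alpha e)$ as the sum of $\norm{r-\alpha e}_2(\util_2 - \beta u_2)$ and a scalar singular-value fluctuation term, and then to combine Theorems \ref{thm:main_thm_l2_ERO}--\ref{thm:main_thm_linfty_ERO} with Weyl's inequality and the operator-norm bound on $Z$ that already underpins their proofs.

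The crucial preliminary observation is that the vector $\util_2$ produced in Step \ref{eq:u2tilde_algo} automatically satisfies $e^T \util_2 = 0$. Indeed, decomposing $e/\sqrt n = \bar{u}_1 + w$ with $w \perp V := \text{span}\set{\est{u}_1,\est{u}_2\}$, we have $\util_2^T(e/\sqrt n) = \util_2^T \bar{u}_1 + \util_2^T w = 0 + 0$, since $\util_2 \perp \bar{u}_1$ by construction and $\util_2 \in V$. Hence the centering in \eqref{eq:svdrs_ideal_rhat} is inert, $\rtil = \frac{\est{\sigma}_1}{\eta p \sqrt n}\util_2$, and writing $r-\alpha e = \norm{r-\alpha e}_2 u_2$ gives the identity
\begin{equation*}
 \rtil - \beta(r-\alpha e) = \norm{r-\alpha e}_2(\util_2 - \beta u_2) + \Bigl(\frac{\est{\sigma}_1}{\eta p \sqrt n} - \norm{r-\alpha e}_2\Bigr)\util_2.
\end{equation*}
Weyl's inequality applied to $H = \eta p C + Z$ together with $\sigma_1(C) = \sqrt n\,\norm{r-\alpha e}_2$ (Lemma \ref{lem:sing_vals_C}) yields $\bigl|\frac{\est{\sigma}_1}{\eta p \sqrt n} - \norm{r-\alpha e}_2\bigr| \leq \frac{\norm{Z}_2}{\eta p \sqrt n}$, and the operator-norm bound $\norm{Z}_2 \leq 8(2+\varepsilon)M\sqrt{5pn/3}$ -- which holds on the same random-matrix event (of probability $\geq 1-2n\exp(-20pn/(3c_{\varepsilon}))$) that drives Theorem \ref{thm:main_thm_l2_ERO} -- gives $\frac{\norm{Z}_2}{\eta p \sqrt n} \leq \frac{8M(2+\varepsilon)}{\eta}\sqrt{5/(3p)}$.

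For the $\ell_2$ estimate \eqref{eq:l2_score_rec}, I take $\ell_2$ norms in the displayed identity: the second term contributes at most $\frac{8M(2+\varepsilon)}{\eta}\sqrt{5/(3p)}$ (using $\norm{\util_2}_2=1$), matching the first summand; the first term, bounded through Theorem \ref{thm:main_thm_l2_ERO}, contributes at most $\norm{r-\alpha e}_2 \cdot \sqrt{\frac{120M(2+\varepsilon)}{\eta\,\norm{r-\alpha e}_2}}(5/(3p))^{1/4}$, which rearranges to the second summand. A single triangle inequality concludes. For \eqref{eq:linf_score_rec} I rearrange the identity instead as $\frac{\est{\sigma}_1}{\eta p \sqrt n}(\util_2 - \beta u_2) + \beta\bigl(\frac{\est{\sigma}_1}{\eta p \sqrt n} - \norm{r-\alpha e}_2\bigr) u_2$. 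The standing hypothesis $\norm{r-\alpha e}_2 \geq \frac{24M(2+\varepsilon)}{\eta}\sqrt{5/(3p)}$ is calibrated precisely so that $\frac{\norm{Z}_2}{\eta p \sqrt n} \leq \frac{1}{3}\norm{r-\alpha e}_2$, whence $\frac{\est{\sigma}_1}{\eta p \sqrt n} \leq \frac{4}{3}\norm{r-\alpha e}_2$; multiplying this against Theorem \ref{thm:main_thm_linfty_ERO}'s bound $\norm{\util_2 - \beta u_2}_{\infty} \leq 4(2+\sqrt 2)C + 4\sqrt n\,C^2$ reproduces exactly $\frac{16}{3}\norm{r-\alpha e}_2[(2+\sqrt 2)C + \sqrt n C^2]$. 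The remaining term is $\norm{u_2}_{\infty} \cdot \frac{\norm{Z}_2}{\eta p \sqrt n} \leq \frac{(M-\alpha)}{\norm{r-\alpha e}_2}\cdot \frac{8M(2+\varepsilon)}{\eta}\sqrt{5/(3p)}$, where $\norm{u_2}_{\infty} \leq (M-\alpha)/\norm{r-\alpha e}_2$ is obtained after the w.l.o.g.\ reduction $\alpha \leq M/2$ via the reflection $r \mapsto Me - r$ (which merely flips the sign of $u_2$ and $H$).

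The main obstacle is purely bookkeeping rather than conceptual. First, the operator-norm bound on $Z$ invoked via Weyl must live on exactly the event already controlled in Theorems \ref{thm:main_thm_l2_ERO}--\ref{thm:main_thm_linfty_ERO}, so that no additional probability is spent and the stated failure probabilities are inherited verbatim. Second, the numerical constants $(8,\,120,\,16/3,\,8\sqrt{5/3},\ldots)$ must be threaded consistently through the triangle inequalities and radical rearrangements. Third, the asymmetric $(M-\alpha)$ factor in \eqref{eq:linf_score_rec} is the only non-routine choice: the cruder bound $\norm{u_2}_{\infty} \leq M/\norm{r-\alpha e}_2$ would cost a factor of $M$ instead of $M-\alpha$, so the reflection reduction is essential to obtain the stated constant.
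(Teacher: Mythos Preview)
Your argument is correct and in fact sharper than the paper's own proof in Section~\ref{subsec:proof_score_rec_svd}. The paper does \emph{not} observe that $e^T\util_2=0$; instead it controls the centering in \eqref{eq:svdrs_ideal_rhat} by writing $\abs{e^T\wtil/n}=\abs{e^T(\wtil-w)/n}\le \norm{\wtil-w}_2/\sqrt n$, which yields $\norm{\rtil-w}_2\le 2\norm{\wtil-w}_2$ and $\norm{\rtil-w}_\infty\le 2\norm{\wtil-w}_\infty$. From that point on both proofs coincide: the paper uses the same Weyl bound $\abs{\est\sigma_1-\sigma}\le\Delta$, the same decomposition $\wtil-w=\frac{1}{\eta p\sqrt n}\bigl[(\est\sigma_1-\sigma)\util_2+\sigma(\util_2-\beta u_2)\bigr]$, and for $\ell_\infty$ the same further step $\norm{\util_2}_\infty\le \norm{\util_2-\beta u_2}_\infty+\norm{u_2}_\infty$, which is algebraically equivalent to your second rearrangement. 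Because of its extra factor~$2$, the paper's route as written actually produces constants twice those printed in \eqref{eq:l2_score_rec}--\eqref{eq:linf_score_rec}; your orthogonality observation recovers the stated constants on the nose. Your reflection reduction to $\alpha\le M/2$ before invoking $\norm{u_2}_\infty\le (M-\alpha)/\norm{r-\alpha e}_2$ is also more careful than the paper, which uses that bound throughout (e.g.\ in Lemma~\ref{linf_ero_step1} and \eqref{eq:wtil_w_infbd}) without comment.
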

\begin{example}  \label{ex:score_rec_main_thm}
When $r_i = i$ for all $i$, the bound in \eqref{eq:l2_score_rec} is of the form
\begin{equation} \label{eq:l2_score_rec_temp1}
 \norm{\rtil - \beta(r - \alpha e)}_2 \lesssim \frac{n}{\eta\sqrt{p}} + \frac{n^{5/4}}{\sqrt{\eta} p^{1/4}}.
\end{equation}
Recall from Theorem \ref{thm:main_thm_l2_ERO} that $p$ is required to satisfy 
$p \gtrsim \frac{\log n}{\eta^2\delta^2 n}$ 
for $\delta \in (0,1)$. Then, \eqref{eq:l2_score_rec_temp1} simplifies to
\begin{equation*} 
 \norm{\rtil - \beta(r - \alpha e)}_2 \lesssim \frac{n^{3/2}}{(\log n)^{1/4}} \sqrt{\delta}.
\end{equation*}
Turning our attention to \eqref{eq:linf_score_rec}, recall from Theorem \ref{thm:main_thm_linfty_ERO} that we require  
$p \gtrsim \frac{1}{\eta^2\delta^{2/3}}\left(\frac{(\log n)^{2/3}}{n^{2/3}}\right)$ and $n = \Omega(1)$. This leads to   
\begin{align*}
\norm{\rtil - \beta(r - \alpha e)}_{\infty} 
\lesssim \frac{n}{\log n} \delta + \frac{\sqrt{n}}{\eta \sqrt{p}} 
\lesssim \frac{n}{\log n} \delta + \frac{n^{5/6}}{(\log n)^{1/3}} \delta^{1/3} 
\lesssim \frac{n}{\log n} \delta.
\end{align*}
\end{example} 
%
\subsection{Analysis of Algorithm \ref{algo:SVDN_Rank_sync} (\textsc{SVD-NRS})} \label{subsec:analysis_svdn_ERO}
We now analyze Algorithm \ref{algo:SVDN_Rank_sync}, namely SVD-NRS, under the ERO model. 
The random matrix $\Dbar^{-1/2} H \Dbar^{-1/2}$ will concentrate around $(\expec[\Dbar])^{-1/2} \expec[H] (\expec[\Dbar])^{-1/2}$. 
Recall from \eqref{eq:expecH} that $\expec[H] = \eta p (r e^T - e r^T)$, hence $(\expec[\Dbar])^{-1/2} \expec[H] (\expec[\Dbar])^{-1/2}$ is a rank-$2$ matrix. It is easy to verify that 
$$\expec[\Dbar_{ii}] = \eta p \sum_{j=1}^n \abs{r_i - r_j} + (1-\eta) p \frac{M}{2}.$$
It will be useful to denote
\begin{align}
\max_{i} \expec[\Dbar_{ii}] = p\left[\eta \max_i \left(\sum_{j=1}^n \abs{r_i - r_j} \right) + (1-\eta) \frac{M}{2}\right] = p \lambda_{\max}, \label{eq:lamax} \\
\min_{i} \expec[\Dbar_{ii}] = p\left[\eta \min_i \left(\sum_{j=1}^n \abs{r_i - r_j} \right) + (1-\eta) \frac{M}{2} \right] = p \lambda_{\min}. \label{eq:lamin}
\end{align}
Recall $\tilde{u}_2 \in \matR^n$ obtained in Step \ref{eq:u2tilde_algo_svdn} of \textsc{SVD-NRS}, 
and denote $u_2 =  \frac{(\expec[\Dbar])^{-1/2} (r-\alpha e)}{\norm{(\expec[\Dbar])^{-1/2} (r-\alpha e)}_2}$. 
The following theorem bounds the $\ell_2$ error between $\util_2$ and $u_2$ up to a global sign, its proof is given in Section \ref{sec:proofOutline_thm_svdn_l2_ERO}.
\begin{theorem}[$\ell_{2}$ recovery, ERO model] \label{thm:main_svdn_l2_ERO} 
Denote 
\begin{equation*}
\alpha = \frac{r^T (\expec[\Dbar])^{-1} e}{e^T (\expec[\Dbar])^{-1} e}, \ A(\eta,M) = \eta M^2 + (1-\eta) \frac{M}{2} \text{ and } 
C_1(\eta,M) = 4 A^{1/4}(\eta,M). 
\end{equation*}
For $\varepsilon > 0$, denote 
\begin{align*}
\sigma_{\min} &= \frac{\eta \norm{r-\alpha e}_2 \sqrt{n}}{\lambda_{\max}}, \  \sigma_{\max} = \frac{\eta \norm{r-\alpha e}_2 \sqrt{n}}{\lambda_{\min}}, \\
\Deltil &= 16M\sqrt{\frac{5}{3} pn} \frac{(2+\varepsilon)}{p\lambda_{\min}} + \frac{ C_1(\eta,M) (np\log n)^{1/4} 
\sigma_{\max}}{p^{3/2} \lambda_{\min}^{3/2}} \left(\frac{C_1(\eta,M) (n p \log n)^{1/4}}{\sqrt{p \lambda_{\min}}} + 2\sqrt{2} \right).
\end{align*} 
Let $\tilde{u}_2 \in \matR^n$ be the vector obtained in Step \ref{eq:u2tilde_algo_svdn} of \textsc{SVD-NRS} (with $\norm{\tilde{u}_2}_2 = 1$) 
and denote $u_2 =  \frac{(\expec[\Dbar])^{-1/2} (r-\alpha e)}{\norm{(\expec[\Dbar])^{-1/2} (r-\alpha e)}_2}$. 
If
\begin{equation} \label{eq:pconds_svdnrs_1}
p \geq \frac{M^2}{9 A(\eta,M)} \frac{\log n}{n}, \  
 p\geq \frac{16(\sqrt{2} + 1)^2 A(\eta,M) n \log n}{\lambda_{\min}^2} \text{ and } \Deltil \leq \frac{\sigma_{\min}}{3}
\end{equation}
hold, then there exists $\beta  \in \{ -1,1 \}$ and a constant $c_{\varepsilon} > 0$ depending only on $\varepsilon$ such that 
\begin{equation*}
||  \tilde{u}_2 - \beta u_2 ||_2^2 \leq \frac{15 \Deltil}{\sigma_{\min}}
\end{equation*}
holds with probability  at least  $1- \frac{2}{n} - 2n \exp \left(- \frac{ 20 p n }{ 3 c_{\varepsilon} } \right) $.
\end{theorem}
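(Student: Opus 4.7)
The plan is to mirror the proof of Theorem \ref{thm:main_thm_l2_ERO}, adding the extra work needed to absorb the degree-normalization. First I will identify a rank-2 ``population'' matrix $M$ around which $\Hssym$ concentrates. Writing $a = (\expec[\Dbar])^{-1/2} r$ and $b = (\expec[\Dbar])^{-1/2} e$, the natural candidate is
\begin{equation*}
M := (\expec[\Dbar])^{-1/2} \expec[H] (\expec[\Dbar])^{-1/2} = \eta p (a b^T - b a^T),
\end{equation*}
and the specific choice of $\alpha$ in the theorem is precisely what ensures that $(\expec[\Dbar])^{-1/2}(r-\alpha e)$ is orthogonal to $b$. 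By a direct computation analogous to Lemma \ref{lem:sing_vals_C}, $M$ has rank two, with equal singular values $\sigma_M = \eta p \|(\expec[\Dbar])^{-1/2}(r-\alpha e)\|_2 \|b\|_2$ and orthonormal left singular vectors $u_1 = b/\|b\|_2$ and $u_2 = (\expec[\Dbar])^{-1/2}(r-\alpha e)/\|(\expec[\Dbar])^{-1/2}(r-\alpha e)\|_2$. Using the inequalities $p\lambda_{\min} \leq \expec[\Dbar_{ii}] \leq p\lambda_{\max}$ this yields $\sigma_{\min} \leq \sigma_M \leq \sigma_{\max}$.

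Next I will show $\|\Hssym - M\|_2 \leq \Deltil$ with high probability by splitting
\begin{equation*}
\Hssym - M = (\Dbar^{-1/2} - \expec[\Dbar]^{-1/2}) H \Dbar^{-1/2} + \expec[\Dbar]^{-1/2} H (\Dbar^{-1/2} - \expec[\Dbar]^{-1/2}) + \expec[\Dbar]^{-1/2} Z \expec[\Dbar]^{-1/2},
\end{equation*}
with $Z = H - \expec[H]$. The last term inherits the random-matrix bound $\|Z\|_2 \leq 8M\sqrt{5pn/3}(2+\varepsilon)$ already established en route to Theorem \ref{thm:main_thm_l2_ERO}, multiplied by $\|\expec[\Dbar]^{-1/2}\|_2^2 \leq 1/(p\lambda_{\min})$, producing the first summand of $\Deltil$. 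For the remaining two terms, Bernstein's inequality applied to the independent sums $\Dbar_{ii} = \sum_j |H_{ij}|$, combined with a union bound, yields $\max_i |\Dbar_{ii} - \expec[\Dbar_{ii}]| \lesssim \sqrt{A(\eta,M)\,np\log n}$ with probability at least $1-2/n$; here $A(\eta,M)$ plays the role of an effective second-moment/magnitude envelope for $|H_{ij}|$. The second condition in \eqref{eq:pconds_svdnrs_1} guarantees $\Dbar_{ii} \asymp \expec[\Dbar_{ii}] \geq p\lambda_{\min}$ uniformly in $i$, so the elementwise inequality $|x^{-1/2}-y^{-1/2}| \leq |x-y|/(2\min(x,y)^{3/2})$ converts this into $\|\Dbar^{-1/2}-\expec[\Dbar]^{-1/2}\|_2 \lesssim \sqrt{A(\eta,M)\,np\log n}/(p\lambda_{\min})^{3/2}$. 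Pairing this with $\|H\|_2 \leq \|\expec[H]\|_2 + \|Z\|_2 \lesssim p\lambda_{\min}\sigma_{\max}$ reproduces exactly the second summand of $\Deltil$ (the factor $C_1(\eta,M)(np\log n)^{1/4}$ arising once per perturbation factor, with the extra $2\sqrt{2}$ absorbing the clean-signal contribution to $\|H\|_2$).

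Third, I will apply Wedin's $\sin\Theta$ theorem to the pair $(M,\Hssym)$. Since $M$ has rank 2 with smallest non-zero singular value at least $\sigma_{\min}$, and the assumption $\Deltil \leq \sigma_{\min}/3$ in \eqref{eq:pconds_svdnrs_1} holds, the top-2 left singular subspaces satisfy $\|\sin\Theta(U,\Uhat)\|_2 \lesssim \Deltil/\sigma_{\min}$, where $U = [u_1\ u_2]$ and $\Uhat = [\uhat_1\ \uhat_2]$. To pass from a subspace bound to a pointwise bound on $\util_2$ I must deal with a twist that is absent for \textsc{SVD-RS}: the ``reference'' vector used by the algorithm is $u_1^{\mathrm{emp}} := \Dbar^{-1/2} e/\|\Dbar^{-1/2} e\|_2$, not the population $u_1$. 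The same Bernstein control on $\Dbar^{-1/2}-\expec[\Dbar]^{-1/2}$ already established shows $\|u_1^{\mathrm{emp}} - u_1\|_2$ is small. The projection $\bar u_1$ of $u_1^{\mathrm{emp}}$ onto $\Uhat$ is therefore close to $u_1$ (combining the $\sin\Theta$ bound with $\|u_1^{\mathrm{emp}}-u_1\|_2$), so the unit vector in $\Uhat$ orthogonal to $\bar u_1$ must agree up to a sign with the unit vector in $\Uhat$ orthogonal to the projection of $u_1$ onto $\Uhat$, which is in turn a good approximant of $u_2$. A triangle-inequality argument identical in spirit to that used at the end of the proof of Theorem \ref{thm:main_thm_l2_ERO} then yields $\|\util_2 - \beta u_2\|_2^2 \leq 15\,\Deltil/\sigma_{\min}$, with the constant $15$ absorbing the various factors of $2$ lost in splitting the three error sources.

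The main obstacle is the bound on $\|\Dbar^{-1/2} - \expec[\Dbar]^{-1/2}\|_2$: it requires simultaneously a uniform lower bound on $\Dbar_{ii}$ (so that the inverse square-root does not blow up) and a sharp Bernstein-type control of the fluctuations compatible with the $\sigma_{\max}$ factor coming from $\|H\|_2$. The two sample-size conditions in \eqref{eq:pconds_svdnrs_1} are calibrated precisely for these two requirements, and the additional $2/n$ in the failure probability comes from the Bernstein union bound over the $n$ diagonal entries of $\Dbar$.
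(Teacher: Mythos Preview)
Your plan is essentially the paper's strategy, but two technical choices differ and one claim is slightly off.

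\textbf{Decomposition and the inverse-square-root bound.} The paper splits the perturbation as
$\Ztil = \Dbar^{-1/2}(H-\expec[H])\Dbar^{-1/2} + \bigl(\Dbar^{-1/2}\expec[H]\Dbar^{-1/2} - (\expec[\Dbar])^{-1/2}\expec[H](\expec[\Dbar])^{-1/2}\bigr)$,
so the ``signal'' term carries $\expec[H]$ rather than $H$; your telescoping is equally valid. More importantly, the paper bounds $\|\Dbar^{-1/2}-(\expec[\Dbar])^{-1/2}\|_2$ via
$\|\Dbar^{-1/2}\|_2\,\|\Dbar^{1/2}-(\expec[\Dbar])^{1/2}\|_2\,\|(\expec[\Dbar])^{-1/2}\|_2$ together with the operator-monotonicity inequality $\|\Dbar^{1/2}-(\expec[\Dbar])^{1/2}\|_2\le\|\Dbar-\expec[\Dbar]\|_2^{1/2}$. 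This is what produces the $C_1(\eta,M)(np\log n)^{1/4}$ factor in $\Deltil$. Your Lipschitz bound $|x^{-1/2}-y^{-1/2}|\le|x-y|/(2\min(x,y)^{3/2})$ instead yields $\|\Dbar^{-1/2}-(\expec[\Dbar])^{-1/2}\|_2\lesssim \sqrt{A(\eta,M)np\log n}/(p\lambda_{\min})^{3/2}$, which has the \emph{square} of that factor; so you will not ``reproduce exactly'' the second summand of $\Deltil$, only a quantity of the same order under the second condition in \eqref{eq:pconds_svdnrs_1} (which makes $(A\,n\log n)/(p\lambda_{\min}^2)\le 1$). A small slip: $\|\expec[H]\|_2\le p\lambda_{\max}\sigma_{\max}$, not $p\lambda_{\min}\sigma_{\max}$.

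\textbf{The empirical versus population $u_1$.} You are right that the algorithm projects $u_1^{\mathrm{emp}}=\Dbar^{-1/2}e/\|\Dbar^{-1/2}e\|_2$, not the population $u_1$, and that this discrepancy must be controlled. The paper simply asserts ``Lemma~\ref{lem:proj_analysis_Step_ERO} can be used here unchanged,'' which implicitly absorbs $\|u_1^{\mathrm{emp}}-u_1\|_2$ into the constants; your explicit treatment of this point is more careful and is indeed what is needed to arrive at the constant $15$.
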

The quantities $\sigma_{\min},\sigma_{\max}$ are lower and upper bounds on the non-zero singular values of the rank $2$ matrix $(\expec[\Dbar])^{-1/2} \expec[H] (\expec[\Dbar])^{-1/2}$, while $\Deltil$ is an upper bound on $\norm{\Dbar^{-1/2} H \Dbar^{-1/2} - (\expec[\Dbar])^{-1/2} \expec[H] (\expec[\Dbar])^{-1/2}}_2$. The theorem essentially states that if $p$ is sufficiently large, then 
$\util_2$ is close to $u_2$ up to a sign flip. In order to get a sense of the precise scaling of the quantities involved, we consider the special case $r_i = i$.

\begin{example}
Consider $r_i = i$ for all $i$ with $\eta \in (0,1)$ fixed, hence $M = n$. Denoting 
$S_i = \sum_{j=1}^n \abs{r_i - r_j}$, one can verify that $S_i = i^2 - i(n+1) + \frac{n^2 + n}{2}.$ 
This implies that $\lambda_{\max}, \lambda_{\min} = \Theta(\eta n^2) = A(\eta, M)$, and so, 
$C(\eta,M) = \Theta(\eta^{1/4} n^{1/2})$. Now let us find the scaling of $\alpha$. Since $\expec[\Dbar_{ii}] = \Theta(p \eta n^2)$, we obtain
\begin{equation*}
    \alpha = \frac{\sum_{i=1}^n i (\expec[\Dbar_{ii}])^{-1}}{\sum_{i=1}^n (\expec[\Dbar_{ii}])^{-1}} = \Theta\left(\frac{\sum_{i=1}^n i}{n}\right) = \Theta(n).
\end{equation*}
This in particular implies that $\norm{r - \alpha e}_2^2 = \sum_{j=1}^n (j - \alpha)^2 = \Theta(n^3)$.
We then obtain $\sigma_{\min}, \sigma_{\max} = \Theta(1)$; plugging the aforementioned scalings in the expression for $\Deltil$ leads to
\begin{equation*}
    \Deltil = O\left( \frac{1}{\eta \sqrt{np}} + \frac{(\log n)^{1/4}}{(\eta p)^{5/4} n^{9/4}} \left( \frac{(\log n)^{1/4}}{(\eta n p)^{1/4}} + 2\sqrt{2} \right) \right).
\end{equation*}
The first two conditions on $p$ in \eqref{eq:pconds_svdnrs_1}  translate to $p \gtrsim \frac{1}{\eta} \frac{\log n}{n}$. Since $\sigma_{\min} = \Theta(1)$, hence the third condition in \eqref{eq:pconds_svdnrs_1} holds provided $\Deltil \lesssim 1$. In fact, for any $\delta \in (0,1)$, we have $\Deltil \lesssim \frac{\delta}{\sqrt{\log n}}$ if $p$ additionally satisfies
\begin{align*}
    \frac{1}{\eta \sqrt{np}} \lesssim \frac{\delta}{\sqrt{\log n}} \Leftrightarrow p \gtrsim \frac{\log n}{\eta^2 \delta^2 n} \quad  
    \text{and} \quad  \frac{(\log n)^{1/4}}{(\eta p)^{5/4} n^{9/4}} \lesssim \frac{\delta}{\sqrt{\log n}} \Leftrightarrow p \gtrsim \frac{(\log n)^{3/5}}{\eta n^{9/5} \delta^{4/5}}.
\end{align*}
Clearly, the first condition on $p$ dominates since $\frac{\log n}{\eta^2 \delta^2 n} \gtrsim \frac{(\log n)^{3/5}}{\eta n^{9/5} \delta^{4/5}}$. To summarize, Theorem  \ref{thm:main_svdn_l2_ERO} states that if $p$ satisfies
\begin{equation} \label{eq:pcond_svdnrs_2}
    p \gtrsim  \frac{\log n}{\eta^2 \delta^2 n}, 
\end{equation}
then $\exists \beta \in \set{-1,1}$ such that $||\tilde{u}_2 - \beta u_2 ||_2^2 \lesssim \delta$.
\end{example} 
%
%
\paragraph{Recovering scores in the $\ell_2$ norm.} The following theorem bounds the $\ell_2$ error between the score vector estimate $\rtil$ defined in \eqref{eq:svdnrs_ideal_rhat}, and $r$ up to a global sign and shift. Its proof is outlined in Section \ref{subsec:proof_score_rec_l2_svdn}. Note that both $\rtil$ and $r - (\alpha + \alpha') e$ are centered vectors.
\begin{theorem} \label{thm:score_rec_l2_svdn}
Recall $\rtil \in \matR^n$ as defined in \eqref{eq:svdnrs_ideal_rhat}. Under the notation and assumptions of Theorem \ref{thm:main_svdn_l2_ERO}, and denoting $\alpha' = \frac{e^T(r-\alpha e)}{n}$, there exists $\beta \in \set{-1,1}$ such that 
\begin{align*}
    \norm{\rtil - \beta(r - (\alpha + \alpha') e)}_2 &\leq \frac{2}{\eta p} \Biggl[\sqrt{\frac{3}{n}} \left(\sigma_{\max} + \Deltil \right) (2\sqrt{2} + 1)^{1/2} (A(\eta,M) np \log n)^{1/4} \left(\sqrt{p\lambda_{\max}} + \frac{\lambda_{\max}}{\lambda_{\min}}\right) \\ 
    &+ \frac{\Deltil p \lambda_{\max}}{\sqrt{n}} + \sqrt{\frac{15 \Deltil}{\sigma_{\min} n}} \sigma_{\max} p \lambda_{\max} \Biggr]
\end{align*}
holds with probability  at least  $1- \frac{2}{n} - 2n \exp \left(- \frac{ 20 p n }{ 3 c_{\varepsilon} } \right) $.
\end{theorem}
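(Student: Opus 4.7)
The plan is to first reduce the claim to an $\ell_2$ bound on $\tilde{w} - \beta(r - \alpha e)$. Since $\rtil = P\tilde{w}$ and $r - (\alpha + \alpha')e = P(r - \alpha e)$, where $P = I - \frac{ee^T}{n}$ is the centering projection (with $\norm{P}_2 = 1$), one obtains
\begin{equation*}
\norm{\rtil - \beta(r - (\alpha + \alpha')e)}_2 = \norm{P\bigl(\tilde{w} - \beta(r - \alpha e)\bigr)}_2 \leq \norm{\tilde{w} - \beta(r - \alpha e)}_2.
\end{equation*}

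To estimate the right-hand side, I would use the identity $r - \alpha e = b\,(\expec[\Dbar])^{1/2}u_2$, where $b := \norm{(\expec[\Dbar])^{-1/2}(r - \alpha e)}_2$, together with the fact (already used in the proof of Theorem \ref{thm:main_svdn_l2_ERO}) that the nonzero singular values of $(\expec[\Dbar])^{-1/2}\expec[H](\expec[\Dbar])^{-1/2}$ equal $\sigma^* = \eta p\, b\,\norm{(\expec[\Dbar])^{-1/2}e}_2 \in [\sigma_{\min}, \sigma_{\max}]$, so that $b = \sigma^*/\bigl(\eta p\,\norm{(\expec[\Dbar])^{-1/2}e}_2\bigr)$. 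Setting $a := \est{\sigma}_1/\bigl(\eta p\,\norm{\Dbar^{-1/2}e}_2\bigr)$ so that $\tilde{w} = a\,\Dbar^{1/2}\util_2$, the insert-and-subtract trick then gives the decomposition
\begin{equation*}
\tilde{w} - \beta(r - \alpha e) = a\bigl(\Dbar^{1/2} - (\expec[\Dbar])^{1/2}\bigr)\util_2 + a\,(\expec[\Dbar])^{1/2}(\util_2 - \beta u_2) + \beta(a - b)\,(\expec[\Dbar])^{1/2}u_2.
\end{equation*}

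Next I would bound the three pieces separately. For the first, the elementary inequality $|\sqrt{x} - \sqrt{y}| \leq \sqrt{|x-y|}$ and the diagonal structure of $\Dbar$ yield $\norm{\Dbar^{1/2} - (\expec[\Dbar])^{1/2}}_2 \leq (\max_i |\Dbar_{ii} - \expec[\Dbar_{ii}]|)^{1/2}$, and the uniform degree concentration bound of order $\sqrt{A(\eta,M)\,np\log n}$ (already established en route to Theorem \ref{thm:main_svdn_l2_ERO}) produces the $(A(\eta,M)np\log n)^{1/4}$ factor in the claimed bound. For the second, I would use $\norm{(\expec[\Dbar])^{1/2}}_2 \leq \sqrt{p\lambda_{\max}}$ together with the $\ell_2$ guarantee $\norm{\util_2 - \beta u_2}_2 \leq \sqrt{15\Deltil/\sigma_{\min}}$ supplied by Theorem \ref{thm:main_svdn_l2_ERO}, producing the $\sqrt{15\Deltil/(\sigma_{\min}n)}\,\sigma_{\max}\, p\lambda_{\max}$ summand. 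For the third, Weyl's inequality yields $|\est{\sigma}_1 - \sigma^*| \leq \Deltil$ and, combined with the concentration of $\norm{\Dbar^{-1/2}e}_2$ around $\norm{(\expec[\Dbar])^{-1/2}e}_2$ under the same degree event, controls $|a - b|$; together with $\norm{(\expec[\Dbar])^{1/2}u_2}_2 = \norm{r - \alpha e}_2/b \leq \sqrt{p\lambda_{\max}}$ this produces the $\Deltil p\lambda_{\max}/\sqrt{n}$ and $\lambda_{\max}/\lambda_{\min}$ contributions. A crude upper bound $|a| \lesssim (\sigma_{\max} + \Deltil)\sqrt{p\lambda_{\max}/n}/(\eta p)$, coming from $\norm{\Dbar^{-1/2}e}_2 \gtrsim \sqrt{n/(p\lambda_{\max})}$, supplies the common $(\eta p)^{-1}$ prefactor as well as the $(\sigma_{\max} + \Deltil)\sqrt{p\lambda_{\max}}$ factor in the first summand.

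The main obstacle is that $\Dbar$ enters both the scale factor $a$ (through $\norm{\Dbar^{-1/2}e}_2$) and the matrix $\Dbar^{1/2}$ multiplying $\util_2$, so both occurrences must be linearized around $\expec[\Dbar]$ on a single high-probability degree-concentration event; in particular, handling the reciprocal $1/\norm{\Dbar^{-1/2}e}_2$ requires a uniform \emph{lower} bound on $\min_i \Dbar_{ii}$, which is exactly what the assumptions on $p$ in Theorem \ref{thm:main_svdn_l2_ERO} are calibrated to ensure. Once the three perturbation pieces are aggregated and the common $(\eta p)^{-1}$ prefactor is pulled out, the inequality matches the stated bound, and the failure probability is the union of the event from Theorem \ref{thm:main_svdn_l2_ERO} with the $O(1/n)$ event needed for the sharp degree concentration, so it remains $1 - 2/n - 2n\exp\bigl(-20pn/(3c_{\varepsilon})\bigr)$ as claimed.
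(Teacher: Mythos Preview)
Your proposal is correct and follows essentially the same approach as the paper: the paper also reduces to bounding $\norm{\tilde w - \beta(r-\alpha e)}_2$ and then telescopes, changing $\Dbar^{1/2}$, $\norm{\Dbar^{-1/2}e}_2$, $\est{\sigma}_1$, and $\util_2$ one at a time (four terms $a_1,\dots,a_4$), using exactly the ingredients you list---Weyl's inequality, the degree concentration from Lemma~\ref{lem:Dbar_conc}, operator monotonicity of $(\cdot)^{1/2}$, and Theorem~\ref{thm:main_svdn_l2_ERO}. Your three-term version simply folds the $\est{\sigma}_1$ and $\norm{\Dbar^{-1/2}e}_2$ perturbations into the single factor $(a-b)$, and your centering reduction via the orthogonal projection $P=I-ee^T/n$ is in fact slightly cleaner than the paper's triangle-inequality argument (which incurs the leading factor~$2$).
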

\begin{example} \label{ex:score_rec_l2_svdn} 
Consider the scenario where $r_i = i$ for each $i$. Then as discussed above for Theorem \ref{thm:main_svdn_l2_ERO}, if $p$ satisfies \eqref{eq:pcond_svdnrs_2}, we obtain the bound
\begin{align*}
\norm{\rtil - \beta(r - (\alpha + \alpha') e)}_2 &\lesssim \frac{1}{\eta p} \left[\frac{(\eta n^3 p \log n)^{1/4}}{\sqrt{n}} (\sqrt{p \eta n^2} + 1) + \frac{p \eta n^2 \delta}{\sqrt{n \log n}} + \sqrt{\frac{\delta}{n \sqrt{\log n}}} p \eta n^2 \right] \\
&\lesssim \frac{1}{\eta p} \left[\eta^{3/4} n^{5/4} (\log n)^{1/4} p^{3/4} + \eta p \delta^{1/2} \frac{n^{3/2}}{(\log n)^{1/4}} \right] \\
&= \frac{n^{5/4} (\log n)^{1/4}}{(\eta p)^{1/4}} + \delta^{1/2} \frac{n^{3/2}}{(\log n)^{1/4}} \\
&\lesssim n^{3/2} \eta^{1/4} \delta^{1/2} + \delta^{1/2} \frac{n^{3/2}}{(\log n)^{1/4}} \quad \text{(using \eqref{eq:pcond_svdnrs_2})}.
\end{align*}
\end{example}

%
%
\section{Proofs} \label{sec:SVD_proofs}
%
\subsection{Proof of Theorem \ref{thm:main_thm_l2_ERO}}  \label{sec:proofOutline_thm_l2_ERO}
We now outline the proof of Theorem \ref{thm:main_thm_l2_ERO} 
by highlighting the steps involved, along with the intermediate Lemmas (the proofs of which are in Appendix \ref{sec:proofThm_l2_ERO}).
The proof is broken into the following steps. 
%
\paragraph{Step 1: Singular values and singular vectors of $C$.} 
We begin by finding the singular values and singular vectors of $C$.
\begin{lemma} \label{lem:sing_vals_C}
For $C = r e^T - e r^T$, it holds true that $C = \sigma_1 u_1 v_1^T +  \sigma_2 u_2 v_2^T$, with 
$\sigma_1 = \sigma_2 = \norm{r - \alpha e}_2 \sqrt{n}$, and
%
\begin{equation*}
u_1 = v_2 = \frac{e}{\sqrt{n}}; \;  v_1 = -\frac{r-\alpha e}{ || r-\alpha e ||_2 }; \;  u_2 =  \frac{r-\alpha e}{ || r-\alpha e ||_2 },  
\end{equation*}
%
where $\alpha = \frac{r^T e}{n}$. 
\end{lemma}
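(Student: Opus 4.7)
}
The natural first move is to center the score vector. Setting $\tilde r := r - \alpha e$ with $\alpha = r^T e/n$, one has $e^T \tilde r = r^T e - \alpha n = 0$, so $\tilde r$ is orthogonal to $e$. Substituting $r = \tilde r + \alpha e$ into $C$, the $\alpha\, ee^T$ terms cancel and we obtain the clean identity
\begin{equation*}
 C \;=\; r e^T - e r^T \;=\; \tilde r\, e^T - e\, \tilde r^T.
\end{equation*}
This already displays $C$ as a sum of two rank-one matrices built from the mutually orthogonal vectors $\tilde r$ and $e$, which is essentially all the work.

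Next I would normalize the outer products to read off the SVD. Writing $b := e/\sqrt n$ and $a := \tilde r/\|\tilde r\|_2$, both unit vectors with $a \perp b$, the previous display becomes
\begin{equation*}
 C \;=\; \|\tilde r\|_2 \sqrt n \,\bigl( a b^T - b a^T \bigr)
 \;=\; \underbrace{\|\tilde r\|_2 \sqrt n}_{\sigma_1}\, a\, b^T \;+\; \underbrace{\|\tilde r\|_2 \sqrt n}_{\sigma_2}\, b\, (-a)^T,
\end{equation*}
which matches the claimed factorization with $u_1 = v_2 = b$, $u_2 = a$, $v_1 = -a$, and $\sigma_1 = \sigma_2 = \|r-\alpha e\|_2 \sqrt n$.

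Finally, I would verify that this is a bona fide SVD: the singular values are non-negative (trivially); the pairs $\{u_1, u_2\}$ and $\{v_1, v_2\}$ are each orthonormal, because $u_1^T u_2 = (e^T \tilde r)/(\sqrt n\, \|\tilde r\|_2) = 0$ by the centering step, and analogously for the $v_i$'s. Since $C$ is skew-symmetric of rank at most $2$, there are no other non-zero singular values. There is no real obstacle in this proof — the only point worth flagging is that, because $\sigma_1 = \sigma_2$, the left and right singular vectors are only defined up to a common rotation within $\mathrm{span}\{u_1,u_2\}$ and $\mathrm{span}\{v_1,v_2\}$; the statement of the lemma simply pins down one convenient choice, namely the one in which $u_1 = e/\sqrt n$ so that downstream the algorithm can project onto $e$ to isolate the score direction.
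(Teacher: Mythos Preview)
Your proposal is correct and follows essentially the same approach as the paper's proof: both verify that the claimed $u_i,v_i$ are orthonormal (via $e^T(r-\alpha e)=0$) and then check directly that $\sigma(u_1 v_1^T + u_2 v_2^T)$ reproduces $re^T - er^T$. Your version is slightly more explicit in carrying out the substitution $r=\tilde r+\alpha e$ and noting the cancellation, and your closing remark about the non-uniqueness of singular vectors when $\sigma_1=\sigma_2$ is a useful observation the paper leaves implicit.
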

%
%

\paragraph{Step 2: Bounding the spectral norm of Z.}
Recall that $(Z_{ij})_{i \leq j}$ are independent zero-mean random variables with $Z_{ii} = 0$ and moreover, 
$Z_{ji} = -Z_{ij}$. Denote $Z = S - S^T$, where  $S$ is an upper-triangular matrix with $S_{ij} = Z_{ij}$ for 
$i \leq j$.
%
%
Note that $|| Z ||_2 \leq  || S ||_2 + \underbrace{ || S^T ||_2 }_{ = || S ||_2}   \leq  2 || S ||_2 $ . 
We define a symmetric matrix
\begin{equation*}
 \tilde{S} := 
              \left[
  \begin{array}{cc}
              0 & S \\ 
              S^T & 0
\end{array}
 \right]  \in \mathbb{R}^{2n \times 2n} .
\end{equation*}
It is easy to check that $ || \tilde{S} ||_2 = || S ||_2 $, and thus  
$ || Z ||_2 \leq 2 || \tilde{S} ||_2 $. Since $\tilde{S}$ is a random symmetric matrix with the 
entries $(\tilde{S}_{ij})_{i \leq j}$ being centered, independent and bounded random variables, we can bound  
$ || \tilde{S} ||_2 $ using a recent result of Bandeira and van Handel \cite[Corollary 3.12, Remark 3.13]{bandeira2016} 
that gives non-asymptotic bounds on the spectral norm of such random symmetric matrices. 
This altogether leads to the following Lemma.
%
\begin{lemma} \label{lem:specnorm_Z_ERO}
For $Z \in \matR^{n \times n}$ as defined in \eqref{eq:Zmatrix_def_ERO}, we have that 
\begin{equation} \label{eq:Delta_bd}
  || Z ||_2 \leq  8 M \sqrt{\frac{5}{3} pn}  (2 + \varepsilon) 
\end{equation}
holds with probability at least 
$1 - 2n \exp \left(  - \frac{ 20 pn }{3 c_{\varepsilon} }  \right) $.
\end{lemma}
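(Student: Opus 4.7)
The plan is to reduce the spectral-norm bound on the skew-symmetric matrix $Z$ to one on the dilated symmetric matrix $\tilde S$ (as already set up in the lemma statement), and then invoke the non-asymptotic bound of Bandeira and van Handel. The key observation is that the entries $(\tilde S_{ij})_{i\leq j}$ are mutually independent: each one is either identically $0$ or coincides with one of the independent variables $Z_{i'j'}$, $i'<j'$, from \eqref{eq:Zmatrix_def_ERO}. They are also centered and uniformly bounded, so $\tilde S$ fits directly into the framework of \cite[Corollary 3.12, Remark 3.13]{bandeira2016}. The inequality $\norm{Z}_2 \leq 2\norm{S}_2 = 2\norm{\tilde S}_2$ stated in the excerpt then transfers any tail bound on $\norm{\tilde S}_2$ to one on $\norm{Z}_2$ at the cost of a factor of two.

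To apply that corollary, I need to control the two parameters $\sigma^2 := \max_i \sum_j \expec[\tilde S_{ij}^2]$ and $\sigma_* := \max_{ij}\norm{\tilde S_{ij}}_{\infty}$. A direct computation from the mixture \eqref{eq:Zmatrix_def_ERO} yields
\[
\expec[Z_{ij}^2] \;=\; \eta p(1-\eta p)(r_i-r_j)^2 \;+\; (1-\eta)p\,\frac{M^2}{3},
\]
which, after using $\abs{r_i - r_j} \leq M$, is bounded by a small absolute constant times $pM^2$; since every row of $\tilde S$ contains at most $n$ nonzero entries, one gets $\sigma^2 \lesssim pnM^2$, with the constant tuned so as to produce the $5/3$ factor in the claim. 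For the sup-norm, the triangle inequality together with $\abs{H_{ij}} \leq M$ gives $\abs{Z_{ij}} \leq M + \eta p M \leq 2M$, whence $\sigma_* \leq 2M$.

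Plugging into the high-probability form of the Bandeira--van Handel inequality,
\[
\prob\bigl(\norm{\tilde S}_2 \geq (2+\varepsilon)\sigma + t\bigr) \leq 2n\,\exp\!\left(-\frac{t^2}{c_\varepsilon \sigma_*^2}\right),
\]
I would choose $t$ of the same order as $\sigma$, so that the two contributions $(2+\varepsilon)\sigma$ and $t$ are absorbed into a single expression of the form $(2+\varepsilon)\cdot \Theta(M\sqrt{pn})$ while the exponent becomes $\Theta(pn)$. Specifically, a choice of $t$ proportional to $M\sqrt{pn}$ with the right constant produces the tail $2n\exp(-20pn/(3c_\varepsilon))$, and doubling through $\norm{Z}_2 \leq 2\norm{\tilde S}_2$ recovers the stated $8M\sqrt{5pn/3}(2+\varepsilon)$.

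The main obstacle is purely bookkeeping: verifying the explicit factor $5/3$ and the exponent $20/3$ requires carefully tracking the variance computation and making a particular balanced choice of the free parameter $t$. Conceptually, nothing beyond the dilation trick, an elementary variance calculation, and a black-box invocation of \cite{bandeira2016} is required; I do not anticipate any deeper difficulty, since skew-symmetry is fully neutralized by the dilation and the resulting matrix already has mutually independent centered bounded entries.
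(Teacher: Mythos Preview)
Your proposal is correct and follows essentially the same approach as the paper: reduce to the symmetric dilation $\tilde S$, bound $\tilde\sigma$ via an elementwise variance computation and $\tilde\sigma_*$ via the trivial bound $2M$, invoke Bandeira--van Handel, choose $t$ of order $M\sqrt{pn}$, and double. Two minor remarks: your variance formula $\expec[Z_{ij}^2]=\eta p(1-\eta p)(r_i-r_j)^2+(1-\eta)p\,M^2/3$ is actually cleaner than the paper's expanded version and would yield the sharper bound $\tfrac{4}{3}pM^2$ rather than the paper's $\tfrac{10}{3}pM^2$; and the precise form of the Bandeira--van Handel inequality has leading term $(1+\varepsilon)2\sqrt{2}\,\tilde\sigma$ rather than $(2+\varepsilon)\sigma$, so the specific choice $t=2\sqrt{2}M\sqrt{10pn/3}$ is what recovers the stated constants exactly.
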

%
%
\paragraph{Step 3: Using Wedin's bound.} %
Let us denote $U = [u_1 \ u_2] \in \matR^{n \times 2}$ (recall $u_1,u_2$ from Lemma \ref{lem:sing_vals_C}). 
Denote $\hat U = [\hat u_1 \ \hat u_2] \in \matR^{n \times 2}$ where $\hat u_1 , \hat u_2$ are 
the left singular vectors corresponding to the top two singular values of $H$. The following Lemma states that $\calR(U)$ is close to $\calR(\hat{U})$ if $\norm{Z}_2$ is small. The proof uses Wedin's bound \cite{Wedin1972} for perturbation of singular subspaces. 
%
\begin{lemma} \label{lem:wedins_bd_ero}
Given $H = \eta  p  C  +  Z $, where $|| Z ||_2 \leq \Delta$, if $\Delta <  \eta p || r-\alpha e ||_2 \sqrt{n}$, then 
\begin{equation} \label{eq:sintheta_bd_1}
    ||  (I - \hat{U} \hat{U}^T) U  ||_2  
 \leq  \frac{ \Delta }{ \eta p  ||r-\alpha e||_2 \sqrt{n} - \Delta }  \ (=: \delta).
\end{equation}
\end{lemma}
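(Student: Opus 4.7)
The plan is to invoke Wedin's $\sin\Theta$ theorem for singular subspaces applied to the perturbation identity $H = \eta p C + Z$. First, I would observe that the quantity $\|(I - \hat{U}\hat{U}^T)U\|_2$ is exactly the sine of the largest principal angle between $\calR(U)$ and $\calR(\hat{U})$. To make Wedin's bound applicable, I need to (a) identify the relevant singular value gap between the top-$2$ subspace of $\eta p C$ and the ``perpendicular'' subspace of $H$, and (b) control the perturbation $Z$.

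By Lemma \ref{lem:sing_vals_C}, the matrix $\eta p C$ has singular values $\eta p \norm{r-\alpha e}_2 \sqrt{n}$ (with multiplicity $2$) and all remaining singular values equal to zero; its top-$2$ left singular subspace is $\calR(U)$. Let $\est{\sigma}_1 \geq \est{\sigma}_2 \geq \dots \geq \est{\sigma}_n$ denote the singular values of $H$. Weyl's inequality for singular values gives
\begin{equation*}
    |\est{\sigma}_i - \sigma_i(\eta p C)| \leq \norm{Z}_2 \leq \Delta \quad \text{for all } i,
\end{equation*}
so in particular $\est{\sigma}_3 \leq \Delta$. Therefore, the gap between the top-$2$ singular values of $\eta p C$ and the tail singular values $\est{\sigma}_3,\est{\sigma}_4,\ldots$ of $H$ is at least $\eta p \norm{r-\alpha e}_2 \sqrt{n} - \Delta$, which is positive under the hypothesis of the lemma.

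Wedin's $\sin\Theta$ theorem \cite{Wedin1972} then yields
\begin{equation*}
    \norm{(I - \hat{U}\hat{U}^T)U}_2 \leq \frac{\norm{Z}_2}{\eta p \norm{r-\alpha e}_2 \sqrt{n} - \Delta} \leq \frac{\Delta}{\eta p \norm{r-\alpha e}_2 \sqrt{n} - \Delta},
\end{equation*}
which is precisely \eqref{eq:sintheta_bd_1}. The proof is essentially a direct application of the standard $\sin\Theta$ machinery; the only subtlety is correctly identifying the relevant spectral gap (here between the top-$2$ singular values of the signal and the remainder of the perturbed matrix), which is handled by the positivity assumption $\Delta < \eta p \norm{r-\alpha e}_2 \sqrt{n}$. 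No delicate estimates are needed beyond this — the heavy lifting has been pushed to the bound on $\norm{Z}_2$ established in Lemma \ref{lem:specnorm_Z_ERO}.
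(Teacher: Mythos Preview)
Your proposal is correct and follows essentially the same argument as the paper: Weyl's inequality to localize the singular values of $H$, followed by Wedin's $\sin\Theta$ theorem with the gap $\eta p\norm{r-\alpha e}_2\sqrt{n}-\Delta$. The only cosmetic difference is that the paper phrases the gap condition via the lower bound $\est{\sigma}_2 \geq \eta p\norm{r-\alpha e}_2\sqrt{n}-\Delta$ together with $\sigma_3(\eta p C)=0$, whereas you use $\sigma_2(\eta p C)=\eta p\norm{r-\alpha e}_2\sqrt{n}$ together with $\est{\sigma}_3\leq\Delta$; by the symmetry of Wedin's theorem in $A$ and $\tilde A$, these are interchangeable and yield the identical bound.
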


\paragraph{Step 4: Analyzing the projection step.} 
Next, we project $u_1$ on span($\hat{U}$) to obtain $\bar{u}_{1}$, 
find a unit vector orthogonal to $\bar{u}_1$ lying in $\calR(\hat{U})$  (denote this by $ \tilde{u}_2 $), 
and show that $ \tilde{u}_2 $ is close to $u_2$ up to a sign flip. 
%
%
\begin{lemma} \label{lem:proj_analysis_Step_ERO}
With $\delta $ as defined in Lemma \ref{lem:wedins_bd_ero} let $\delta \leq 1/2$ hold. 
Then, there exists $ \beta \in \{ -1, 1 \}$ such that 
%

\begin{equation*} 
        || \tilde{u}_2 - \beta  u_2  ||_2^2 \leq 10 \delta.
\end{equation*}
\end{lemma}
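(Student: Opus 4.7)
The plan is to exploit the orthogonality $\langle u_1, u_2 \rangle = 0$ together with the fact that projecting $u_1$ and $u_2$ onto $\calR(\hat U)$ changes each by at most $\delta$ in $\ell_2$ norm. The key observation is that $\util_2$ lies in the $2$-dimensional subspace $\calR(\hat U)$ and is orthogonal to $\bar u_1 = \hat U \hat U^T u_1$; so $\util_2$ should play essentially the role of the component of $u_2$ inside $\calR(\hat U)$ that is orthogonal to $\bar u_1$.

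Let $P = \hat U \hat U^T$. The operator norm bound $\norm{(I - P)U}_2 \leq \delta$ from Lemma \ref{lem:wedins_bd_ero} immediately gives $\norm{(I-P)u_i}_2 \leq \delta$ for $i=1,2$. Since $\delta \leq 1/2$, we have $\norm{\bar u_1}_2^2 = 1 - \norm{(I-P)u_1}_2^2 \geq 1 - \delta^2 \geq 3/4$, so $\bar u_1 \neq 0$ and $\{\bar u_1/\norm{\bar u_1}_2,\, \util_2\}$ is an orthonormal basis of $\calR(\hat U)$. Next, decompose $u_2 = P u_2 + (I-P)u_2$ and expand $P u_2 = c_1 \bar u_1 / \norm{\bar u_1}_2 + c_2 \util_2$, so that $c_1^2 + c_2^2 = \norm{Pu_2}_2^2 \geq 1 - \delta^2$, and $\langle u_2, \util_2 \rangle = c_2$ since $\util_2 \perp (I-P)u_2$.

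The remaining task is to bound $|c_1|$. Using $\langle u_1, u_2 \rangle = 0$ and expanding,
\[
|\langle P u_1, u_2 \rangle| \;=\; |\langle (I-P) u_1, u_2 \rangle| \;\leq\; \norm{(I-P)u_1}_2 \;\leq\; \delta,
\]
so $|c_1| = |\langle \bar u_1, u_2 \rangle|/\norm{\bar u_1}_2 \leq \delta/\sqrt{1-\delta^2} \leq 2\delta/\sqrt{3}$. Combining, $c_2^2 \geq 1 - \delta^2 - 4\delta^2/3 = 1 - 7\delta^2/3$. Setting $\beta = \mathrm{sign}(c_2) \in \{-1,+1\}$, one has $\norm{\util_2 - \beta u_2}_2^2 = 2(1 - |c_2|) \leq 2\bigl(1 - \sqrt{1 - 7\delta^2/3}\bigr)$, and the elementary inequality $1 - \sqrt{1-x} \leq x$ (valid for $x \in [0,1]$) gives $\norm{\util_2 - \beta u_2}_2^2 \leq 14\delta^2/3$.

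Since $\delta \leq 1/2$, this is certainly at most $10\delta$, as claimed. The argument is routine linear algebra once Wedin's bound is in hand, and the only mild obstacle is tracking the constants; in fact, the proof sketched above delivers the sharper $O(\delta^2)$ dependence, which the lemma merely relaxes to $10\delta$ for convenience (and for use downstream, where this looseness is absorbed by other terms).
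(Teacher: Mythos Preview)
Your proof is correct and in fact sharper than the paper's. Both arguments lower-bound $|\langle \tilde u_2, u_2\rangle|$, but they reach it differently. The paper writes $\tilde u_2$ explicitly as $\frac{1}{\|\bar u_1\|_2}\hat U\bigl[\begin{smallmatrix}\mp\langle\hat u_2,\bar u_1\rangle\\ \pm\langle\hat u_1,\bar u_1\rangle\end{smallmatrix}\bigr]$, shows that $\langle \tilde u_2, \bar u_2/\|\bar u_2\|_2\rangle = \pm\det(U^T\hat U)/(\|\bar u_1\|_2\|\bar u_2\|_2)$, and combines the singular-value bound $|\det(U^T\hat U)|\geq 1-\delta^2$ with $\|u_2-\bar u_2/\|\bar u_2\|_2\|_2\leq 3\delta$ to get $|\langle\tilde u_2,u_2\rangle|\geq 1-5\delta$, hence the stated $10\delta$. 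You instead expand $Pu_2$ directly in the orthonormal basis $\{\bar u_1/\|\bar u_1\|_2,\tilde u_2\}$ of $\calR(\hat U)$ and bound the unwanted coefficient $c_1$ via $\langle u_1,u_2\rangle=0$, which cleanly avoids the determinant computation and yields $\|\tilde u_2-\beta u_2\|_2^2\leq 14\delta^2/3$, an $O(\delta^2)$ bound rather than $O(\delta)$. Your route is more elementary and strictly stronger; the paper's looser $10\delta$ suffices for the downstream applications (Theorem~\ref{thm:main_thm_l2_ERO}) where the extra factor of $\delta$ is absorbed anyway, but your observation that the dependence is actually quadratic is a genuine improvement.
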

%
%

\paragraph{Step 5: Putting it together.} 
We now use the above Lemma's to readily obtain Theorem \ref{thm:main_thm_l2_ERO} (see appendix for details). 
Using $\norm{Z}_2 \leq \Delta = 8 M \sqrt{\frac{5}{3} pn}  (2 + \varepsilon) $ in the expression for $\delta$, one can verify that 
%
\begin{equation} \label{eq:Delta_bd_ERO_outline}
\delta \leq 1/2 \Leftrightarrow 3 \Delta  \leq   \eta p ||r-\alpha e||_2 \sqrt{n}  
\end{equation}
%
holds if $\norm{r-\alpha e}_2 \geq  \frac{24 M}{\eta}  \sqrt{\frac{5}{3p}} (2 + \varepsilon)$.  
By using the bound on $\triangle$ in \eqref{eq:Delta_bd_ERO_outline} in the expression for $\delta$, we finally obtain
%
%
\begin{equation*}
\delta \leq \frac{12 M}{\eta} \sqrt{\frac{5}{3p}} \frac{(2+\varepsilon)}{\norm{r-\alpha e}_2}.
\end{equation*}
%
%
Plugging this in Lemma \ref{lem:proj_analysis_Step_ERO} yields the stated $\ell_2$ bound and completes the proof.

\subsection{Proof of Theorem \ref{thm:main_thm_linfty_ERO}} \label{sec:proofOutline_thm_linf_ERO}
Recall that $U=[u_1 \ u_2], \Uhat = [\hat{u}_1 \ \hat{u}_2] \in \matR^{n \times 2}$ correspond to the two largest 
singular vectors of $\expec[H]$, and $H$ respectively. 
The proof is broken down into three main steps. The first step involves 
bounding $\norm{\util_2 - \beta u_2}_{\infty}$ in terms of $\norm{\Uhat - UO^*}_{\max}$ 
-- where $O^*$ is a $2 \times 2$ orthogonal matrix that ``aligns'' $U$ with $\Uhat$ -- provided $\norm{Z}_2$ is small. 
The second step involves bounding $\norm{\Uhat - UO^*}_{\max}$, and in the third step we combine the results from the previous steps. The proofs of all lemmas are provided in Appendix \ref{sec:proofs_lems_linf_svdrs}.
%
\paragraph{Step 1: Bounding $\norm{\util_2 - \beta u_2}_{\infty}$ in terms of 
$\norm{\Uhat - UO^*}_{\max}$.} The following Lemma states this formally.
\begin{lemma} \label{linf_ero_step1}
For $\Delta$ as defined in Lemma \ref{lem:wedins_bd_ero}, let $\Delta \leq \frac{\eta p ||r-\alpha e||_2 \sqrt{n}}{3}$ hold. 
Then for any orthogonal matrix $O \in \matR^{2 \times 2}$, there exists $\beta \in \set{-1,1}$ depending on $O$ such that
\begin{equation} \label{eq:ero_linfty_step1}
  \norm{\util_2 - \beta u_2}_{\infty} \leq 4\norm{\Uhat - UO}_{\max}\left(2 + \frac{\sqrt{n}(M-\alpha)}{\norm{r-\alpha e}_2} \right) + 4 \sqrt{n} \norm{\Uhat - UO}_{\max}^2.
\end{equation} 
Moreover, there exists an orthogonal matrix $O^* \in \matR^{2 \times 2}$ such that
\begin{equation} \label{eq:O_align_infbd}
\norm{\Uhat - UO^*}_2 
\leq \frac{3\Delta}{\eta p \sqrt{n} \norm{r-\alpha e}_2}.
\end{equation}
\end{lemma}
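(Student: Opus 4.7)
The two bounds decouple naturally. The second claim \eqref{eq:O_align_infbd} is a standard Davis--Kahan / Wedin alignment fact: the optimal orthogonal $O^*$ is $W_1 W_2^T$, where $U^T \hat U = W_1 \Sigma W_2^T$ is an SVD, and this choice satisfies $\norm{\hat U - UO^*}_2 \leq 2 \norm{(I - \hat U \hat U^T) U}_2$ (bounding the sine of the largest principal angle between $\calR(U)$ and $\calR(\hat U)$ by the operator norm of $(I-\hat U\hat U^T)U$). Plugging the Wedin-type estimate \eqref{eq:sintheta_bd_1} from Lemma \ref{lem:wedins_bd_ero} and simplifying under the hypothesis $\Delta \leq \eta p \sqrt n \, \norm{r-\alpha e}_2 / 3$ (which renders the denominator in \eqref{eq:sintheta_bd_1} at least $2/3$ of its leading term) yields the claimed factor $3/(\eta p \sqrt n \, \norm{r-\alpha e}_2)$ up to constants.

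For \eqref{eq:ero_linfty_step1} I would work in the $U$-coordinate system. Fix an arbitrary orthogonal $O \in \matR^{2\times 2}$, set $E := \hat U - UO$, and let $\hat a \in \matR^2$ be the unique unit vector with $\util_2 = \hat U \hat a$. Put $a := O\hat a$, still of unit norm. Then
\begin{equation*}
\util_2 \;=\; (UO + E)\hat a \;=\; U a + E \hat a \;=\; a_1 u_1 + a_2 u_2 + E \hat a.
\end{equation*}
The crucial algebraic observation is that $u_1 \perp \util_2$: since $\util_2 \in \calR(\hat U)$ and $\util_2 \perp \bar u_1 = \hat U \hat U^T u_1$ by construction of $\util_2$, one has $u_1^T \util_2 = \util_2^T(\hat U \hat U^T u_1) = \util_2^T \bar u_1 = 0$. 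Substituting the decomposition above into $u_1^T \util_2 = 0$ forces $a_1 = -u_1^T E \hat a$, and since $u_1 = e/\sqrt n$, H\"older gives $|a_1| \leq \norm{u_1}_1 \, \norm{E \hat a}_\infty \leq \sqrt{2n}\,\norm{E}_{\max}$ (using $\norm{\hat a}_1 \leq \sqrt 2\,\norm{\hat a}_2 = \sqrt 2$). With $\beta := \operatorname{sgn}(a_2)$, the identity $a_1^2 + a_2^2 = 1$ yields two useful estimates, $|a_2 - \beta| = 1 - \sqrt{1-a_1^2} \leq a_1^2$ and $|a_2 - \beta| \leq |a_1|$, each of which I will use in a different term.

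The $\ell_\infty$ bound then follows from the triangle inequality applied to $\util_2 - \beta u_2 = a_1 u_1 + (a_2-\beta) u_2 + E \hat a$. The first summand contributes $|a_1|\,\norm{u_1}_\infty = |a_1|/\sqrt n \leq \sqrt{2}\,\norm{E}_{\max}$; the second uses $\norm{u_2}_\infty \leq (M-\alpha)/\norm{r-\alpha e}_2$ (a uniform bound inherited from $r_i \in [0,M]$, up to replacing $M - \alpha$ by $\max\{\alpha, M-\alpha\}$ if needed) combined with the two estimates on $|a_2-\beta|$ to produce a first-order contribution of size $\sqrt n\,\norm{E}_{\max}(M-\alpha)/\norm{r-\alpha e}_2$ and a residual contribution of size $\sqrt n\,\norm{E}_{\max}^2$ (obtained by writing $n\,\norm{E}_{\max}^2 = \sqrt n\,\norm{E}_{\max} \cdot \sqrt n\,\norm{E}_{\max}$ and absorbing one of the factors); the third summand is bounded by $\norm{E \hat a}_\infty \leq \sqrt{2}\,\norm{E}_{\max}$. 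Regrouping the three contributions yields \eqref{eq:ero_linfty_step1}.

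The main technical obstacle I anticipate is the bookkeeping needed to match the exact shape of \eqref{eq:ero_linfty_step1}: one must switch between the two bounds on $|a_2-\beta|$ depending on the term, and carefully absorb the $\sqrt n$ factor from $|a_1|$ against $\norm{u_1}_\infty = 1/\sqrt n$. Conceptually the argument is a two-dimensional Wedin perturbation refined by the orthogonality $\util_2 \perp \bar u_1$, which uniquely pins down $\util_2$ within the rank-$2$ subspace; the max-norm refinement of the usual $\ell_2$ Wedin bound is made possible by the fact that the leading singular vector $u_1 = e/\sqrt n$ is flat, so averaged linear functionals of $E\hat a$ are already controlled by $\norm{\cdot}_{\max}$ without any incoherence assumption on $\hat U$.
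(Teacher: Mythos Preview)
Your approach is correct and in fact more elementary than the paper's. The key difference is this: the paper writes $\tilde u_2$ via the explicit rotation formula $\tilde u_2 = \tfrac{1}{\|\bar u_1\|_2}\hat U D \hat U^T u_1$ with $D = \begin{pmatrix}0&\mp1\\ \pm1&0\end{pmatrix}$, substitutes $\hat U = UO + P$, and expands into four terms (one leading term $\pm\det(O)u_2$ and three error terms $w_1,w_2,w_3$, the last of which, $w_3 = \tfrac{1}{\|\bar u_1\|_2}PDP^Tu_1$, is responsible for the quadratic piece $4\sqrt n\,\|P\|_{\max}^2$). You instead use the clean observation $u_1^T\tilde u_2 = 0$ (immediate from $\tilde u_2\in\calR(\hat U)$ and $\tilde u_2\perp\bar u_1$) to directly pin down the coordinate $a_1 = -u_1^T E\hat a$, which bypasses the four-term expansion entirely.

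A small simplification: you do not need both estimates on $|a_2-\beta|$, and the ``residual'' splitting you describe is unnecessary. Using only $|a_2-\beta|\le|a_1|\le\sqrt{2n}\,\|E\|_{\max}$ already gives
\[
\|\tilde u_2-\beta u_2\|_\infty \le 2\sqrt 2\,\|E\|_{\max} + \sqrt{2n}\,\|E\|_{\max}\,\frac{M-\alpha}{\|r-\alpha e\|_2},
\]
which is strictly stronger than \eqref{eq:ero_linfty_step1} (the quadratic term $4\sqrt n\,\|E\|_{\max}^2$ is superfluous for your argument). So your bound is actually sharper; the paper's extra quadratic term is an artifact of the $PDP^T$ cross-term in its particular expansion. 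Your treatment of the second claim \eqref{eq:O_align_infbd} matches the paper's exactly.
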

%
%
\paragraph{Step 2: Bounding $\norm{\Uhat - UO^*}_{\max}$.} Since $H^T = -H$, therefore $-H^2 = H H^T$. Using the SVD 
$H = \Uhat \Sighat \Vhat$, we arrive at
\begin{align}
HH^T \Uhat &= -H^2 \Uhat = \Uhat \Sighat^2 \nonumber \\
\Rightarrow \Uhat &= -H^2 \Uhat \Sighat^{-2} = -(\expec[H]^2 + \expec[H] Z + Z\expec[H] + Z^2) \Uhat \Sighat^{-2}. \label{eq:uhat_exp}
\end{align}
For convenience of notation, denote $\sigma = \sigma_i(\expec[H]) = \eta p \sqrt{n} \norm{r-\alpha e}_2$; $i = 1,2$. 
Since $\expec[H]^T = -\expec[H]$, we get 
\begin{align}
- \expec[H]^2 &= \expec[H] \expec[H]^T = U \Sigma^2 U^T = \sigma^2 (U O^*) (U O^*)^T \nonumber \\
U O^* &= -\frac{1}{\sigma^2} (\expec[H])^2 (UO^*).  \label{eq:uo_exp}
\end{align}
Subtracting \eqref{eq:uo_exp} from \eqref{eq:uhat_exp} leads to 
\begin{align}
  \Uhat - U O^* &= \underbrace{-\expec[H]^2 (\Uhat - U O^*) \Sighat^{-2}}_{E_1} + \underbrace{\expec[H]^2 UO^{*} (\sigma^{-2} I - \Sighat^{-2})}_{E_2} 
	- \underbrace{\expec[H] Z \Uhat \Sighat^{-2}}_{E_3} 
	- \underbrace{Z \expec[H] \Uhat \Sighat^{-2}}_{E_4} - \underbrace{Z^2 \Uhat \Sighat^{-2}}_{E_5} \nonumber \\
	&= E_1 + E_2 - E_3 - E_4 - E_5 \nonumber \\
	\Rightarrow \norm{\Uhat - U O^*}_{\max} &\leq \sum_{i=1}^5 \norm{E_i}_{\max}. \label{eq:uhat_uo_bd}
\end{align}
We now proceed to bound each term in the RHS of \eqref{eq:uhat_uo_bd}. This is stated precisely in the following Lemma.
\begin{lemma} \label{lem:linf_ero_step2a}
If $\Delta \leq \frac{\sigma}{3} = \frac{\eta p \sqrt{n} \norm{r-\alpha e}_2}{3}$, then the following holds true.
\begin{enumerate}
\item $\norm{E_1}_{\max} \leq \frac{27 \Delta}{4 \sigma} \left(\frac{1}{\sqrt{n}} + \frac{M-\alpha}{\norm{r-\alpha e}_2}\right).$

\item $\norm{E_2}_{\max} \leq \frac{21\Delta}{4\sigma} \left(\frac{1}{\sqrt{n}} + \frac{M-\alpha}{\norm{r-\alpha e}_2}\right).$

\item $\norm{E_3}_{\max} \leq \frac{9\Delta}{4\sigma} \left(\frac{1}{\sqrt{n}} + \frac{M-\alpha}{\norm{r-\alpha e}_2}\right).$
\item $\norm{E_4}_{\max} \leq \frac{9}{4\sigma}(\norm{Z u_1}_{\infty} + \norm{Z u_2}_{\infty}).$
\item $\norm{E_5}_{\max} \leq \frac{27 \Delta^3}{4\sigma^3} + \frac{9}{4\sigma^2}(\norm{Z^2 u_1}_{\infty} + \norm{Z^2 u_2}_{\infty}).$
\end{enumerate}
\end{lemma}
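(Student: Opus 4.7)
The plan is to exploit the rank-$2$ structure of $\expec[H]$ together with careful row-wise $(\ell_2 \to \ell_\infty)$ bounds. From Lemma \ref{lem:sing_vals_C}, one may verify that $\expec[H] = \sigma U J U^T$ for the fixed $2\times 2$ orthogonal matrix $J$ satisfying $J^2 = -I$ (induced by the relations $v_1 = -u_2$, $v_2 = u_1$), and consequently the key identity $\expec[H]^2 = -\sigma^2 UU^T$ holds. The hypothesis $\Delta \leq \sigma/3$, combined with Weyl's inequality, yields $\est{\sigma}_i \geq 2\sigma/3$ for $i=1,2$, and hence $\norm{\Sighat^{-2}}_2 \leq \frac{9}{4\sigma^2}$ as well as $\norm{\sigma^{-2}I - \Sighat^{-2}}_2 \leq \frac{21\Delta}{4\sigma^3}$ (via $|\est{\sigma}_i^2 - \sigma^2| \leq \Delta(\est{\sigma}_i + \sigma) \leq 7\sigma\Delta/3$). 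A third ingredient is the row-wise bound $\norm{U}_{2 \to \infty} := \max_i \norm{e_i^T U}_2 \leq \frac{1}{\sqrt n} + \frac{M-\alpha}{\norm{r - \alpha e}_2}$, obtained from the triangle inequality applied to $u_1 = e/\sqrt{n}$ and $u_2 = (r-\alpha e)/\norm{r-\alpha e}_2$.

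For $E_1, E_2, E_3$, the idea is to factor $U$ to the left so that the max-norm of each product is controlled by $\norm{U}_{2\to\infty}$ times an operator norm, using the elementary inequality $\norm{UB}_{\max} \leq \norm{U}_{2\to\infty} \norm{B}_2$. Concretely: $E_1 = \sigma^2 U(U^T \Uhat - O^*)\Sighat^{-2}$, and $\norm{U^T\Uhat - O^*}_2 = \norm{U^T(\Uhat - UO^*)}_2 \leq \frac{3\Delta}{\sigma}$ by \eqref{eq:O_align_infbd}; similarly $E_2 = -\sigma^2 U O^*(\sigma^{-2} I - \Sighat^{-2})$ after using $UU^T U = U$; and $E_3 = -\sigma U J U^T Z \Uhat \Sighat^{-2}$, for which $\norm{U^T Z}_2 \leq \norm{Z}_2 \leq \Delta$. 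Each then produces a bound of the form $\frac{c\Delta}{\sigma} \norm{U}_{2\to\infty}$ with the stated constants $c = 27/4$, $21/4$, $9/4$.

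The terms $E_4$ and $E_5$ are the delicate ones, since the factor of $U$ sits to the right of $Z$ and cannot be absorbed into a row-norm bound. For $E_4 = -\sigma Z U J U^T \Uhat \Sighat^{-2}$, the plan is to write $ZU = [Zu_1\ Zu_2]$, and note that right-multiplication by $J$ turns this into $[Zu_2\ -Zu_1]$; then $(E_4)_{ij} = -\sigma\bigl((Zu_2)_i W_{1j} - (Zu_1)_i W_{2j}\bigr)$ with $W = U^T\Uhat\Sighat^{-2}$, and the triangle inequality together with $\norm{W}_{\max} \leq \norm{W}_2 \leq \frac{9}{4\sigma^2}$ yields the desired bound in terms of $\norm{Zu_1}_\infty$ and $\norm{Zu_2}_\infty$. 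For $E_5 = -Z^2 \Uhat \Sighat^{-2}$, I would split $\Uhat = UO^* + (\Uhat - UO^*)$: the first piece $-Z^2 U O^* \Sighat^{-2}$ is handled exactly as $E_4$ (now giving the $\norm{Z^2 u_i}_\infty$ terms), while for the residual one invokes the crude bound $\norm{A}_{\max} \leq \norm{A}_2$ together with $\norm{Z^2}_2 \leq \Delta^2$, $\norm{\Uhat - UO^*}_2 \leq \tfrac{3\Delta}{\sigma}$, and $\norm{\Sighat^{-2}}_2 \leq \tfrac{9}{4\sigma^2}$, producing the $\frac{27\Delta^3}{4\sigma^3}$ contribution.

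The main obstacle is remaining disciplined about which factor receives a row-infinity-norm estimate versus a spectral-norm estimate, so that the final expressions isolate exactly the quantities $\norm{Zu_i}_\infty$ and $\norm{Z^2 u_i}_\infty$, which will later require separate concentration arguments on $Z$, while every other factor is bounded via closed-form deterministic estimates. The rest is routine bookkeeping: propagating the hypothesis $\Delta \leq \sigma/3$ through Weyl's inequality and carefully tracking constants.
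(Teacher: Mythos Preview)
Your proposal is correct and follows essentially the same approach as the paper: factor $U$ to the left for $E_1,E_2,E_3$ so that the row-wise bound $\norm{u_1}_\infty + \norm{u_2}_\infty$ applies, and for $E_4,E_5$ isolate $Zu_i$ (resp.\ $Z^2 u_i$) on the left, with $E_5$ split via $\Uhat = UO^* + (\Uhat - UO^*)$. The only difference is packaging: you use the $\norm{\cdot}_{2\to\infty}$ norm and the inequality $\norm{UB}_{\max}\le \norm{U}_{2\to\infty}\norm{B}_2$, whereas the paper expands each product column by column; the resulting constants and the logical structure are identical.
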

%

Next, we will bound $\norm{Z u_i}_{\infty}$ and $\norm{Z^2 u_i}_{\infty}$ for $i=1,2$. Bounding $\norm{Z u_i}_{\infty}$ is 
a relatively straightforward consequence of Bernstein's inequality \cite[Corollary 2.11]{conc_book}, we state this in the 
following Lemma.
%
\begin{lemma} \label{lem:inf_conc_bd1}
If $p \geq \frac{2 \log n}{15n}$ then the following holds.
\begin{enumerate}
\item $\prob(\norm{Z u_1}_{\infty} \geq \frac{2\sqrt{2} + 4}{3} M \sqrt{15p \log n}) \leq \frac{2}{n}.$

\item $\prob(\norm{Z u_2}_{\infty} \geq \frac{2\sqrt{2} + 4}{3} M \sqrt{15p B n \log n}) \leq \frac{2}{n}$ where $B = \frac{(M - \alpha)^2}{\norm{r-\alpha e}_2^2}$.
\end{enumerate}
\end{lemma}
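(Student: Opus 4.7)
The plan is to apply Bernstein's inequality (\cite[Corollary 2.11]{conc_book}) componentwise to the random vector $Zu$ (for $u = u_1$ and $u = u_2$) and then take a union bound over the $n$ coordinates. The key observation enabling this strategy is that for any fixed $i \in [n]$, the entries $(Z_{ij})_{j \neq i}$ are mutually independent, mean-zero random variables, because they are indexed by distinct unordered pairs $\set{i,j}$; the global dependence $Z_{ji} = -Z_{ij}$ is irrelevant once $i$ is held fixed. Hence $(Zu)_i = \sum_{j \neq i} Z_{ij} u_j$ is a sum of independent, centered, bounded random variables, which is precisely the setting in which Bernstein's inequality delivers sub-Gaussian-like concentration up to a range-dependent correction.

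Next I would compute the two ingredients required by Bernstein, namely a uniform bound and the sum of variances. From the mixture definition \eqref{eq:Zmatrix_def_ERO}, $|Z_{ij}| \leq 2M$, and
$$\expec[Z_{ij}^2] \;=\; \expec[H_{ij}^2] - (\eta p (r_i-r_j))^2 \;\leq\; \eta p (r_i-r_j)^2 + (1-\eta) p \,\tfrac{M^2}{3} \;\leq\; p M^2.$$
For $u_1 = e/\sqrt{n}$, this gives the per-term bound $|Z_{ij} (u_1)_j| \leq 2M/\sqrt{n}$ and the variance sum $\sum_{j \neq i} \expec[Z_{ij}^2] (u_1)_j^2 \leq p M^2$. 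For $u_2 = (r-\alpha e)/\norm{r-\alpha e}_2$, I would use the crude entrywise bound $\norm{u_2}_{\infty} \leq (M-\alpha)/\norm{r-\alpha e}_2 = \sqrt{B}$, which gives a per-term bound of $2M\sqrt{B}$ and a variance sum bounded by $pM^2 B n$.

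Using Bernstein's inequality in the form $\prob(|S| \geq \sqrt{2 v u} + cu/3) \leq 2 e^{-u}$ with $u = 2 \log n$, the $u_1$ case yields a coordinatewise deviation bound of $2M\sqrt{p \log n} + \tfrac{4 M \log n}{3\sqrt{n}}$. The hypothesis $p \geq \tfrac{2\log n}{15 n}$ rewrites as $\log n \leq \tfrac{15 p n}{2}$, which lets me absorb the second (bounded-increment) term into a multiple of $M\sqrt{15 p \log n}$; a short computation shows the two summands collapse to at most $\tfrac{2\sqrt{2}+4}{3} M\sqrt{15 p \log n}$. A union bound over $i \in [n]$ then caps the failure probability by $2n \cdot n^{-2} = 2/n$. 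The $u_2$ case is entirely analogous, with $pM^2$ replaced by $pM^2 B n$ and $2M/\sqrt{n}$ replaced by $2M\sqrt{B}$ throughout, producing the second claimed bound.

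The main obstacle is essentially bookkeeping: verifying the variance and $\ell_\infty$ bounds above (in particular the inequality $\norm{u_2}_{\infty} \leq \sqrt{B}$, which relies on $r_i \in [0,M]$ together with the paper's WLOG normalization so that $|r_j - \alpha| \leq M-\alpha$), and juggling constants so that they collapse into the explicit form $\tfrac{2\sqrt{2}+4}{3}$ under the sparsity assumption $p \geq \tfrac{2\log n}{15n}$. There is no deep technical difficulty here; once the row-wise independence structure of $Z$ is noted, the remainder is a textbook application of Bernstein plus a union bound.
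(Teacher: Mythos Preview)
Your proposal is correct and follows essentially the same approach as the paper: coordinatewise Bernstein on $(Zu)_i = \sum_{j\neq i} Z_{ij} u_j$ followed by a union bound over $i$. The only cosmetic differences are that the paper uses the looser variance estimate $\expec[Z_{ij}^2] \leq \tfrac{10p}{3}M^2$ (carried over from the proof of Lemma~\ref{lem:specnorm_Z_ERO}) and solves the resulting quadratic in $t$ directly, whereas you use the tighter bound $\expec[Z_{ij}^2] \leq pM^2$ together with the $\sqrt{2vu}+bu/3$ form of Bernstein; either route lands comfortably below the stated constant $\tfrac{2\sqrt2+4}{3}$.
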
 
Bounding $\norm{Z^2 u_i}_{\infty}$ is a considerably more challenging task since the entries within a row of $Z^2$ will not be 
independent now. However, leveraging a recent entry-wise concentration result for the product of a random 
matrix (raised to a power) and a fixed vector \cite[Theorem 15]{Eldridge2018} (see Theorem \ref{thm:eldridge_randmat_pow}), we are 
able to do so and arrive at the following Lemma.
%
%
\begin{lemma} \label{lem:inf_conc_bd2}
Let $p \geq \frac{1}{2n}$ hold. Choose $\xi > 1, 0 < \kappa < 1$ and define $\mu = \frac{2}{\kappa + 1}$. 
Then if $\frac{16}{\kappa} \leq (\log n)^{\xi}$, the following holds true.
\begin{enumerate}
\item $\prob\left(\norm{Z^2 u_1}_{\infty} \geq 8 \sqrt{n} p M^2 (\log n)^{2\xi}\right) \leq n^{1-\frac{1}{4}(\log_{\mu} n)^{\xi-1} (\log_{\mu} e)^{-\xi}}.$

\item $\prob\left(\norm{Z^2 u_2}_{\infty} \geq 8 n p M^2 \frac{M-\alpha}{\norm{r - \alpha e}_2} (\log n)^{2\xi}\right) \leq n^{1-\frac{1}{4}(\log_{\mu} n)^{\xi-1} (\log_{\mu} e)^{-\xi}}.$
\end{enumerate}
\end{lemma}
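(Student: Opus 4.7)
The plan is to invoke the entrywise concentration result Theorem~\ref{thm:eldridge_randmat_pow} of Eldridge et al.\ (cited in the excerpt) with exponent $k=2$, applied separately to the pair $(Z,u_1)$ and $(Z,u_2)$. That theorem is precisely designed for the situation at hand: although the rows of $Z^{2}$ contain highly dependent entries, $Z^{2}v$ for a \emph{fixed} deterministic vector $v$ still concentrates entrywise at essentially the size predicted by a heuristic i.i.d.\ calculation, modulo polylogarithmic factors controlled by the parameter $\xi$ (and the auxiliary $\kappa,\mu$ that appear in the statement of Lemma~\ref{lem:inf_conc_bd2}). The heuristic here gives typical entry size $\asymp n p M^{2}\,\norm{v}_\infty$ (since each row of $Z$ has $\asymp np$ entries of magnitude $\leq M$), which matches the target bounds once one plugs in $\norm{u_1}_\infty=1/\sqrt{n}$ and $\norm{u_2}_\infty\leq (M-\alpha)/\norm{r-\alpha e}_2$.

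Concretely, first I would identify the input parameters to Theorem~\ref{thm:eldridge_randmat_pow}: the almost sure bound $\norm{Z}_{\max}\leq 2M$ (clear from \eqref{eq:Zmatrix_def_ERO} since $|r_i-r_j|\leq M$ and $|N_{ij}|\leq M$), the entrywise variance $\mathrm{Var}(Z_{ij})\lesssim pM^{2}$, and a spectral norm control via Lemma~\ref{lem:specnorm_Z_ERO}, giving $\norm{Z}_2\lesssim M\sqrt{pn}$ on a good event. The mild density hypothesis $p\geq 1/(2n)$ ensures that the effective row-degree and row-variance concentrate, which is what the Eldridge machinery requires in order to turn the spectral and row-variance inputs into the polylogarithmic-loss entrywise bound. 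The parameters $\xi>1$ and $0<\kappa<1$, together with $\mu = 2/(\kappa+1)$, are precisely those of Theorem~\ref{thm:eldridge_randmat_pow}; the technical growth condition $16/\kappa\leq(\log n)^{\xi}$ is what allows one to convert the failure probability into the tail $n^{1-\frac{1}{4}(\log_\mu n)^{\xi-1}(\log_\mu e)^{-\xi}}$ stated in the lemma.

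Next I would apply the theorem with $k=2$ and $v=u_1$, and then with $v=u_2$, taking a union bound only inside each application (since the probability expression is already for the full vector $\norm{Z^2 v}_\infty$). For $u_1=e/\sqrt{n}$ we have $\norm{u_1}_\infty=1/\sqrt{n}$, which after the $(npM^2)(\log n)^{2\xi}\norm{u_1}_\infty$-type bound yields $8\sqrt{n}\,pM^{2}(\log n)^{2\xi}$; for $u_2$ we use the pointwise bound $|u_{2,i}|=|r_i-\alpha|/\norm{r-\alpha e}_2\leq (M-\alpha)/\norm{r-\alpha e}_2$, producing $8np M^{2}\frac{M-\alpha}{\norm{r-\alpha e}_2}(\log n)^{2\xi}$. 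The constant $8$ is not sharp and would be absorbed into the verification of Eldridge's theorem; the bookkeeping is routine once the parameters are aligned.

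The main obstacle I expect is not conceptual but notational: correctly matching the row-variance, maximum row-sum, and spectral-norm inputs of Theorem~\ref{thm:eldridge_randmat_pow} to the mixture entries of $Z$ in \eqref{eq:Zmatrix_def_ERO}, and verifying that the ``good event'' on which Eldridge's bound applies (controlling $\norm{Z}_2$ and the row sums $\sum_j |Z_{ij}|$ via Bernstein/Chernoff) holds with probability at least $1-n^{1-\frac{1}{4}(\log_\mu n)^{\xi-1}(\log_\mu e)^{-\xi}}$, so that the probability statement in the lemma is genuinely met. Once these verifications are done, the two bounds follow by directly substituting $\norm{u_i}_\infty$; no further combinatorial work on the dependent entries of $Z^{2}$ is required, which is exactly the payoff of using the Eldridge machinery rather than a direct moment computation.
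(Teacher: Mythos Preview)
Your high-level strategy --- invoke Theorem~\ref{thm:eldridge_randmat_pow} with $k=2$ applied to $u_1$ and $u_2$, and read off the scale from $\norm{u_i}_\infty$ --- is exactly what the paper does. However, your description of the \emph{inputs} to that theorem is off, and this hides the one non-obvious step in the proof.

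Theorem~\ref{thm:eldridge_randmat_pow}, as stated in the paper, does not take ``row-variance, maximum row-sum, and spectral-norm inputs,'' nor is it conditional on a ``good event'' where $\norm{Z}_2$ and $\sum_j |Z_{ij}|$ are controlled. Its sole hypothesis on the matrix is the uniform moment condition $\expec[|X_{ij}|^m] \leq 1/n$ for all $m \geq 2$. The matrix $Z$ does not satisfy this directly, so the paper rescales: write $(Z^2 u_1)_l = \frac{\phi^2}{\sqrt{n}}\bigl((Z/\phi)^2 \util_1\bigr)_l$ with $\norm{\util_1}_\infty = 1$, and apply the theorem to $X = Z/\phi$. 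From the mixture form \eqref{eq:Zmatrix_def_ERO} one checks $\expec[|Z_{ij}|^m] \leq 2^{m+1} p M^m$, and the choice $\phi = 2^{3/2}\sqrt{pn}\,M$ gives $\expec[|Z_{ij}/\phi|^m] \leq (2pn)^{-(m/2-1)} n^{-1}$. The assumption $p \geq 1/(2n)$ is used precisely here, to push this below $1/n$; it has nothing to do with degree or row-sum concentration. The constant $8$ is then not a fudge factor but exactly $\phi^2 = 8pnM^2$, multiplied by $\norm{u_1}_\infty = 1/\sqrt n$ (resp.\ $\norm{u_2}_\infty \leq (M-\alpha)/\norm{r-\alpha e}_2$).

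One further correction: Theorem~\ref{thm:eldridge_randmat_pow} bounds a \emph{single} coordinate $(X^k u)_l$, so a union bound over $l \in [n]$ is required and is what produces the leading $n^{1}$ in the lemma's failure probability. Your claim that ``the probability expression is already for the full vector'' is not right. Once you rescale by $\phi$, verify the moment bound, and union over $l$, the proof is complete; no spectral-norm or Bernstein arguments enter.
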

%
%
\paragraph{Step 3: Putting it together.} We now combine the results of the preceding Lemmas to derive the final 
approximation bound. Recall from Lemma \ref{lem:specnorm_Z_ERO} that 
$\Delta =  8 M \sqrt{\frac{5}{3} pn}  (2 + \varepsilon)$. Also, $\sigma = \eta p \sqrt{n} \norm{r-\alpha e}_2$ and so
\begin{equation} \label{eq:Del_sigma_exp}
\frac{\Delta}{\sigma} = \frac{8M}{\eta} \sqrt{\frac{5}{3p}} \frac{(2+\varepsilon)}{\norm{r-\alpha e}_2}.
\end{equation} 
Using Lemmas \ref{lem:linf_ero_step2a},\ref{lem:inf_conc_bd1},\ref{lem:inf_conc_bd2} and \eqref{eq:Del_sigma_exp} in \eqref{eq:uhat_uo_bd}, we get
\begin{align*} 
  \norm{\Uhat - UO^{*}}_{\max} 
	&\leq 
	\left(\frac{114 M}{\eta} \sqrt{\frac{5}{3p}} \frac{(2+\varepsilon)}{\norm{r - \alpha e}_2}\right) \left(\frac{1}{\sqrt{n}} + \frac{M-\alpha}{\norm{r-\alpha e}_2}\right) \\
	& + \left(\frac{3\sqrt{15} (2\sqrt{2} + 4)}{4}\right) \left(\frac{M \sqrt{\log n}}{\eta \sqrt{p} \norm{r-\alpha e}_2}\right) \left(\frac{1}{\sqrt{n}} + \frac{M-\alpha}{\norm{r-\alpha e}_2}\right) \\
	&+ \frac{18 M^2 (\log n)^{2\xi}}{\eta^2 p \norm{r - \alpha e}_2^2} \left(\frac{1}{\sqrt{n}} + \frac{M-\alpha}{\norm{r-\alpha e}_2}\right) 
	+ \frac{27}{4} \left(\frac{8\sqrt{5}}{3}\right)^3 \frac{M^3}{\eta^3 p^{3/2}} \frac{(2+\varepsilon)^3}{\norm{r-\alpha e}_2^3} \\
	&\leq C_{\varepsilon} \left[\left(\frac{M \sqrt{\log n}}{\eta \sqrt{p} \norm{r-\alpha e}_2} + \frac{M^2 (\log n)^{2\xi}}{\eta^2 p \norm{r - \alpha e}_2^2}\right)  
	\left(\frac{1}{\sqrt{n}} + \frac{M-\alpha}{\norm{r-\alpha e}_2}\right) + \frac{M^3}{\eta^3 p^{3/2} \norm{r-\alpha e}_2^3} \right] \\
	&= C(n,M,\eta,p,\varepsilon,r)
\end{align*}
where $C_{\varepsilon} > 0$ is a universal constant depending only on $\varepsilon$. Plugging this in \eqref{eq:ero_linfty_step1}, we arrive at the stated $\ell_{\infty}$ bound in the Theorem. The lower bound on the success probability follows readily via a union bound on the events stated in Lemmas \ref{lem:specnorm_Z_ERO}, \ref{lem:inf_conc_bd1}, \ref{lem:inf_conc_bd2}.

\subsection{Proof of Theorem \ref{thm:rank_recov_infty}} \label{sec:linf_rank_svdrs_proof}
We assume w.l.o.g that $\pi$ is identity so that $\pi(i) = i$. For any $i \in [n]$, we have  for all rankings $\pitil$ which are consistent with $\util_2$ that 
\begin{align*}
\abs{i - \pitil(i)} = \sum_{j > i} \mb{1}_{\set{\pitil(j) < \pitil(i)}} + \sum_{j < i} \mb{1}_{\set{\pitil(j) > \pitil(i)}} 
\leq \sum_{j > i} \mb{1}_{\set{\util_{2,j} \geq \util_{2,i}}} + \sum_{j < i} \mb{1}_{\set{\util_{2,j} \leq \util_{2,i}}},
\end{align*}
leading to the bound 
\begin{equation*}
\norm{\pi - \pitil}_{\infty} = \max_{i} \abs{i - \pitil(i)} \leq \sum_{j > i} \mb{1}_{\set{\util_{2,j} \geq \util_{2,i}}} + \sum_{j < i} \mb{1}_{\set{\util_{2,j} \leq \util_{2,i}}}. 
\end{equation*}
Now for a given $i' \in [n]$, we can decompose $\util_{2,i'} - \util_{2,j}$ as
\begin{equation*}
\util_{2,i'} - \util_{2,j} = (\util_{2,i'} - u_{2,i'}) + (u_{2,i'} - u_{2,j}) + (u_{2,j} - \util_{2,j}),
\end{equation*}
which in turn implies $\util_{2,i'} - \util_{2,j} \leq (u_{2,i'} - u_{2,j}) + 2\norm{\util_2 - u_2}_{\infty}$. Hence if $j$ is such that $i' > j$, then $\norm{\util_2 - u_2}_{\infty} \leq \frac{\abs{u_{2,i'} - u_{2,j}}}{2}$ implies 
$$\util_{2,i'} - \util_{2,j} \leq \underbrace{(u_{2,i'} - u_{2,j})}_{\leq 0} + \abs{u_{2,i'} - u_{2,j}} = 0.$$ Using this, we obtain
\begin{align*}
\sum_{j < i'} \mb{1}_{\set{\util_{2,j} < \util_{2,i'}}} 
&\leq \sum_{j < i'} \mb{1}_{\set{\norm{\util_2 - u_2}_{\infty} > \frac{\abs{u_{2,i'} - u_{2,j}}}{2}}} \\
&\leq \sum_{j < i'} \mb{1}_{\set{\uinferr(n,M,\eta,p,\varepsilon,r) > \frac{\abs{u_{2,i'} - u_{2,j}}}{2}}} \quad \text{ (since $\norm{\util_2 - u_2}_{\infty} \leq \uinferr(n,M,\eta,p,\varepsilon,r)$)} \\
&= \sum_{j < i'} \mb{1}_{\set{2 \uinferr(n,M,\eta,p,\varepsilon,r) \norm{r - \alpha e}_2 > \abs{r_{i'} - r_{j}}}} \\
&= \left\lfloor \frac{2 \uinferr(n,M,\eta,p,\varepsilon,r) \norm{r - \alpha e}_2}{\rho} \right\rfloor \leq \frac{2 \uinferr(n,M,\eta,p,\varepsilon,r) \norm{r - \alpha e}_2}{\rho}.
\end{align*}
The same bound holds for $\sum_{j > i'} \mb{1}_{\set{\util_{2,j} > \util_{2,i'}}}$ and hence the statement of the Theorem follows.

\subsection{Proof of Theorem \ref{thm:score_rec_main_thm}} \label{subsec:proof_score_rec_svd}
Recall that $\wtil = \frac{\est{\sigma}_1}{\eta p \sqrt{n}} \util_2$. Let us denote
\begin{equation*}
w = \frac{\sigma}{\eta p \sqrt{n}} \beta u_2 = \norm{r - \alpha e}_2 \beta u_2 = (r-\alpha e)\beta.
\end{equation*}
For any norm $\norm{\cdot}$ on $\matR^n$, we have by triangle inequality that 
\begin{equation*}
 \norm{\rtil - w} \leq \norm{\wtil - w} + \abs{\frac{e^T \wtil}{n}} \norm{e}.
\end{equation*}
Since $e^T w = 0$, therefore we obtain via Cauchy-Schwarz that
\begin{equation*}
 \abs{\frac{e^T \wtil}{n}} = \abs{\frac{e^T (\wtil - w)}{n}} \leq \frac{\norm{\wtil - w}_2}{\sqrt{n}}.
\end{equation*}
Hence we have that 
\begin{align*}
 \norm{\rtil - w}_2 \leq 2\norm{\wtil - w}_2 \quad \text{and} \quad 
 \norm{\rtil - w}_{\infty} \leq \norm{\wtil - w}_{\infty} + \frac{\norm{\wtil - w}_2}{\sqrt{n}} \leq 2\norm{\wtil - w}_{\infty}.
\end{align*}
It remains to bound $\norm{\wtil - w}_2$  and  $ \norm{\wtil - w}_{\infty}$. Our starting point will be
\begin{align}
\norm{\wtil - w} 
&= \frac{1}{\eta p \sqrt{n}} \norm{\est{\sigma}_1 \util_2 - \sigma_1 \beta u_2} \nonumber \\
&= \frac{1}{\eta p \sqrt{n}} \norm{(\est{\sigma}_1 - \sigma_1)\util_2 + \sigma_1(\util_2 - \beta u_2)} \nonumber \\
\Rightarrow \norm{\wtil - w} 
&\leq \frac{1}{\eta p \sqrt{n}}(\abs{\est{\sigma}_1 - \sigma} \norm{\util_2} + \sigma_1 \norm{\util_2 - \beta u_2}) \nonumber \\
&\leq \frac{1}{\eta p \sqrt{n}}(\Delta \norm{\util_2} + \eta p \norm{r-\alpha e}_2 \sqrt{n} \norm{\util_2 - \beta u_2}). \label{eq:wtil_w_genbd}
\end{align}
\begin{enumerate}
\item \textbf{Bounding $\norm{\wtil - w}_2$.} From \eqref{eq:wtil_w_genbd}, we have that 
\begin{equation*}
 \norm{\wtil - w}_2 \leq \frac{1}{\eta p \sqrt{n}} (\Delta + \eta p \norm{r-\alpha e}_2 \sqrt{n} \norm{\util_2 - \beta u_2}_2)
\end{equation*}
Plugging the expression for $\Delta$ from Lemma \ref{lem:specnorm_Z_ERO}, and the bound on $\norm{\util_2 - \beta u_2}_2$ 
from Theorem \ref{thm:main_thm_l2_ERO}, we readily obtain the bound in part (i) of the Theorem. 

\item \textbf{Bounding $\norm{\wtil - w}_{\infty}$.} From \eqref{eq:wtil_w_genbd}, we have that 
\begin{align}
\norm{\wtil - w}_{\infty} 
&\leq \frac{1}{\eta p \sqrt{n}} [\Delta \norm{\util_2}_{\infty} + \sigma_1 \norm{\util_2 - \beta u_2}_{\infty}] \nonumber \\
&\leq \frac{1}{\eta p \sqrt{n}} [\Delta (\norm{\util_2 - \beta u_2}_{\infty} + \norm{u_2}_{\infty}) + \sigma_1 \norm{\util_2 - \beta u_2}_{\infty}] \nonumber \\
&= \frac{1}{\eta p \sqrt{n}} \left[(\sigma_1 + \Delta)\norm{\util_2 - \beta u_2}_{\infty} + \Delta \frac{(M-\alpha)}{\norm{r - \alpha e}_2}\right]. \label{eq:wtil_w_infbd}
\end{align}
Plugging the expression for $\sigma_1$ (from Lemma \ref{lem:sing_vals_C}) and $\Delta$ (from Lemma \ref{lem:specnorm_Z_ERO}), followed by some simplification, 
we arrive at the bound in part (ii) of the Theorem.
\end{enumerate}
%

\subsection{Proof of Theorem \ref{thm:main_svdn_l2_ERO}}  \label{sec:proofOutline_thm_svdn_l2_ERO}
The outline is similar to the proof of Theorem \ref{thm:main_thm_l2_ERO} with some technical changes. We first note that the 
SVD of $(\expec[\Dbar])^{-1/2} \expec[H] (\expec[\Dbar])^{-1/2}$ is given by $\sigma (u_1 v_1^T + u_2 v_2^T)$ where 
\begin{align} 
\sigma &= \eta p \norm{(\expec[\Dbar])^{-1/2} (r-\alpha e)}_2 \norm{(\expec[\Dbar])^{-1/2} e}_2, \ v_2 = u_1 = \frac{(\expec[\Dbar])^{-1/2} e}{\norm{(\expec[\Dbar])^{-1/2} e}_2}, \label{eq:svdn_decomp_1} \\
v_1 &= -u_2 = -\frac{(\expec[\Dbar])^{-1/2} (r - \alpha e)}{\norm{(\expec[\Dbar])^{-1/2} (r - \alpha e)}_2}, \nonumber
\end{align}
and $\alpha = \frac{r^T (\expec[\Dbar])^{-1} e}{e^T (\expec[\Dbar])^{-1} e}$. This is verified easily by proceeding as in the proof of Lemma \ref{lem:sing_vals_C}. One can also readily 
see that $\sigma_{\min} \leq \sigma \leq \sigma_{\max}$. Now let us write 
\begin{equation*}
\Dbar^{-1/2} H \Dbar^{-1/2} = (\expec[\Dbar])^{-1/2} \expec[H] (\expec[\Dbar])^{-1/2} + \Ztil
\end{equation*}
where $\Ztil = \Dbar^{-1/2} (H - \expec[H]) \Dbar^{-1/2} + \Dbar^{-1/2} \expec[H] \Dbar^{-1/2} - (\expec[\Dbar])^{-1/2} \expec[H] (\expec[\Dbar])^{-1/2}$. 
In order to bound $\norm{\Ztil}_2$, we will first need to establish the concentration of $\Dbar$ around $\expec[\Dbar]$.
\begin{lemma} \label{lem:Dbar_conc}
Denote $A(\eta,M) = \eta M^2 + (1-\eta) \frac{M}{2}$. If $p \geq \frac{M^2}{9 A(\eta,M)} \frac{\log n}{n}$ then, 
\begin{equation*}
\prob(\norm{\Dbar - \expec[\Dbar]}_2 \geq 2(\sqrt{2} + 1) \sqrt{A(\eta,M) np \log n}) \leq 2/n.
\end{equation*}
\end{lemma}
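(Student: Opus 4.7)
Since $\Dbar$ is diagonal, so is $\Dbar - \expec[\Dbar]$, hence $\norm{\Dbar - \expec[\Dbar]}_2 = \max_{i \in [n]} \abs{\Dbar_{ii} - \expec[\Dbar_{ii}]}$. My plan is to control each deviation $\abs{\Dbar_{ii} - \expec[\Dbar_{ii}]}$ via the scalar Bernstein inequality, and then take a union bound over the $n$ diagonal entries.

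Fix $i$ and write $\Dbar_{ii} = \sum_{j \neq i} \abs{H_{ij}}$. The key observation is that even though $H$ is skew-symmetric, each summand $\abs{H_{ij}}$ with $j \neq i$ is determined by exactly one of the independent upper-triangular variables $(H_{k\ell})_{k < \ell}$ (namely $H_{ij}$ itself when $j > i$, and $H_{ji}$ when $j < i$), and no two summands share an underlying variable. Hence $\set{\abs{H_{ij}} : j \neq i}$ is a collection of $n-1$ independent random variables, each bounded in $[0, M]$. A direct computation from the three-component mixture \eqref{ERoutliers} yields $\expec[\abs{H_{ij}}^2] = \eta p (r_i - r_j)^2 + (1-\eta) p \, M^2/3$, and after using $\abs{r_i - r_j} \leq M$ and summing over the $n-1$ indices $j \neq i$, the total variance is controlled by $n p A(\eta, M)$ for the prescribed constant $A(\eta, M)$.

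Applying the scalar Bernstein inequality to the centered sum then yields
$$\prob\left(\abs{\Dbar_{ii} - \expec[\Dbar_{ii}]} \geq t\right) \leq 2 \exp\left(\frac{-t^2/2}{n p A(\eta, M) + M t / 3}\right).$$
Choosing $t = 2(\sqrt{2}+1)\sqrt{A(\eta, M) n p \log n}$, the hypothesis $p \geq M^2 \log n / (9 n A(\eta, M))$ is precisely the condition that makes $M t / 3 \leq n p A(\eta, M)$; under this balancing the Bernstein denominator is at most $2 n p A(\eta, M)$, the exponent reduces to at most $-2 \log n$, and each entry deviates by more than $t$ with probability at most $2/n^2$. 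A union bound over the $n$ indices then produces the claimed overall probability $\leq 2/n$.

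The computation is essentially routine; the one place care is needed is the balancing of the two Bernstein terms, which is exactly what dictates the lower bound assumption on $p$ and determines the constant $2(\sqrt{2}+1)$ in the stated threshold. An alternative would be to invoke a matrix Bernstein bound for $\Dbar - \expec[\Dbar]$ directly, but since this matrix is diagonal the scalar route above is strictly simpler and loses nothing.
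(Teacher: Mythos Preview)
Your approach is exactly the paper's: scalar Bernstein on each diagonal entry $\Dbar_{ii}=\sum_{j\neq i}\abs{H_{ij}}$, then a union bound over $i$. One intermediate claim, however, does not check out. With $t = 2(\sqrt 2+1)\sqrt{A\,np\log n}$ and $p$ at the threshold $p = \frac{M^2\log n}{9nA}$, a direct computation gives
\[
\frac{Mt}{3}=\frac{2(\sqrt 2+1)M^2\log n}{9}=2(\sqrt 2+1)\,npA,
\]
so $Mt/3$ is \emph{not} bounded by $npA$ and the Bernstein denominator is not $\le 2npA$ as you assert. The conclusion is nonetheless correct: the hypothesis on $p$ is equivalent to $\bigl(\tfrac{4M}{3}\log n\bigr)^2 \le 16\,npA\log n$, and solving the quadratic $t^2 \ge 4\log n\bigl(npA + \tfrac{Mt}{3}\bigr)$ under this condition shows its positive root is at most $2(\sqrt 2+1)\sqrt{npA\log n}$, which is precisely how the constant $2(\sqrt 2+1)$ arises. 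Replace your balancing sentence with this quadratic argument and the proof goes through.

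A secondary remark: you correctly compute $\expec[\abs{H_{ij}}^2]=\eta p(r_i-r_j)^2+(1-\eta)p\,M^2/3$, but then bound the sum by $npA(\eta,M)$ with $A(\eta,M)=\eta M^2+(1-\eta)M/2$. This needs $M^2/3\le M/2$, i.e.\ $M\le 3/2$, which is not assumed. The paper itself writes $M/2$ here, so this is an inherited slip; the fix is simply to redefine $A$ with $M^2/3$ in place of $M/2$, and nothing downstream changes.
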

The proof is deferred to Appendix \ref{sec:proof_app_N_Thm_l2_ERO}. Conditioned on the event in Lemma \ref{lem:Dbar_conc}, we have that 
\begin{equation*}
\Dbar_{ii} \in [\expec[\Dbar_{ii}] \pm 2(\sqrt{2} + 1) \sqrt{A(\eta,M) np \log n}], \quad \forall i=1,\dots,n.
\end{equation*}
In particular, if $2(\sqrt{2} + 1) \sqrt{A(\eta,M) np\log n} \leq \frac{1}{2} \min_{i} \expec[\Dbar_{ii}] = \frac{p\lambda_{\min}}{2}$, then 
\begin{equation} \label{eq:Dbarmin_bd}
 \min_{i} \Dbar_{ii} \geq p\lambda_{\min}/2.
\end{equation}
Using triangle inequality, we obtain the bound 
\begin{align*}
\norm{\Ztil}_2 &\leq \norm{\underbrace{\Dbar^{-1/2} (H - \expec[H]) \Dbar^{-1/2}}_{\Ztil_1}}_2 
+ \norm{\underbrace{\Dbar^{-1/2} \expec[H] \Dbar^{-1/2} - (\expec[\Dbar])^{-1/2} \expec[H] (\expec[\Dbar])^{-1/2}}_{\Ztil_2}}_2.  
\end{align*}
From sub-multiplicativity of the spectral norm and also \eqref{eq:Dbarmin_bd}, we have that 
\begin{equation}
	\norm{\Ztil_1}_2 \leq \frac{\norm{H - \expec[H]}_2}{\min_i \Dbar_{ii}} \leq \frac{2\Delta}{p\lambda_{\min}}. \label{eq:ztil1_bd_1}
\end{equation}
Here, we used the bound $\norm{H - \expec[H]}_2 \leq \Delta$ from Lemma \ref{lem:specnorm_Z_ERO}. 
In order to bound $\norm{\Ztil_2}_2$, we add and subtract $\expec[\Dbar]$ from $\Dbar$ and apply triangle inequality. This yields
\begin{align}
\norm{\Ztil_2}_2 
&\leq \norm{(\Dbar^{-1/2} - (\expec[\Dbar])^{-1/2}) \expec[H] (\Dbar^{-1/2} - (\expec[\Dbar])^{-1/2})}_2 \nonumber \\
&+ \norm{(\Dbar^{-1/2} - (\expec[\Dbar])^{-1/2}) \expec[H] \Dbar^{-1/2}}_2 + \norm{\Dbar^{-1/2} \expec[H] (\Dbar^{-1/2} - (\expec[\Dbar])^{-1/2})}_2 \nonumber \\
&\leq \norm{\Dbar^{-1/2} - (\expec[\Dbar])^{-1/2}}_2^2 \norm{\expec[H]}_2 + 2\norm{\Dbar^{-1/2} - (\expec[\Dbar])^{-1/2}}_2 \norm{\expec[H]}_2 \norm{\Dbar^{-1/2}}_2. \label{eq:ztil2_bd_1}
\end{align}
Note that 
\begin{align*}
\norm{(\Dbar^{-1/2} - (\expec[\Dbar])^{-1/2})}_2 
&= \norm{\Dbar^{-1/2}(\Dbar^{1/2} - (\expec[\Dbar])^{1/2}) (\expec[\Dbar])^{-1/2}}_2 \\
&\leq \norm{\Dbar^{-1/2}}_2 \norm{(\Dbar^{1/2} - (\expec[\Dbar])^{1/2})}_2 \norm{(\expec[\Dbar])^{-1/2}}_2.
\end{align*}
Moreover, we can bound $\norm{(\Dbar^{1/2} - (\expec[\Dbar])^{1/2})}_2 \leq \norm{\Dbar - \expec[\Dbar]}^{1/2}_2$ since $\Dbar, \expec[\Dbar] \succ 0$ 
and $(\cdot)^{1/2}$ is operator monotone (see \cite[Theorem X.1.1]{bhatia_book}). 
Using $\norm{\Dbar^{-1/2}}_2 \leq \sqrt{\frac{2}{p\lambda_{\min}}}$, $\norm{(\expec[\Dbar])^{-1/2}}_2 \leq \sqrt{\frac{1}{p\lambda_{\min}}}$ and the bound 
from Lemma \ref{lem:Dbar_conc} in \eqref{eq:ztil2_bd_1}, we get
\begin{align}
\norm{\Ztil_2}_2 \leq C_1^{2}(\eta,M) \frac{(n p \log n)^{1/4}}{p\lambda_{\min}} \norm{\expec[H]}_2 
+ \frac{2\sqrt{2} C_1(\eta,M) \norm{\expec[H]}_2 (n p \log n)^{1/4}}{(p\lambda_{\min})^{3/2}},  \label{eq:ztil2_bd_2}
\end{align}
where $C_1(\eta,M) = 4 A^{1/4}(\eta,M)$. From \eqref{eq:ztil1_bd_1}, \eqref{eq:ztil2_bd_2}, we get 
\begin{align*}
\norm{\Ztil}_2 &\leq \frac{2\Delta}{p\lambda_{\min}} + \frac{C_1(\eta,M) \norm{\expec[H]}_2 (n p \log n)^{1/4}}{(p\lambda_{\min})^{3/2}} 
\left[\frac{C_1(\eta,M) (n p \log n)^{1/4}}{\sqrt{p\lambda_{\min}}} + 2\sqrt{2} \right] \\
&\leq 16M\sqrt{\frac{5}{3} pn} \frac{(2+\varepsilon)}{p\lambda_{\min}} + \frac{ C_1(\eta,M) (np\log n)^{1/4} 
\sigma}{(p\lambda_{\min})^{3/2}} \left(\frac{C_1(\eta,M) (n p \log n)^{1/4}}{\sqrt{p\lambda_{\min}}} + 2\sqrt{2} \right) \quad \text{ (using Lemma \ref{lem:specnorm_Z_ERO},\eqref{eq:svdn_decomp_1}) } \\
&= \Deltil. 
\end{align*}
Let $\Uhat = [\uhat_1 \ \uhat_2]$ denote the top two left singular vectors of $\Dbar^{-1/2} H \Dbar^{-1/2}$. Then from Wedin's bound, 
we know that 
\begin{equation}
\Deltil < \sigma \Rightarrow \norm{(I - \Uhat\Uhat^T) U}_2 \leq \frac{\Deltil}{\sigma - \Deltil} = \delta.
\end{equation}
Finally, Lemma \ref{lem:proj_analysis_Step_ERO} can be used here unchanged. Hence if $\delta \leq 1/2 \Leftrightarrow \Deltil \leq \sigma/3$, 
then there exists $ \beta \in \{ -1, 1 \}$ such that $\norm{\tilde{u}_2 - \beta  u_2}_2^2 \leq 10 \delta$. 
Since $\Deltil \leq \sigma_{\min}/3 \Leftrightarrow \delta \leq \frac{3\Deltil}{2\sigma_{\min}} \Rightarrow \Deltil \leq \sigma/3$, 
we obtain the stated bound on $\norm{\tilde{u}_2 - \beta  u_2}_2^2$. The lower bound on the success probability follows by applying the union bound to the events in Lemmas \ref{lem:specnorm_Z_ERO}, \ref{lem:Dbar_conc}.

\subsection{Proof of Theorem \ref{thm:score_rec_l2_svdn}} \label{subsec:proof_score_rec_l2_svdn}
Recall that $\tilde{w} = \frac{\est{\sigma_1}}{\eta p \norm{\Dbar^{-1/2} e}_2} \Dbar^{1/2} \tilde{u}_2$ and $\rtil = \tilde{w} - \frac{e^T \tilde{w}}{n} e$. Let us denote 
\begin{align*}
    w &= \frac{\sigma}{\eta p \norm{(\expec[\Dbar])^{-1/2} e}_2} (\expec[\Dbar])^{1/2} \beta u_2
    = (r - \alpha e) \beta, \\ 
    w' &= w - \frac{e^T w}{n} e = \beta(r - (\alpha + \alpha')e),
\end{align*}
where $\alpha' = \frac{e^T(r-\alpha e)}{n}$. 
Then using triangle inequality, we can bound $\norm{\rtil - w'}_2$ as
$$\norm{\rtil - w'}_2 \leq \norm{w - \tilde{w}}_2 + \abs{\frac{e^T(\tilde{w} - w)}{n}}\norm{e}_2 \leq 2\norm{w - \tilde{w}}_2.$$ 
In order to bound $\norm{w - \tilde{w}}_2$, note that we can write $\tilde{w} - w = \frac{1}{\eta p}(a_1 + a_2 + a_3 + a_4)$ where
\begin{align*}
a_1 &= \frac{\est{\sigma_1}}{ \norm{\Dbar^{-1/2} e}_2} \Dbar^{1/2} \tilde{u}_2 - \frac{\est{\sigma_1}}{ \norm{\Dbar^{-1/2} e}_2} (\expec[\Dbar])^{1/2} \tilde{u}_2,    \\ 
a_2 &= \frac{\est{\sigma_1}}{ \norm{\Dbar^{-1/2} e}_2} (\expec[\Dbar])^{1/2} \tilde{u}_2 - 
\frac{\est{\sigma_1}}{ \norm{(\expec[\Dbar])^{-1/2} e}_2} (\expec[\Dbar])^{1/2} \tilde{u}_2, \\
a_3 &= \frac{\est{\sigma_1}}{ \norm{(\expec[\Dbar])^{-1/2} e}_2} (\expec[\Dbar])^{1/2} \tilde{u}_2 - \frac{\sigma}{ \norm{(\expec[\Dbar])^{-1/2} e}_2} (\expec[\Dbar])^{1/2} \tilde{u}_2, \\
a_4 &= \frac{\sigma}{ \norm{(\expec[\Dbar])^{-1/2} e}_2} (\expec[\Dbar])^{1/2} \tilde{u}_2 - \frac{\sigma}{ \norm{(\expec[\Dbar])^{-1/2} e}_2} (\expec[\Dbar])^{1/2} \beta u_2.
\end{align*}
Since $\norm{\tilde{w} - w}_2 \leq \frac{1}{\eta p} \sum_{i=1}^4 \norm{a_i}_2$ we will now bound $\norm{a_i}_2$ for each $i$. Before proceeding, recall from the proof of Theorem \ref{sec:proofOutline_thm_svdn_l2_ERO} that $\min_i \Dbar_{ii} \geq \frac{p \lambda_{\min}}{2}$. 
We also have $$\max_{i} \Dbar_{ii} \leq p\lambda_{\max} + \frac{p \lambda_{\min}}{2} \leq \frac{3p\lambda_{\max}}{2},$$ and $\est{\sigma_1} \leq \sigma + \Deltil$ where the latter is due to Weyl's inequality.
\paragraph{Bounding $\norm{a_1}_2$.} Since $\sigma \leq \sigma_{\max}$, we have that
\begin{align} \label{eq:temp_recsvdn_1}
    \norm{a_1}_2 \leq \frac{\Deltil + \sigma_{\max}}{\norm{\Dbar^{-1/2} e}_2} \norm{\Dbar^{1/2} - (\expec[\Dbar])^{1/2}}_2 \leq
    \frac{\Deltil + \sigma_{\max}}{\norm{\Dbar^{-1/2} e}_2} \norm{\Dbar - \expec[\Dbar]}^{1/2}_2
\end{align}
where the second inequality is due to $\Dbar, \expec[\Dbar] \succ 0$ 
and since $(\cdot)^{1/2}$ is operator monotone (see \cite[Theorem X.1.1]{bhatia_book}). Using the bound $\norm{\Dbar^{-1/2} e}_2 \geq \sqrt{\frac{2n}{3p \lambda_{\max}}}$, along with the bound for $\norm{\Dbar - \expec[\Dbar]}_2$ (from Lemma \ref{lem:Dbar_conc}) in \eqref{eq:temp_recsvdn_1}, we arrive at 
\begin{align*}
    \norm{a_1}_2 \leq \sqrt{\frac{3}{2n} p \lambda_{\max}} (\sigma_{\max} + \Deltil) (2\sqrt{2} + 1)^{1/2} (A(\eta,M) np \log n)^{1/4}.
\end{align*}

\paragraph{Bounding $\norm{a_2}_2$.} We begin by noting that  
\begin{equation*}
    \norm{a_2}_2 = \est{\sigma_1} \abs{\frac{1}{\norm{\Dbar^{-1/2} e}_2} - \frac{1}{\norm{(\expec[\Dbar])^{-1/2} e}_2}} \norm{(\expec[\Dbar])^{1/2} \tilde{u}_2}_2 
    \leq (\sigma_{\max} + \Deltil) \abs{\frac{\norm{(\expec[\Dbar])^{-1/2} e}_2 - \norm{\Dbar^{-1/2} e}_2}{\norm{\Dbar^{-1/2} e}_2 \norm{(\expec[\Dbar])^{-1/2} e}_2}}.
\end{equation*}
Since $\norm{\Dbar^{-1/2} e}_2 \geq \sqrt{\frac{2n}{3p \lambda_{\max}}}$,  $\norm{(\expec[\Dbar])^{-1/2} e}_2 \geq \frac{\sqrt{n}}{\sqrt{p \lambda_{\max}}}$, and 
\begin{align*}
    \abs{\norm{(\expec[\Dbar])^{-1/2} e}_2 - \norm{\Dbar^{-1/2} e}_2} 
    &\leq \norm{(\Dbar^{-1/2} - (\expec[\Dbar])^{-1/2} ) e}_2 \\
    &\leq \norm{\Dbar^{-1/2}}_2 \norm{\Dbar^{1/2} - (\expec[\Dbar])^{1/2}}_2 \norm{(\expec[\Dbar])^{-1/2}}_2 \sqrt{n} \\
    &\leq \frac{\sqrt{2}}{p\lambda_{\min}} (2\sqrt{2} + 1)^{1/2} (A(\eta,M) np \log n)^{1/4},
\end{align*}
we can bound $\norm{a_2}_2$ as
\begin{equation*}
    \norm{a_2}_2 \leq \sqrt{\frac{3}{n}} (\sigma_{\max} + \Deltil) \frac{\lambda_{\max}}{\lambda_{\min}} (2\sqrt{2} + 1)^{1/2} (A(\eta,M) np \log n)^{1/4}.
\end{equation*}

\paragraph{Bounding $\norm{a_3}_2$.} This is easily achieved by noting that 
\begin{equation*}
    \norm{a_3}_2 = \frac{\abs{\est{\sigma_1} - \sigma}}{\norm{(\expec[\Dbar])^{-1/2} e}_2} \norm{(\expec[\Dbar])^{1/2} \tilde{u}_2}_2 \leq \frac{\Deltil \sqrt{p \lambda_{\max}}}{\sqrt{n}/\sqrt{p\lambda_{\max}}} = \frac{\Deltil p \lambda_{\max}}{\sqrt{n}}.
\end{equation*}
\paragraph{Bounding $\norm{a_4}_2$.} This is also easily achieved by noting that
\begin{align*}
\norm{a_4}_2 &\leq \frac{\sigma}{ \norm{(\expec[\Dbar])^{-1/2} e}_2} \norm{(\expec[\Dbar])^{1/2}}_2 \norm{\tilde{u}_2 - \beta u_2}_2 \\
&\leq \frac{\sigma_{\max} \sqrt{p \lambda_{\max}}}{\sqrt{n}/\sqrt{p \lambda_{\max}}} \left( \sqrt{15} \sqrt{\frac{\Deltil}{\sigma_{\min}}}\right) \\
&\leq \sqrt{\frac{15 \Deltil}{\sigma_{\min} n}} \sigma_{\max} p \lambda_{\max}.
\end{align*}
The stated bound now follows from $\norm{\rtil - \beta(r - (\alpha + \alpha') e)}_2 \leq \frac{2}{\eta p} \sum_{i=1}^4 \norm{a_i}_2$.


%
\section{Matrix completion as a preprocessing step for ranking}  \label{sec:matrixCompletion}

\textit{Low-rank matrix completion} is the problem of recovering the missing entries of a low-rank 
matrix given a subset of its entries. This line of research started with the results in \cite{candes_recht} and \cite{candes_Tao_MC} which showed that given a rank-$r$ matrix $C$ of size $n_1 \times n_2$ (with $n = \max\set{n_1,n_2}$), one can recover it by observing only $O(n r \;  \text{polylog}(n))$ randomly selected entries (under some assumptions on $C$) via a simple convex optimization algorithm. This was partly inspired by similar approaches used previously in the compressed sensing literature  \cite{candes_Romberg_Tao, Candes2006}.
This problem has received tremendous attention in the last decade, due to a number of applications such as in phase retrieval \cite{phase_retrieval_candes_MC}, computer vision \cite{tomasi1992shape,ozyecsil2017survey} and sensor network localization \cite{asap2d} to name a few.

More formally, let us assume for simplicity that $r = O(1)$ and the SVD of $C = \sum_{i \in [r]} \sigma_i u_i v_i^T$ satisfies
\begin{equation} \label{eq:coherence_term}
    \norm{u_i}_{\infty} \leq \sqrt{\frac{\mu}{n_1}}, \quad  \norm{v_i}_{\infty} \leq \sqrt{\frac{\mu}{n_2}}; \quad \forall i,j,
\end{equation}
for some $\mu \geq 1$. Here $\mu$ is a measure of how spread out the entries of $u_i, v_i$ are - the smaller the value of $\mu$, the better. Say we observe $m$ entries of $C$ on a subset $\Omega \subset [n_1] \times [n_2]$, sampled uniformly at random. Denoting $P_{\Omega}: \matR^{n_1 \times n_2} \rightarrow \matR^{n_1 \times n_2}$ to be the projection operator on $\Omega$, it was shown in \cite{candes_Tao_MC} that the solution $\est{C}$ of 
\begin{equation} \label{eq:nuc_norm_MC}
    \min \norm{X}_{*} \quad \text{s.t} \quad P_{\Omega}(X) = P_{\Omega}(C)
\end{equation}
equals $C$ with high probability, provided $m = \Omega(\mu^4 n \log^2 n)$. The nuclear norm minimization problem \eqref{eq:nuc_norm_MC} is a SDP and hence can be solved in polynomial time using, for eg., interior point methods. In fact, the result holds under a different Bernoulli sampling model too,  wherein each entry of $M$ is observed independently with a certain probability \cite[Section 4.1]{candes_recht}. Moreover, say that the observations are noisy, i.e, we observe %
\begin{equation*}
    Y_{i,j} = C_{i,j} + Z_{i,j}; \quad (i,j) \in \Omega \quad \Leftrightarrow  \quad P_{\Omega}(Y) = P_{\Omega}(C) + P_{\Omega}(Z), 
\end{equation*}
where $Z$ is the noise matrix. Say $\norm{P_{\Omega}(Z)}_F \leq \delta$. Then, it was shown in \cite{Candes10MCnoise} (under certain additional conditions) that the solution of
\begin{equation} \label{eq:nuc_norm_MC_noise}
    \min \norm{X}_{*} \quad \text{s.t} \quad \norm{P_{\Omega}(X) - P_{\Omega}(Y)}_F \leq \delta
\end{equation}
is stable, i.e., the estimation error $\norm{\est{C} - C}_{F}$ is bounded by a term proportional to $\delta$.

In our setting, we observe (noisy versions of) a subset of the entries of the $n \times n$ matrix $C = r e^T - e r^T$ where each off-diagonal entry of $C$ is revealed with probability $p$. Since $C$ has rank $2$, it is natural to consider estimating $C$ via matrix completion as a preprocessing step, and then subsequently applying \textsc{SVD-RS} or \textsc{SVD-NRS} on the obtained estimate of $C$ for recovering the underlying ranks and scores. In order to understand the sample complexity for successful matrix completion, we need to express $\mu$ defined in \eqref{eq:coherence_term} in terms of the score vector $r$. To this end, we see from Lemma \ref{lem:sing_vals_C} that
\begin{equation*}
 \norm{u_2}_{\infty}, \norm{v_1}_{\infty} = \frac{\norm{r - \alpha e}_{\infty}}{\norm{r - \alpha e}_2} \leq \frac{M - \alpha}{\norm{r - \alpha e}_2}   
\end{equation*}
and $\norm{u_1}_{\infty}, \norm{v_2}_{\infty} = 1/\sqrt{n}$. Hence it follows that
\begin{equation*}
\mu = \max\set{\frac{(M-\alpha)\sqrt{n}}{\norm{r - \alpha e}_2}, 1}.   
\end{equation*}
In order to get a sense of the scaling of $\mu$, consider the setting $r_i = i$. As seen before, we then have $\norm{r - \alpha e}_2 = \Theta(n^{3/2})$, $M = n$, and $\alpha = \Theta(n)$. This in turn implies that $\mu = \Theta(1)$ which is the ideal scenario for matrix completion. In our numerical experiments, we rely on the TFOCS software library  \cite{tfocs}, that allows for construction of first-order methods for a variety of convex optimization problems \cite{becker2011templates}. 
In our implementation, we also set the diagonal entries of $X$ to be equal to zero since the same is true for $C$. We do not enforce the skew-symmetry constraints $X_{ij} = - X_{ji}$ for $(i,j) \in \Omega$,  so the solution $\est{C}$ to $C$ is not guaranteed to be skew-symmetric. Instead, we output $\frac{\est{C} - \est{C}^T}{2}$ as the final (skew-symmetric) estimate obtained from the preprocessing step. Note that since $C^T = -C$, we have
\begin{equation*}
    \left\|\frac{\est{C} - \est{C}^T}{2} - C \right\|_F = \left\|\frac{\est{C} - \est{C}^T - C + C^T}{2}\right\|_F \leq \norm{C - \est{C}}_F.
\end{equation*}

\begin{remark}
Note that in the setting of matrix completion, we perform the scaling recovery procedure outlined in Section \ref{sec:scaleRecovery} as follows. 
We build the matrix $\Pi$ from equation \eqref{eq:PI_ij}  by only consider the entries/edges $\{i,j\} \in E$ from the original measurement graph $G$, and do not include the entries filled in during the matrix completion step.
\end{remark}

Finally, we end with a motivating discussion for the applicability of the matrix completion approach in the setting of ranking from pairwise comparisons. 
The recent work \cite{udell_SIMODS_lowRank} of Udell and Townsend provides a compelling argument on why big data matrices are approximately low-rank. The authors consider a simple generative model for matrices, assuming that each row or column is associated to a (possibly high dimensional) bounded latent variable, with the individual matrix entries being generated by applying a piecewise analytic function to these latent variables.  While the initial resulting matrices are typically full rank, the authors show that one can approximate every entry of an $m \times  n$ matrix drawn from the above model to within a fixed absolute error by a low-rank matrix whose rank grows as $O (\log(m + n))$. In other words, any sufficiently large matrix from such a latent variable model can be approximated by a low-rank matrix (up to a small entrywise error).
The paradigm that \textit{``nice latent variables models are of log-rank''} \cite{udell_SIMODS_lowRank} is also applicable in the ranking setting, where one assumes that the final ranking (or skill) of each player varies smoothly as a function of covariate information, which is typically available in many real word applications. For example, in sport or clinical data, covariates may provide additional information about both the ranked players and the rankers/judges, which can overall lead to better aggregated results.

\section{Numerical Experiments}  \label{sec:num_experiments}

We compare the performance of Algorithm \ref{algo:SVD_Rank_sync} \textsc{SVD-RS} (\textsc{SVD} in the figure legends for brevity) and Algorithm \ref{algo:SVDN_Rank_sync} \textsc{SVD-NRS} (\textsc{SVD-N} in the figure legends) with that of seven other algorithms from the literature, namely 
\textsc{RowSum Ranking} (\textsc{RSUM})  \cite{gleich2011rank}, 
\textsc{Least-Squares-Rank} (\textsc{LS}) as considered in \cite{syncRank}, 
\textsc{Serial-Rank} (\textsc{SER})  \cite{fogel2016spectral}, 
\textsc{Spring-Rank} (\textsc{SPR})  \cite{CaterinaDeBacco_Ranking}, 
\textsc{Bradley–Terry } (\textsc{BTL})  \cite{BradleyTerry1952}, 
\textsc{Page-Rank} (\textsc{PGR})   \cite{Pageetal98}, and  
\textsc{Sync-Rank} (\textsc{SYNC})  \cite{syncRank}. We refer the reader to Section \ref{sec:relatedWork_ranking} for a brief survey of the ranking literature, including the above algorithms.
We compare the performance of all algorithms on synthetic data in Section \ref{sec:num_synthetic}, and on real data in Section \ref{sec:num_real}. We consider a variety of performance metrics summarized further below, altogether highlighting the competitiveness of our proposed SVD-based algorithms with that of state-of-the-art methods. 



\subsection{Synthetic data} \label{sec:num_synthetic}
This section details the outcomes of synthetic numerical experiments, under the ERO model, where the measurement graph is \ER and the noise comes in the form of outliers from a uniform distribution.

We consider two sets of experiments, a first one where the strengths $r_i$ are uniformly distributed in $[0,1]$, and a second one where they are Gamma distributed with shape parameter $a=0.5$ and scale parameter $b=1$. This choice of parameters for the Gamma distribution leads to a skewed distribution of player strengths, and subsequently a skewed distribution of node degrees, which is a setting realistic in practice but known to be challenging for spectral methods. We fix the number of nodes ($n$), and vary the edge density ($p$), and noise level ($\gamma$) as follows.  

We broadly consider two main experimental settings.
\begin{itemize}
\item In Figure \ref{fig:unif_ERO_n1000} (uniform scores) and Figure \ref{fig:gamma_ERO_n1000} (Gamma scores) we consider a synthetic model with $n=1000$, with sparsity parameters $p=0.05$ (column 1) and $p=1$  (column 3). For the $p=0.05$ scenario, we also show the results after running all algorithms on top of a reprocessing step that applies low-rank matrix completion (column 2). 

\item In Figure \ref{fig:uniform_ERO_n3000} (uniform scores)  and Figure \ref{fig:gamma_ERO_n3000} (Gamma scores), we consider a synthetic model with $n=3000$, and sparsity parameter $p \in  \set{0.01, 0.05, 0.1}$, indexing the columns.   
\end{itemize}
Across all the above experiments, we consider three different performance metrics, as we vary the noise level $\gamma$ on the x-axis. 
Whenever ground truth is available, we plot in the top row of Figures \ref{fig:unif_ERO_n1000}, \ref{fig:gamma_ERO_n1000}, \ref{fig:uniform_ERO_n3000}, \ref{fig:gamma_ERO_n3000} the \textit{Kendall Distance} between the recovered strength vector $\hat{r}$ and the ground truth $r$, for different noise levels $\gamma$. The Kendall distance counts the number of pairs of candidates that are ranked in different order (flips), in the original ranking versus the recovered one.
The middle row of each Figure  plots the \textit{Correlation Score}, computed as the Pearson correlation between the ground truth $r$ and the recovered $\hat{r}$. %
Finally, the bottom row in each Figure plots the \textit{RMSE} error, defined as $ \sqrt{ \frac{1}{n} || r - \hat{r} ||_2} $, after having centered $r$ and $\hat{r}$. 
Note that the low-rank matrix completion is illustrated in the middle column of Figures \ref{fig:unif_ERO_n1000} and \ref{fig:gamma_ERO_n1000}.

\paragraph{Performance comparison.} For the remainder of this section, we compare and contrast the performance of our algorithms with that of the other methods, across four synthetic data sets, detailed below. 

\textbullet \; 
In Figure \ref{fig:unif_ERO_n1000} (uniform scores with $n=1000$): in the setting $p=0.05$, our methods perform better than \textsc{SER} but are in general outperformed by the other methods in terms of Kendall Distance (KD) and Correlation Score (CS) (except at very low levels of noise), and they perform on par with all the methods in terms of RMSE (in particular for $\gamma > 0.30$ our methods outperform \textsc{SER},  \textsc{PGR}, \textsc{BTL}, and  \textsc{SPR}). After the matrix completion step,   both \textsc{SVD} and  \textsc{SVD-N} outperform all other methods except  \textsc{LS} and \textsc{RSUM} in terms of RMSE, to which they are comparable (visually indistinguishable) up until $\gamma < 0.50$, and only slightly outperformed for higher noise. In terms of KD and CS, our two SVD-based algorithms  perform on par with most other methods, and clearly outperform both \textsc{SER} and \textsc{PGR}. Finally, for the complete graph case $p=1$, our methods perform at the top of the rankings in terms of KD, CS, and RMSE, and are only matched in performance or slightly outperformed depending on the noise regime, by \textsc{SYNC}, \textsc{LS}, and \textsc{RSUM}.

\textbullet \; 
In Figure \ref{fig:gamma_ERO_n1000} (Gamma scores with $n=1000$): for the very sparse regime \textsc{SYNC} clearly outperforms all other  methods, with the SVD-based methods outperforming only \textsc{SER}. After the matrix completion step, our proposed algorithms outperform all other methods except \textsc{LS} and \textsc{RSUM}. For the complete graph case, \textsc{SYNC} is by far the winner in terms of KD, with the remaining methods performing comparably, except \textsc{PGR}.  In terms of RMSE, our methods perform best along with \textsc{LS} and \textsc{RSUM}, and clearly outperform \textsc{SYNC}, \textsc{SPR}, \textsc{PGR}, \textsc{BTL} and \textsc{SER}.

\textbullet \; 
In Figure \ref{fig:uniform_ERO_n3000}  (uniform scores with $n=3000$): for $p=0.01$ there is very clear ordering of the methods, mostly consistent across all three performance metrics. We observe that  \textsc{SYNC}, \textsc{SPR}, \textsc{RSUM}, and  \textsc{LS}, are the top three best performing methods, followed by \textsc{BTL}, \textsc{PGR}, \textsc{SVD}, \textsc{SVD-N}, and  \textsc{SER}. At slightly higher edge density  $p=0.05$, our SVD methods perform better than  \textsc{SER}, \textsc{PGR}, and \textsc{BTL}, for the first part of the noise spectrum, while in terms of RMSE, \textsc{SVD} and \textsc{SVD-N}  clearly outperform    \textsc{SER}, \textsc{PGR}, \textsc{BTL}, \textsc{SPR}, and \textsc{SYNC}, and are surpassed only by \textsc{LS} and \textsc{RSUM}. The relative performance is similar for  $p=0.10$.

\textbullet \; 
Finally, we comment on the results from Figure  \ref{fig:gamma_ERO_n3000}  (Gamma scores with $n=3000$): for $p=0.01$, \textsc{SYNC} is the clear winner in terms of KD, CS, and RMSE, while \textsc{SER}, \textsc{SVD} and \textsc{SVD-N} are at the bottom of the ranking. The relative ordering is roughly preserved for higher edge densities, with the comment that \textsc{SER} is clearly the worst performer, while \textsc{SYNC} is by far the most accurate method, especially in terms of KD, where it outperforms all other methods by a large margin.

\newcommand{\wid}{2.2in} 
\newcolumntype{C}{>{\centering\arraybackslash}m{\wid}}

\begin{table*}\sffamily 
\hspace{-9mm} 
\begin{tabular}{l*3{C}@{}}
& $p=0.05$ & $p=0.05$ + Matrix Completion  & $p=1$  \\
& \includegraphics[width=0.35\columnwidth]{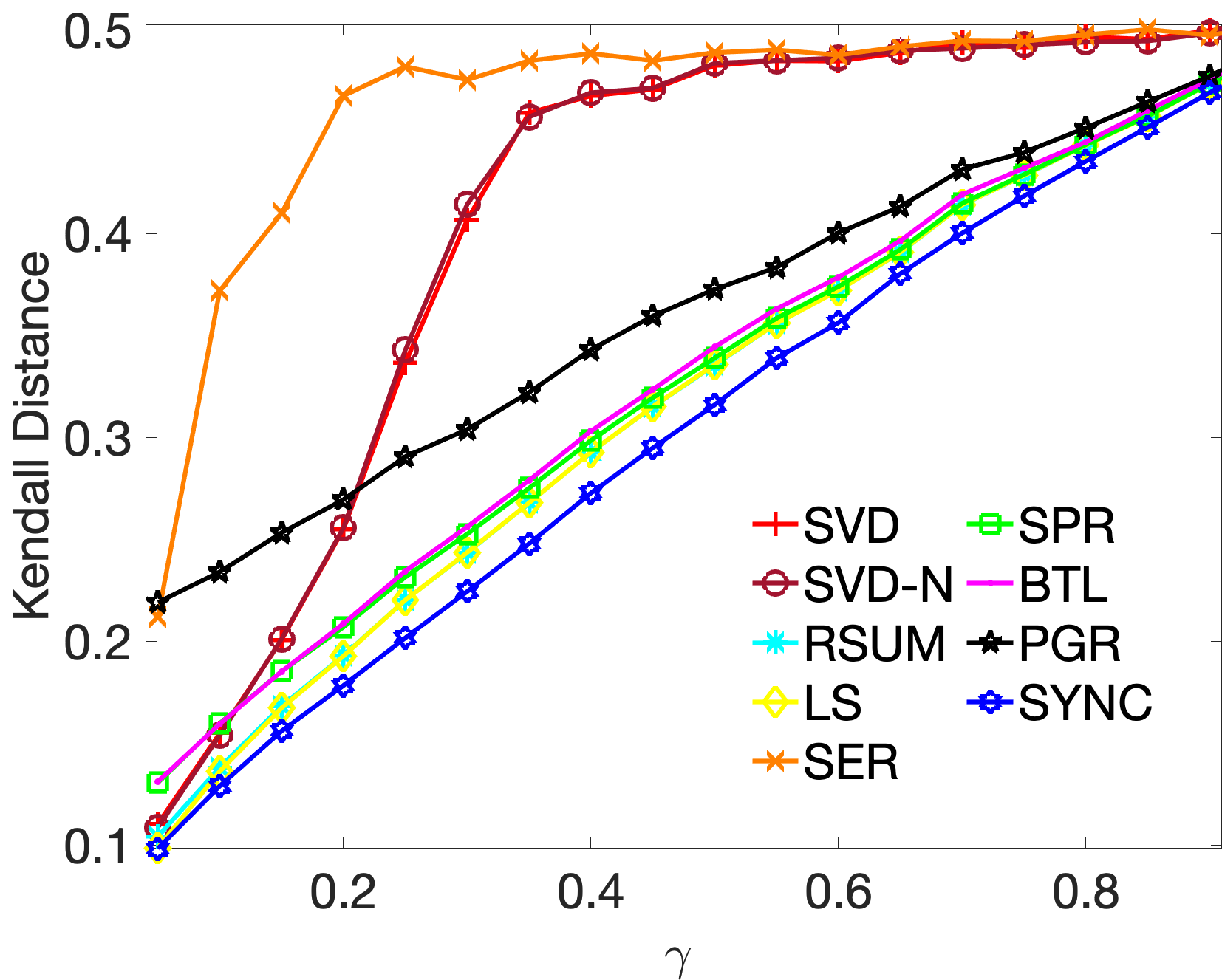}
& \includegraphics[width=0.35\columnwidth]{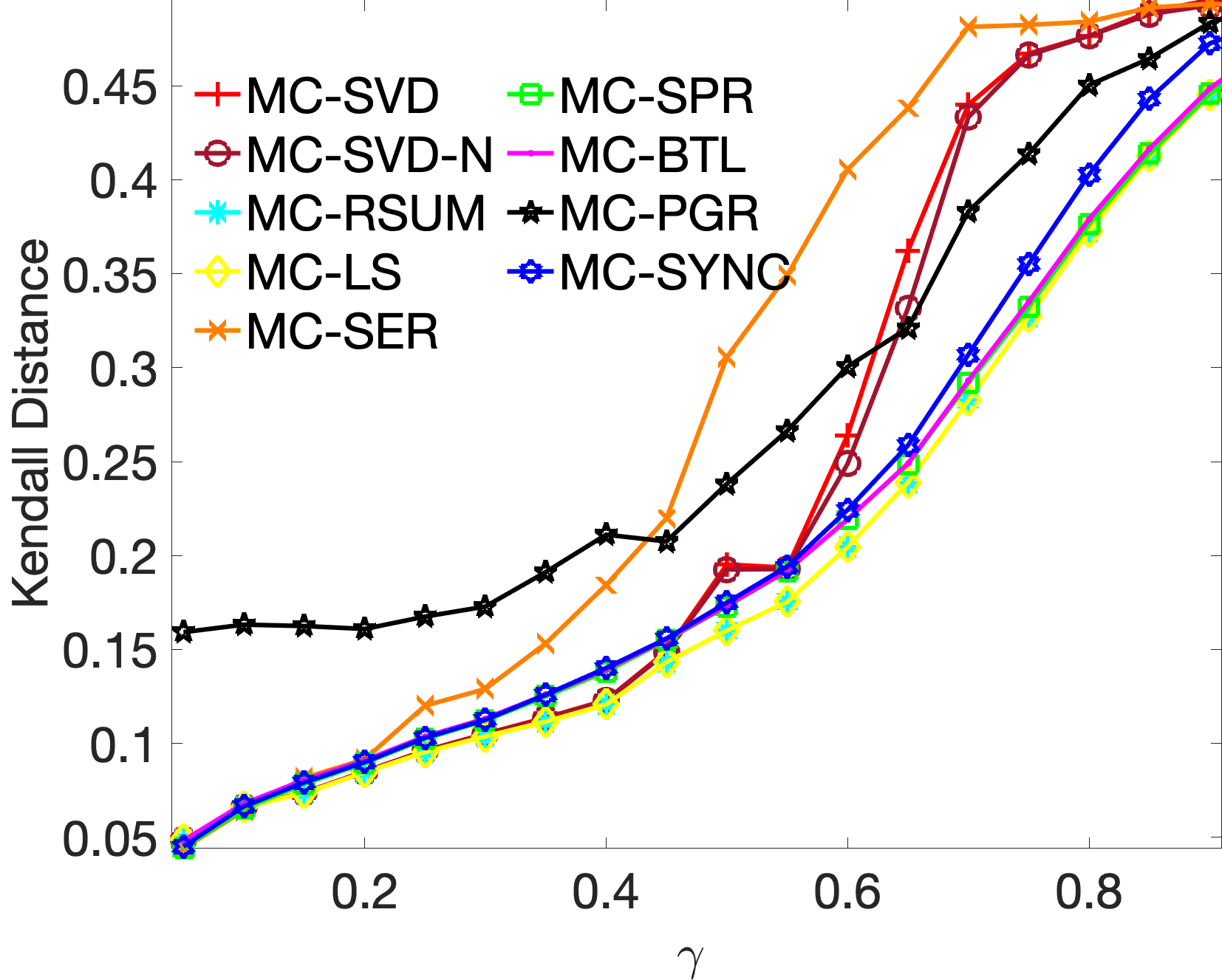}
& \includegraphics[width=0.35\columnwidth]{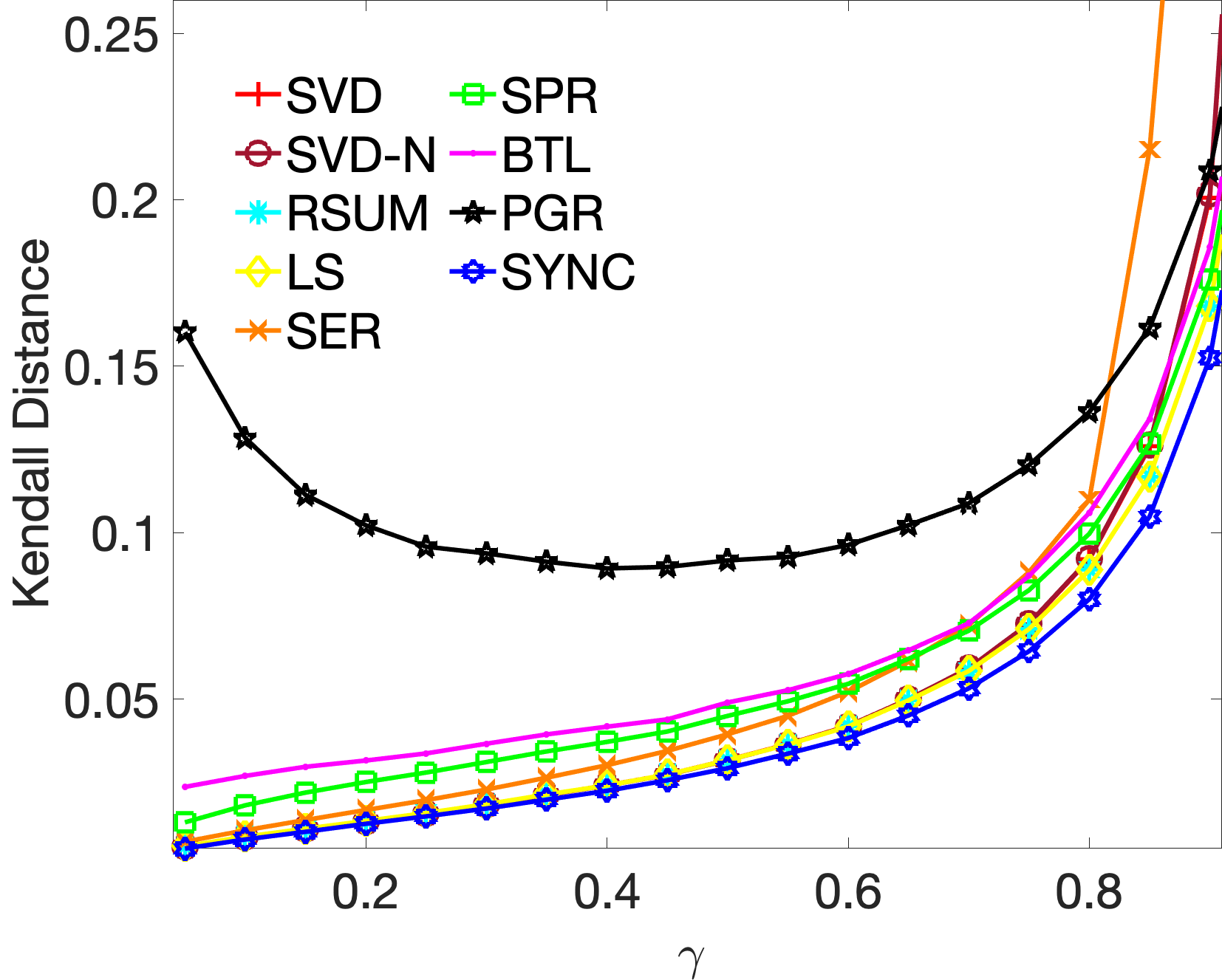}  \\ 
& \includegraphics[width=0.35\columnwidth]{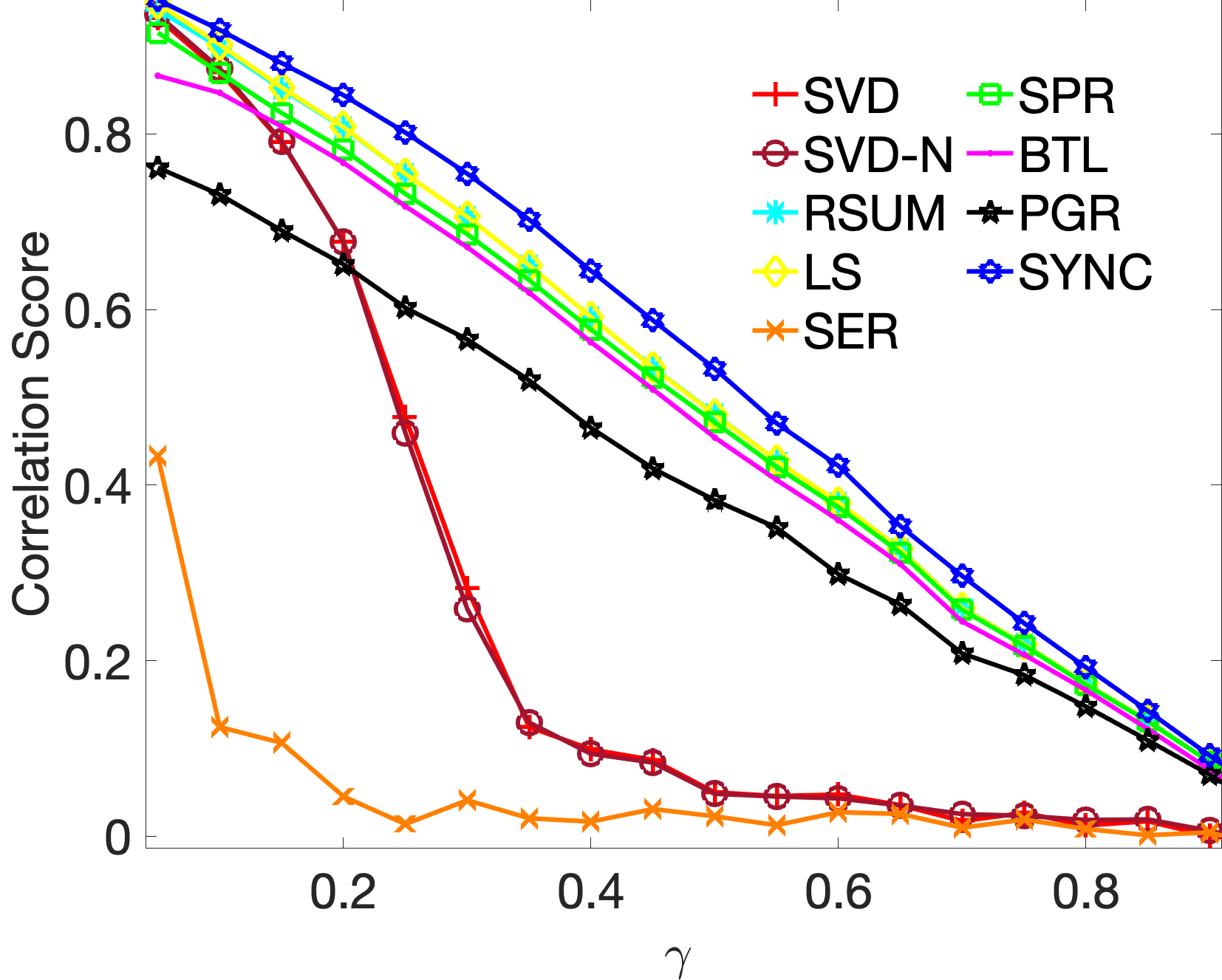}
& \includegraphics[width=0.35\columnwidth]{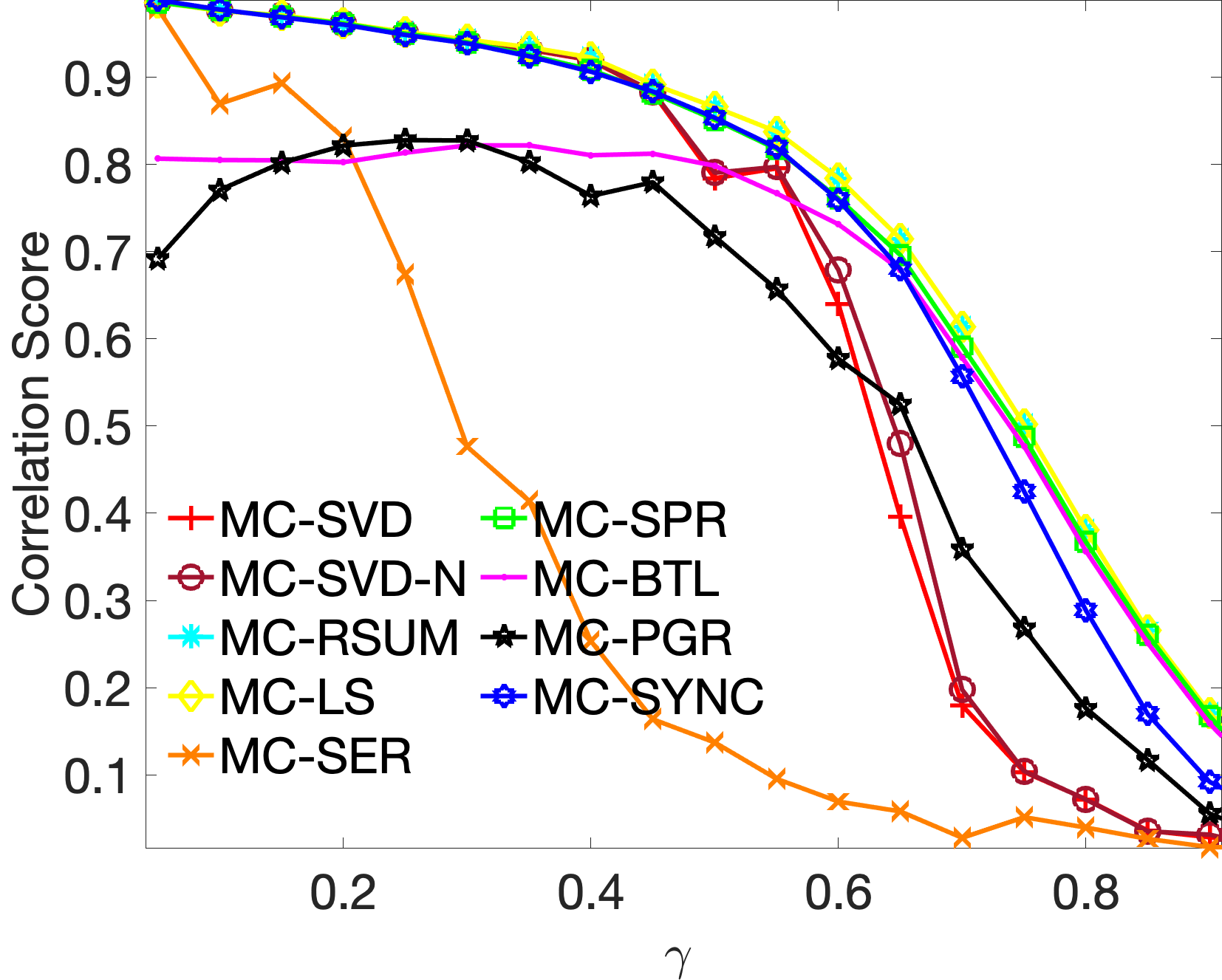}
& \includegraphics[width=0.35\columnwidth]{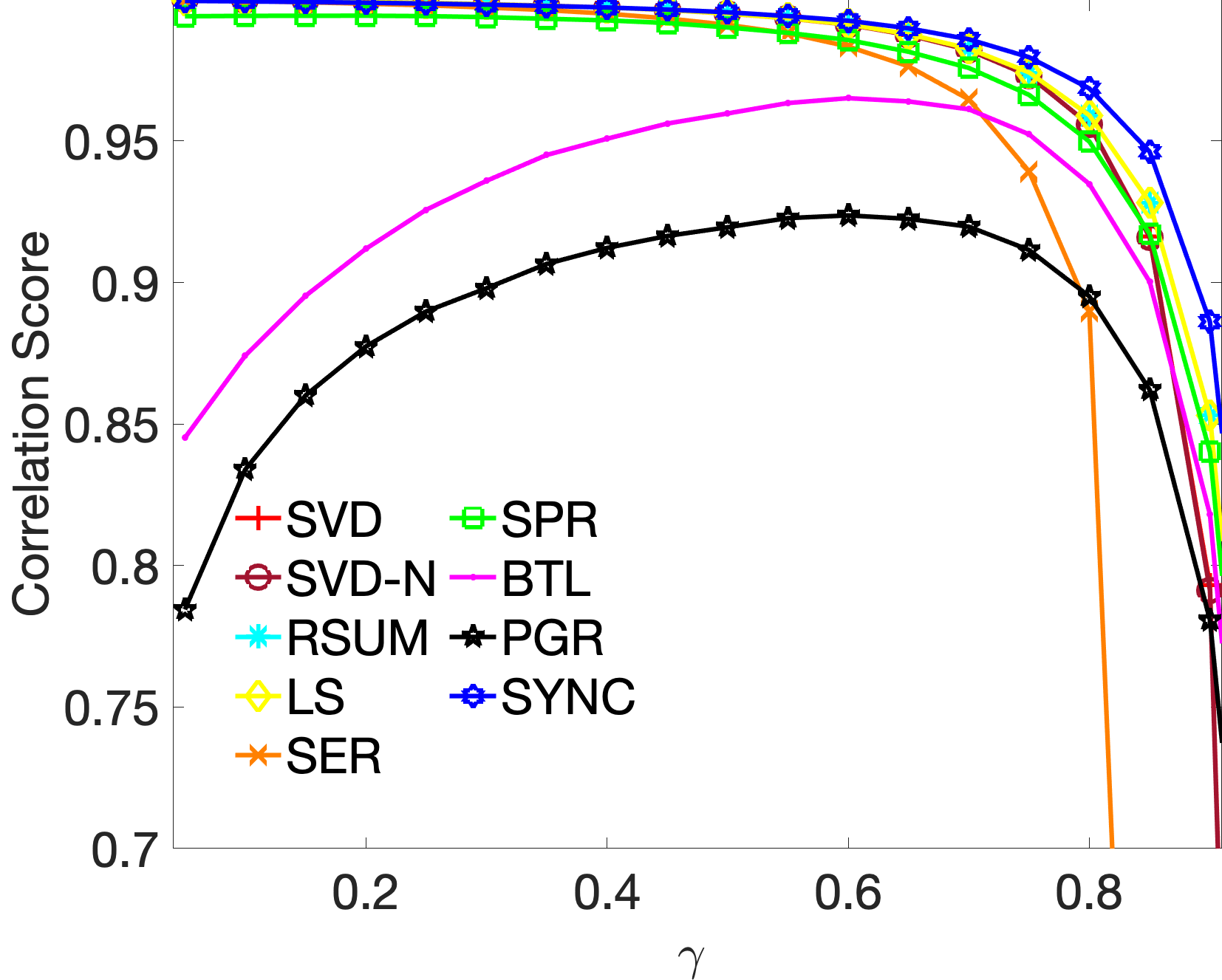} \\
& \includegraphics[width=0.35\columnwidth]{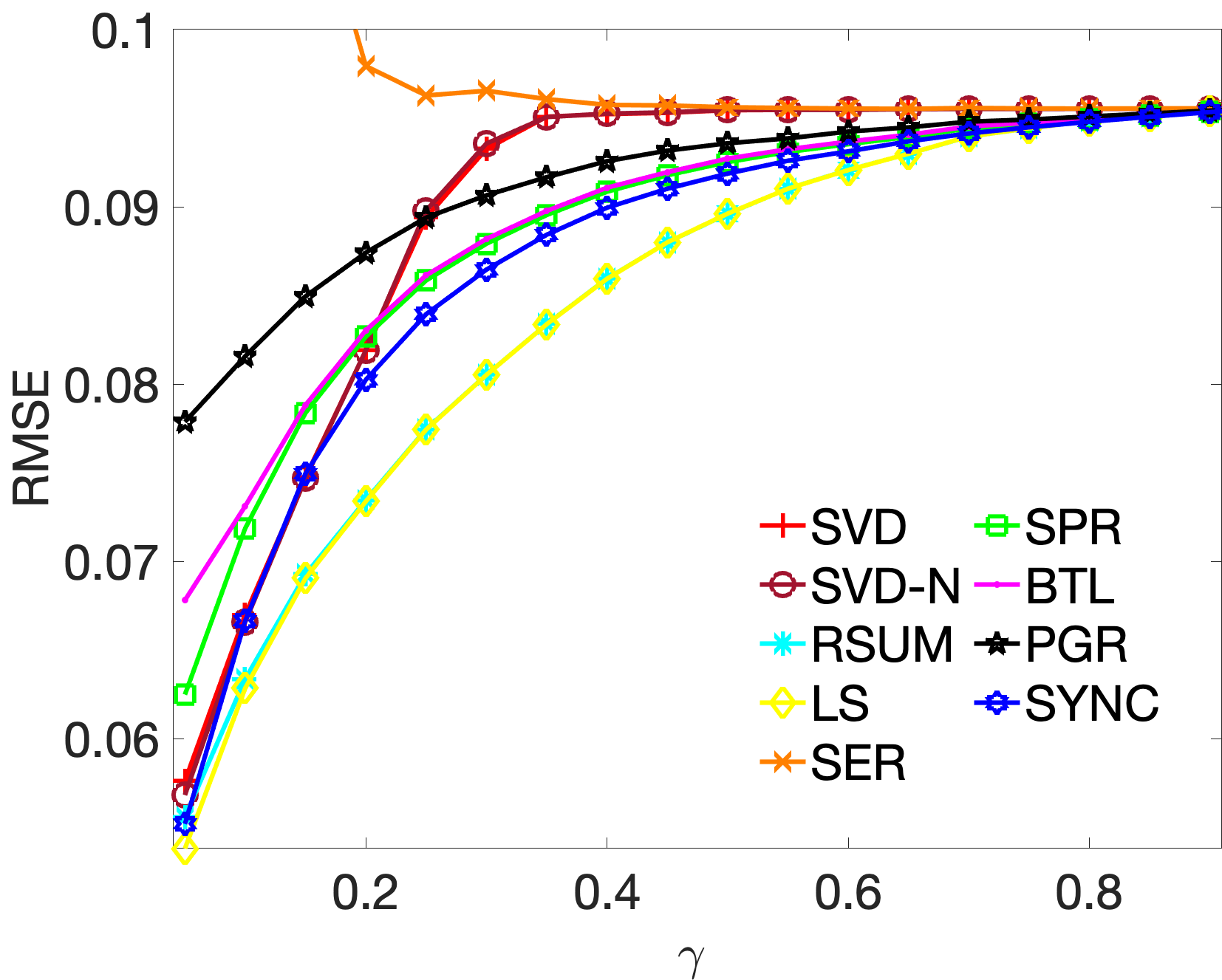}
& \includegraphics[width=0.35\columnwidth]{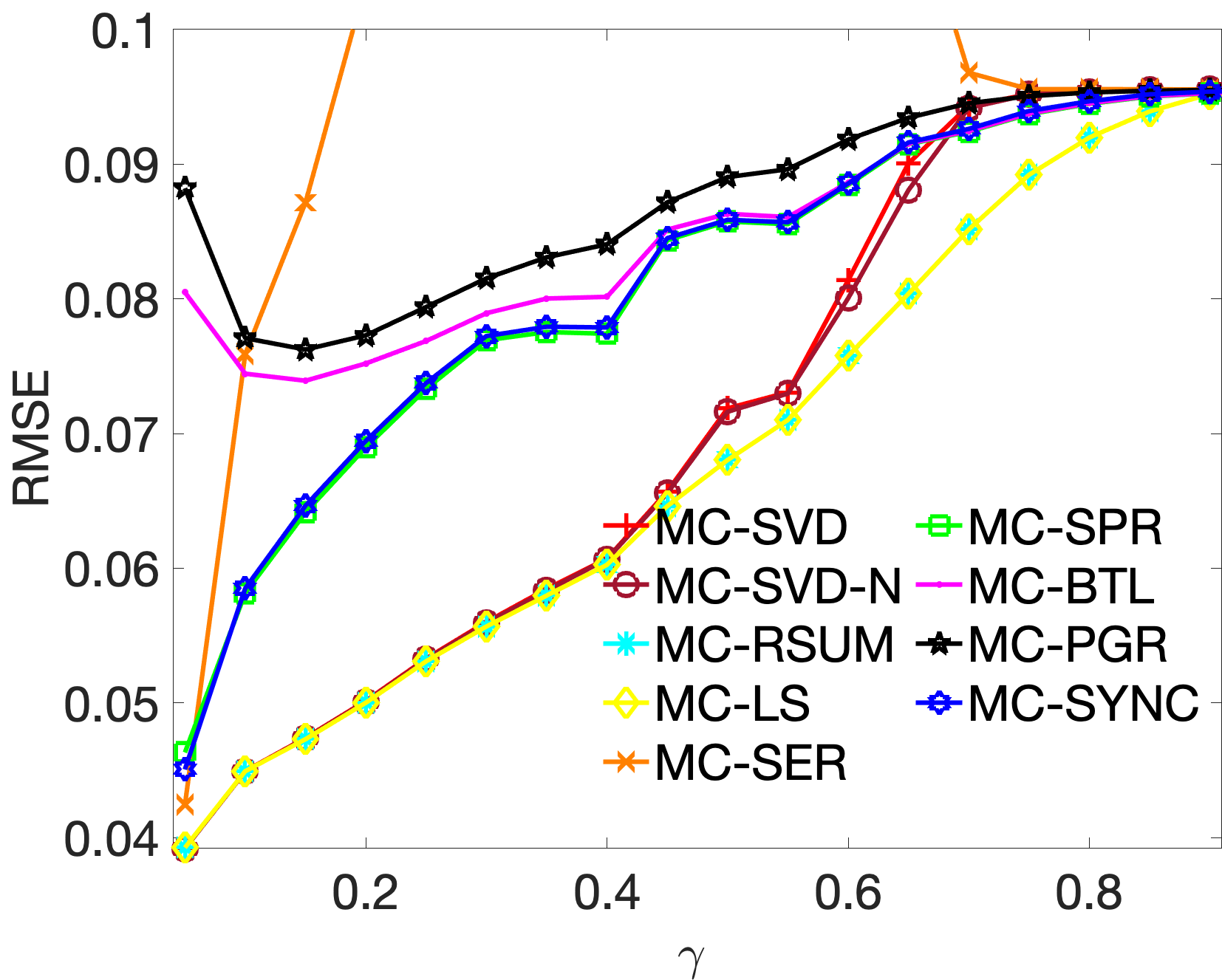}
& \includegraphics[width=0.35\columnwidth]{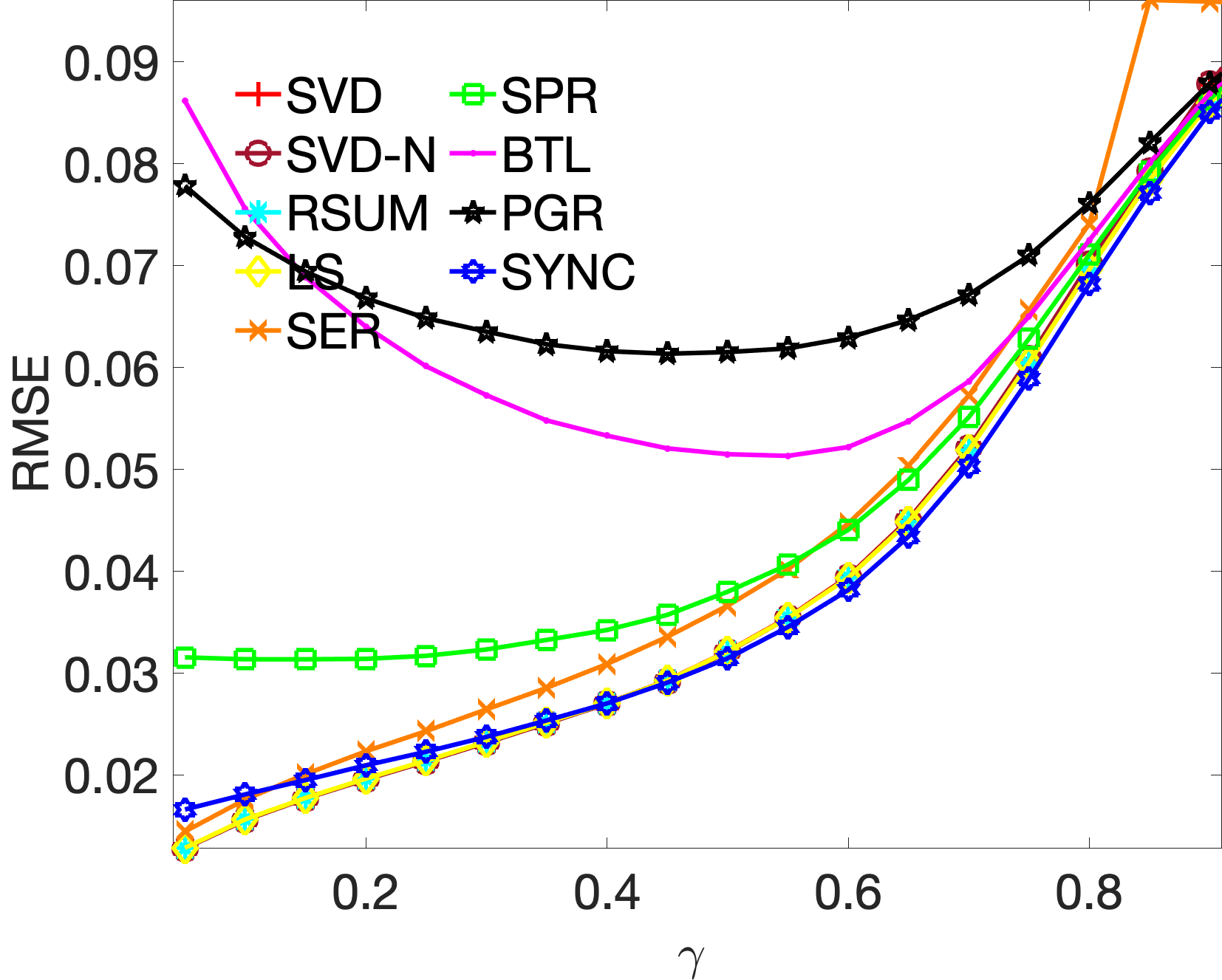}
\end{tabular}
\captionsetup{width=0.99\linewidth}
\vspace{-2mm}
\captionof{figure}{Performance statistics in terms of Kendall distance (top row; lower is better), correlation score (middle row; higher is better), and RMSE (bottom row; lower is better), for synthetic data with scores generated from a \textbf{Uniform} distribution  with $n=1000$. The first and third columns pertain to the case of sparsity $p=0.05$, respectively $p=1$,  without the matrix completion step, while the middle column, for $p=0.05$, employs a matrix-completion preprocessing step. Results are averaged over 20 runs.
}
\label{fig:unif_ERO_n1000}
\end{table*}


\begin{table*}\sffamily 
\hspace{-9mm} 
\begin{tabular}{l*3{C}@{}}
& $p=0.05$ & $p=0.05$ + Matrix Completion  & $p=1$  \\
& \includegraphics[width=0.35\columnwidth]{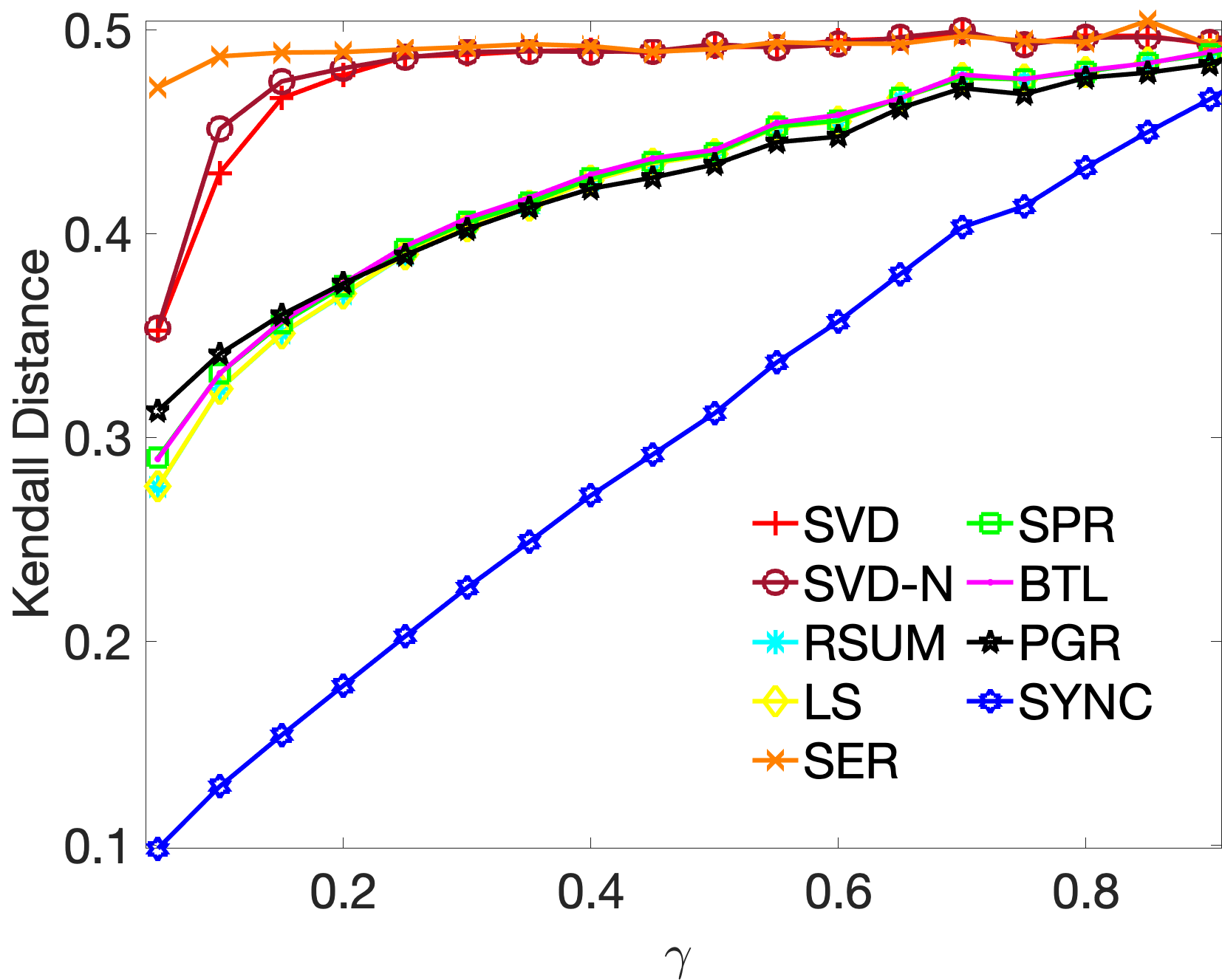}
& \includegraphics[width=0.35\columnwidth]{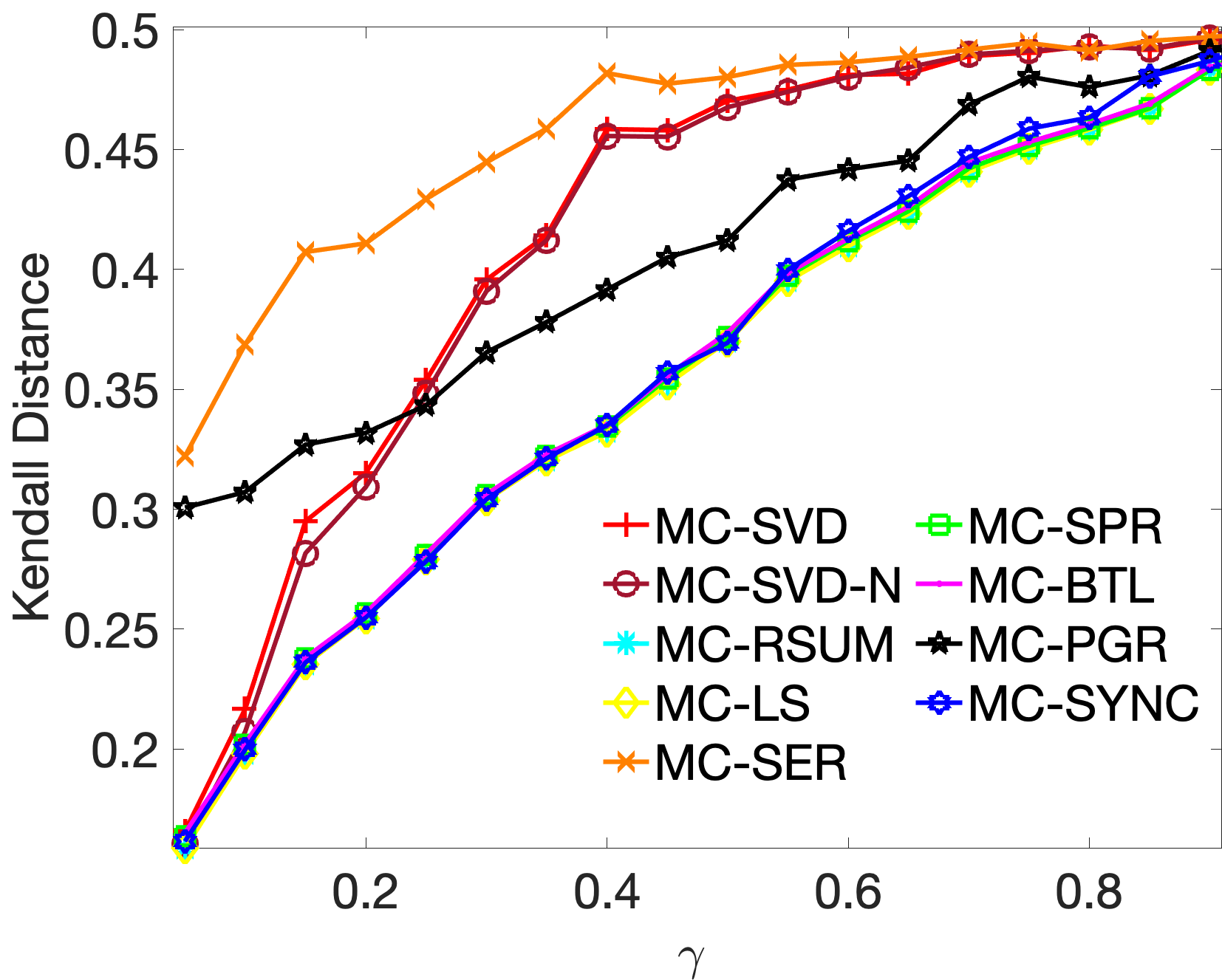}
& \includegraphics[width=0.35\columnwidth]{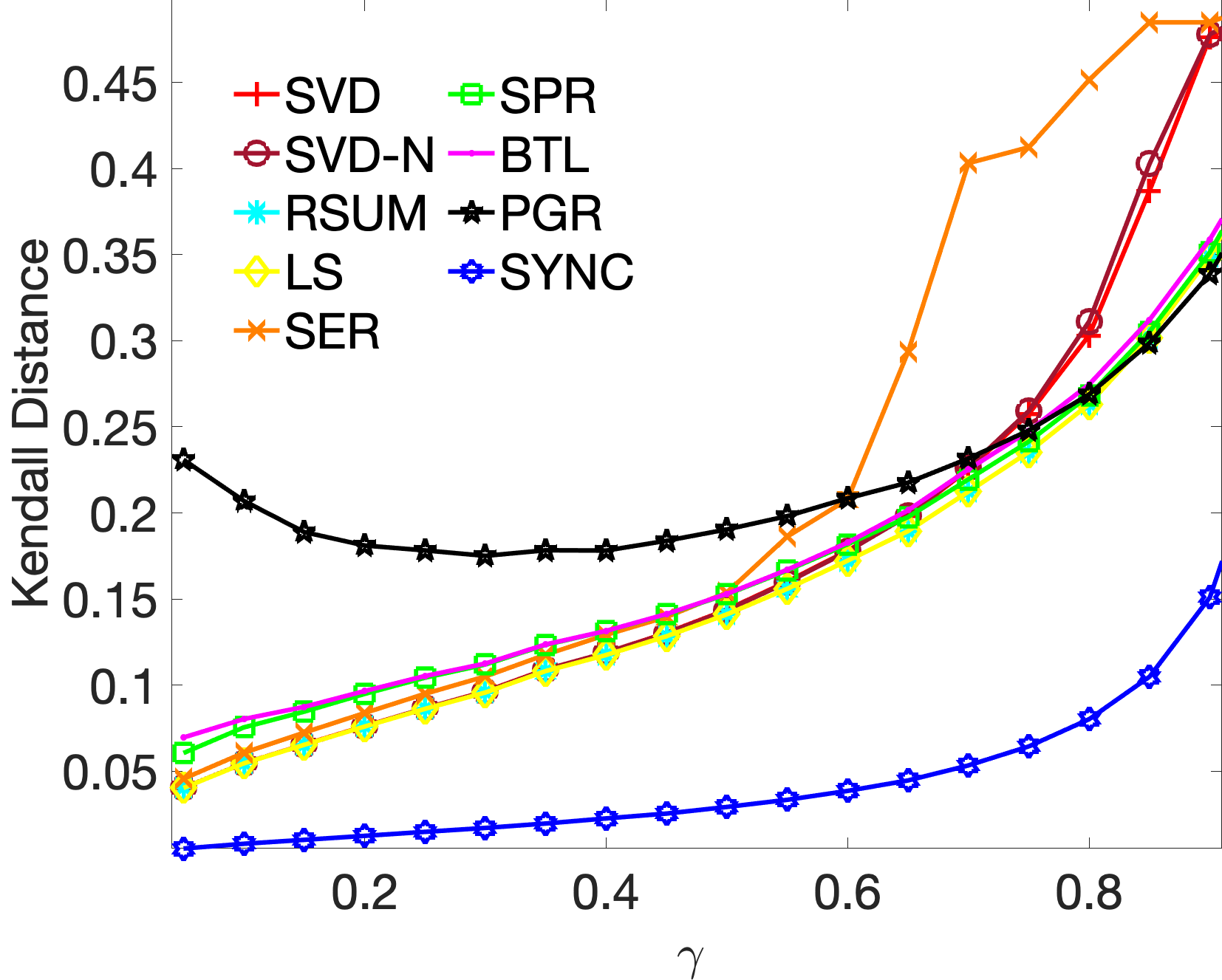}  \\ 
& \includegraphics[width=0.35\columnwidth]{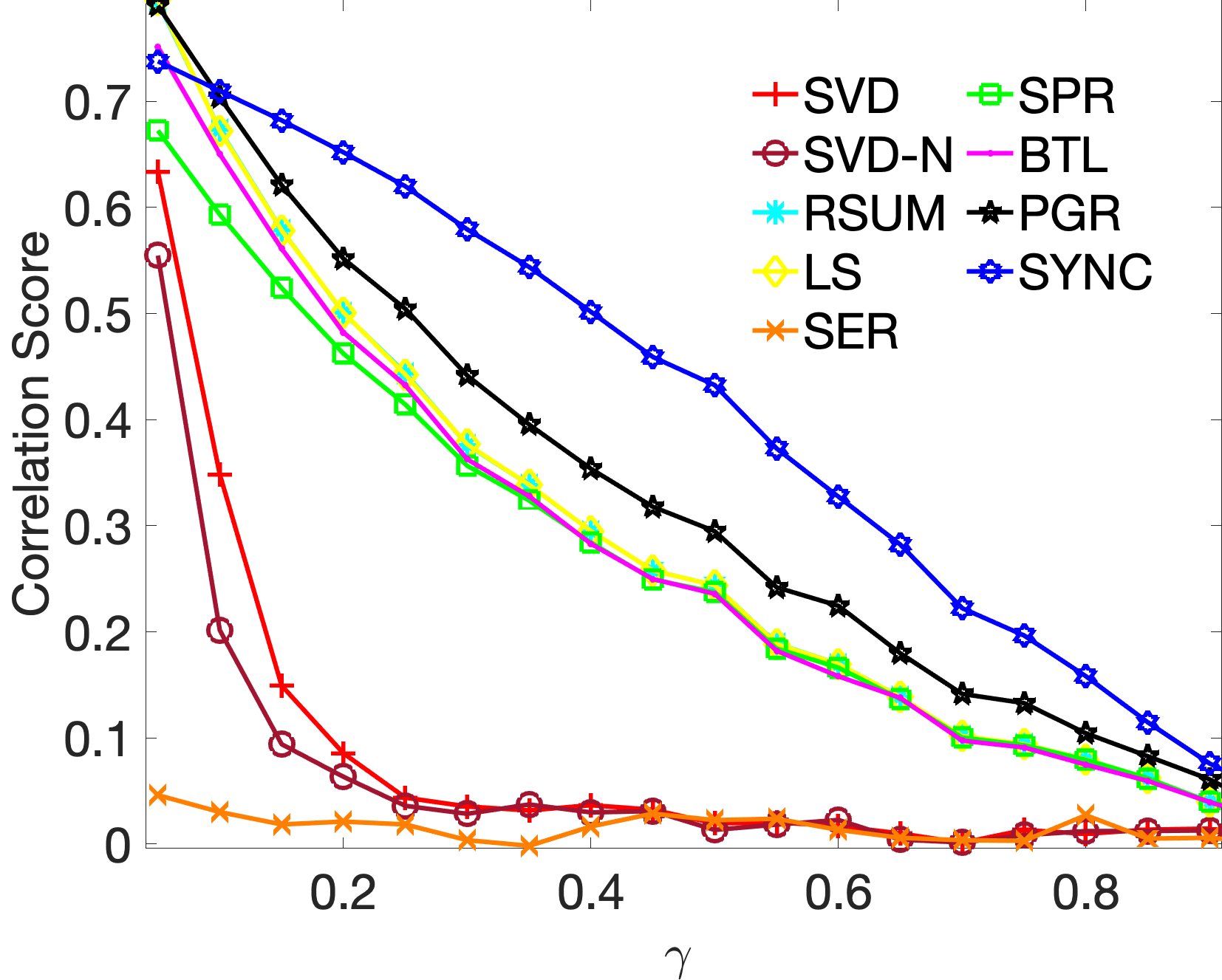}
& \includegraphics[width=0.35\columnwidth]{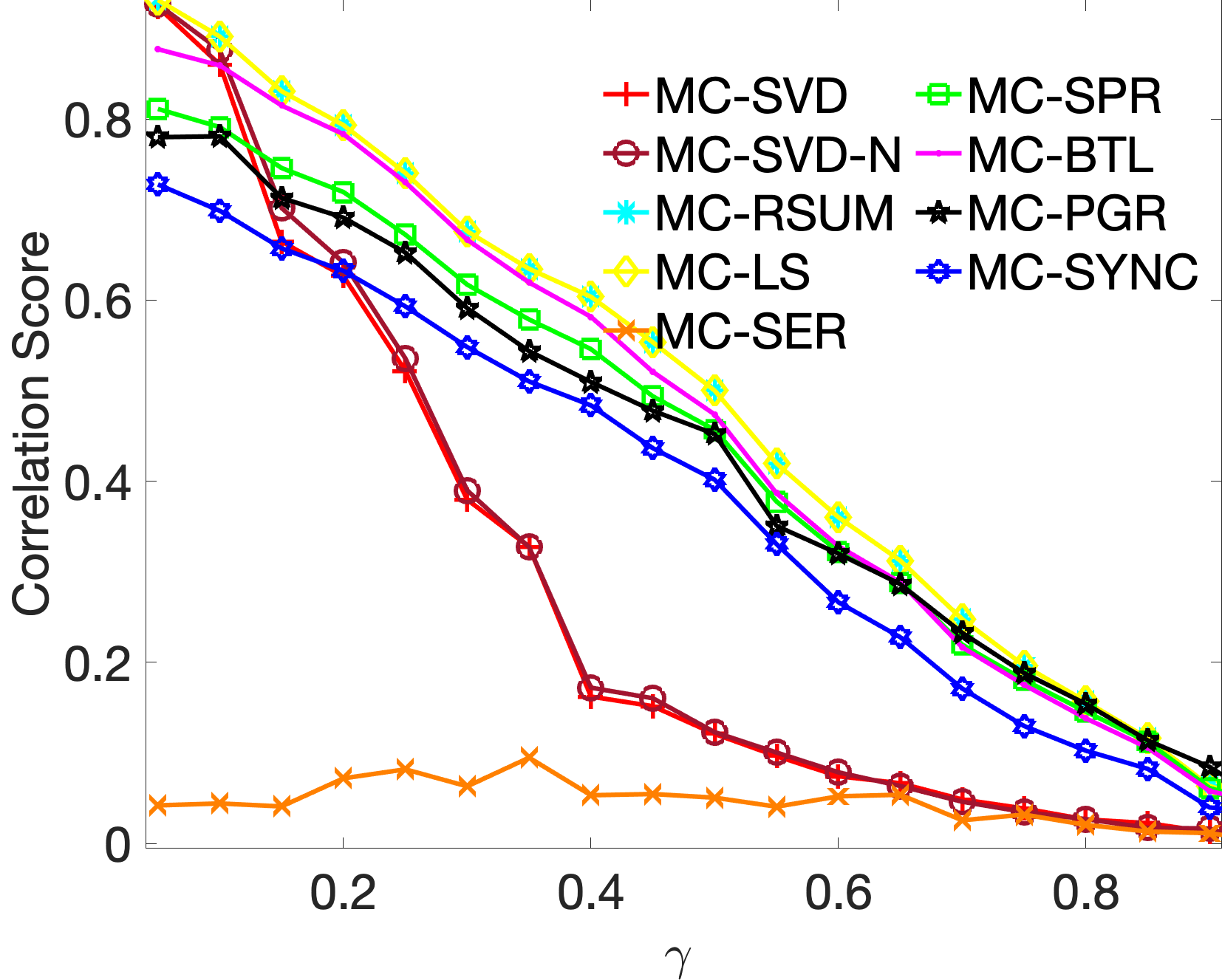}
& \includegraphics[width=0.35\columnwidth]{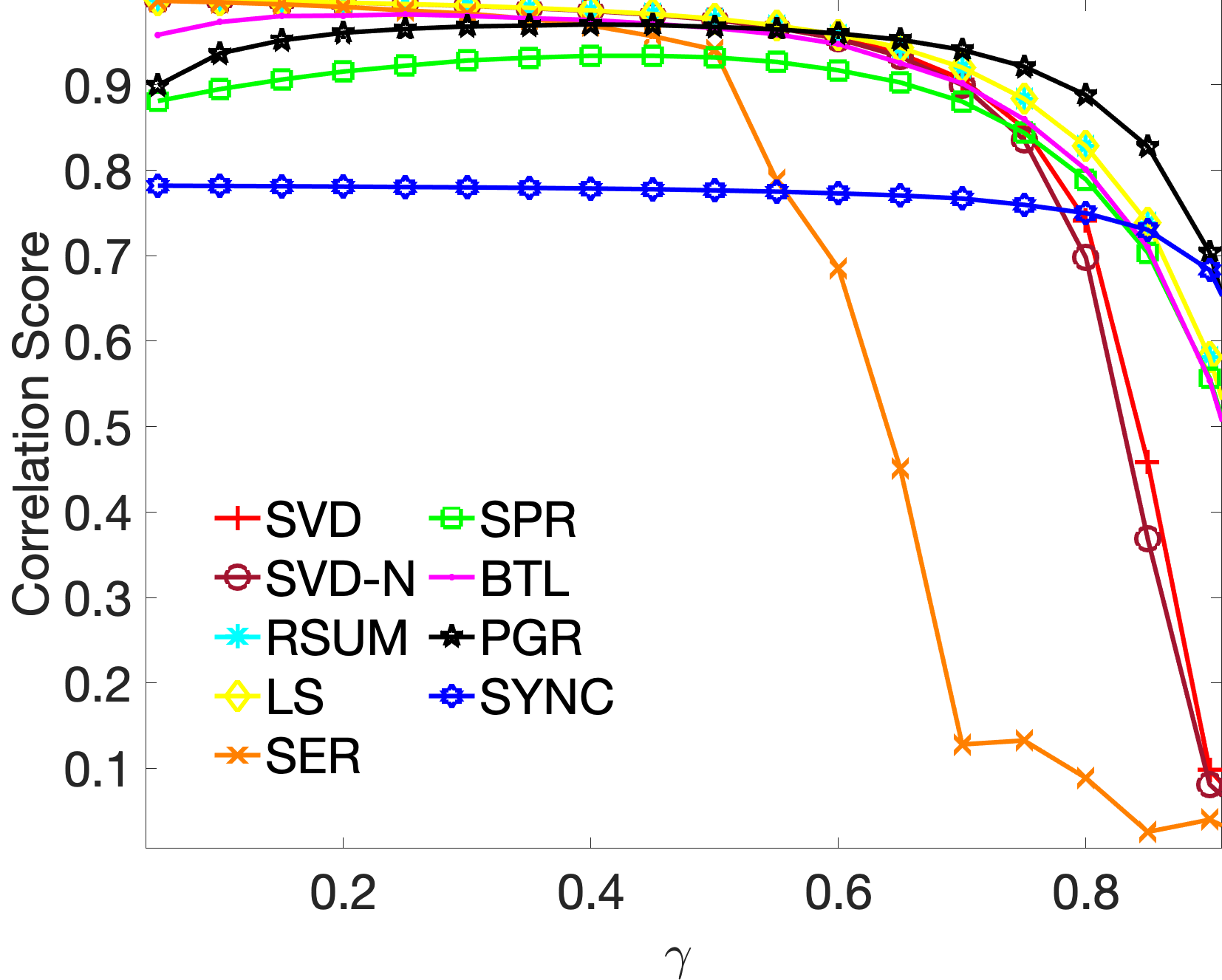} \\
& \includegraphics[width=0.35\columnwidth]{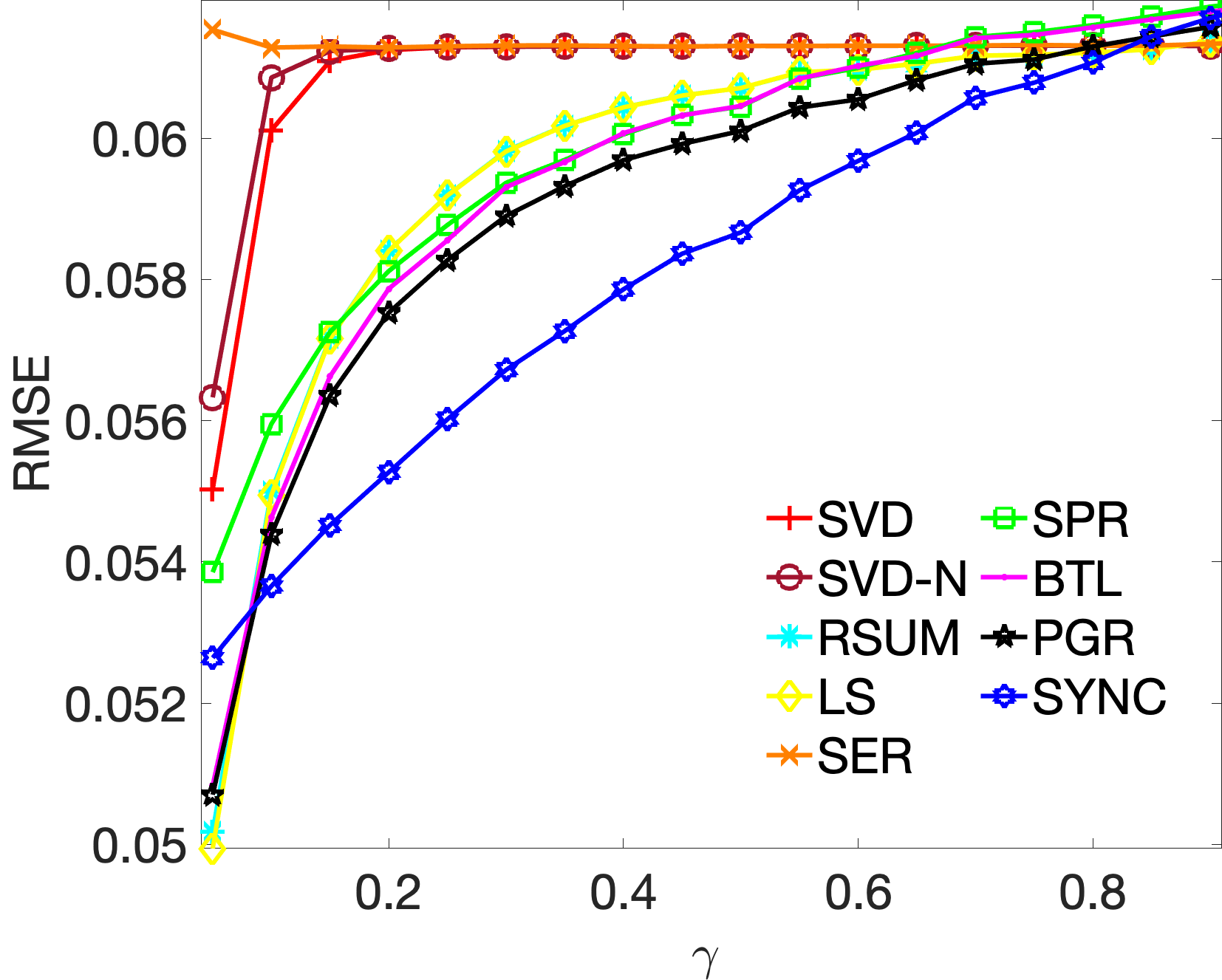}
& \includegraphics[width=0.35\columnwidth]{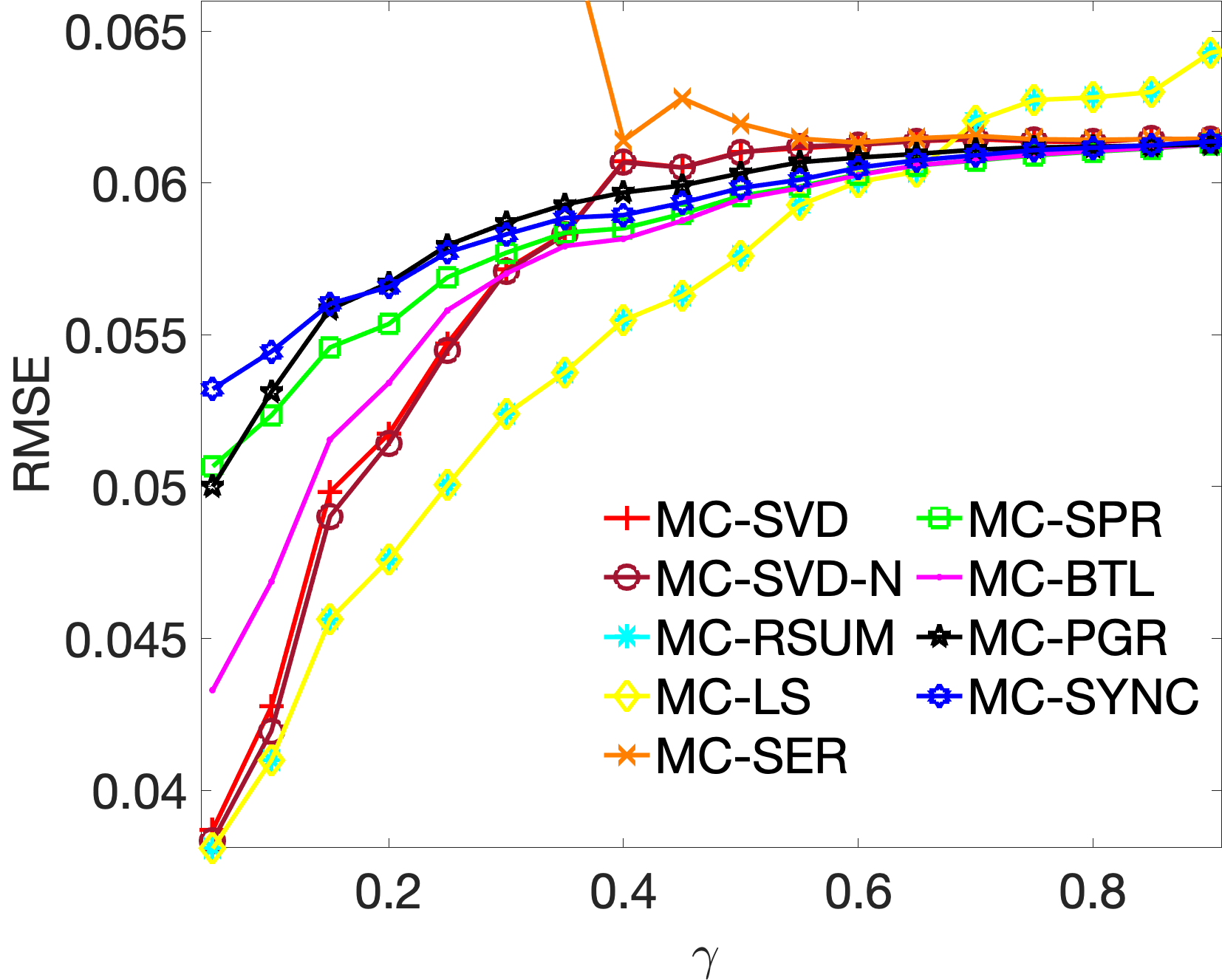}
& \includegraphics[width=0.35\columnwidth]{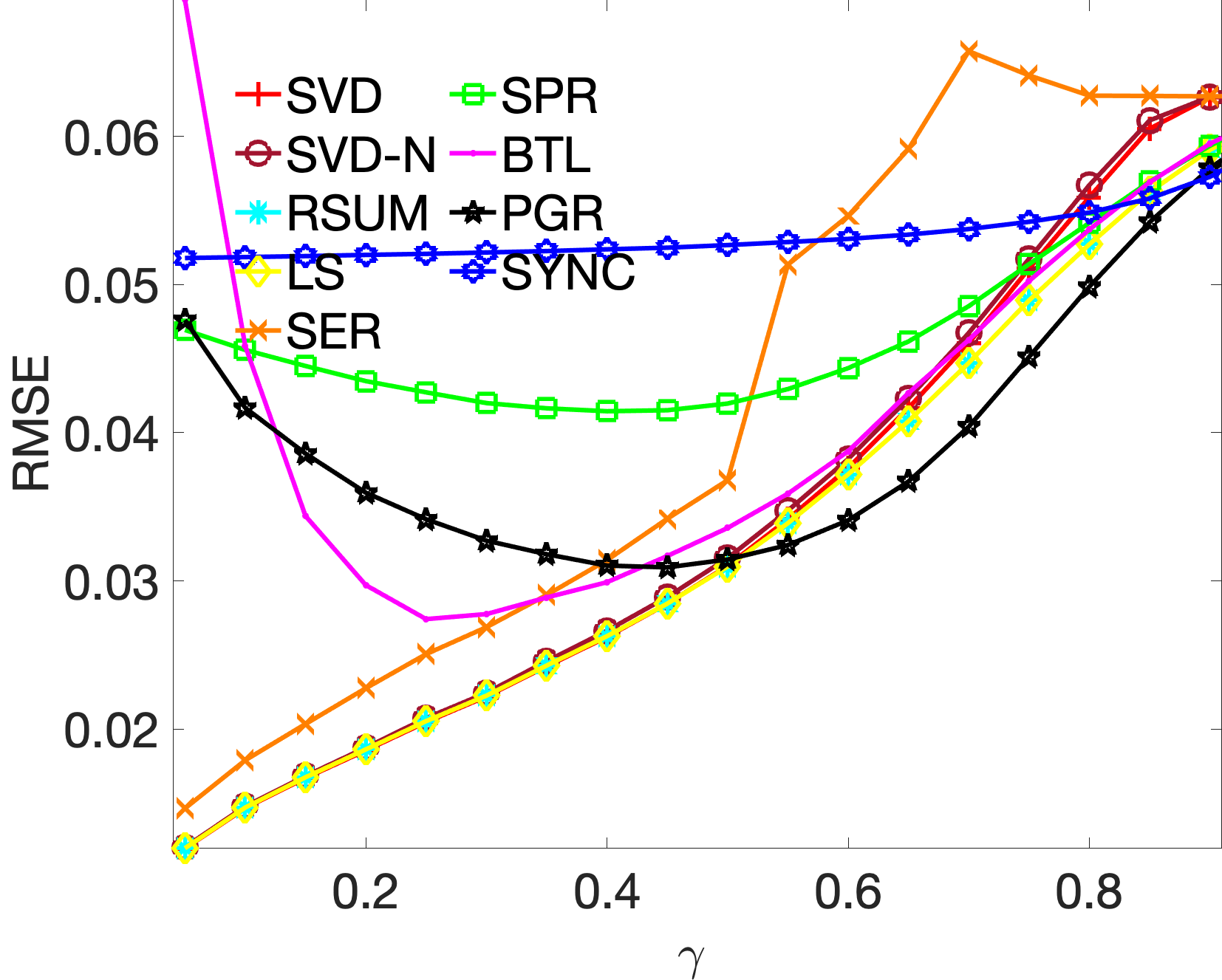}
\end{tabular}
\captionsetup{width=0.99\linewidth}
\vspace{-2mm}
\captionof{figure}{Performance statistics in terms of Kendall distance (top row), correlation score (middle row), and RMSE (bottom row), for synthetic data with scores generated from a \textbf{Gamma} distribution with $n=1000$. The first and third columns pertain to the case of sparsity $p=0.05$, respectively $p=1$ without the matrix completion step, while the middle column, for $p=0.05$, employs a matrix-completion preprocessing step. Results are averaged over 20 runs.
}
\label{fig:gamma_ERO_n1000}
\end{table*}

\newcolumntype{C}{>{\centering\arraybackslash}m{\wid}} 

\begin{table*}\sffamily 
\hspace{-9mm} 
\begin{tabular}{l*3{C}@{}}
& $p=0.01$ & $p=0.05$ & $p=0.1$  \\
& \includegraphics[width=0.35\columnwidth]{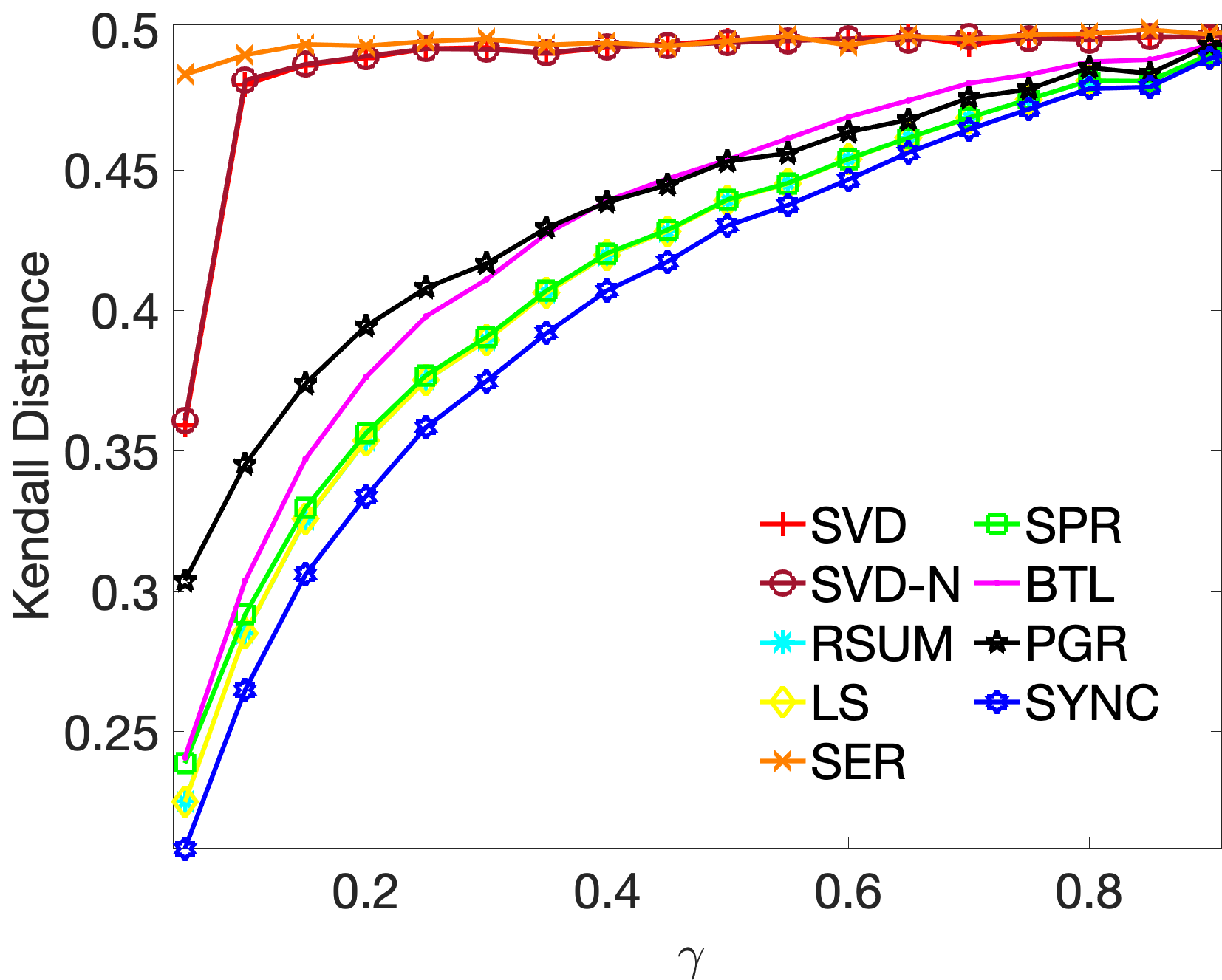}
& \includegraphics[width=0.35\columnwidth]{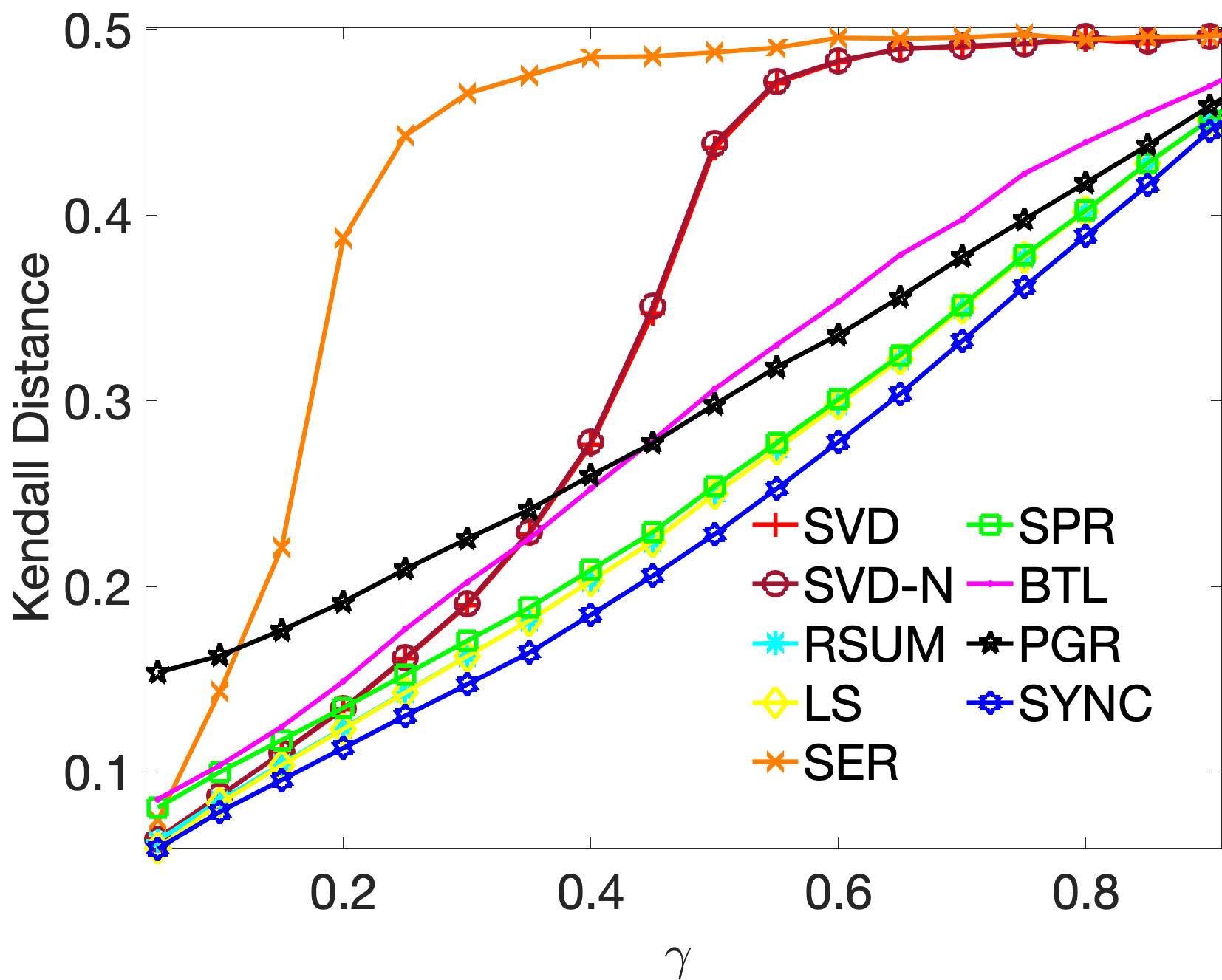}
& \includegraphics[width=0.35\columnwidth]{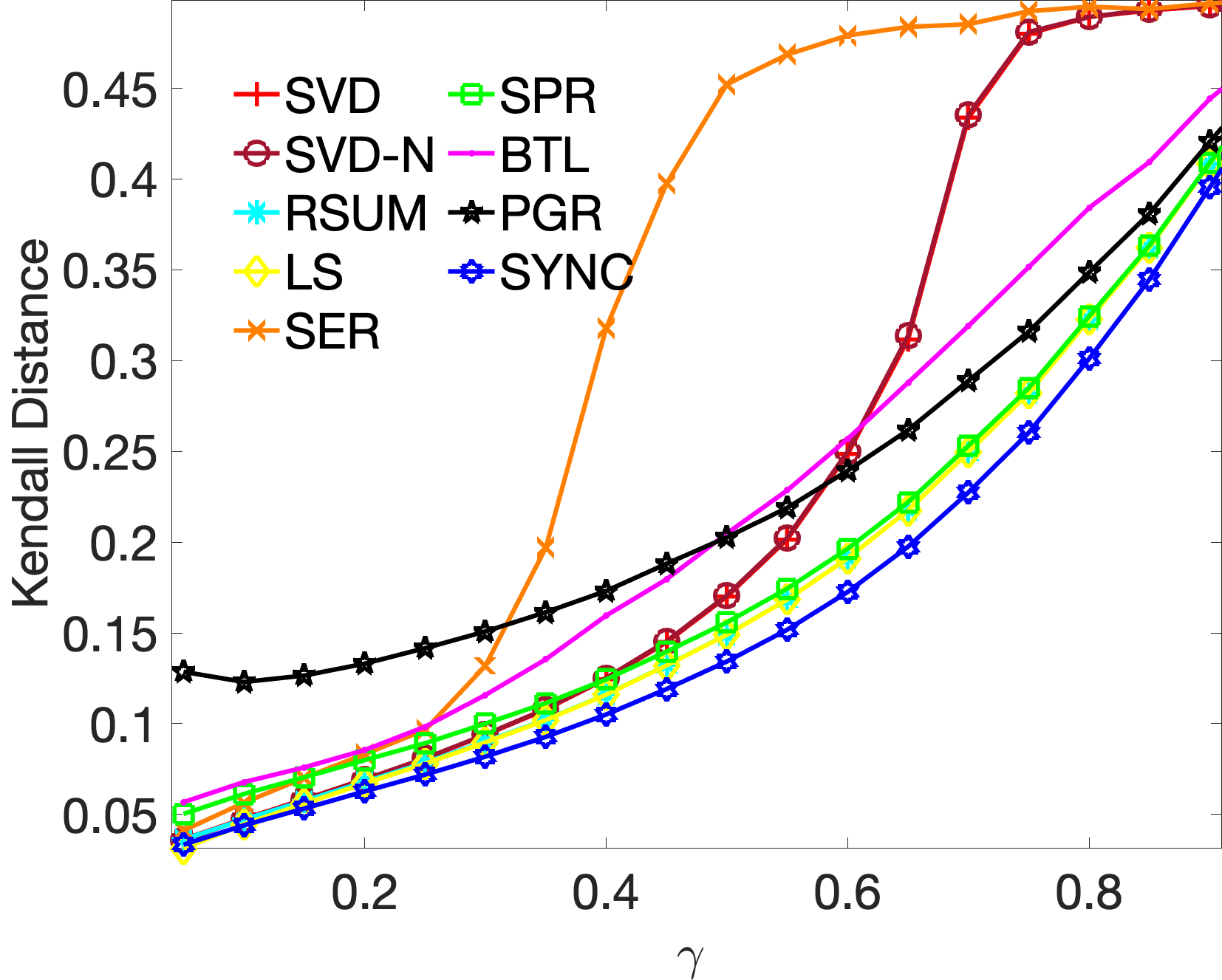}  \\ 
& \includegraphics[width=0.35\columnwidth]{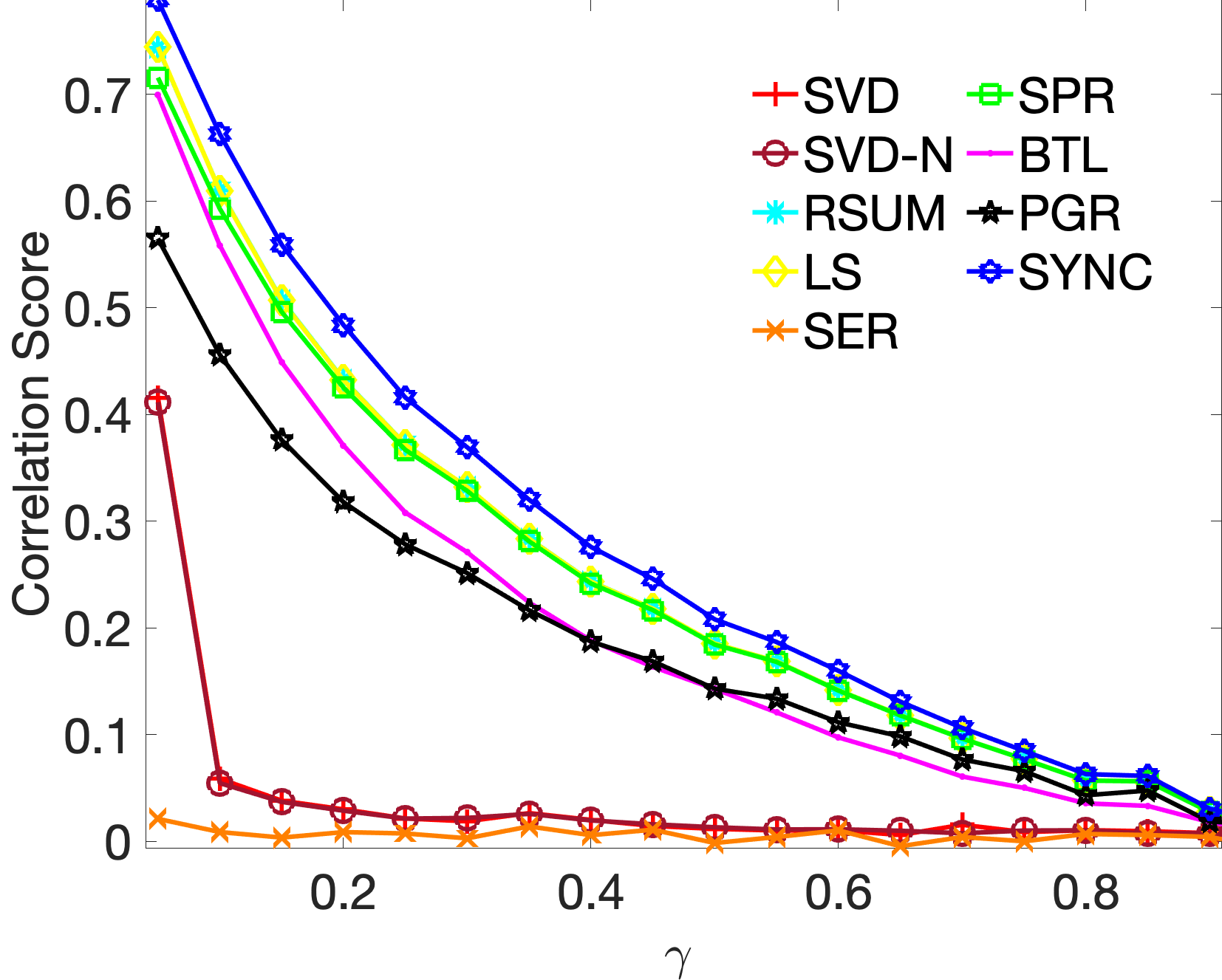}
& \includegraphics[width=0.35\columnwidth]{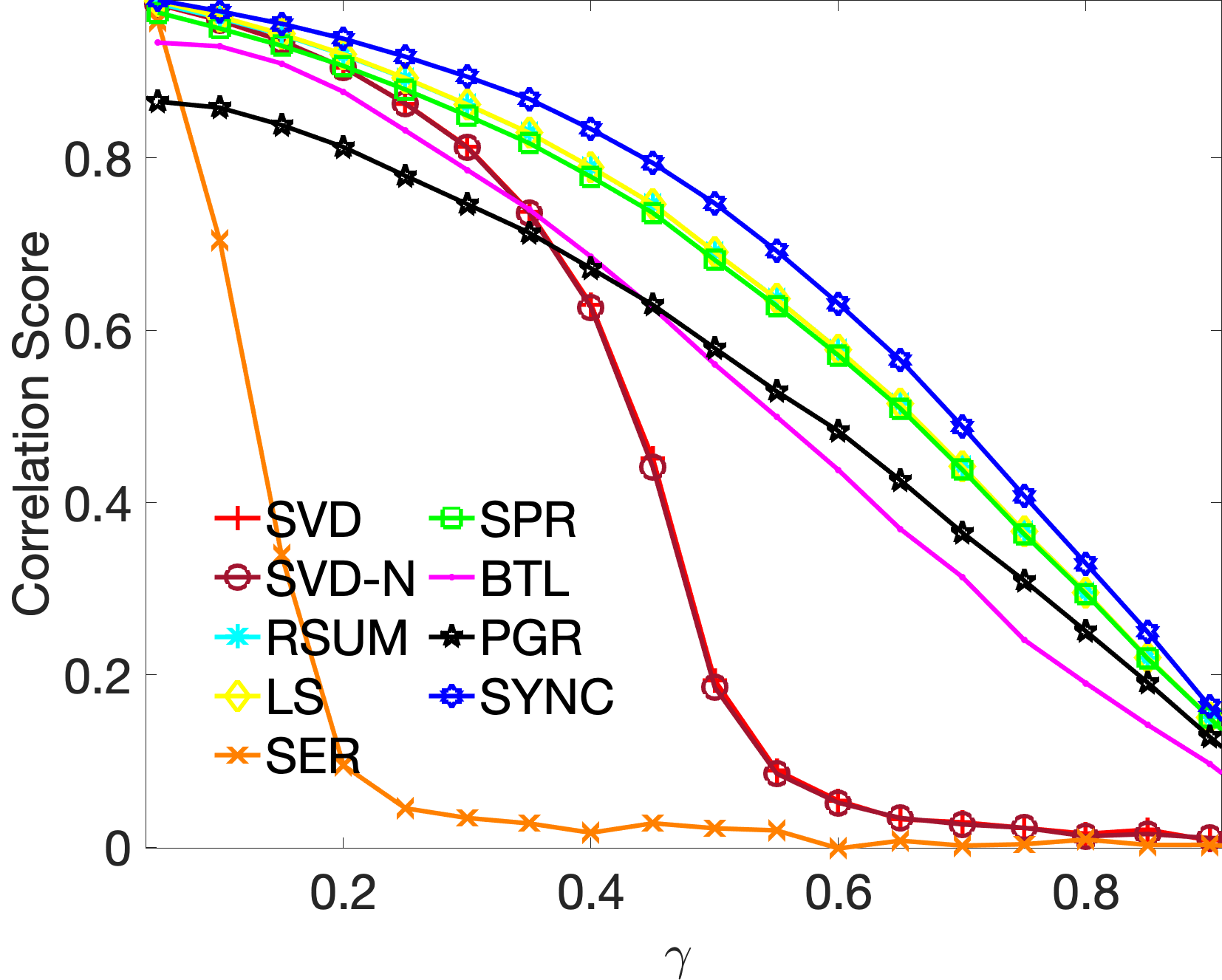}
& \includegraphics[width=0.35\columnwidth]{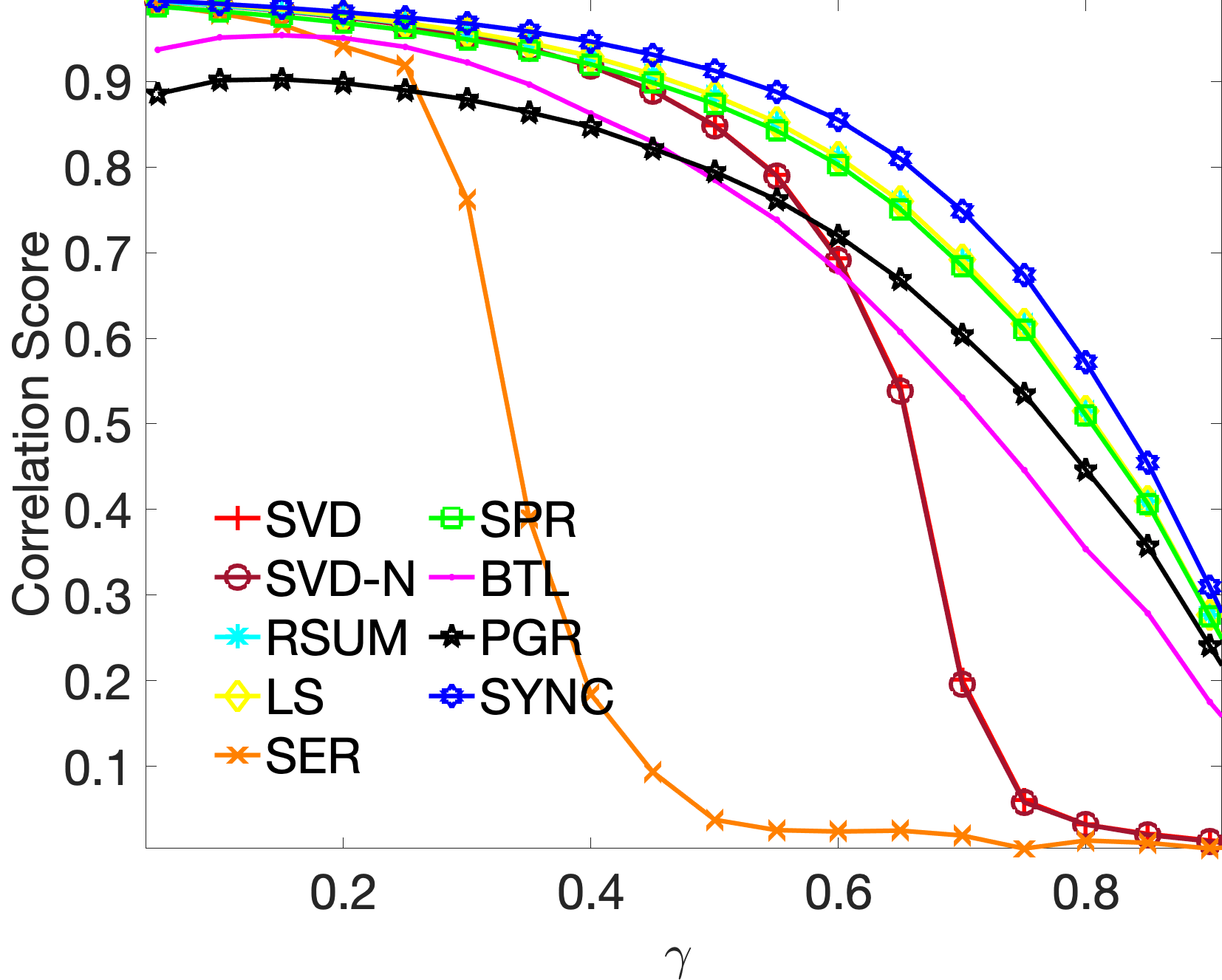} \\
& \includegraphics[width=0.35\columnwidth]{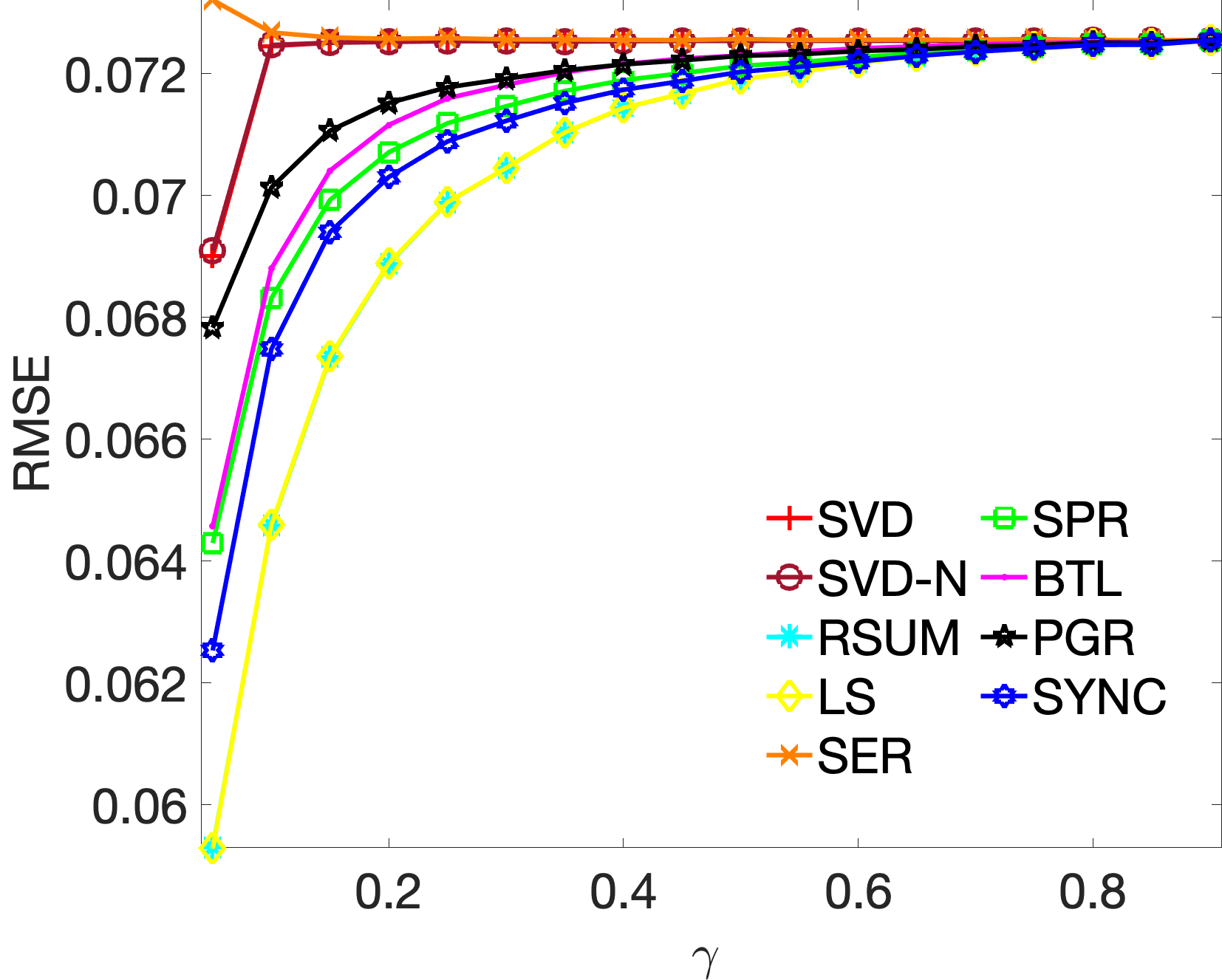}
& \includegraphics[width=0.35\columnwidth]{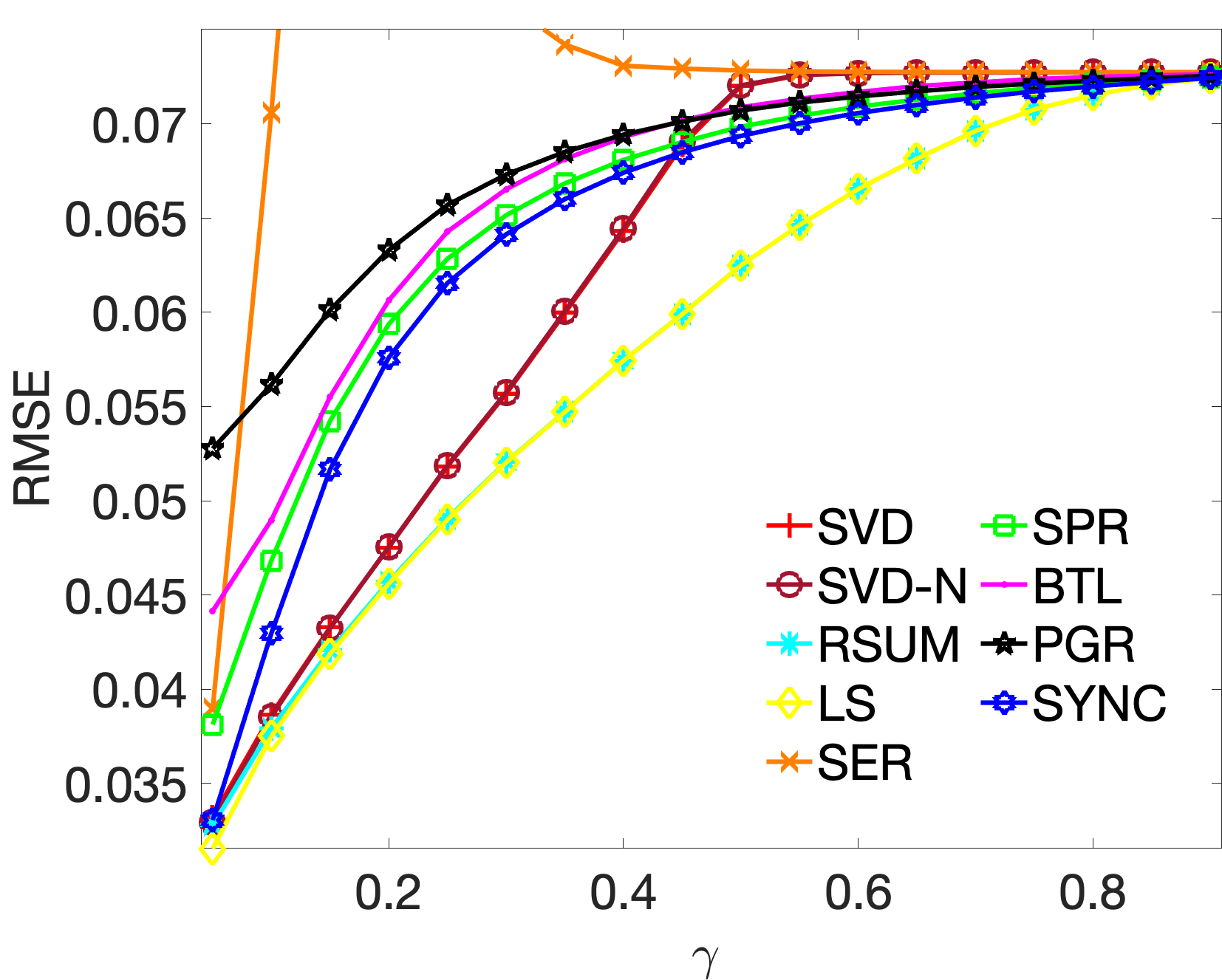}
& \includegraphics[width=0.35\columnwidth]{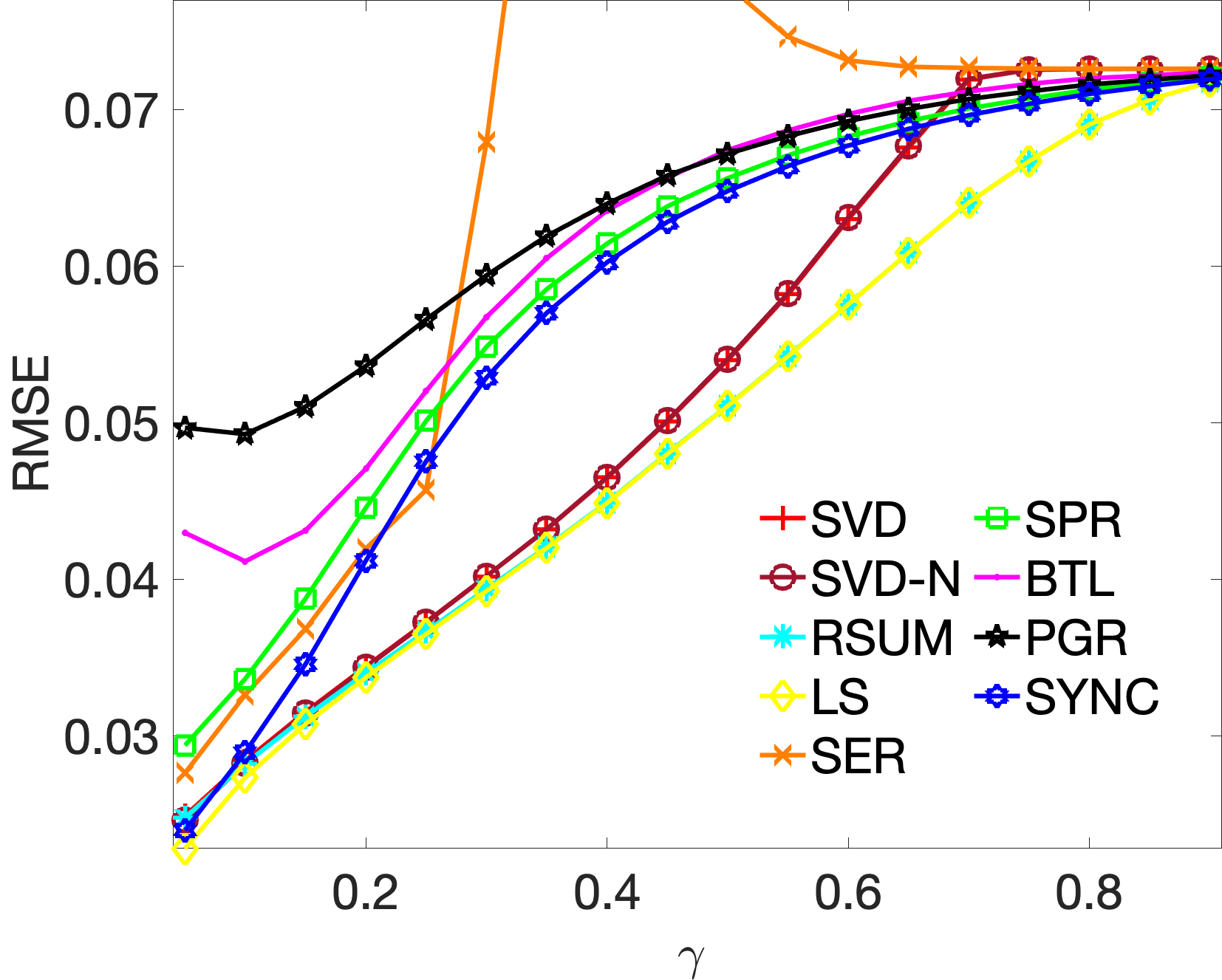}
\end{tabular}
\captionsetup{width=0.99\linewidth}
\vspace{-2mm}
\captionof{figure}{Performance statistics in terms of Kendall distance (top row), correlation score (middle row), and RMSE (bottom row), for synthetic data with scores generated by a \textbf{Uniform} distribution with $n=3000$, and sparsity $p \in \{0.01, 0.05, 0.1\}$, without the matrix completion preprocessing step. Note that  $\log(3000) / 3000 = 0.00267$. Results are averaged over 20 runs. 
}
\label{fig:uniform_ERO_n3000}
\end{table*}

\newcolumntype{C}{>{\centering\arraybackslash}m{\wid}} 
\begin{table*}\sffamily 
\hspace{-9mm} 
\begin{tabular}{l*3{C}@{}}
& $p=0.01$ & $p=0.05$ & $p=0.1$  \\
& \includegraphics[width=0.35\columnwidth]{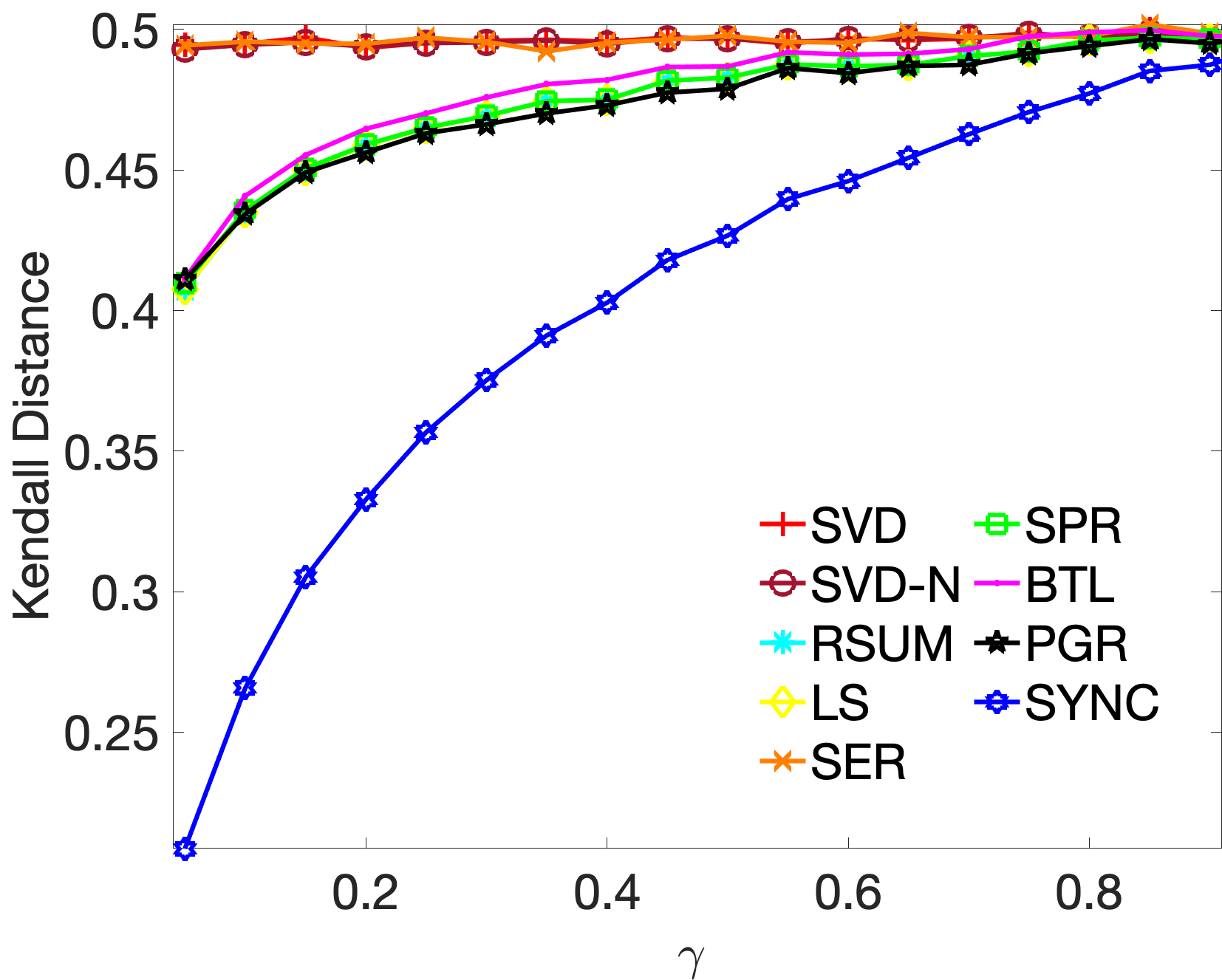}
& \includegraphics[width=0.35\columnwidth]{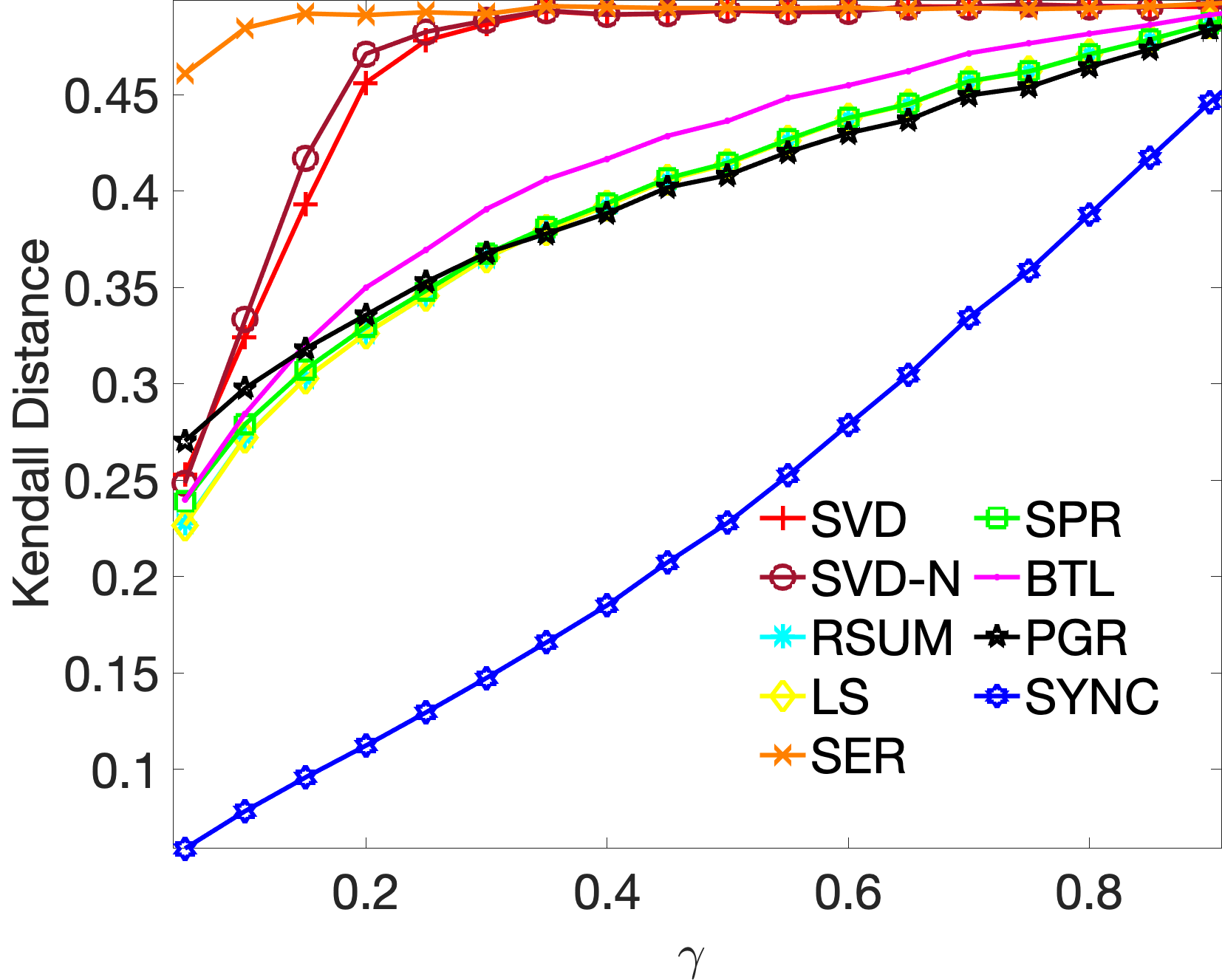}
& \includegraphics[width=0.35\columnwidth]{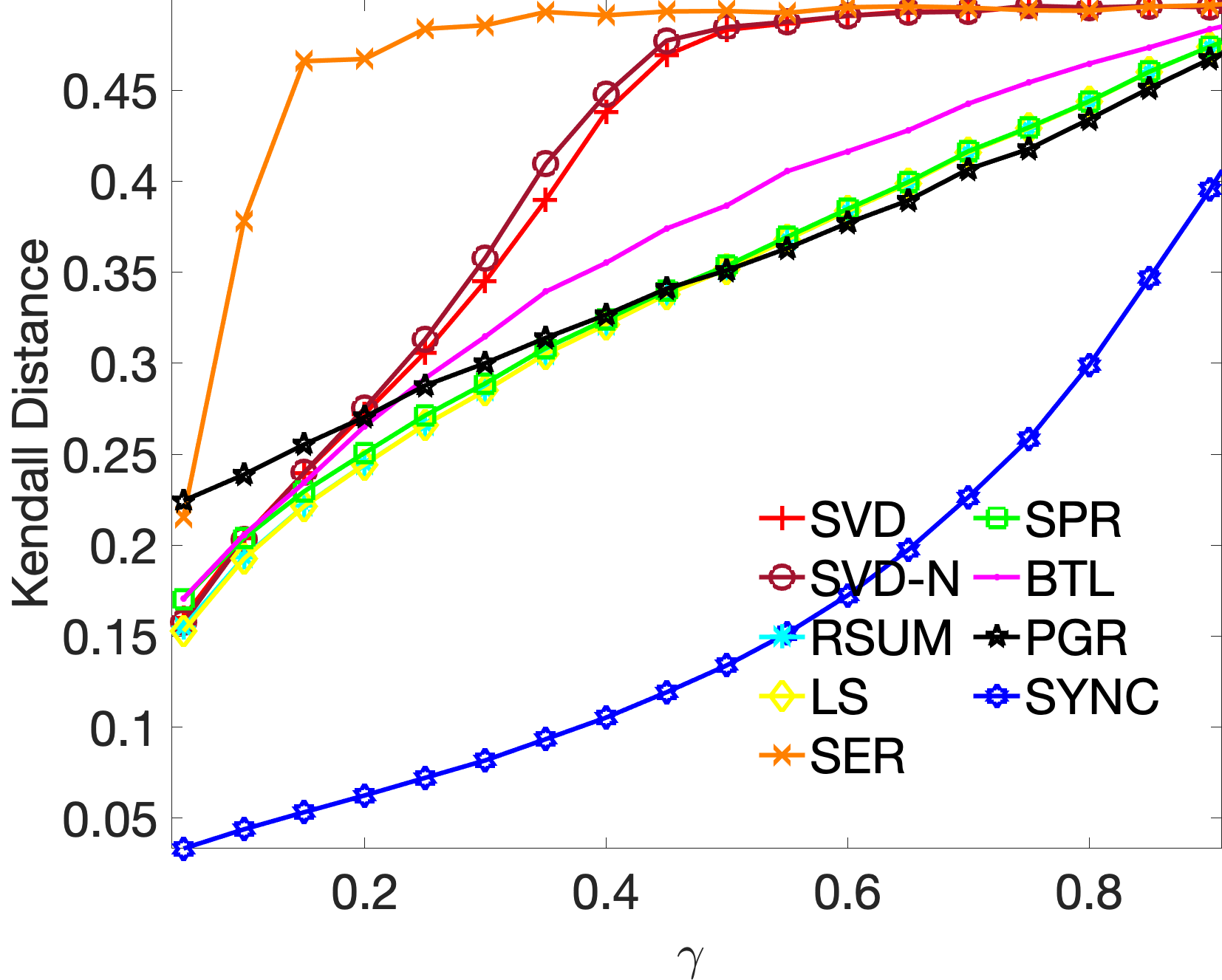}  \\ 
& \includegraphics[width=0.35\columnwidth]{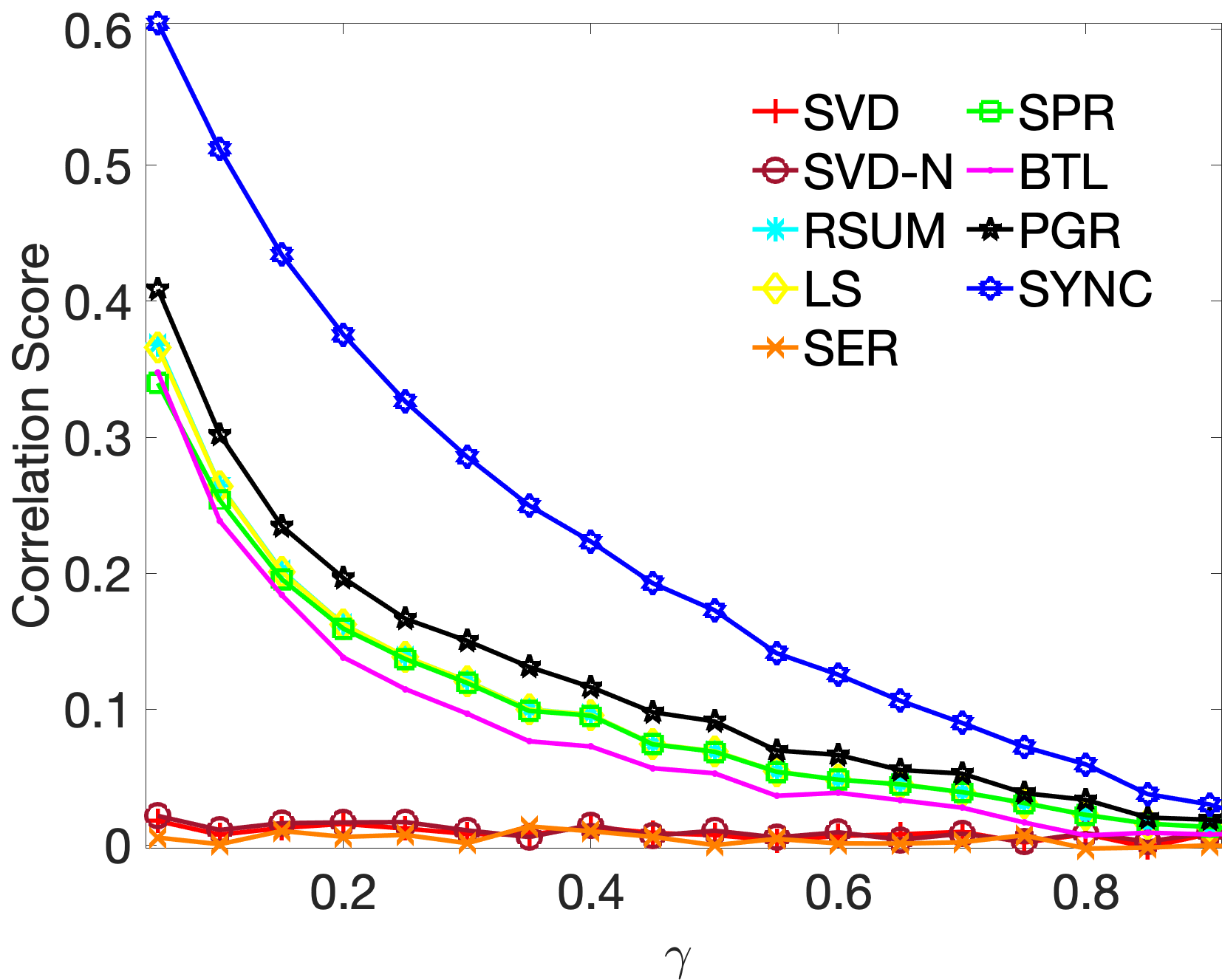}
& \includegraphics[width=0.35\columnwidth]{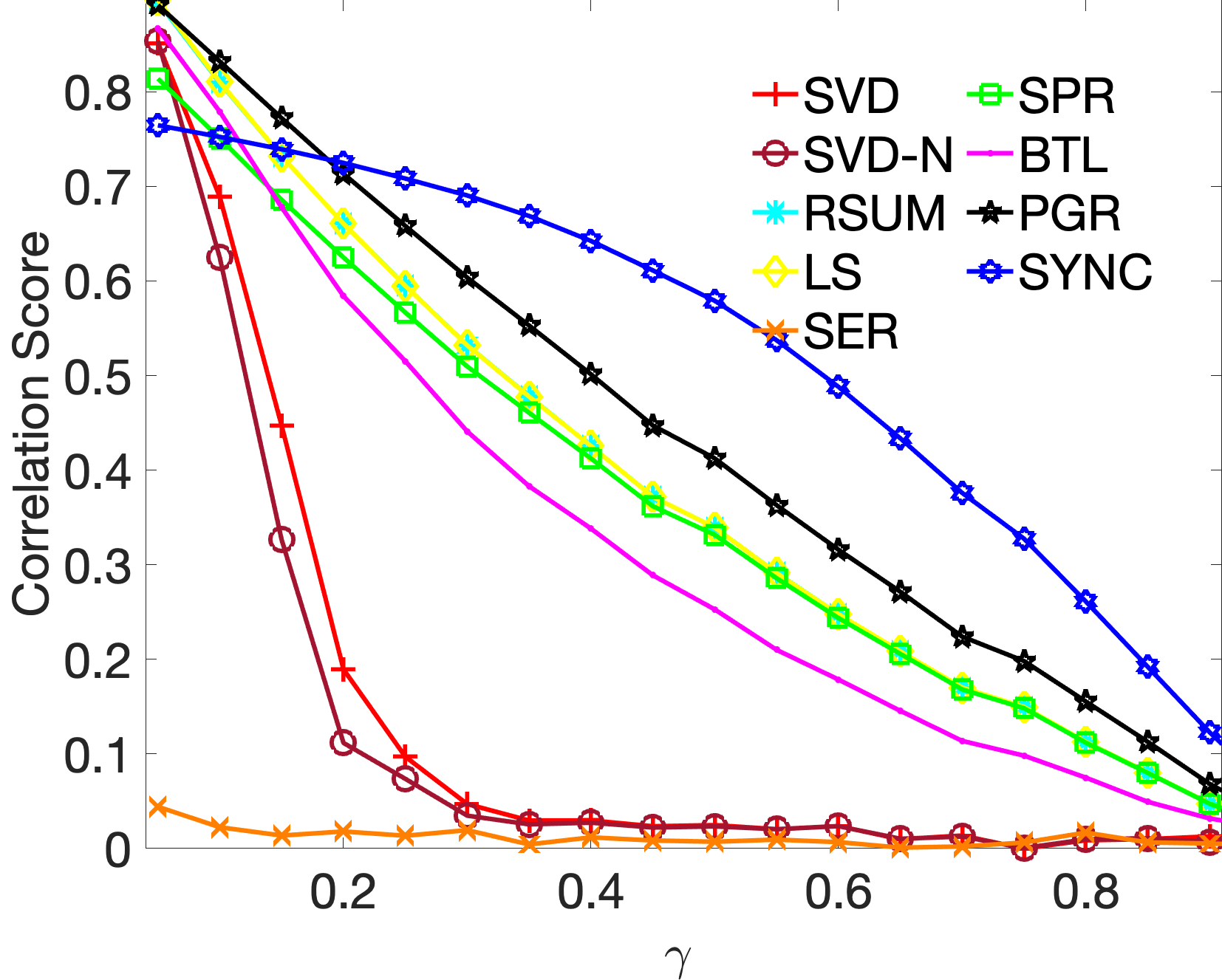}
& \includegraphics[width=0.35\columnwidth]{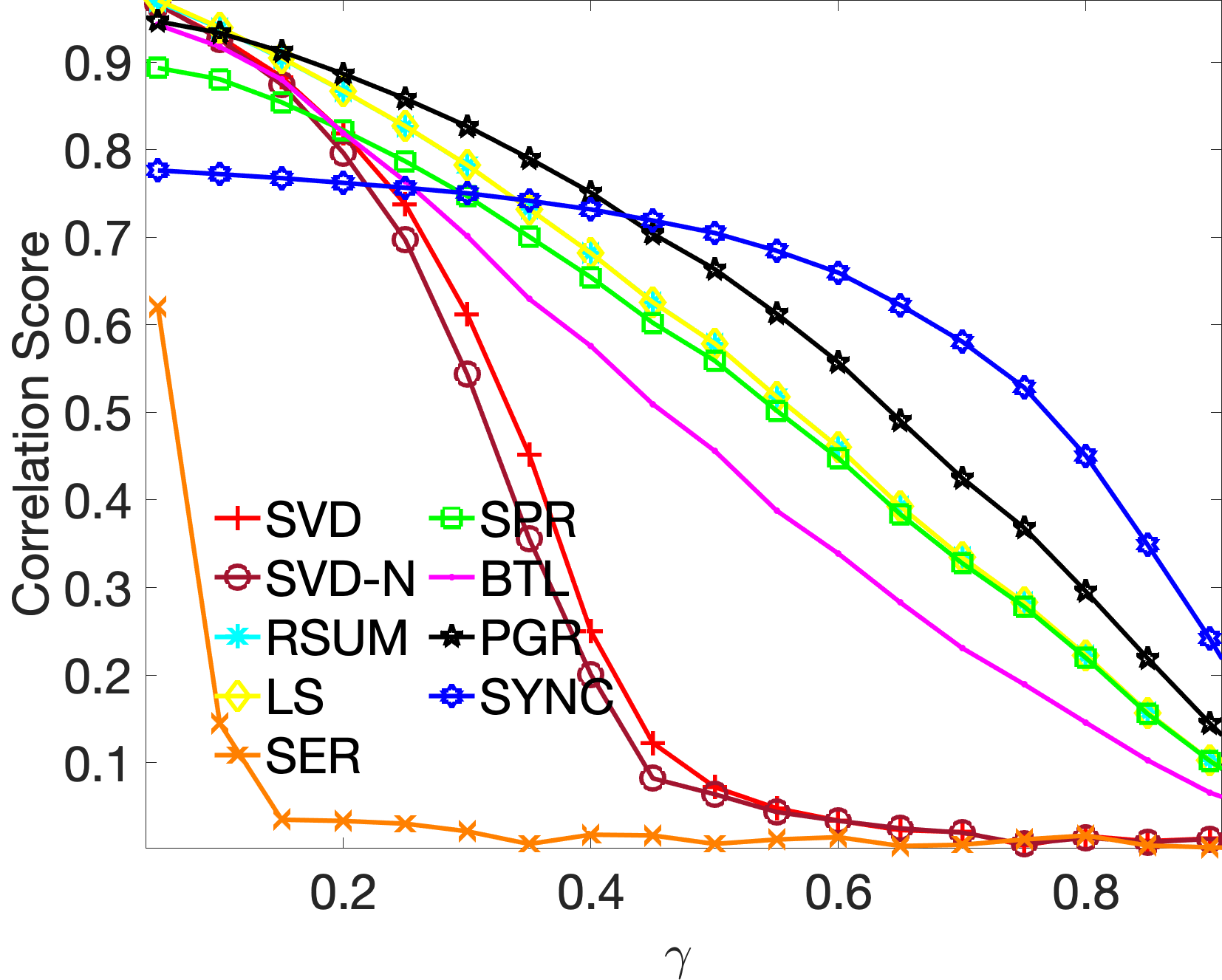} \\
& \includegraphics[width=0.35\columnwidth]{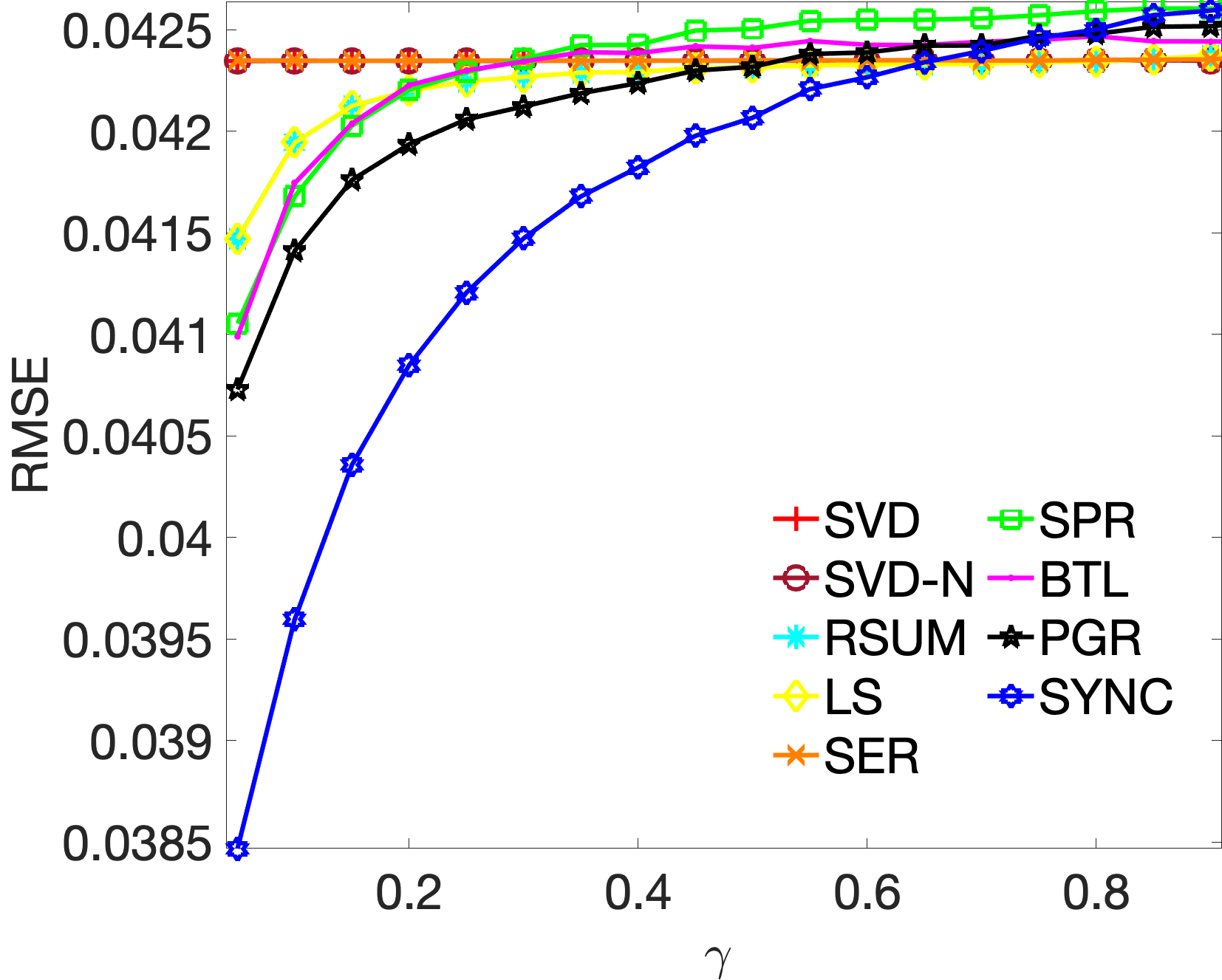}
& \includegraphics[width=0.35\columnwidth]{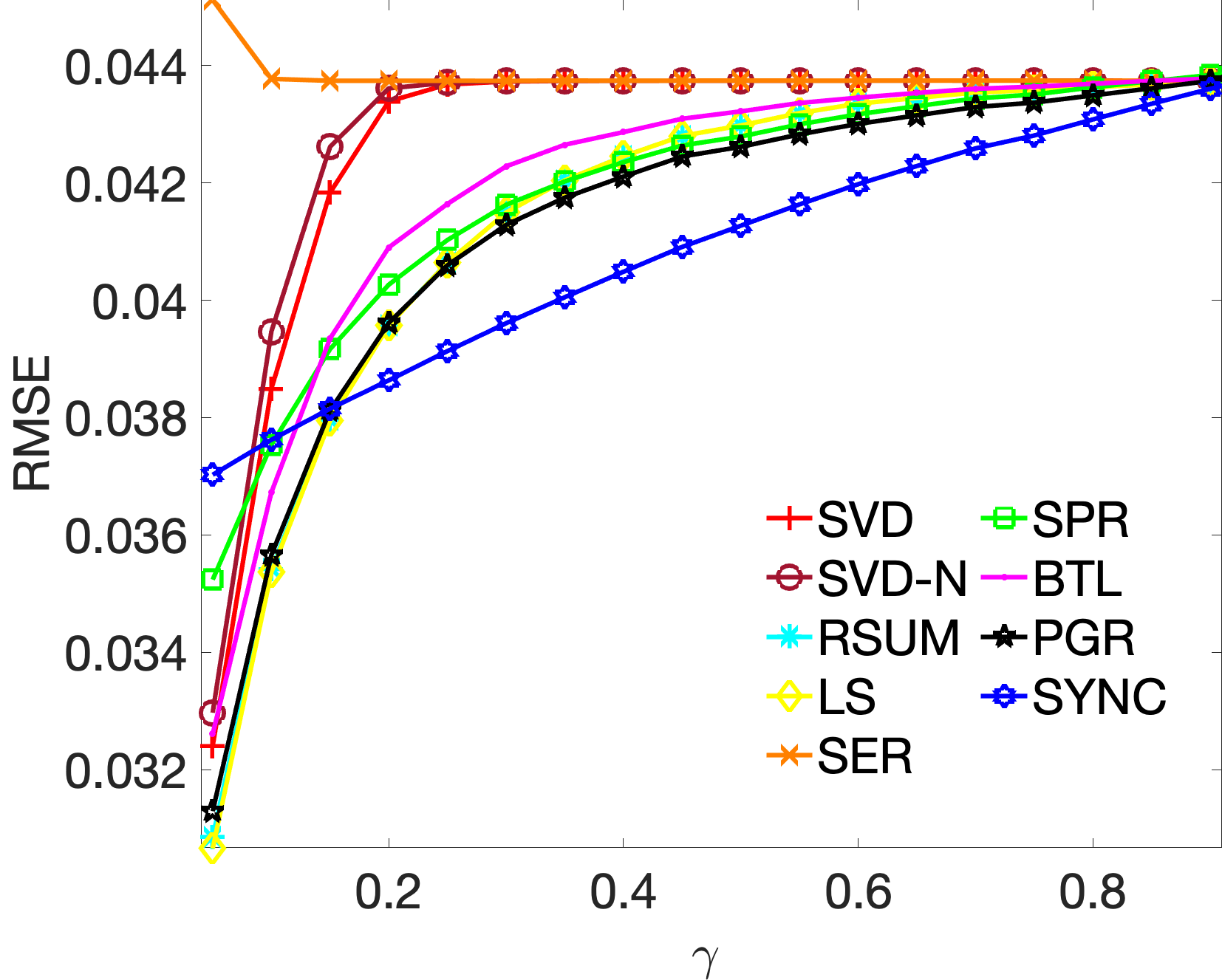}
& \includegraphics[width=0.35\columnwidth]{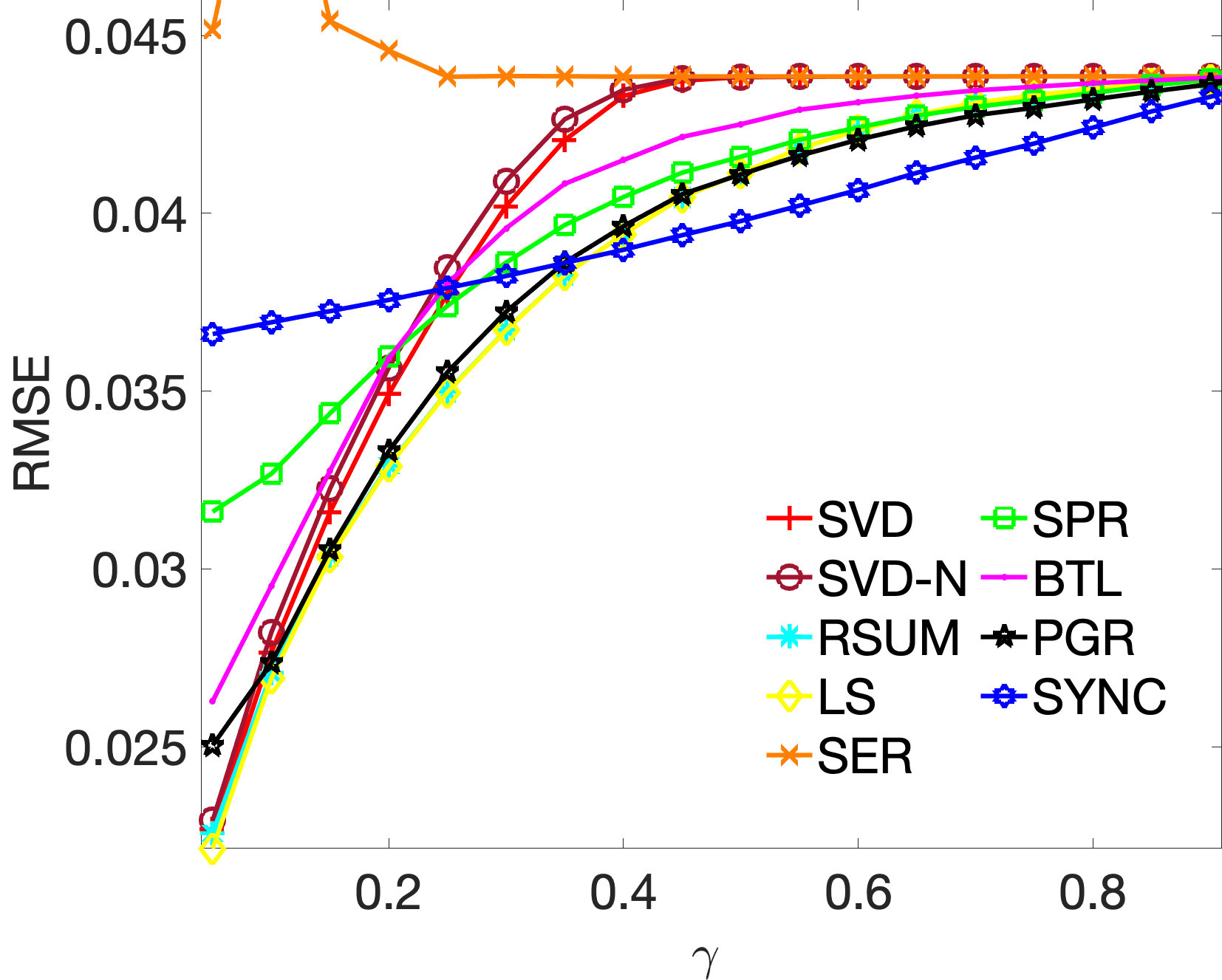}
\end{tabular}
\captionsetup{width=0.99\linewidth}
\vspace{-2mm}
\captionof{figure}{Performance statistics in terms of Kendall distance (top row), correlation score (middle row), and RMSE (bottom row), for synthetic data with scores generated by a \textbf{Gamma} distribution with $n=3000$, and sparsity $p \in \{0.01, 0.05, 0.1\}$, without the matrix completion preprocessing step. Note that  $\log(3000) / 3000 = 0.00267$. Results are averaged over 20 runs. 
}
\label{fig:gamma_ERO_n3000}
\end{table*}


\subsection{Real data}  \label{sec:num_real}

We also apply all ranking algorithms to a variety of real world networks, and measure the performance of the outcome by the number of upsets in the final ranking, computed with respect to the given pairwise measurements. To this end, we consider two types of upsets, as detailed below. Let  $\hat{r}_i$ denote the estimated score of item $i$ as computed by the method of choice. Note that higher values of $\hat{r}_i$ correspond to higher ranks, meaning stronger players or more preferred items. Next, we construct the induced (possibly incomplete) matrix of estimated/denoised pairwise rank-offsets
\begin{equation} \nonumber
\hat{R}_{ij} = \begin{cases}
 \hat{r}_i - \hat{r}_j;  & \text{if } \{i,j\} \in E \\
  0; & \text{if } (i,j) \notin  E, 
\end{cases}
\label{recoveredC}
\end{equation}
with $\hat{R}_{ij}  > 0$ denoting that the rank of player $i$ is higher than the rank of player $j$.
To measure the accuracy of a proposed reconstruction, we rely on the following two metrics. First, we use the popular metric 
\begin{equation}
	  \text{Number of upsets} = \sum_{i=1}^{n-1} \sum_{j=i+1}^{n} \mb{1}_{ \{ \text{sign}( R_{ij} \hat{R}_{ij}) = -1 \} }, 
\end{equation}
which counts the number of \textit{upsets} (lower is better), i.e., the number of disagreeing ordered comparisons. It contributes a $+1$ to the summation whenever the ordering in the provided data contradicts the ordering in the induced ranking. Next, we define a weighted version of the upset criterion (lower is better) between the given rank comparison data, and the one induced by the recovered solution
\begin{equation}
	\text{Weighted upsets} = \sum_{i=1}^{n-1}   \sum_{j=i+1}^{n}  | R_{ij} - \hat{R}_{ij} |. 
\end{equation}

For ease of visualization and comparison, in all the barplot figures shown in this section, we color in red (resp. blue) the outcomes attained by \textsc{SVD} (resp. \textsc{SVD-N}). For brevity, for each real data set, we primarily comment on the performance of the two SVD-based methods relative to the other seven methods. We compare performance both before and after the matrix completion step. In each scenario, we compute the above two performance metrics, hence the four columns in the five Figures \ref{fig:NBA}, \ref{fig:AnimalNetworks}, \ref{fig:Hiring_ALL_FM}, \ref{fig:MSFT_Halo}, \ref{fig:PremierLeague}  corresponding to the five real data sets we consider.

\paragraph{NCAA College Basketball.} Our first and most comprehensive real data set comes from sports, and contains the outcomes of NCAA College Basketball matches during the regular season, for the years $1985$ - $2014$. Each separate season provided a pairwise comparison matrix on which we evaluate all algorithms. This data set can essentially be construed as 30 separate problem instances. The input matrix corresponding to each season contains the point difference of the direct match between a pair of teams. If the same pair of teams played multiple matches against each other, the point differences are added up. The resulting pairwise comparison matrix is thus skew-symmetric by construction.

The experimental results shown in the left column of Figure \ref{fig:NBA} pertain to the setting without the matrix completion step, while the right column compares the outcome of all methods after  the low-rank matrix completion pre-processing step. The first row shows performance in terms of the number of upsets across time for each season, while the barplot in the second row shows the average attained by all methods across all seasons. Before matrix completion, \textsc{SVD-N} and \textsc{SVD} are ranked 6th, respectively 8th, while after matrix completion, they are ranked 6th, respectively 7th, out of the 9 algorithms considered. The third and fourth row depict similar results, but the performance metric used is the weighted upsets, which is shown on a log scale for ease of visualization,  primarily due to the poor performance of \textsc{BTL}, which would otherwise distort the $y$-axis. In this case,  \textsc{SVD-N} and \textsc{SVD} are ranked 5th, respectively 6th, while after matrix completion, they are ranked 4th, respectively 5th, out of 9 methods, visibly outperforming the rest of the methods, and attaining a comparable performance to that of the top three methods.

The heatmaps shown in the fifth row of Figure  \ref{fig:NBA} contain the pairwise correlations between the rankings attained by all methods over the 30 seasons. In other words, for any pair of methods, we compute the 
correlation between the respective rankings attained in each season, which we then average over the entire set of 30 seasons. To the best of our knowledge, such a correlation study has not been performed in the literature, and provides insights on the potentially different latent rankings inherent in each data set, while still minimizing the number of upsets. \textsc{SYNC}, the best performing method in terms of the number of upsets, has a comparable performance to \textsc{BTL}, and a correlation of 86\%.   \textsc{PGR} and  \textsc{SVD-N} attain a comparable number of upsets, but their Kendall score correlation is only 70\%. The bottom barplot of the same Figure  \ref{fig:NBA} show the average correlation of each of the methods, highlighting that   \textsc{SPR}, \textsc{LS} and  \textsc{BTL}  yield the most correlated rankings with the other methods. A similar correlation analysis is performed on all rankings algorithms after the low-rank matrix completion step, which reveals a number of clusters. As expected,  \textsc{SVD} and  \textsc{SVD-N}  produce very related rankings to each others, and similarly for  \textsc{RSUM} and  \textsc{LS}, and finally,  \textsc{SPR} and  \textsc{BTL}.


\newcommand{\widB}{3.3in} 
\newcolumntype{C}{>{\centering\arraybackslash}m{\widB}} 
\begin{table*}\sffamily 
\hspace{-9mm} 
\begin{tabular}{l*2{C}@{}}
 & Without Matrix Completion  &   With Matrix Completion  \\
 \hline
\parbox{1.5cm}{  Number \\ of Upsets}
& \includegraphics[width=0.48\columnwidth]{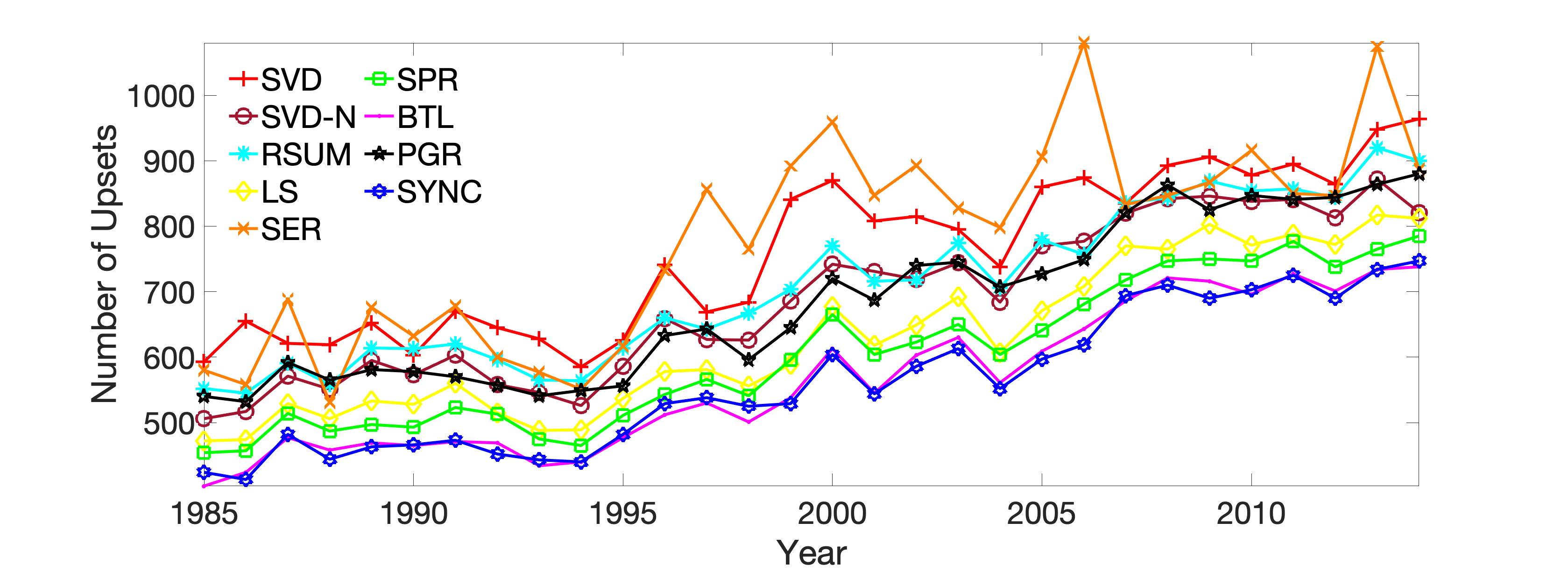}
& \includegraphics[width=0.48\columnwidth]{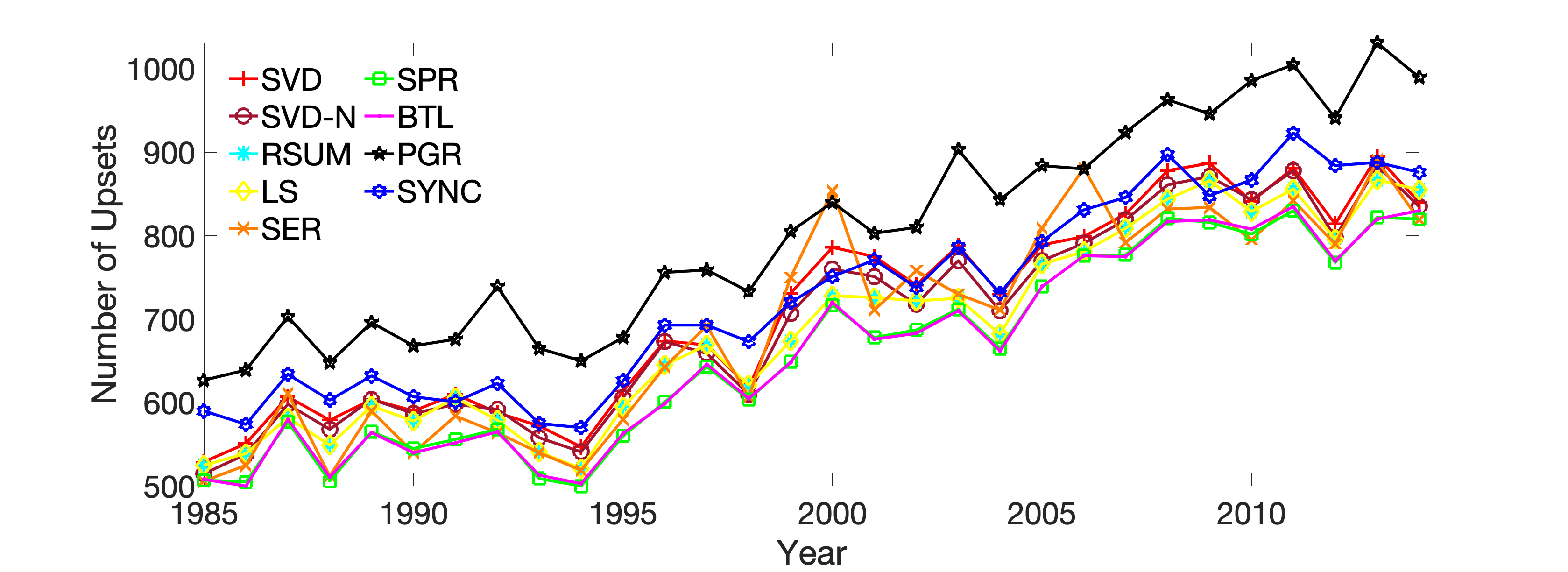}  \\ 
%
\vspace{-2mm}
\parbox{1.5cm}{  Number  \\ of  Upsets}
& \includegraphics[width=0.48\columnwidth]{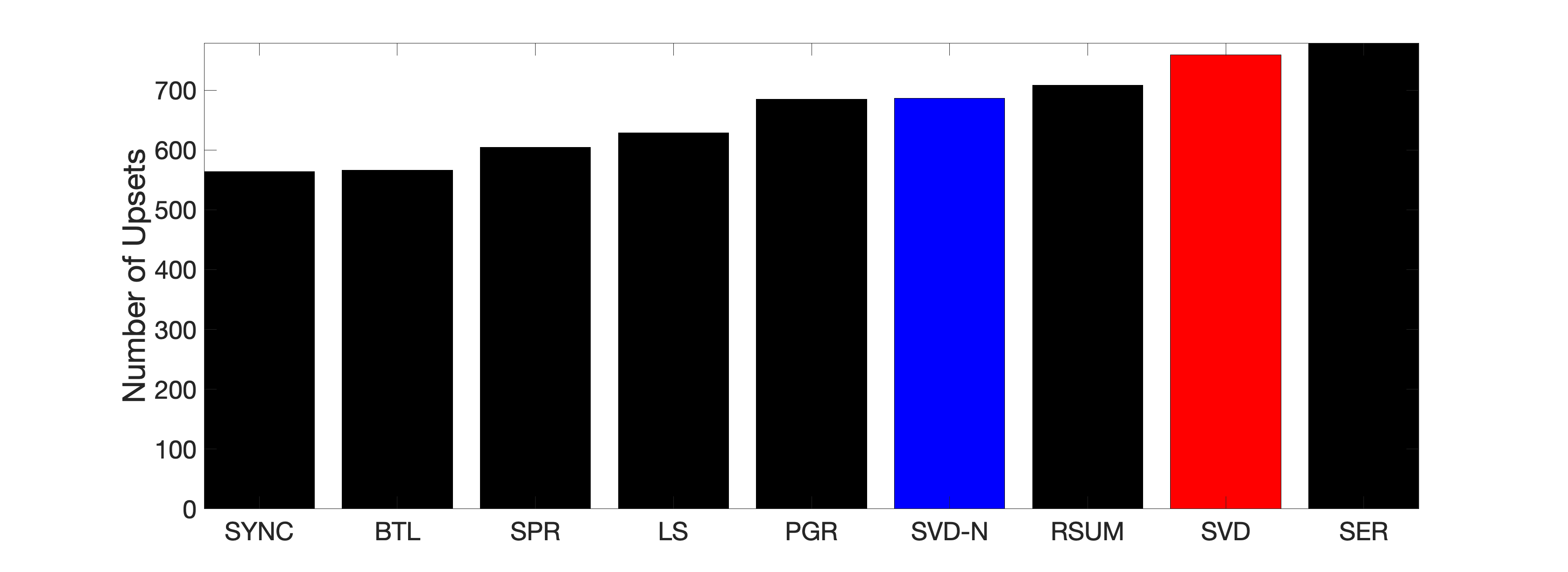}
& \includegraphics[width=0.48\columnwidth]{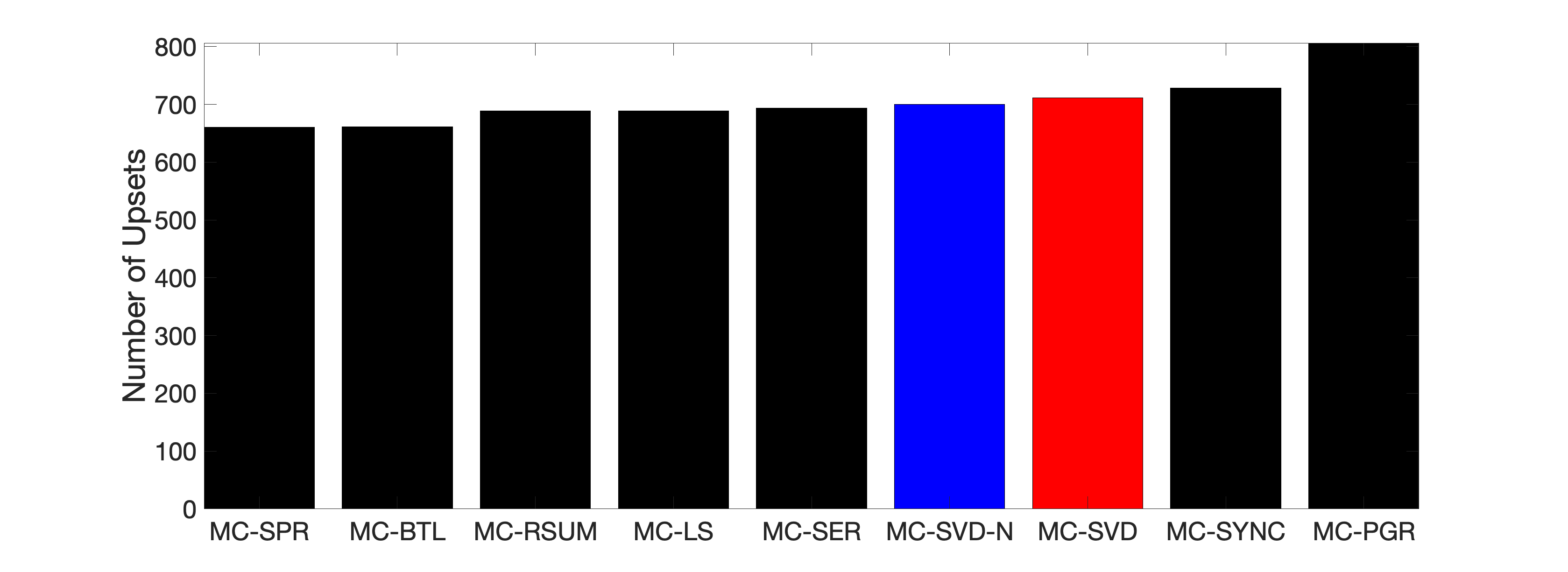} \\
\hline
\parbox{1cm}{  Weighted \\  Upsets}
& \includegraphics[width=0.48\columnwidth]{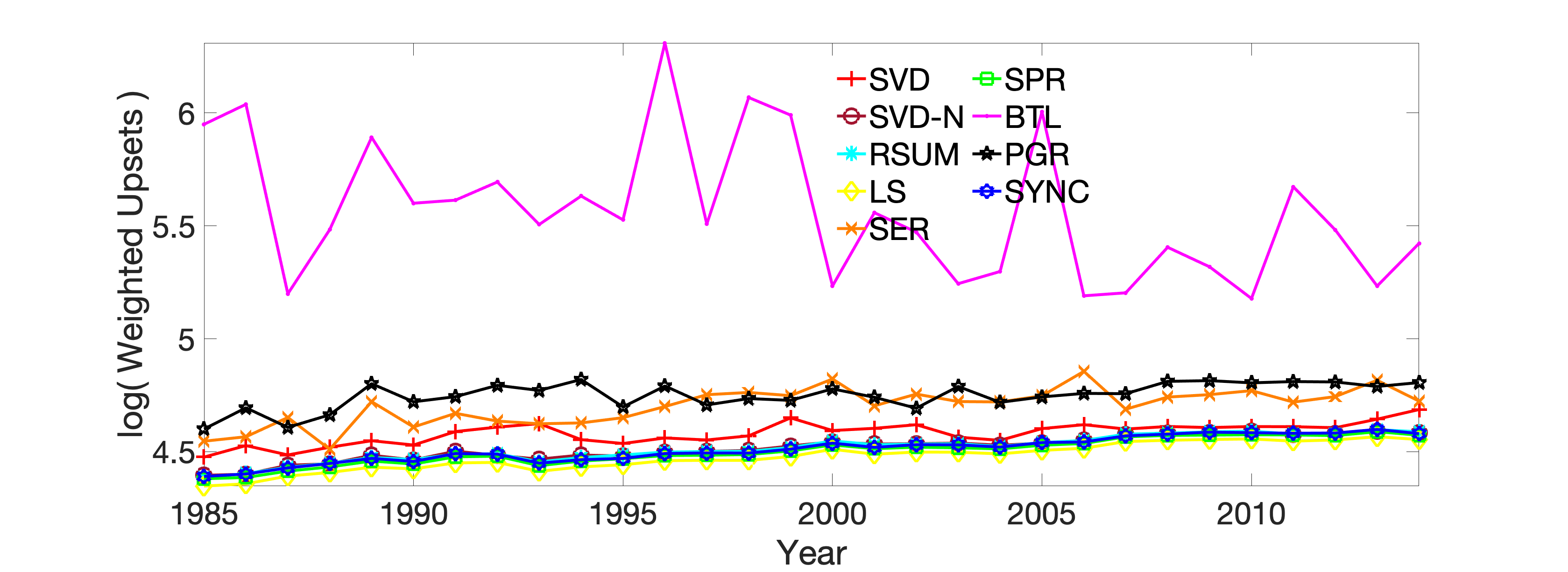}
& \includegraphics[width=0.48\columnwidth]{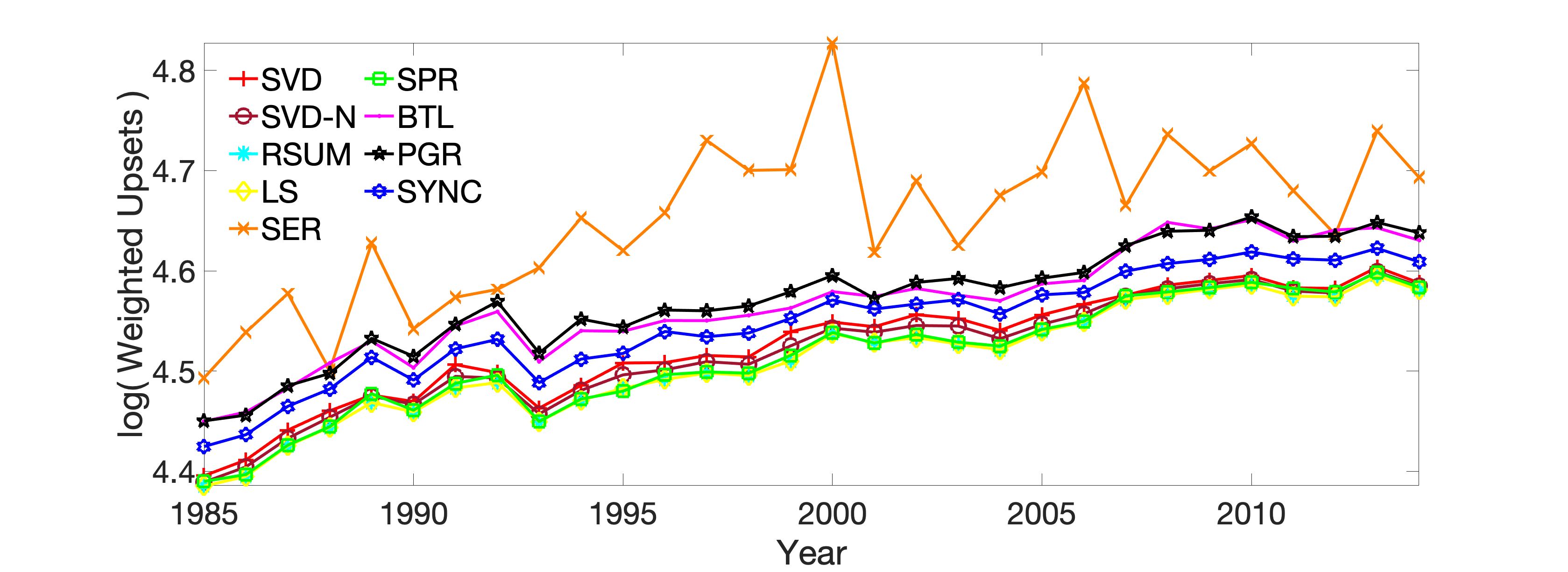}  \\ 
\vspace{-2mm}
\parbox{1cm}{  Weighted \\  Upsets}
& \includegraphics[width=0.48\columnwidth]{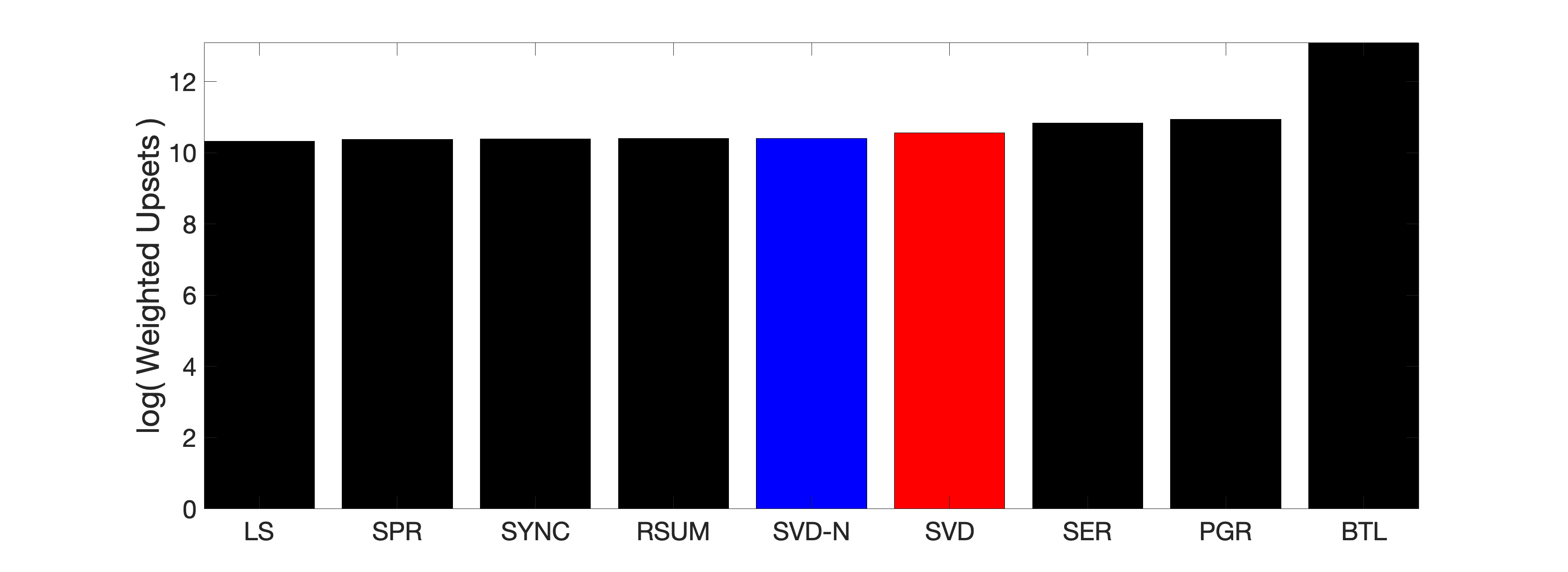}
& \includegraphics[width=0.48\columnwidth]{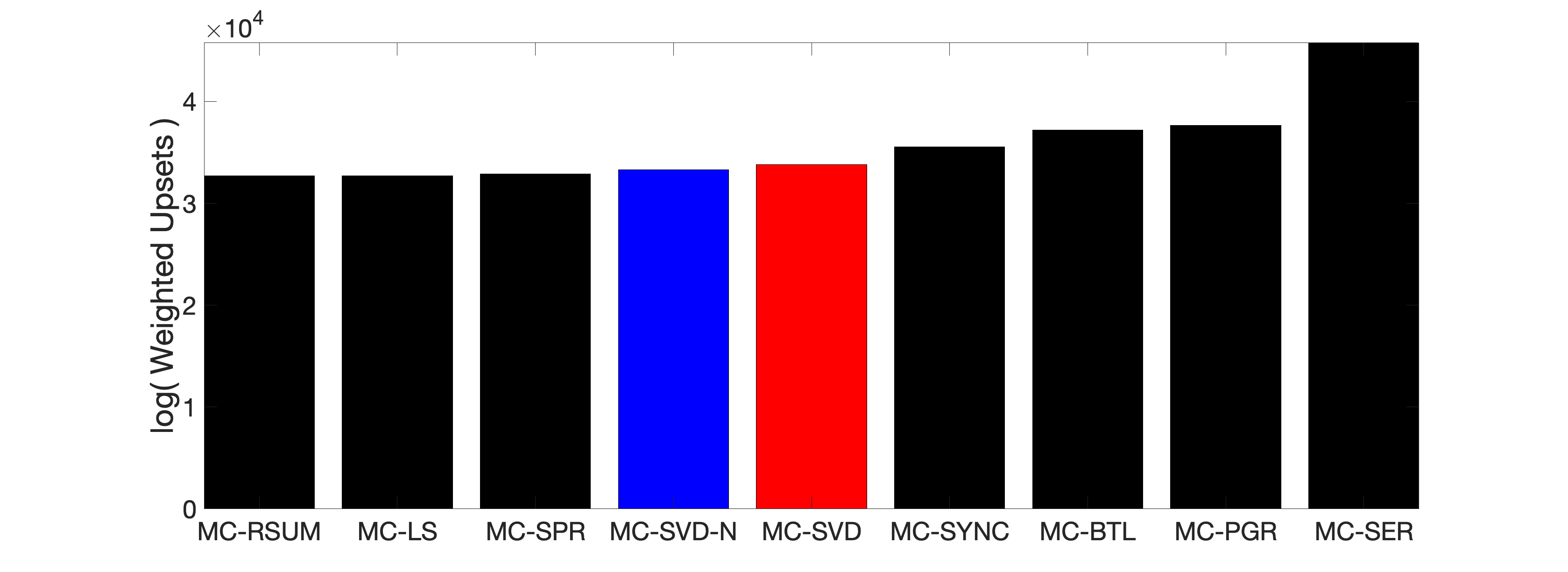} \\
\hline
\parbox{1cm}{  Correlation \\  across \\ methods}
& \includegraphics[width=0.48\columnwidth]{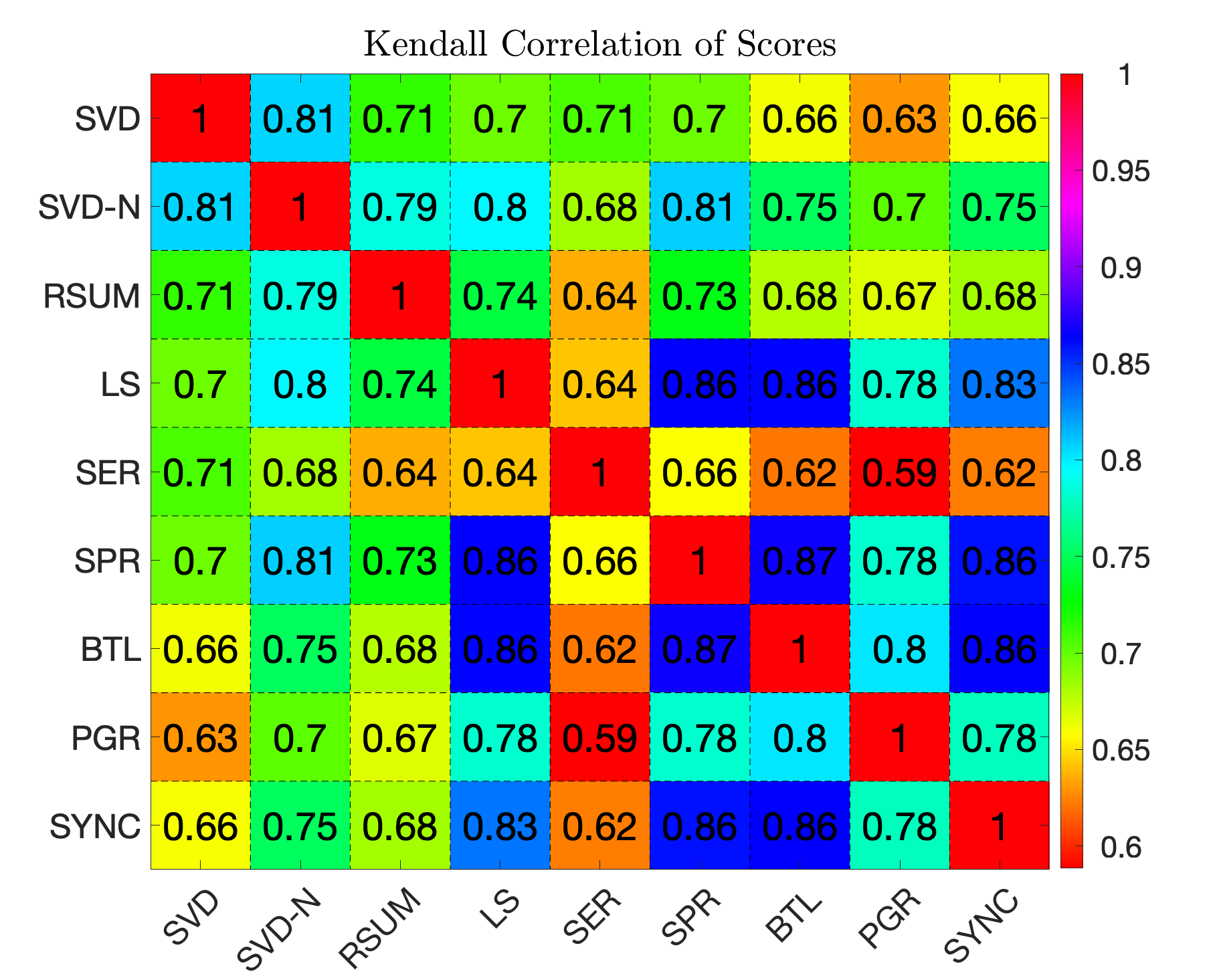}
& \includegraphics[width=0.48\columnwidth]{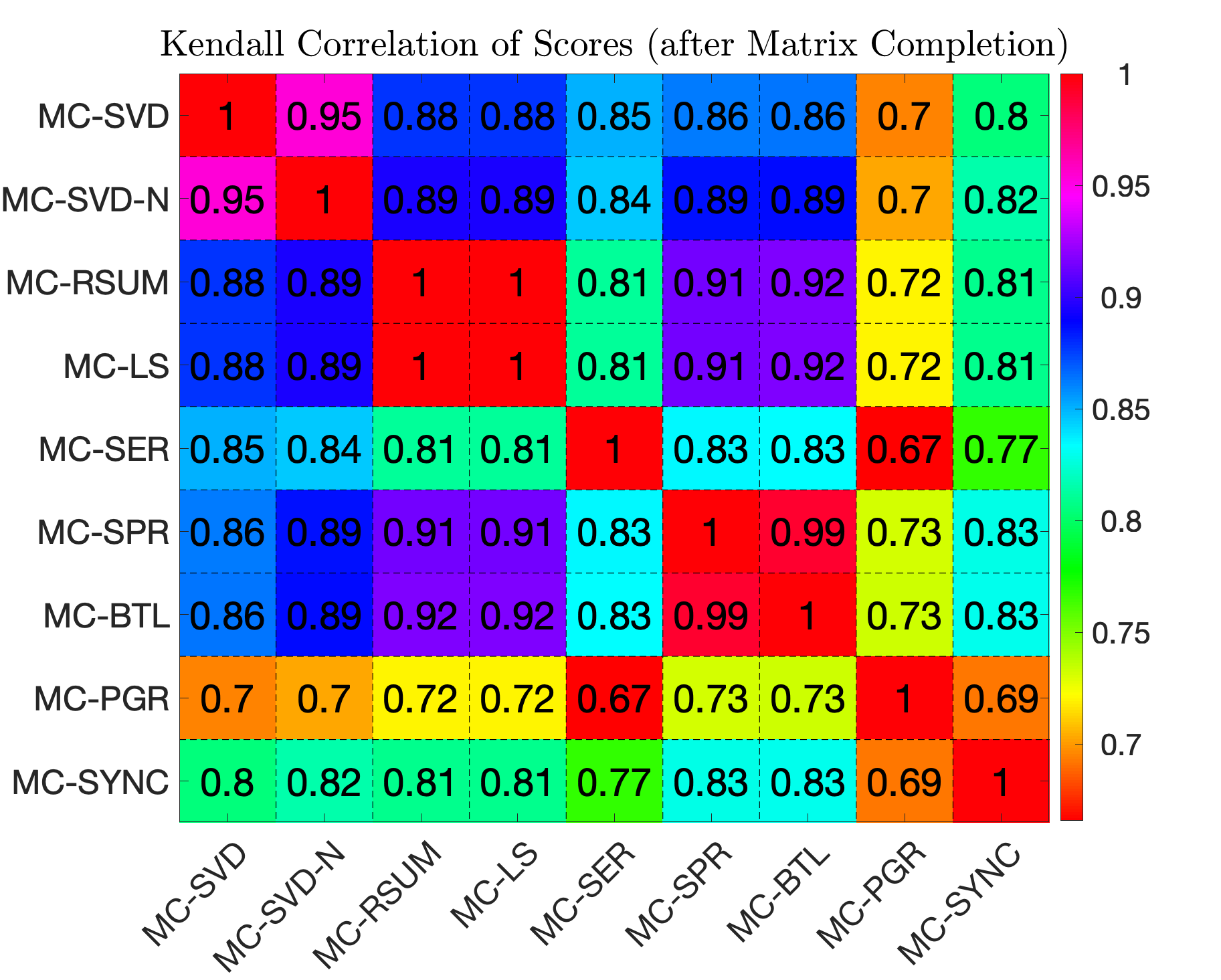} \\
\vspace{-2mm}
\parbox{1.5cm}{  Average \\  correlation \\ with other \\ methods}
& \includegraphics[width=0.48\columnwidth]{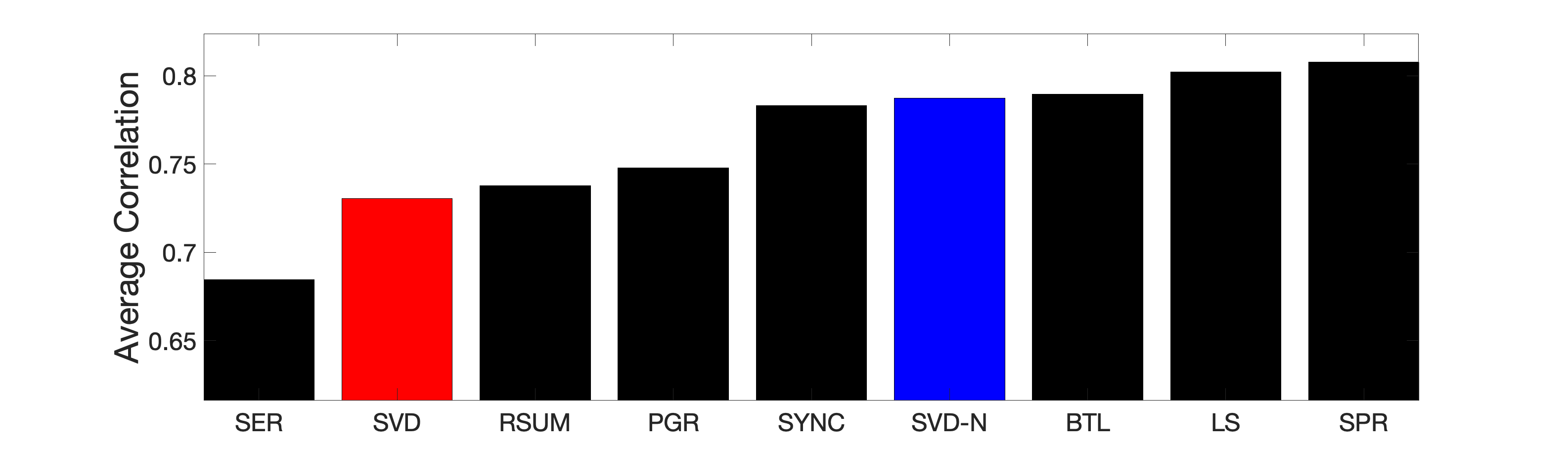}
& \includegraphics[width=0.48\columnwidth]{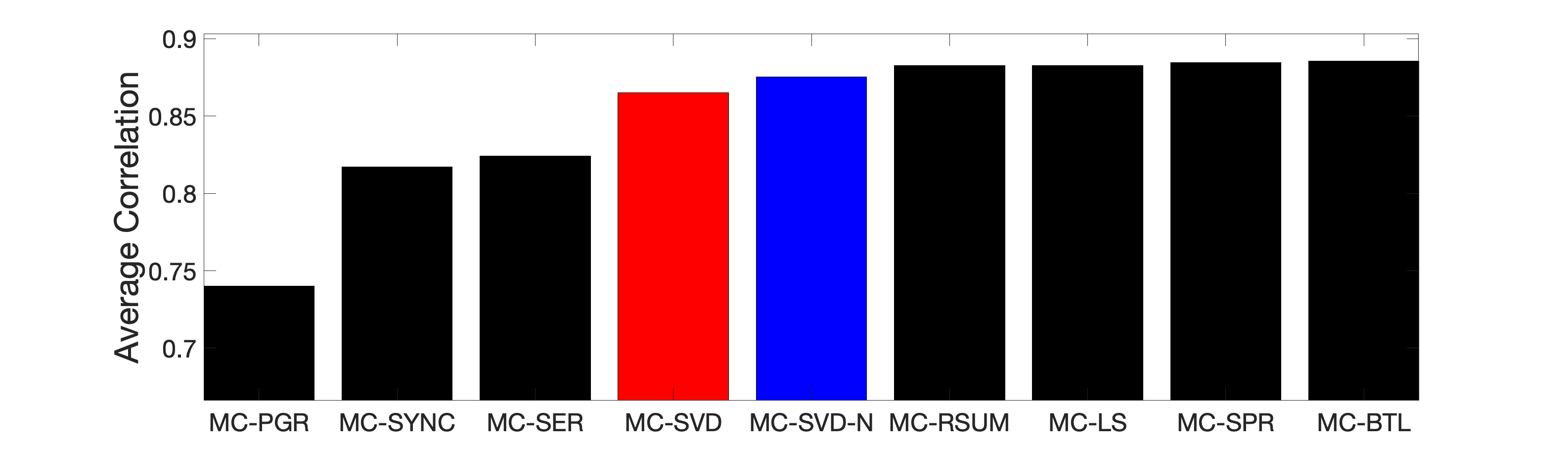}
\vspace{-2mm}
\end{tabular}
\captionsetup{width=0.99\linewidth}
\vspace{-2mm}
\captionof{figure}{Comparison of the algorithms for the NCAA College Basketball data set (1984 - 2014), before (left column) and after (right column) the matrix completion step. We analyze 30 measurement graph instances, one for each season, and estimate rankings for each problem instance, for all the algorithms we considered. The first row shows the number of upsets for each season, and the second row displays the averages across all the seasons.  Similar plots are shown in the third and fourth row, but this time we compare performance in terms of  weighted upsets. The heatmaps in the fifth row show the average (across seasons) correlation between the rankings estimated by each pair of methods, with the average correlation degree depicted in the bottom row.}
\label{fig:NBA}
\end{table*}


\newcommand{\widfour}{1.3in} 
\newcolumntype{C}{>{\centering\arraybackslash}m{\widfour}}

\vspace{-1mm}
\paragraph{Animal dominance network.}  Our second  real world example is a set of two networks of animal dominance  among captive monk parakeets \cite{hobson2015social}. The input matrix $H$ is skew-symmetric, with $H_{ij}$ denoting the number of net aggressive wins of animal $i$ toward animal $j$. The study  covers two groups (of 21, respectively, 19 individuals), and spans across four quarters. 
We follow the same approach as in \cite{CaterinaDeBacco_Ranking}, and only tested the algorithms on Quarters 3 and 4, based on guidance from the behavioral ecologist who collected and analyzed the data. Since the animals did not know each other well at the start of the experiment, the hierarchies were still in the transient process of forming. Figure  \ref{fig:AnimalNetworks} compares performance in  terms of the number of upsets and weighted upsets, with and without the matrix completion step. Overall, we observe that \textsc{SVD-N} outperforms  \textsc{SVD}, and the two methods typically rank halfway in the performance ranking, except in the leftmost column, where  \textsc{SVD} performs worst. 



\begin{table*}\sffamily 
\hspace{-9mm} 
\begin{tabular}{l*4{C}@{}}
\parbox{1.2cm}{Group\\Quarter}  & Number of Upsets & Number of Upsets (Matrix Completion) & Weighted Upsets & Weighted Upsets (Matrix Completion)  \\
 \hline
\scriptsize{G1-Q3} \hspace{-5mm}
& \includegraphics[width=0.248\columnwidth]{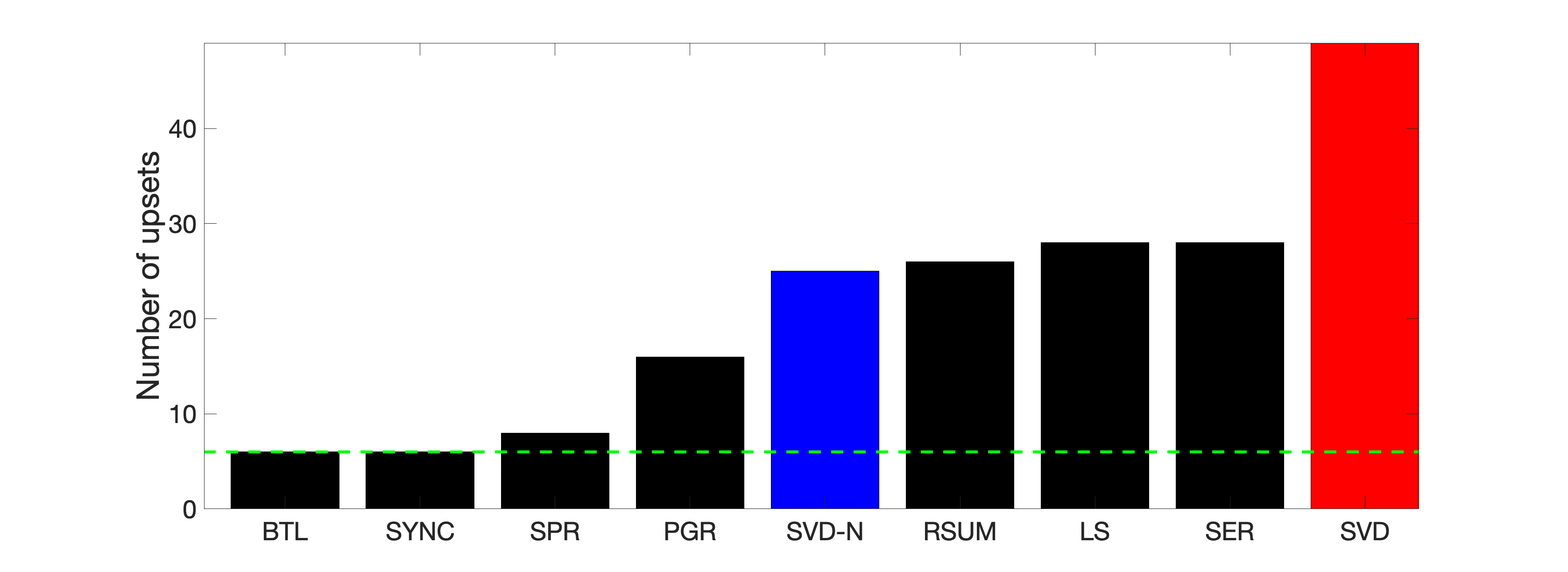}
& \includegraphics[width=0.248\columnwidth]{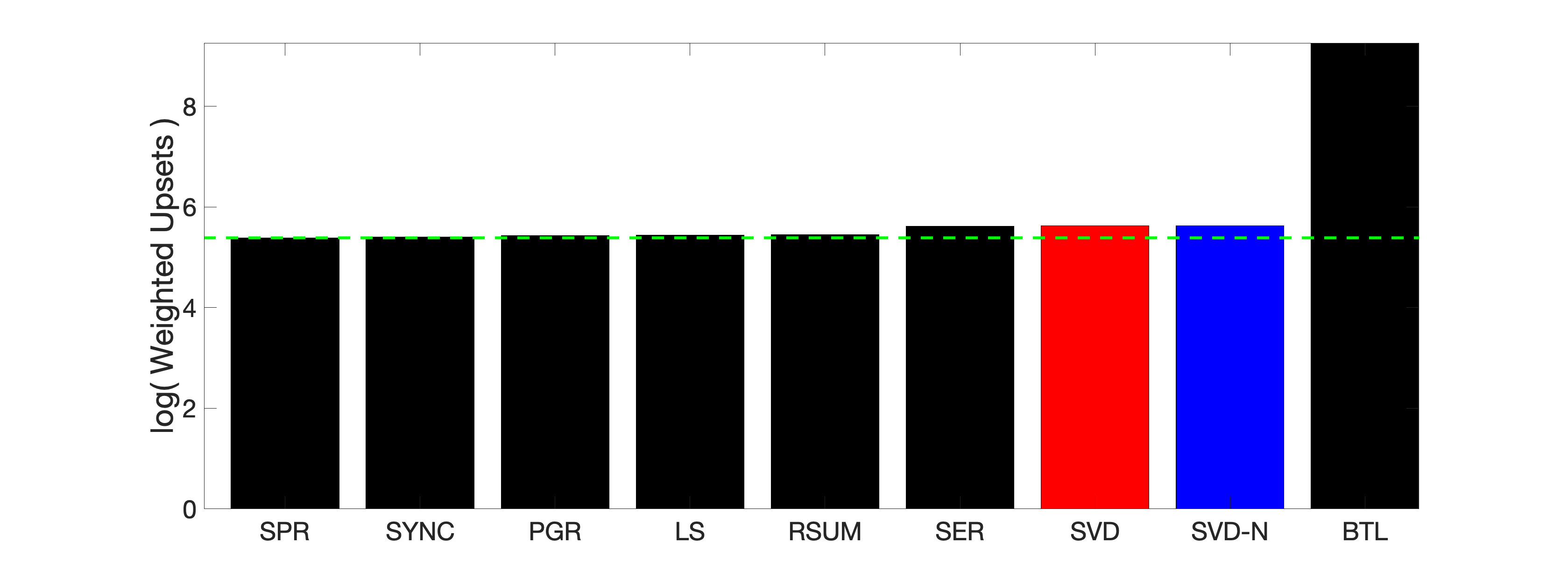}  
& \includegraphics[width=0.248\columnwidth]{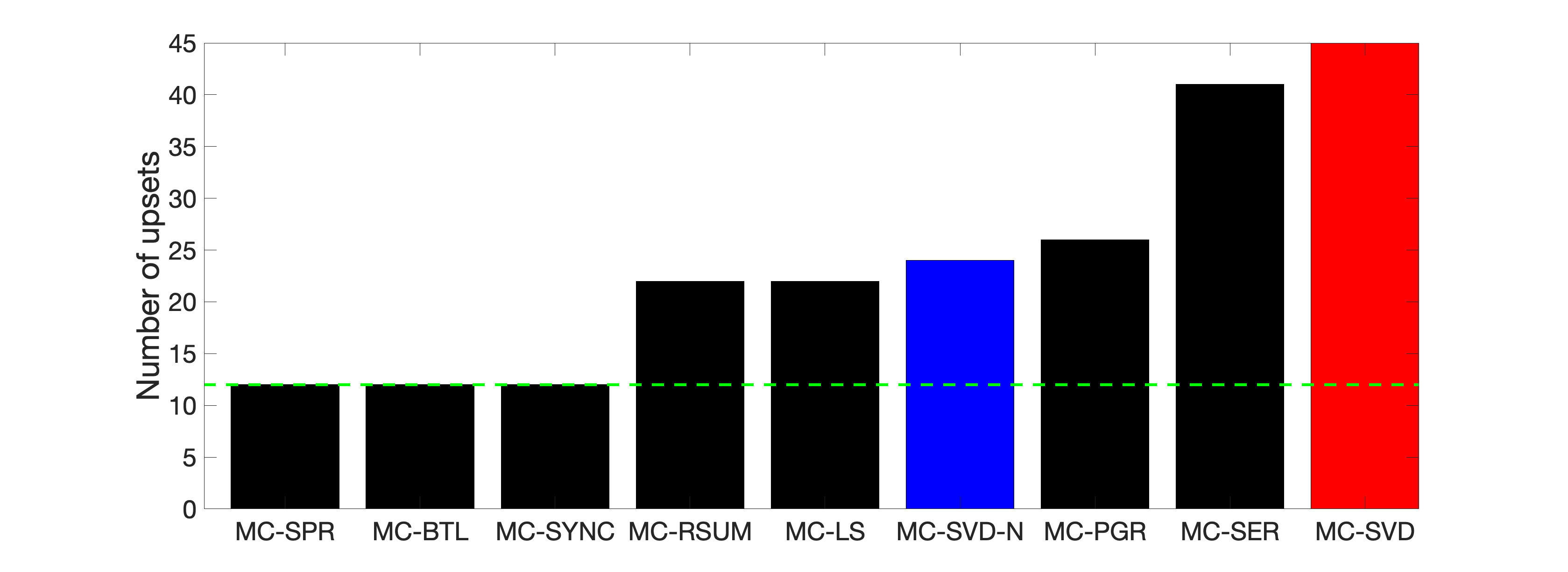}
& \includegraphics[width=0.248\columnwidth]{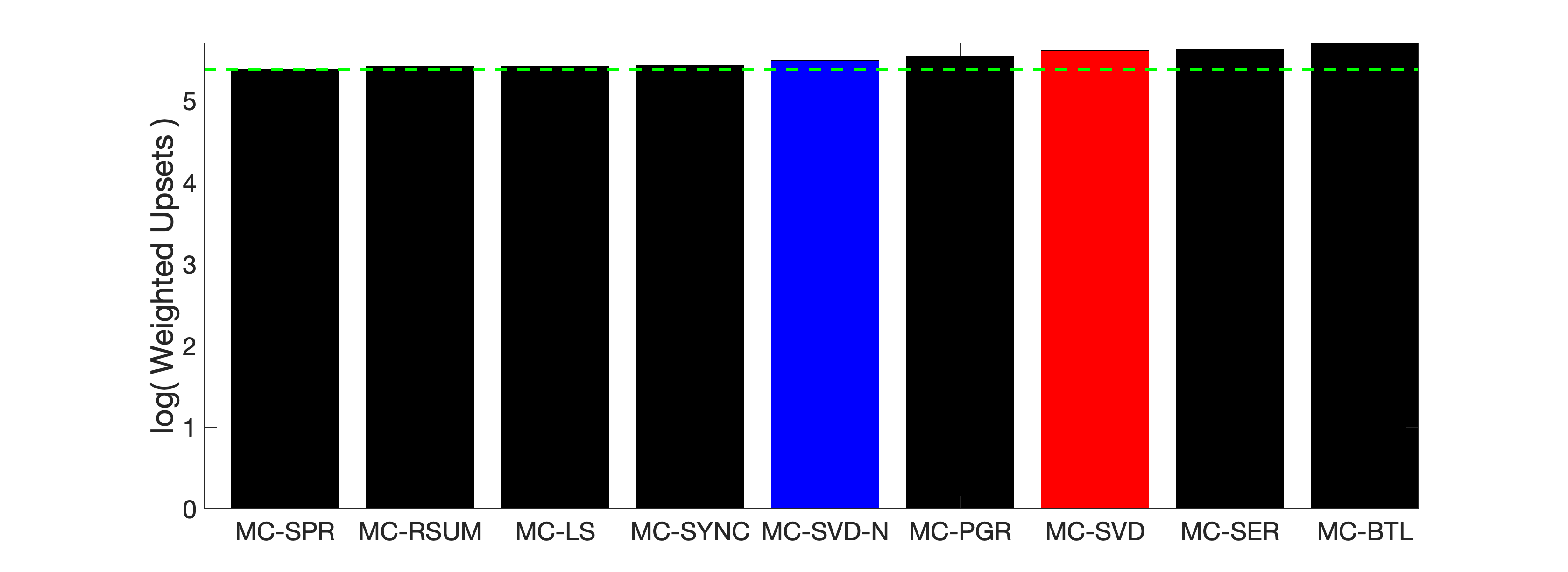}  \\ 
\scriptsize{G1-Q4} \hspace{-5mm}
& \includegraphics[width=0.248\columnwidth]{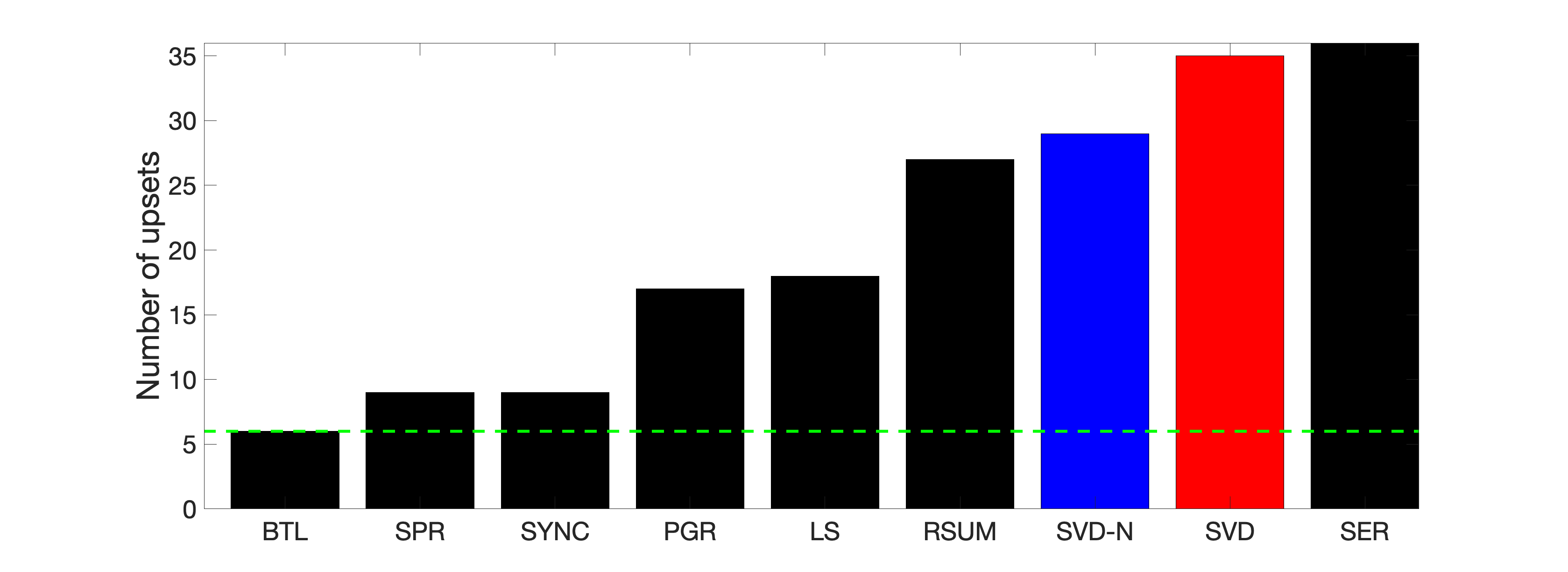}
& \includegraphics[width=0.248\columnwidth]{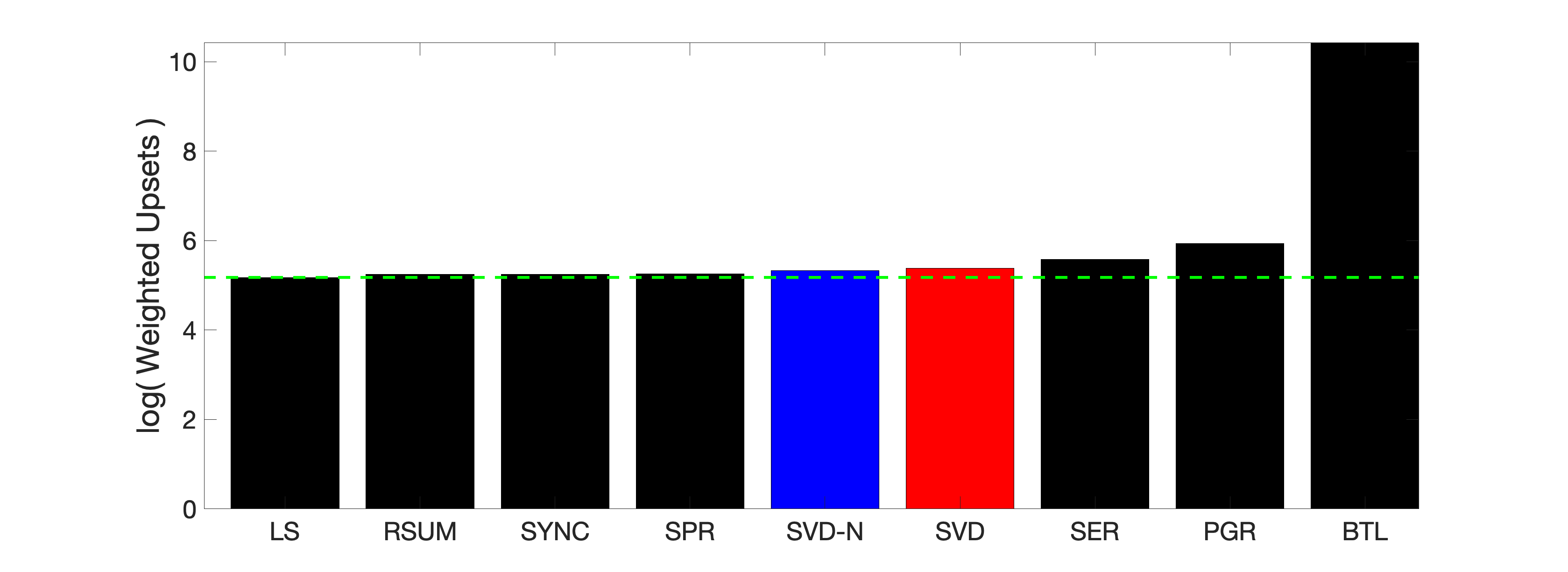}  
& \includegraphics[width=0.248\columnwidth]{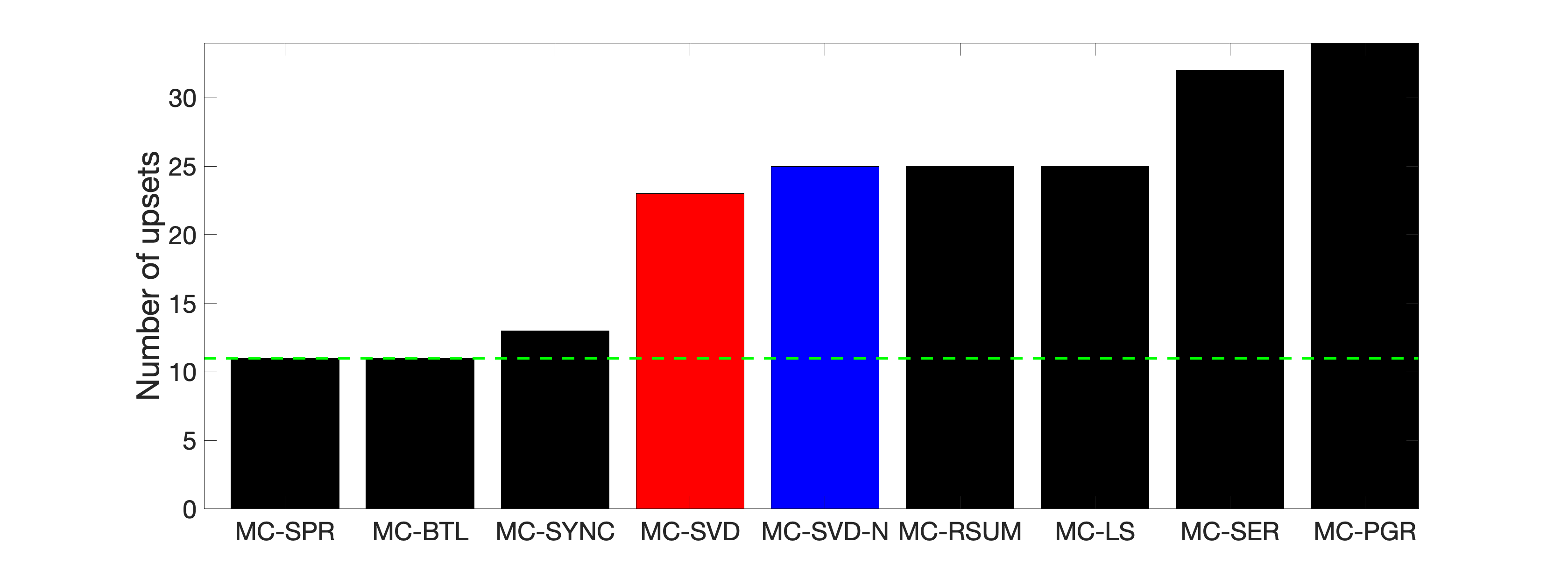}
& \includegraphics[width=0.248\columnwidth]{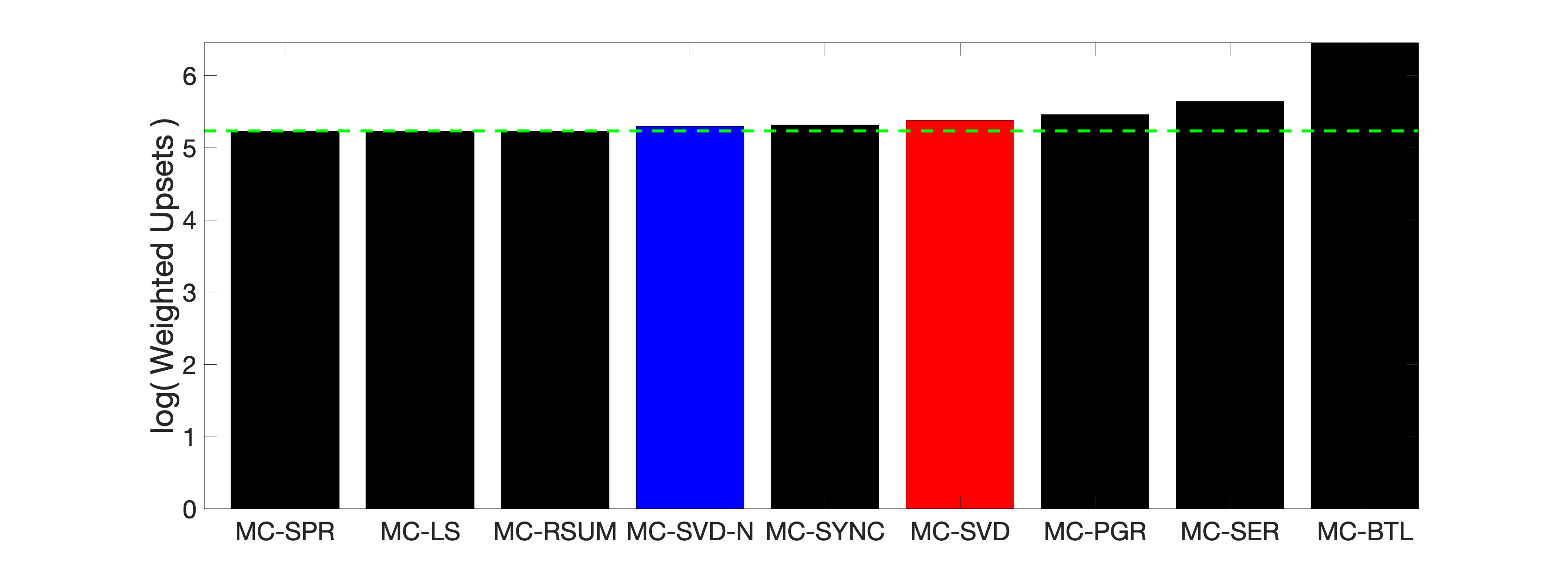}  \\ 
\scriptsize{G2-Q3} \hspace{-5mm}
& \includegraphics[width=0.248\columnwidth]{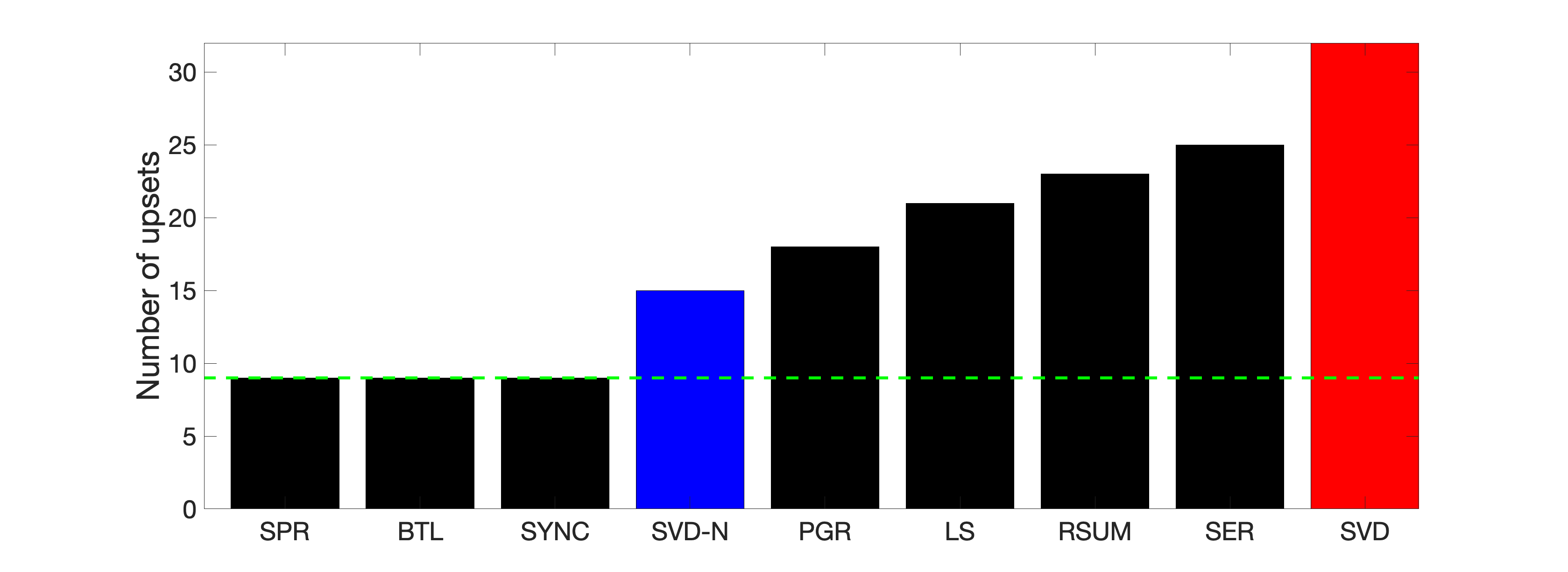}
& \includegraphics[width=0.248\columnwidth]{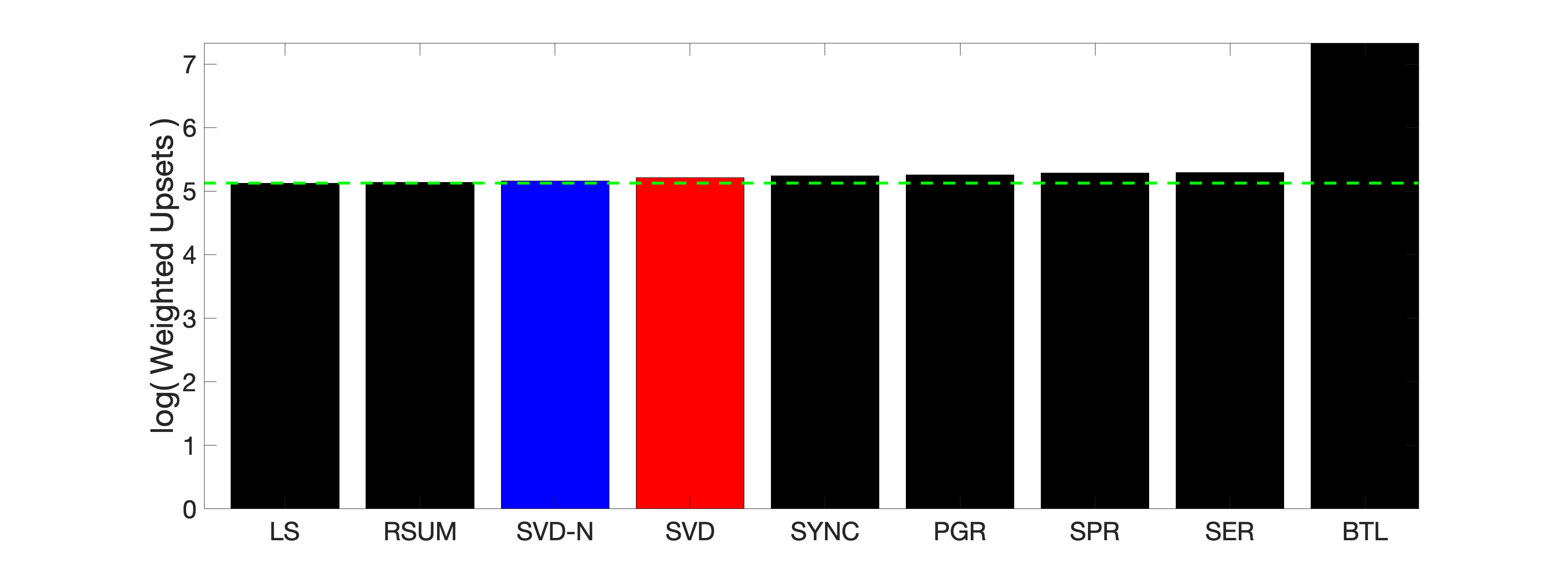}  
& \includegraphics[width=0.248\columnwidth]{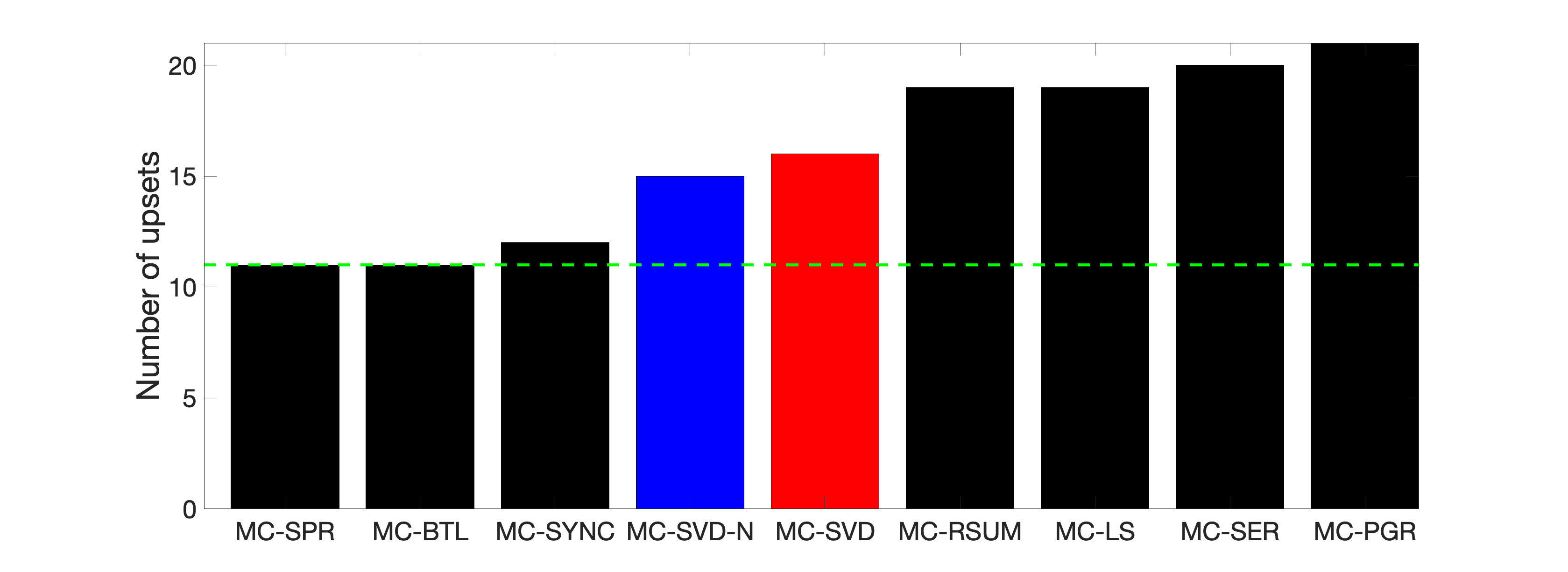}
& \includegraphics[width=0.248\columnwidth]{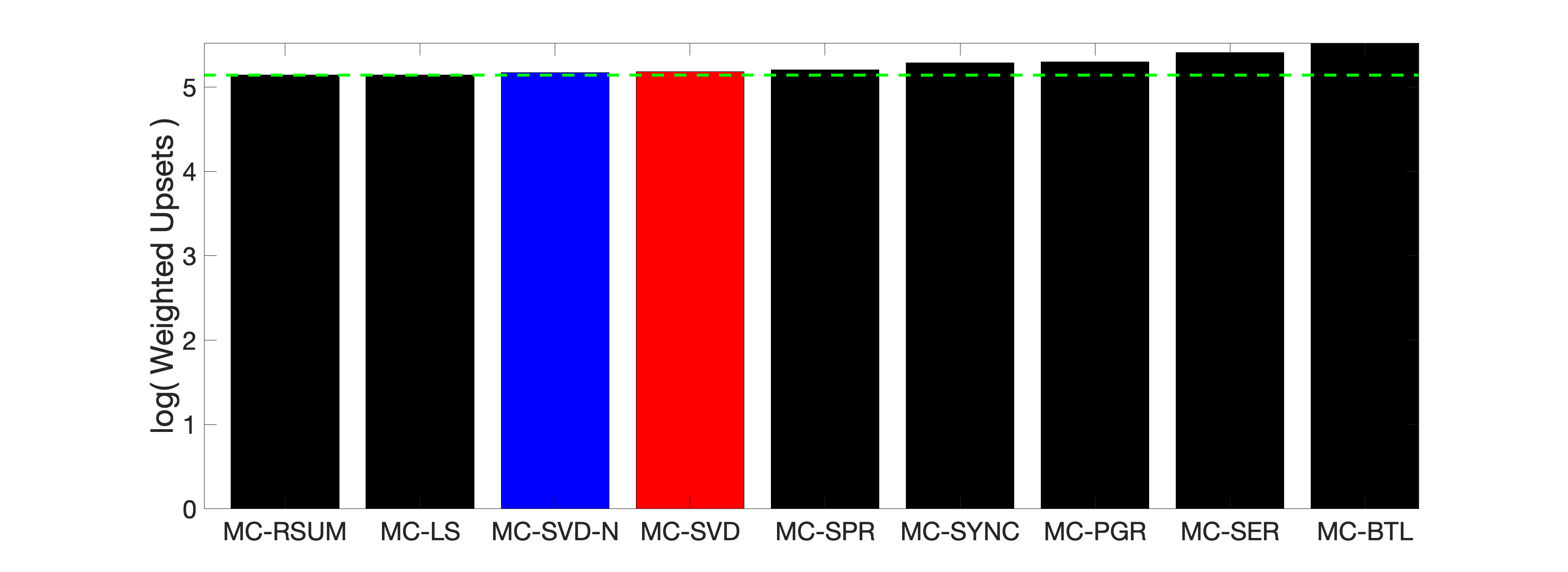}  \\ 
\scriptsize{G2-Q4} \hspace{-5mm}
& \includegraphics[width=0.248\columnwidth]{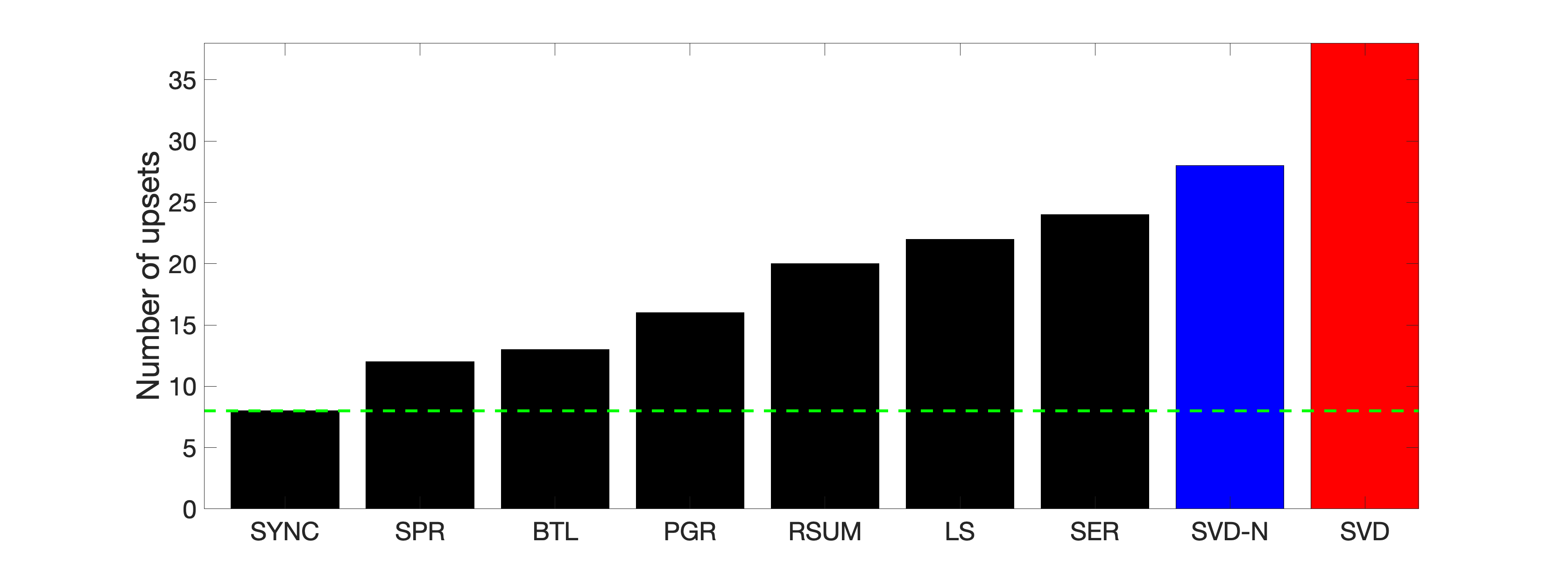}
& \includegraphics[width=0.248\columnwidth]{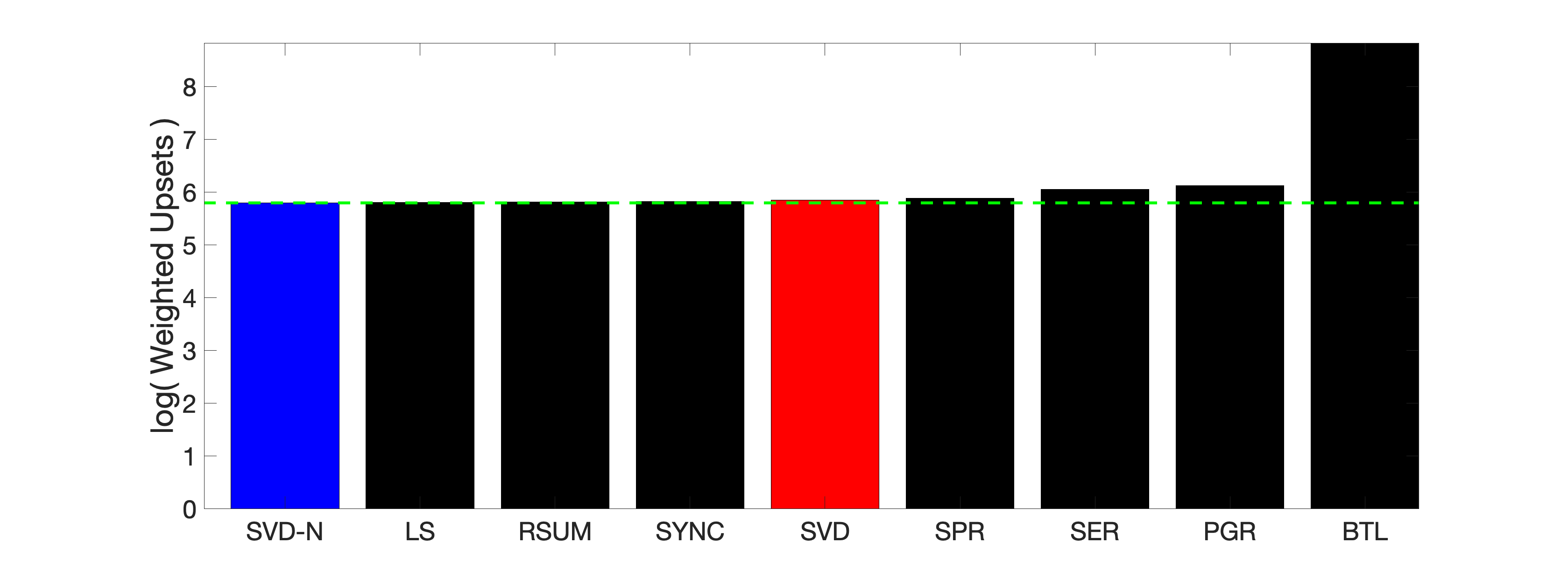}  
& \includegraphics[width=0.248\columnwidth]{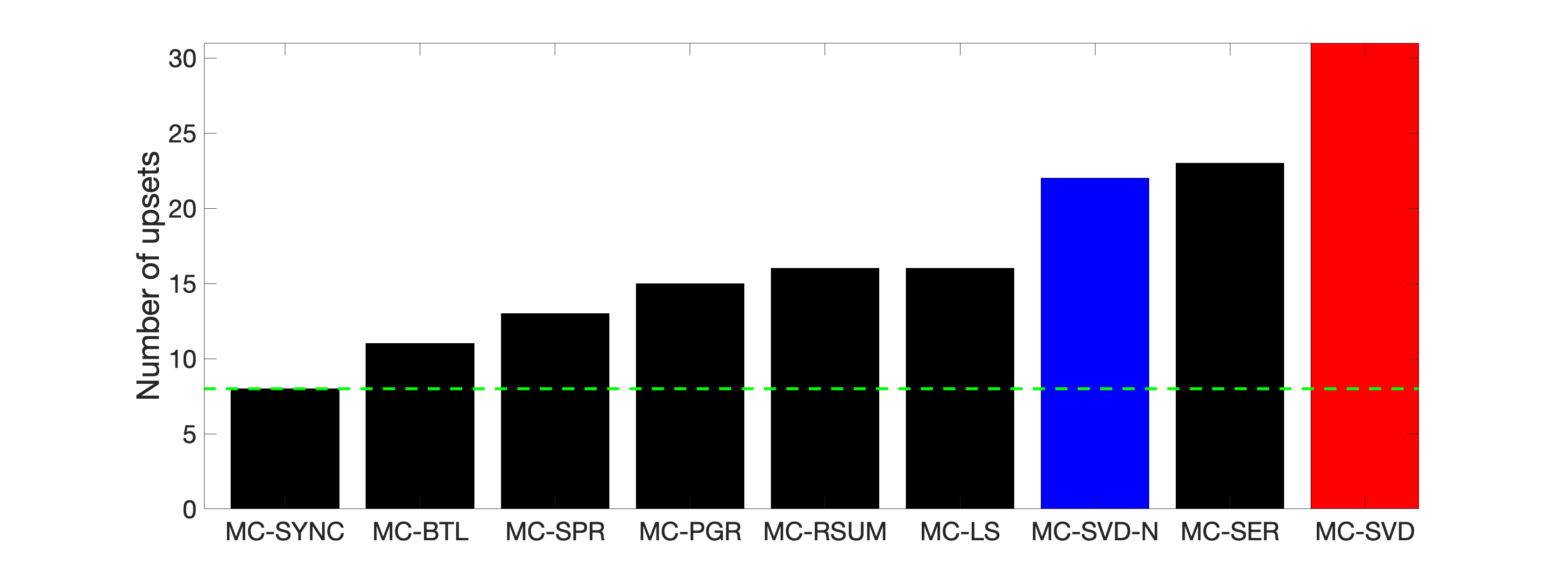}
& \includegraphics[width=0.248\columnwidth]{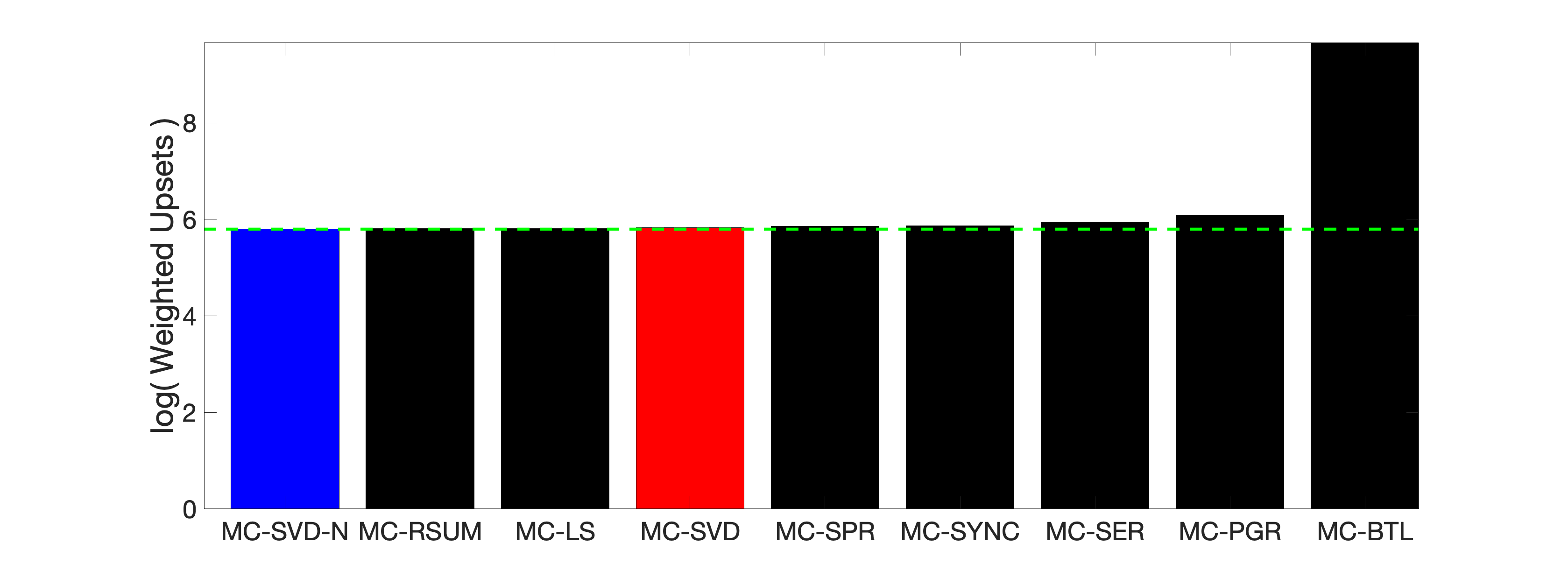}   
\end{tabular}
\captionsetup{width=0.99\linewidth}
\vspace{-3mm}
\captionof{figure}{Performance comparison in the animal dominance networks,  for two groups in Quarters 3-4, thus a total of four networks altogether. We compare both in terms of the number of upsets (first two columns) and weighted upsets (last two columns). The results without the matrix completion step pertain to the first and third columns, while columns two and four show results obtained after a low-rank matrix completion preprocessing step.
}
\label{fig:AnimalNetworks}
\vspace{-3mm}
\end{table*}

\paragraph{Faculty hiring networks.} Our next example covers  three North American academic hiring networks, that track the flow of academics between universities \cite{clauset2015systematic}. The flow is captured in a directed graph with 
adjacency matrix $A$, such that $A_{ij}$ is the number of faculty at university $j$ who received their doctorate from university $i$. We then consider the skew symmetric matrix $H = A - A^T$, capturing the net flow of faculty between a pair of institutions. Figure \ref{fig:Hiring_ALL_FM} shows the number of upsets attained by each algorithm, for three different disciplines: (a) Computer Science  ($n=206$),  
(b) Business ($n=113$) and (c) History  ($n=145$), before and after the matrix completion step, for both upsets criteria. Again, 
we observe that \textsc{SVD-N} outperforms  \textsc{SVD}. Furthermore,  \textsc{SVD-N} typically ranks in the top half of the rankings, and for the case of weighted upsets after the matrix completion step, it is the best performer for the Business and History fields, and second best in Computer Science. 

%


\begin{table*}\sffamily 
\hspace{-9mm} 
\begin{tabular}{l*4{C}@{}}
Field & Number of Upsets & Number of Upsets (Matrix Completion) & Weighted Upsets & Weighted Upsets (Matrix Completion)  \\
 \hline
\scriptsize{\parbox{1.2cm}{Computer Science}} \hspace{-5mm}
& \includegraphics[width=0.248\columnwidth]{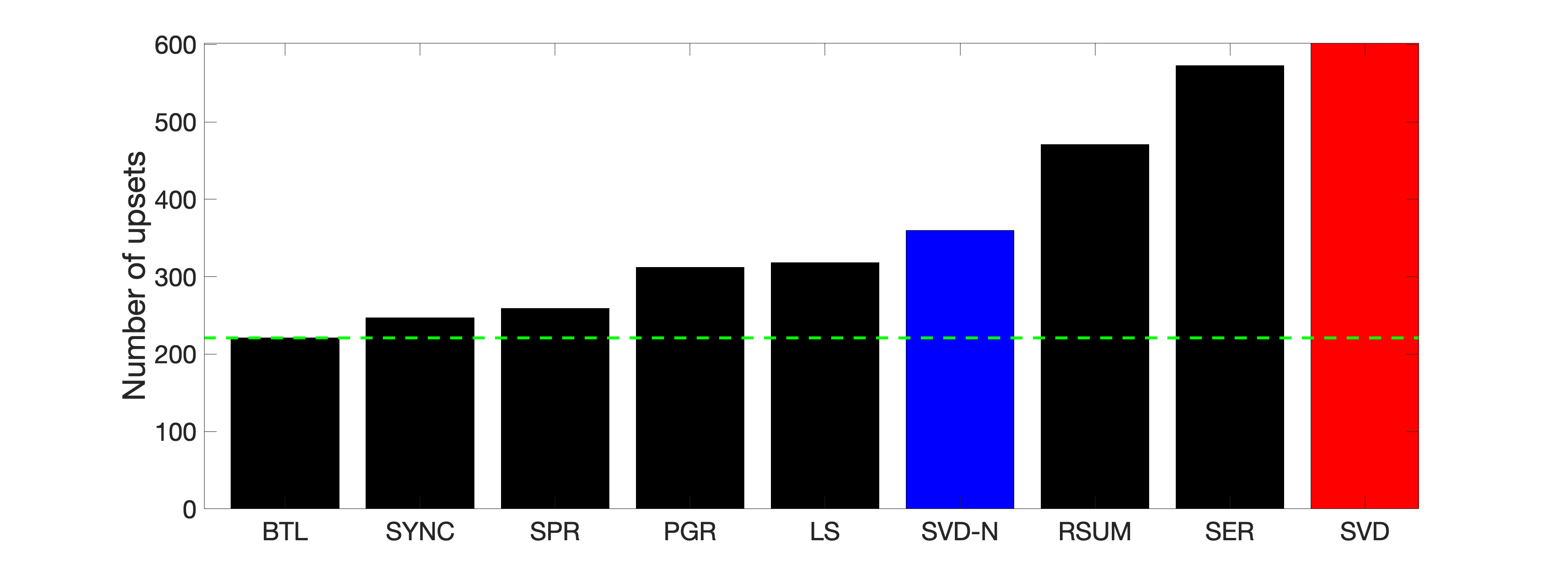}
& \includegraphics[width=0.248\columnwidth]{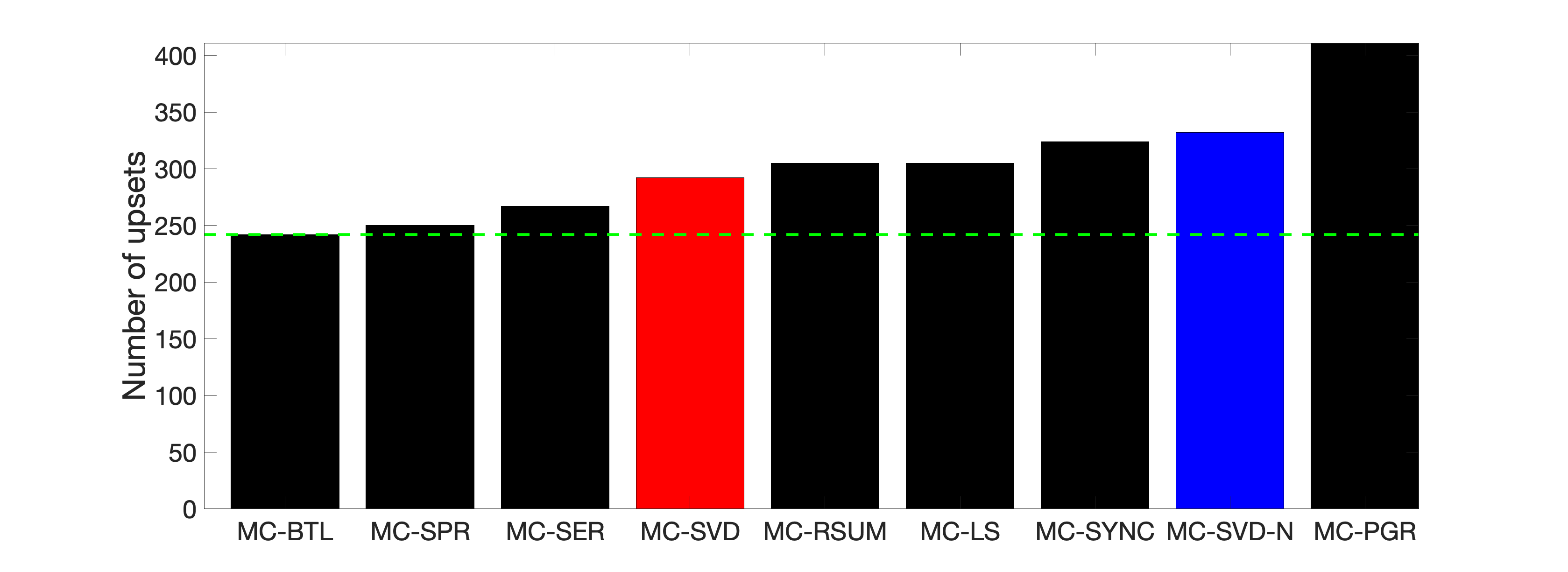}  
& \includegraphics[width=0.248\columnwidth]{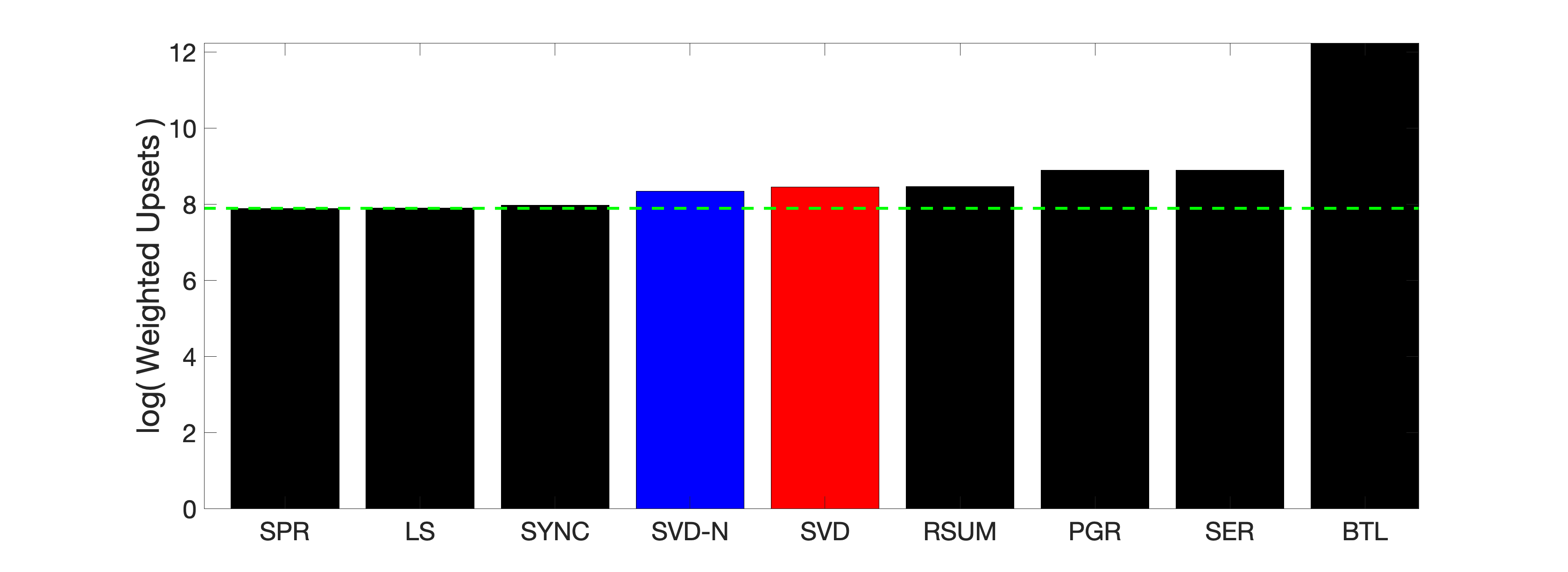}
& \includegraphics[width=0.248\columnwidth]{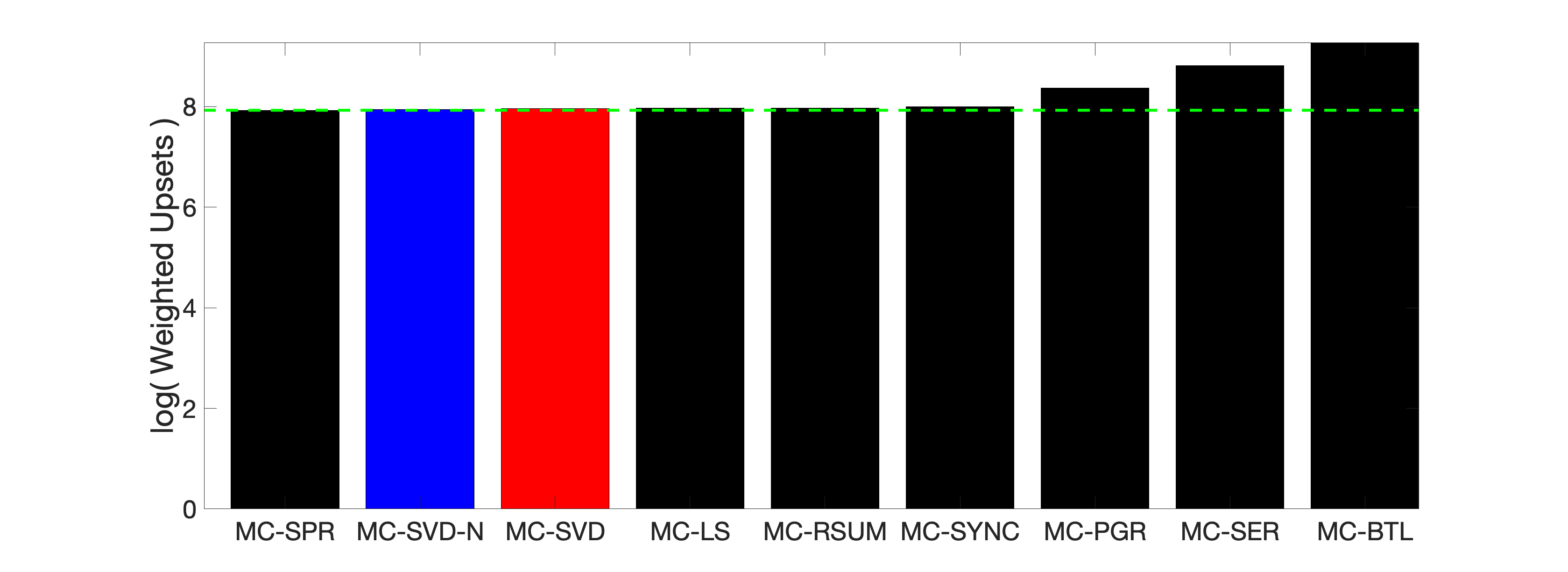}  \\ %
\scriptsize{Business} \hspace{-5mm}
& \includegraphics[width=0.248\columnwidth]{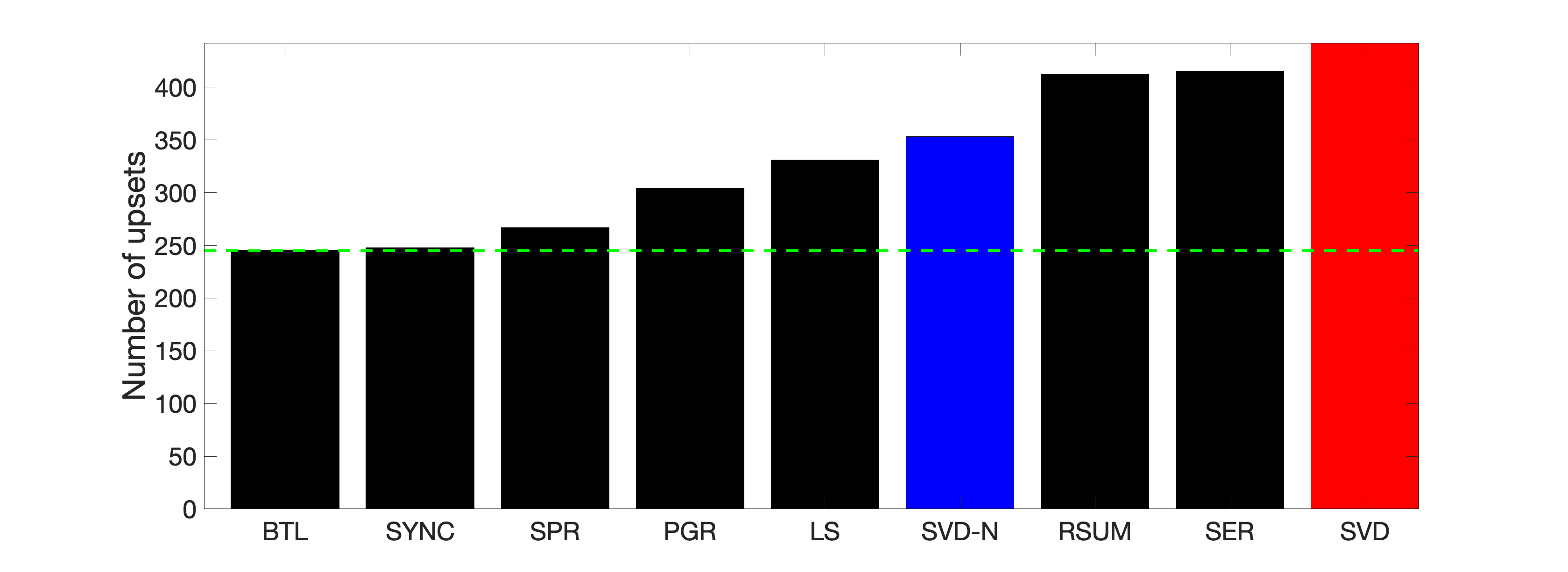}
& \includegraphics[width=0.248\columnwidth]{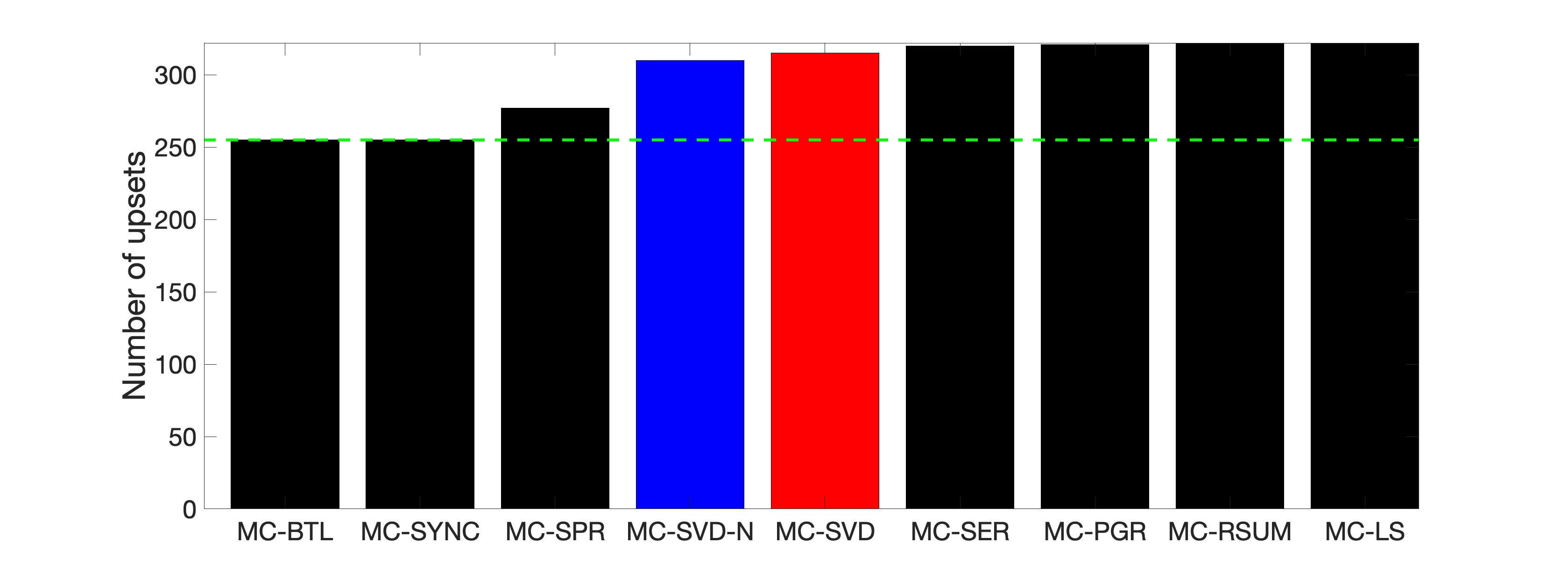}  
& \includegraphics[width=0.248\columnwidth]{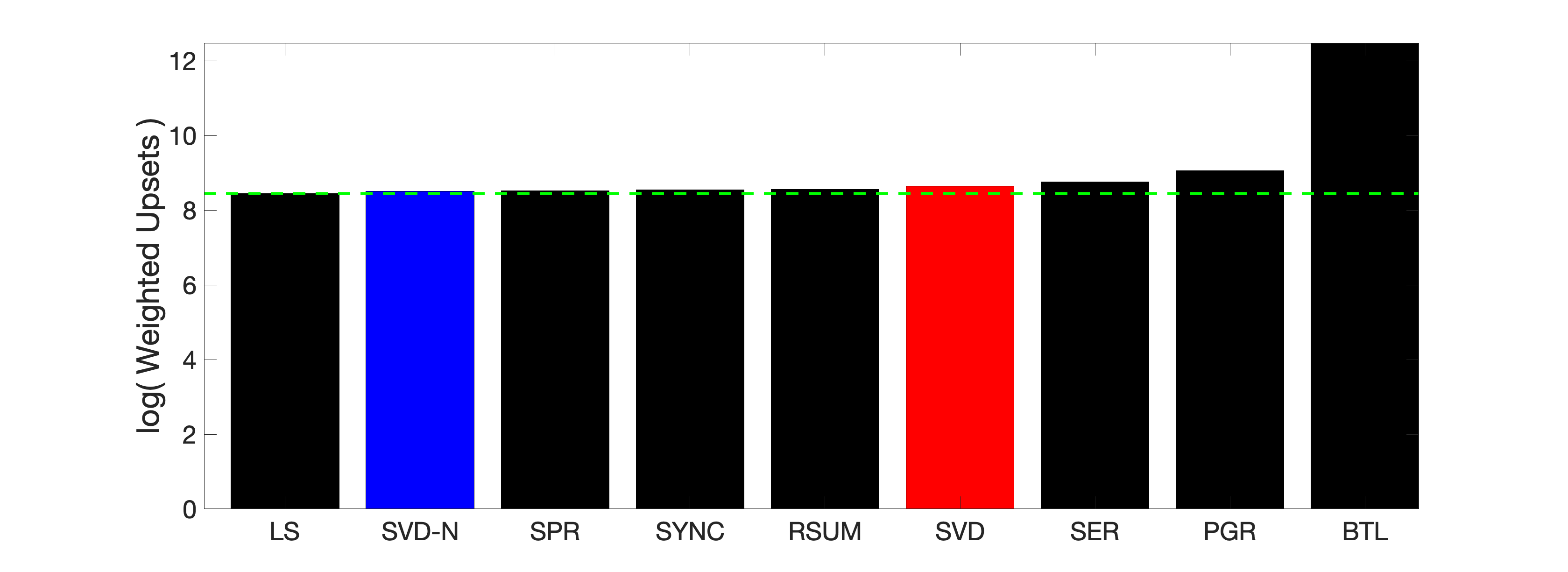}
& \includegraphics[width=0.248\columnwidth]{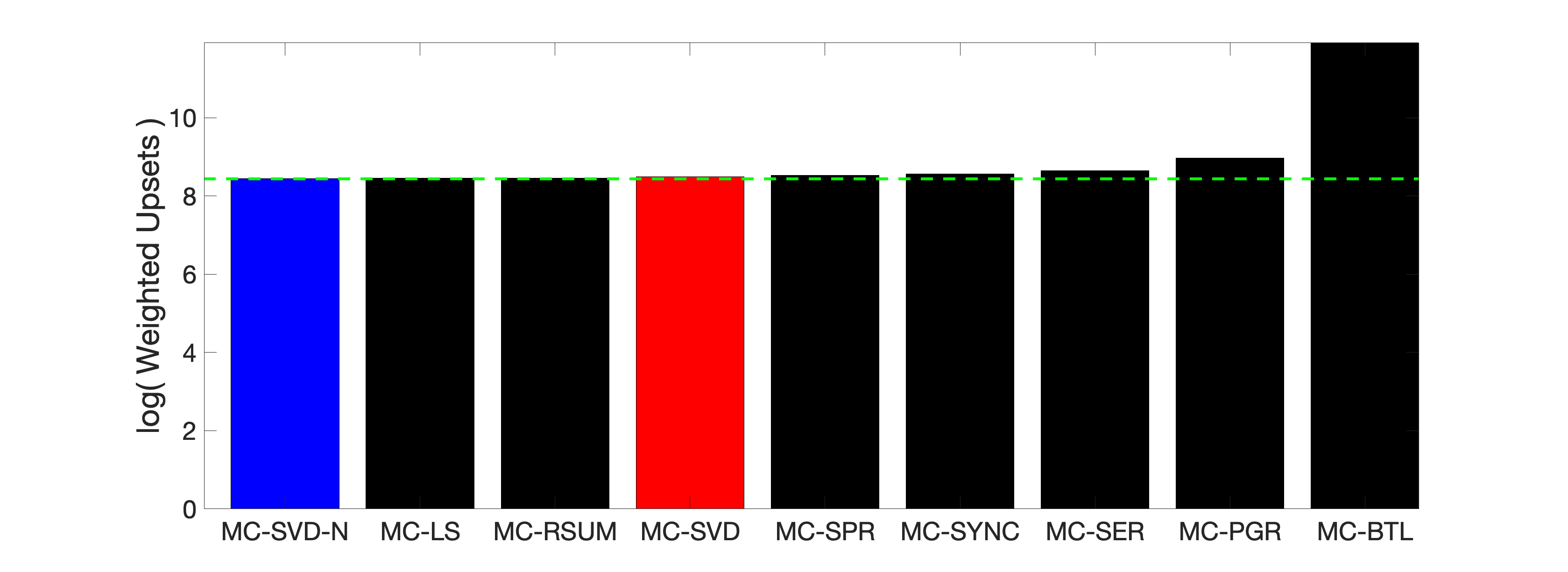}  \\ %
\scriptsize{History} \hspace{-5mm}
& \includegraphics[width=0.248\columnwidth]{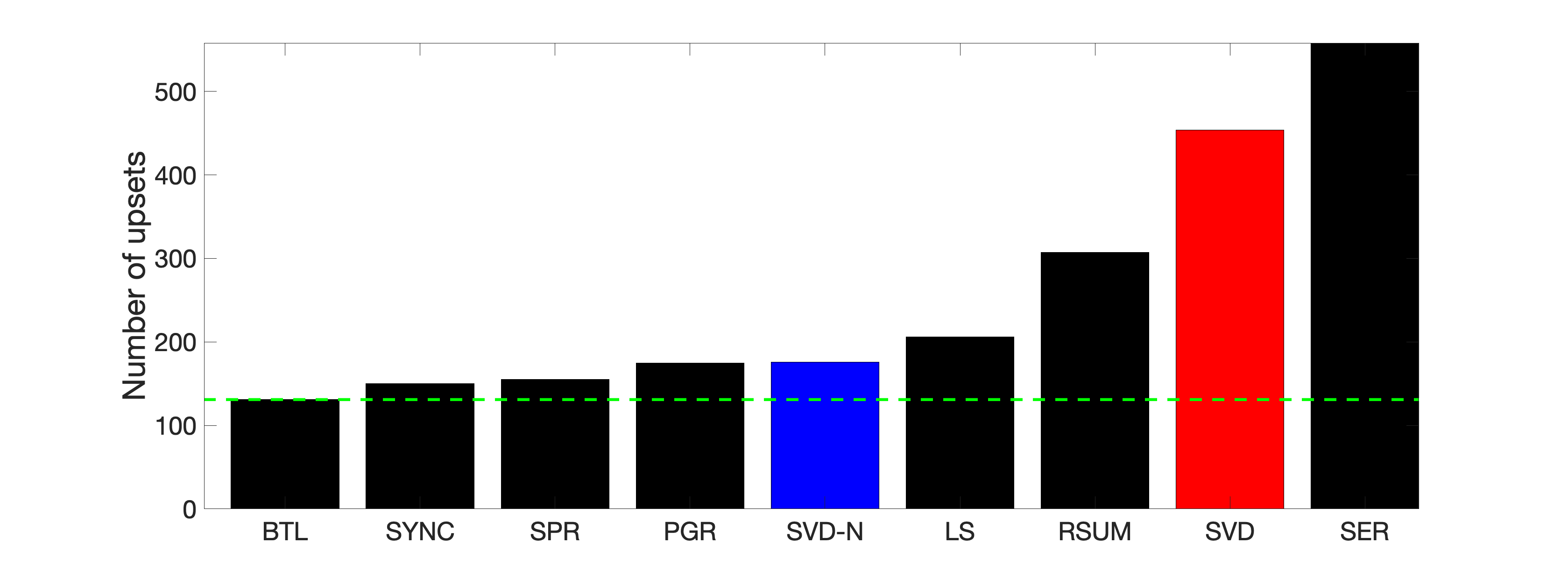}
& \includegraphics[width=0.248\columnwidth]{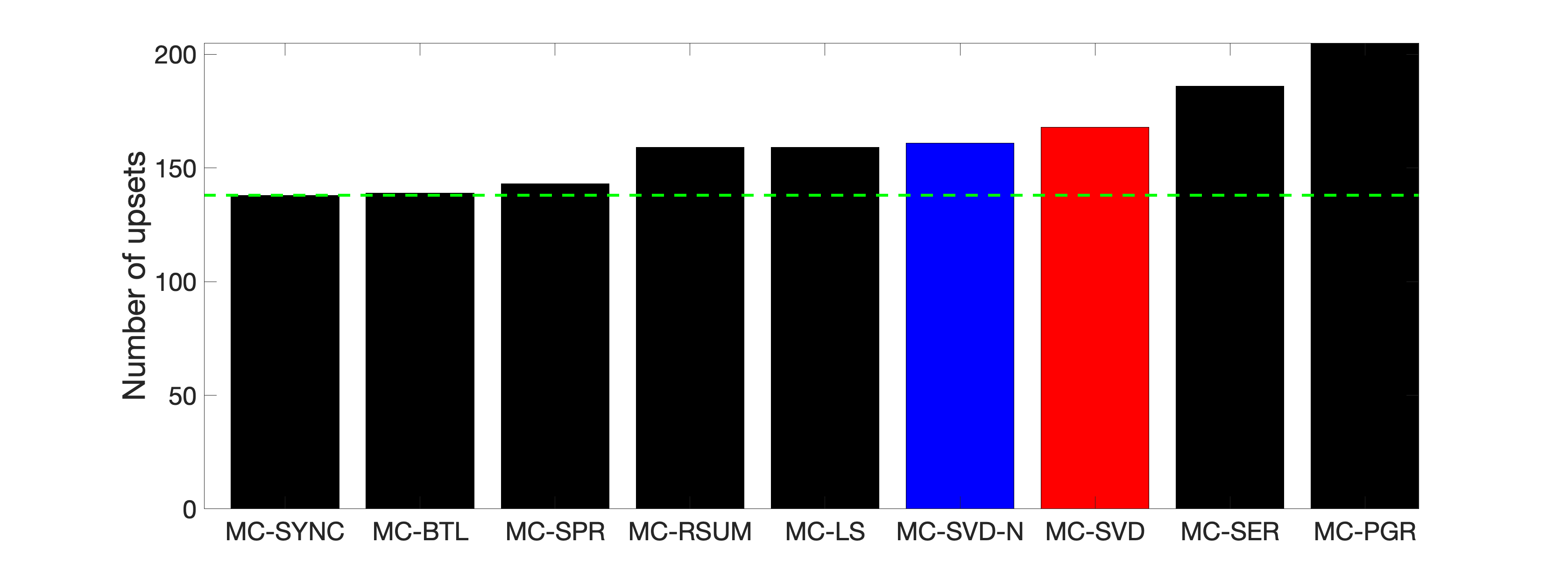}  
& \includegraphics[width=0.248\columnwidth]{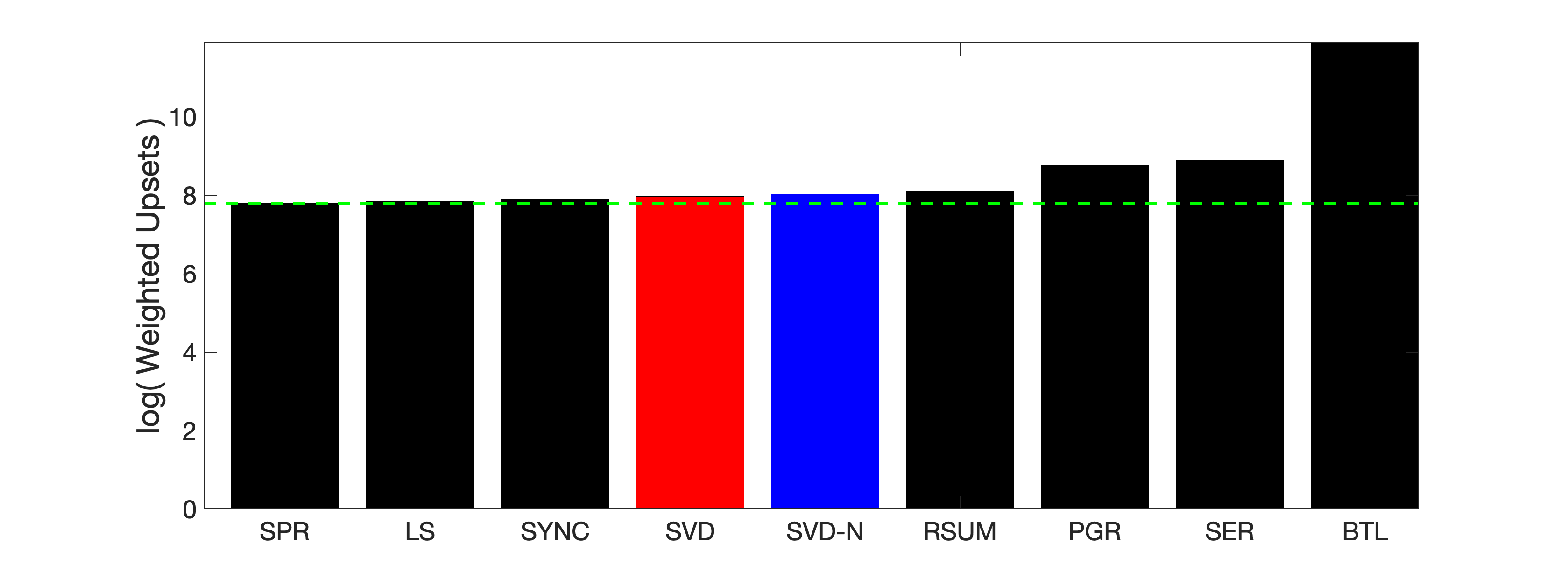}
& \includegraphics[width=0.248\columnwidth]{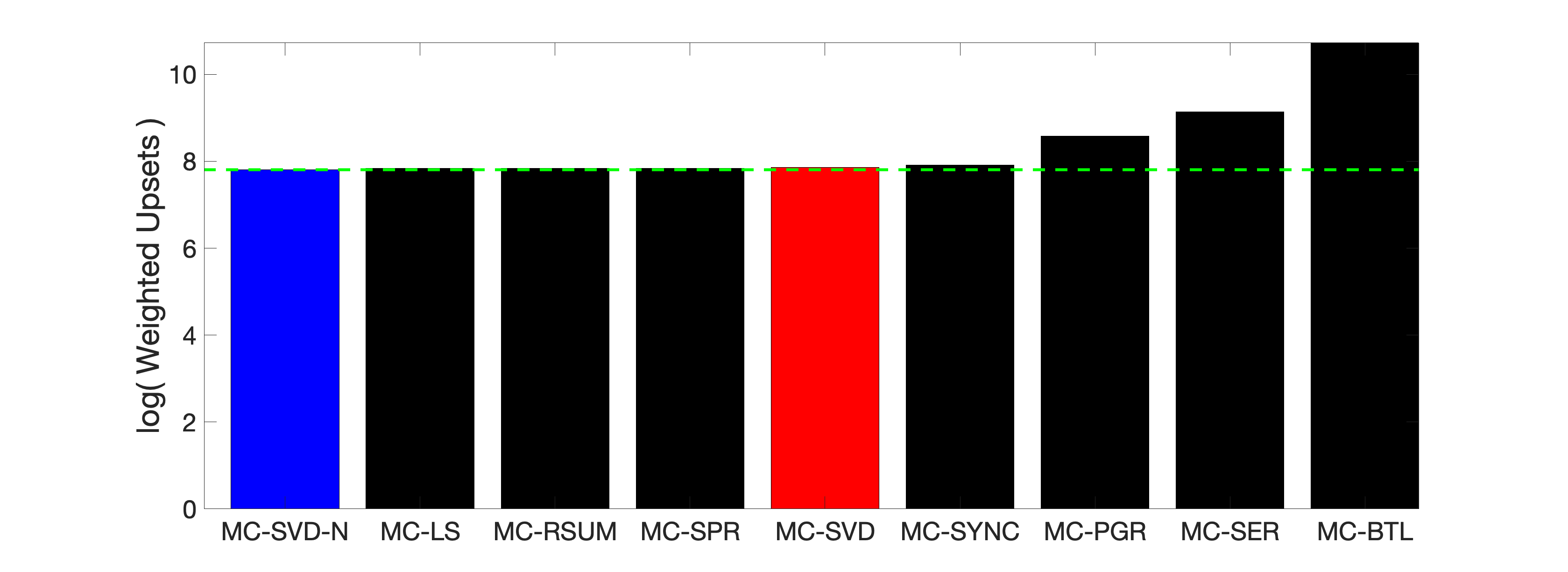}  \\ %
%
\vspace{-2mm}
\end{tabular}
\captionsetup{width=0.99\linewidth}
\vspace{-2mm}
\captionof{figure}{Performance comparison in the faculty hiring networks, for three networks corresponding to different fields. We compare both in terms of the number of upsets (first two columns) and weighted upsets (last two columns). The results without the matrix completion step pertain to the first and third columns, while columns two and four show results obtained after a low-rank matrix completion preprocessing step.
}
\label{fig:Hiring_ALL_FM}
\end{table*}

\vspace{-1mm}
\paragraph{Microsoft Halo 2 Tournament.} 
Our third experiment was performed on a real data set of game outcomes collected during the Beta testing period for the Xbox game\footnote{Credits for using the Halo 2 Beta data set are given to Microsoft Research Ltd. and Bungie.} Halo 2. 
The graph has a total of $n=606$ players, and $6227$ head-to-head games. After removing the low-degree nodes, i.e., players who have played less than 3 games, we arrive at a  graph with $n = 535$ nodes and 6109 edges, with an average degree of roughly $23$. The skew-symmetric pairwise comparison matrix $H$ captures the net number of wins of player $i$ over player $j$. In this example illustrated in Figure \ref{fig:MSFT_Halo},  \textsc{SVD-N} outperforms  \textsc{SVD} in three out of four instances, and the two methods are always ranked in the interval 5-7 out of the 9 algorithms considered. 

\begin{table*}\sffamily 
\hspace{-9mm} 
\begin{tabular}{l*4{C}@{}}
  & Number of Upsets & Number of Upsets (Matrix Completion) & Weighted Upsets & Weighted Upsets (Matrix Completion)  \\
\hline
& \includegraphics[width=0.248\columnwidth]{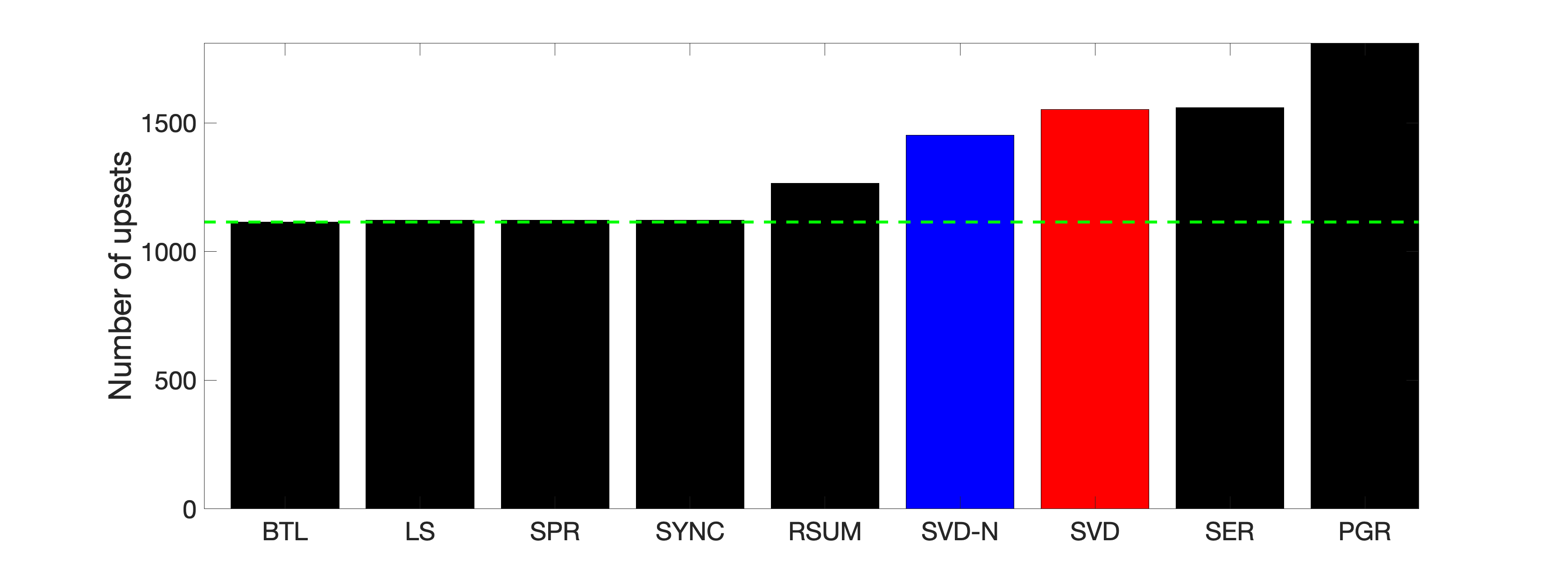}
& \includegraphics[width=0.248\columnwidth]{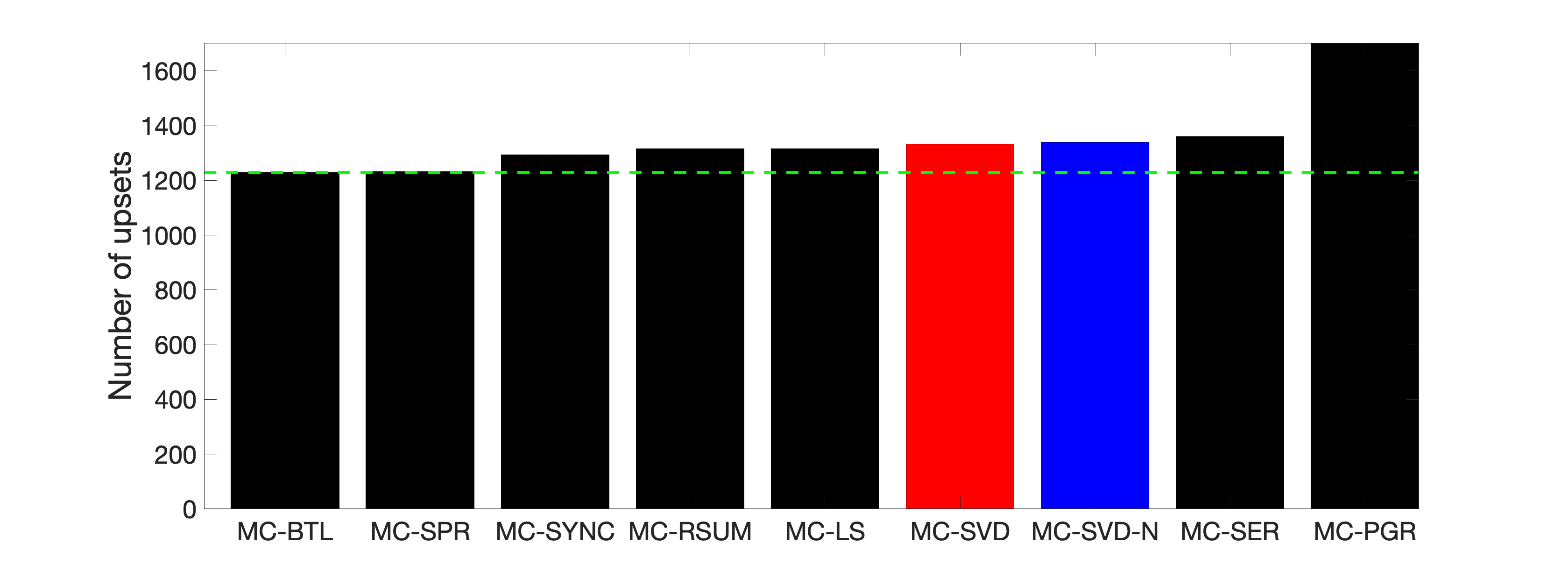}
& \includegraphics[width=0.248\columnwidth]{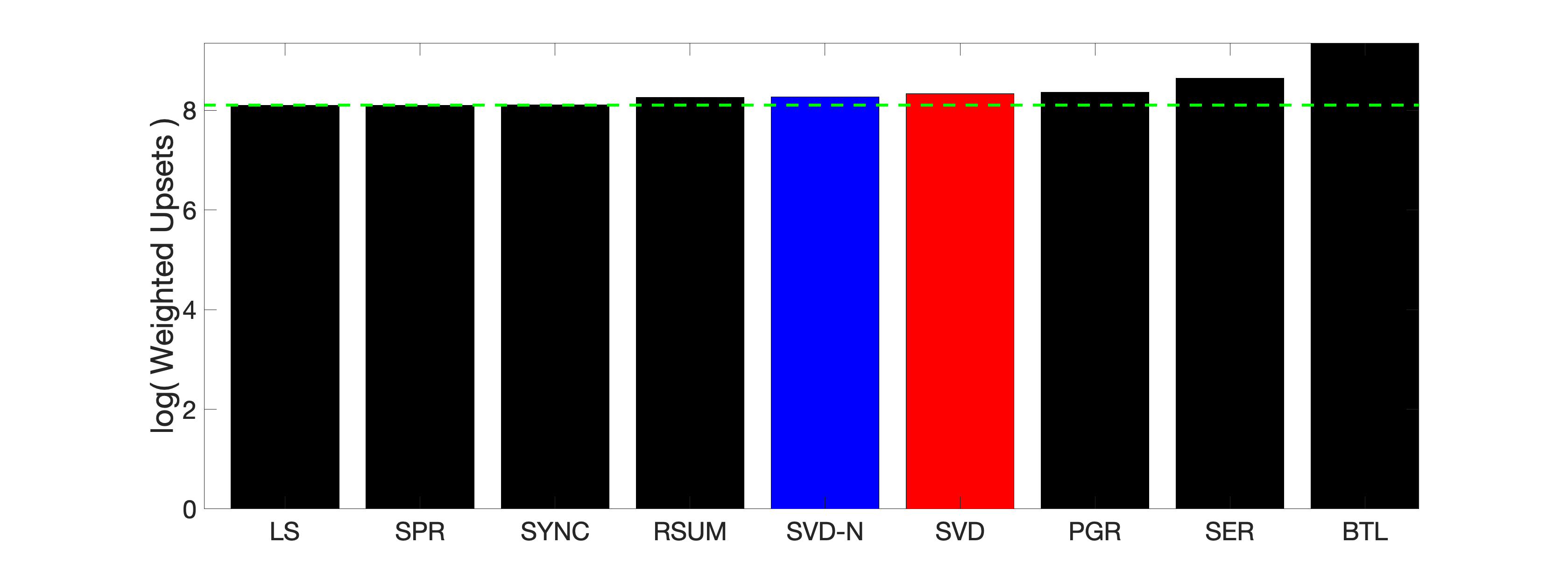}  
& \includegraphics[width=0.248\columnwidth]{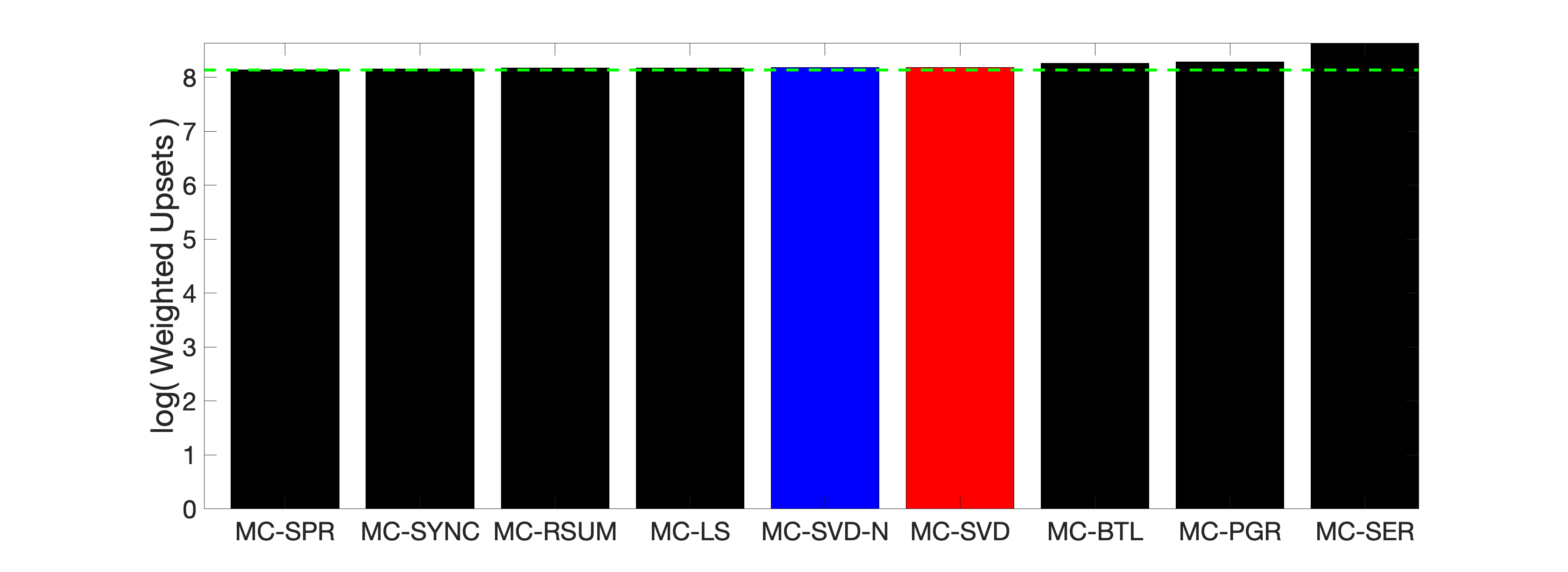}  \\ %
%
\end{tabular}
\captionsetup{width=0.99\linewidth}
\vspace{-3mm}
\captionof{figure}{Recovery results for the Microsoft Halo Tournament data set, both in terms of the number of upsets (first two columns) and weighted upsets (last two columns). Columns one and three show results without the matrix completion step, while columns two and four show results obtained after a low-rank matrix completion preprocessing step.
}
\label{fig:MSFT_Halo}
\end{table*}

\paragraph{Premier League.}
Our final real example is the Premier League data set, for which we consider the four seasons during 2009-2013, shown in Figure \ref{fig:PremierLeague}. The skew-symmetric pairwise comparison matrix $H$ captures the net goal difference accrued in the two matches each pair of teams played against each other (home and away games). Across all the experimental setups considered and the different performance metrics, \textsc{SVD-N} outperforms  \textsc{SVD} in 12 out of 16 rankings shown in Figure \ref{fig:PremierLeague}, and the two methods typically score in the top half best performing methods. Furthermore,  \textsc{SVD-N} comes first in three instances.

\begin{table*}\sffamily 
\hspace{-9mm} 
\begin{tabular}{l*4{C}@{}}
Season & Number of Upsets & Number of Upsets (Matrix Completion) & Weighted Upsets & Weighted Upsets (Matrix Completion)  \\
 \hline
\scriptsize{2009-2010} \hspace{-5mm}
& \includegraphics[width=0.248\columnwidth]{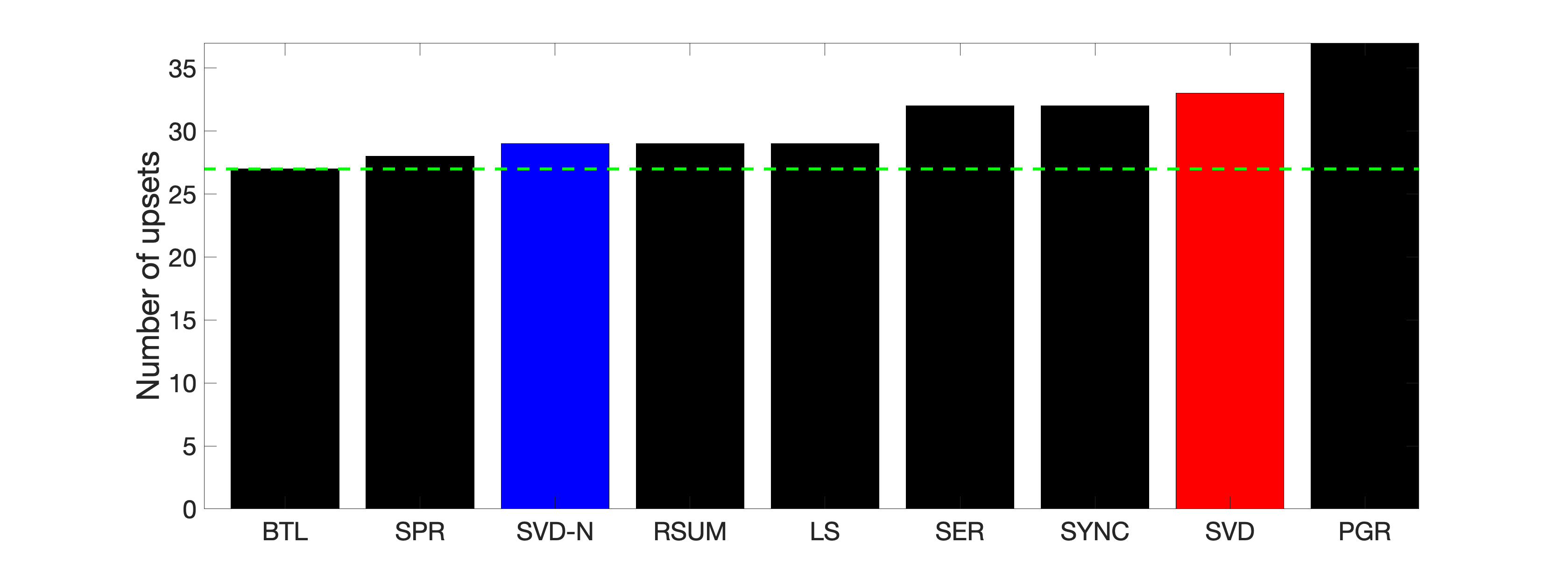}
& \includegraphics[width=0.248\columnwidth]{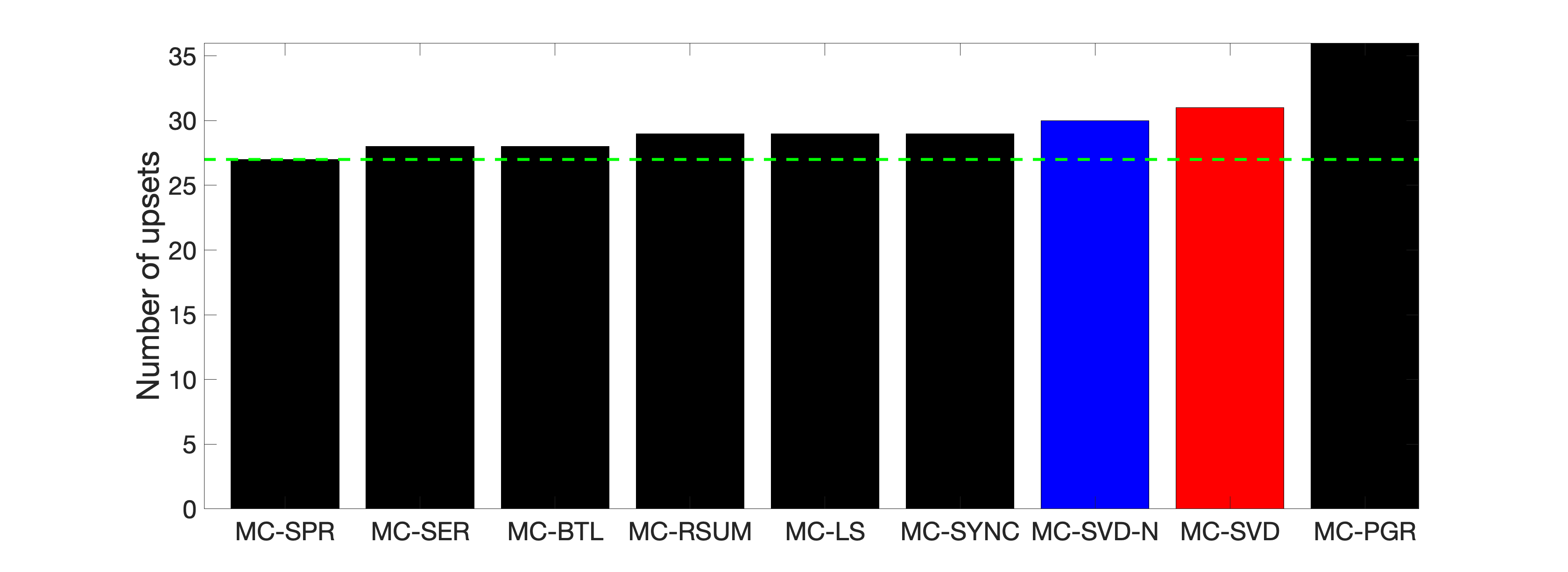}  
& \includegraphics[width=0.248\columnwidth]{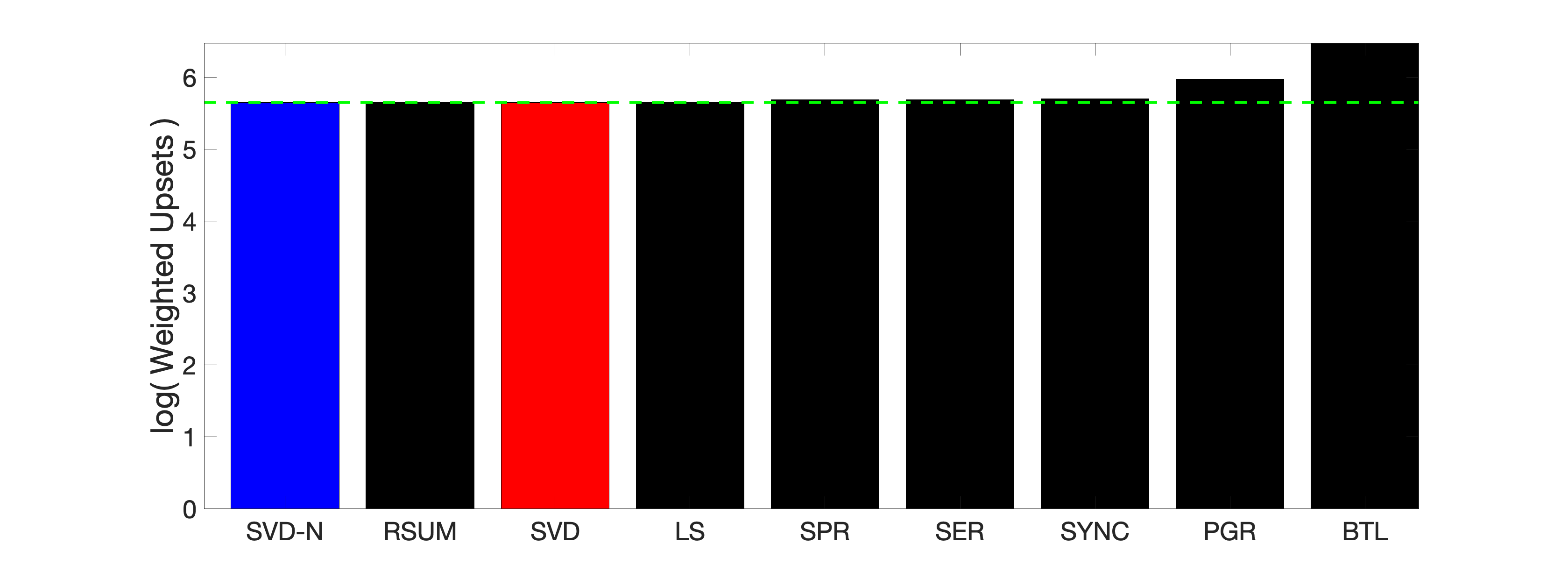}
& \includegraphics[width=0.248\columnwidth]{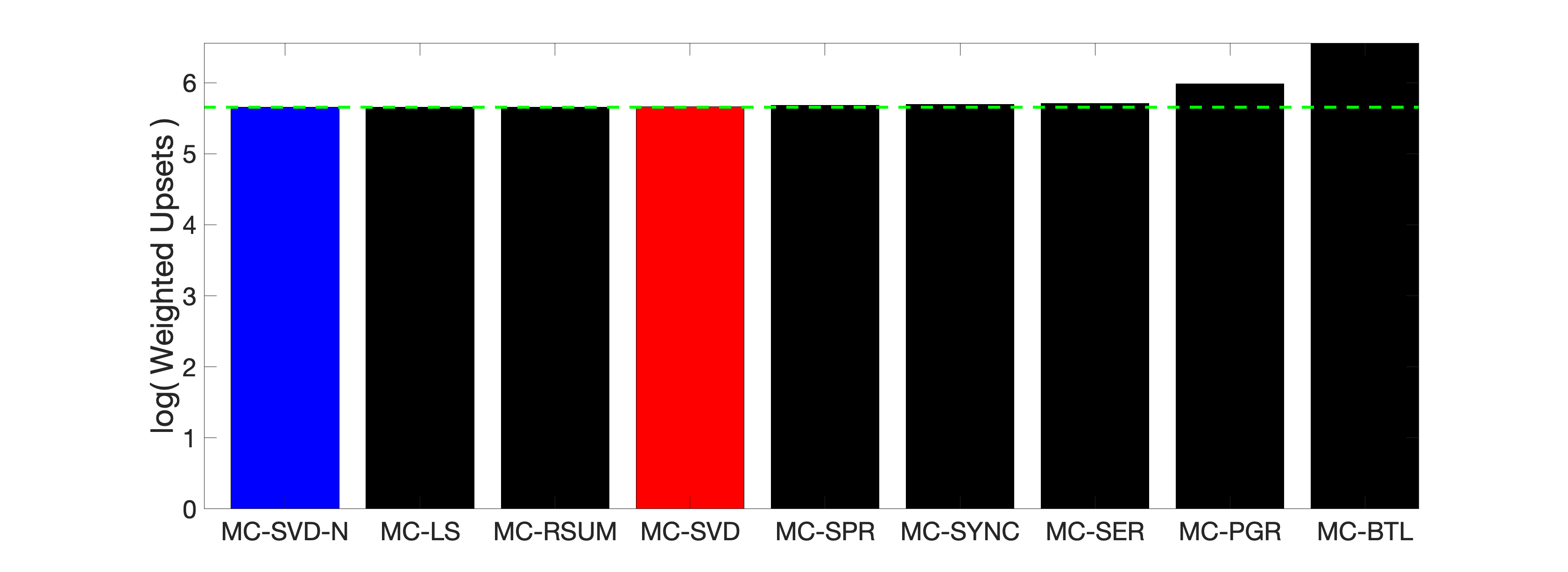}  \\ 
\scriptsize{2010-2011} \hspace{-5mm}
& \includegraphics[width=0.248\columnwidth]{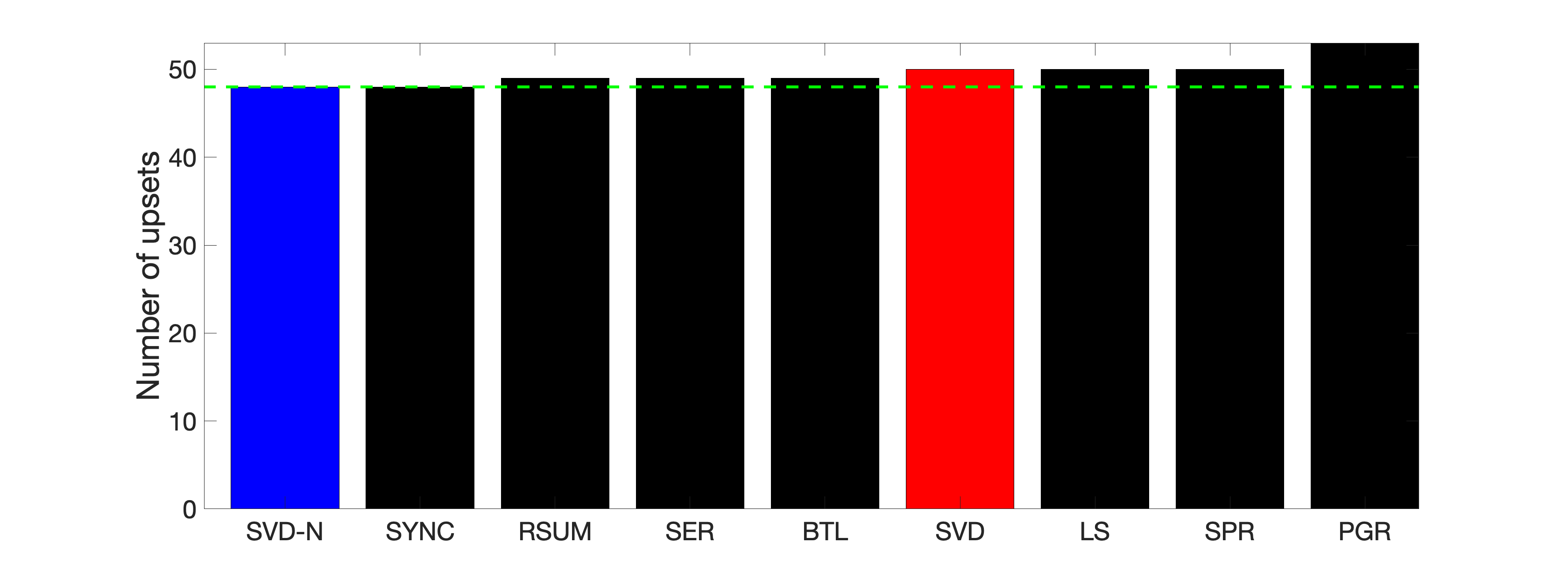}
& \includegraphics[width=0.248\columnwidth]{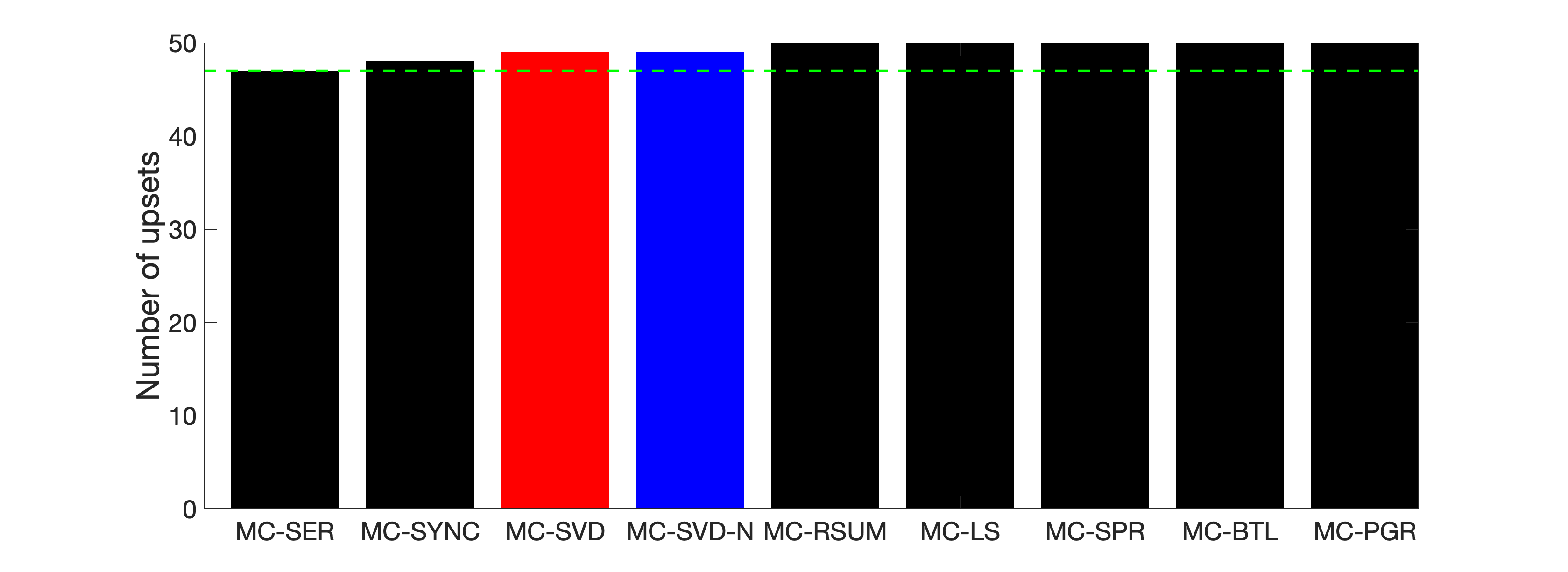}  
& \includegraphics[width=0.248\columnwidth]{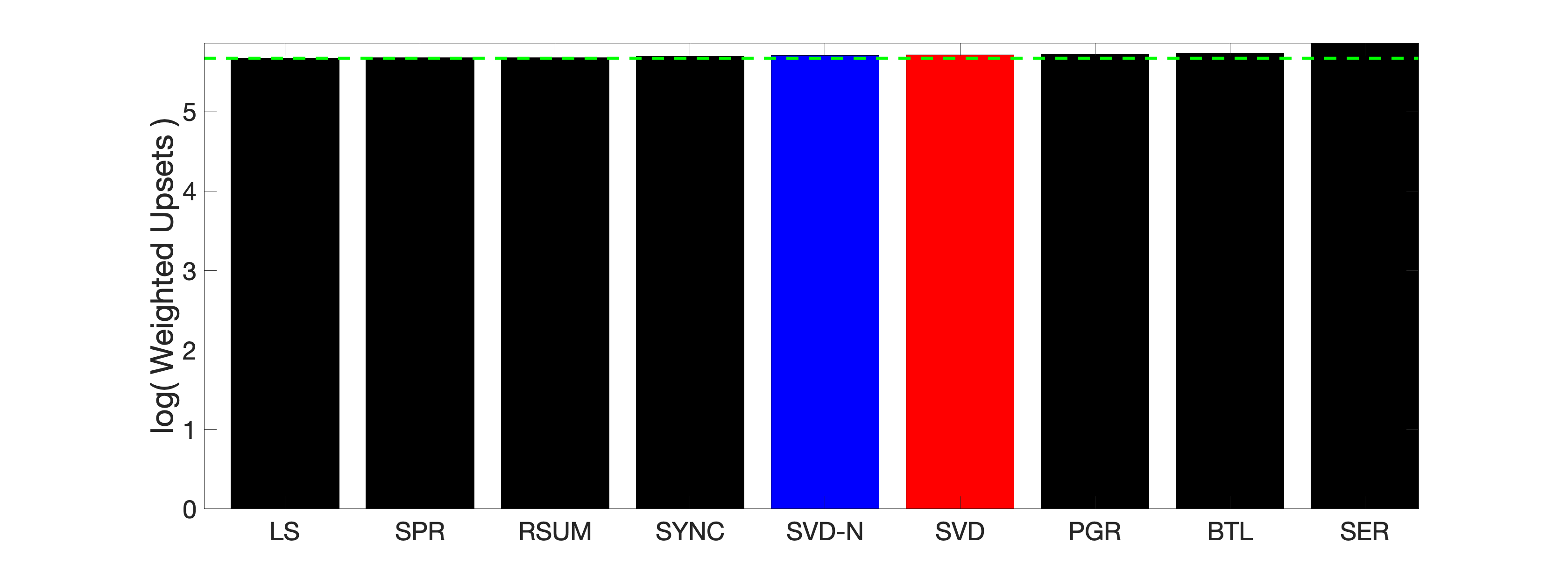}
& \includegraphics[width=0.248\columnwidth]{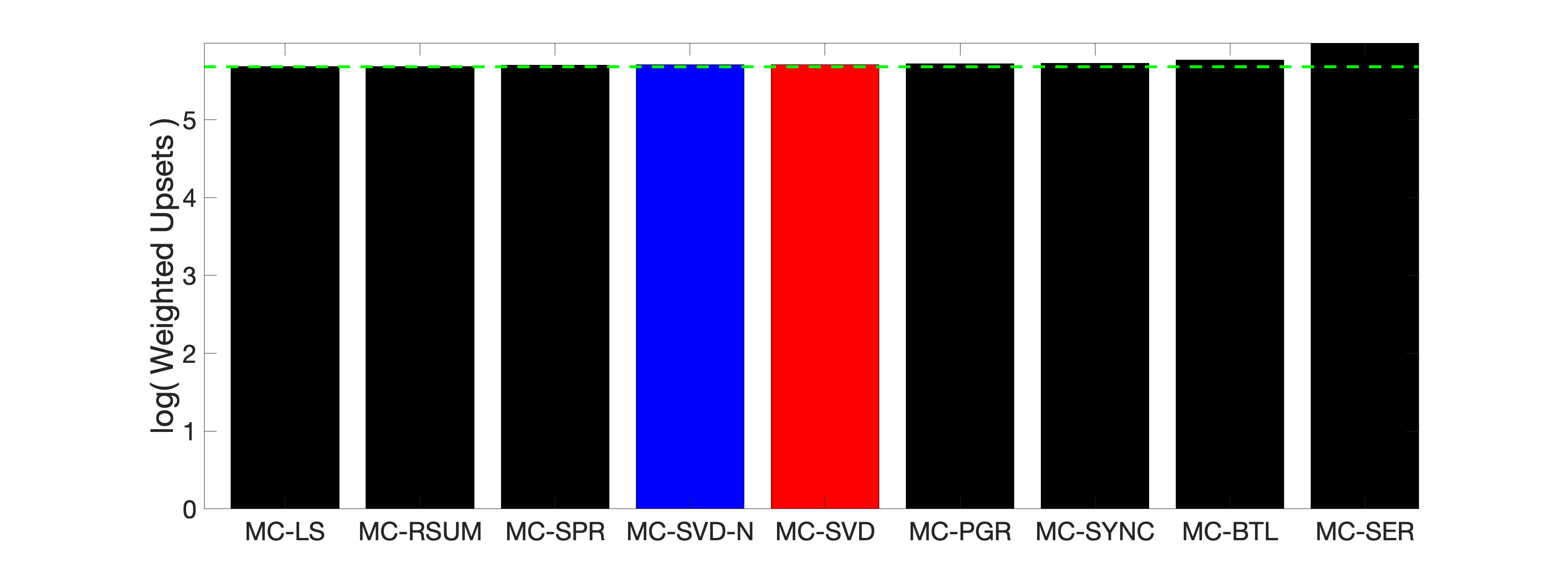}  \\ 
\scriptsize{2011-2012} \hspace{-5mm}
& \includegraphics[width=0.248\columnwidth]{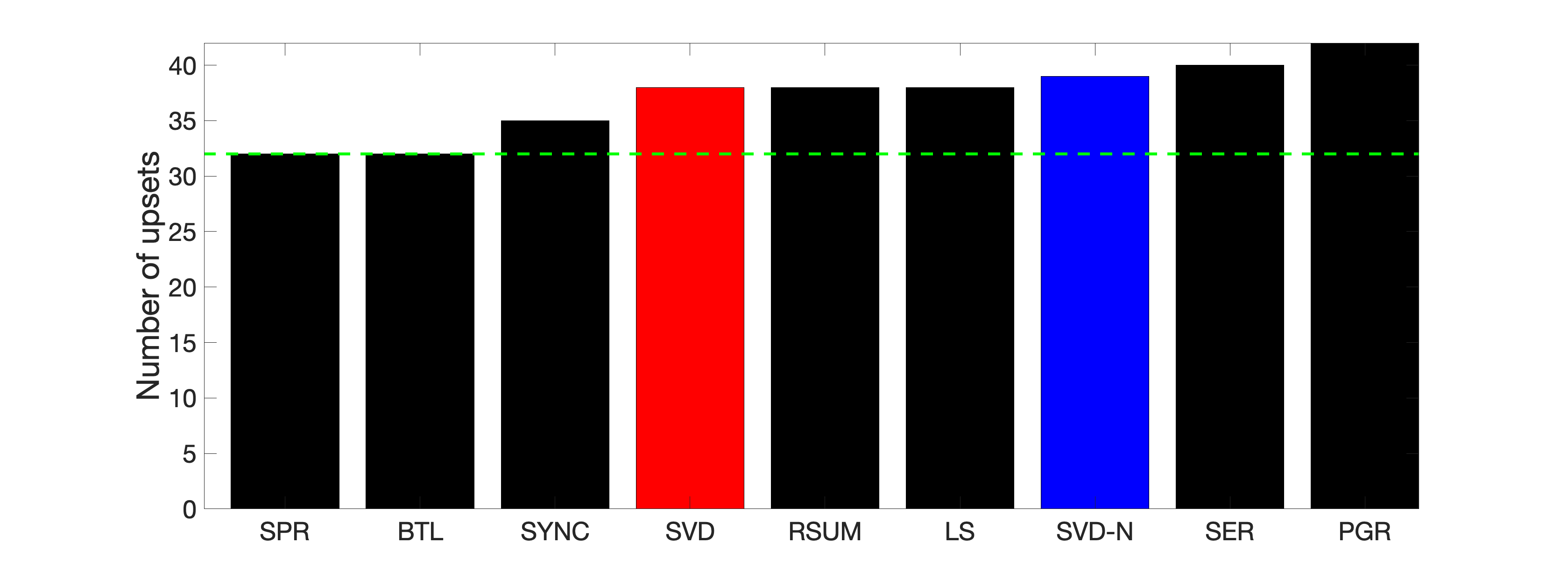}
& \includegraphics[width=0.248\columnwidth]{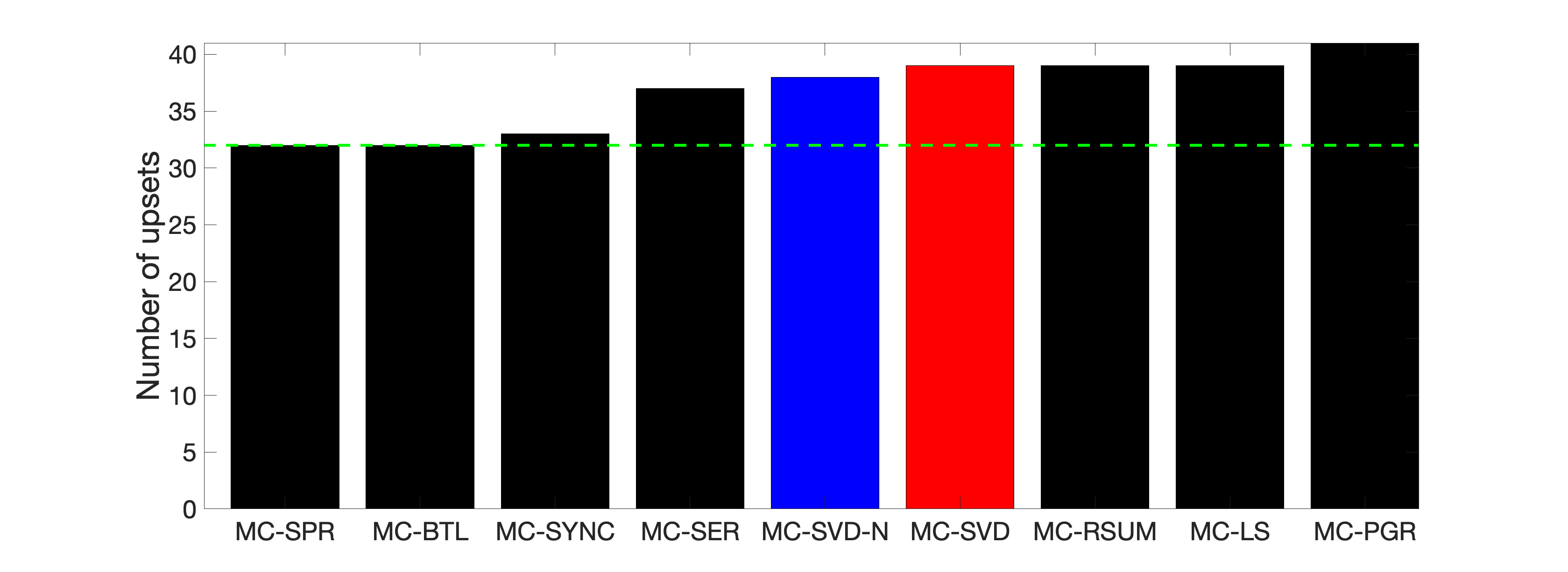}  
& \includegraphics[width=0.248\columnwidth]{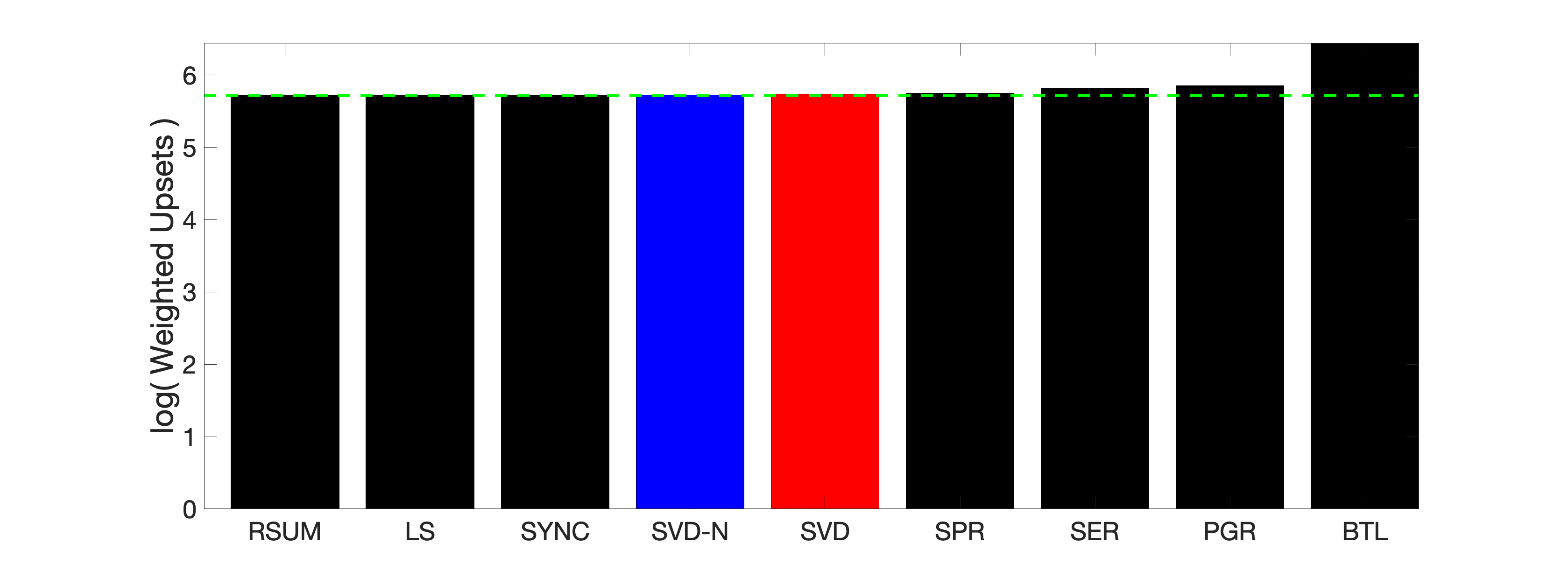}
& \includegraphics[width=0.248\columnwidth]{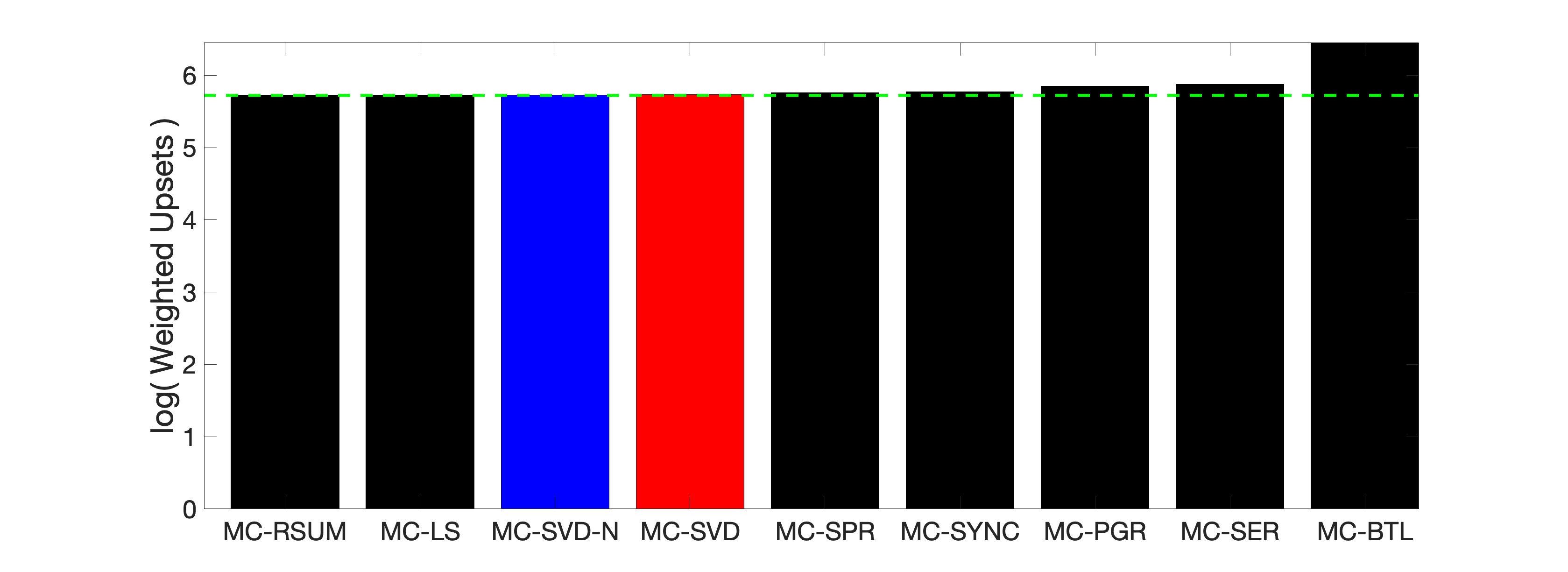}  \\ 
\scriptsize{2012-2013} \hspace{-5mm}
& \includegraphics[width=0.248\columnwidth]{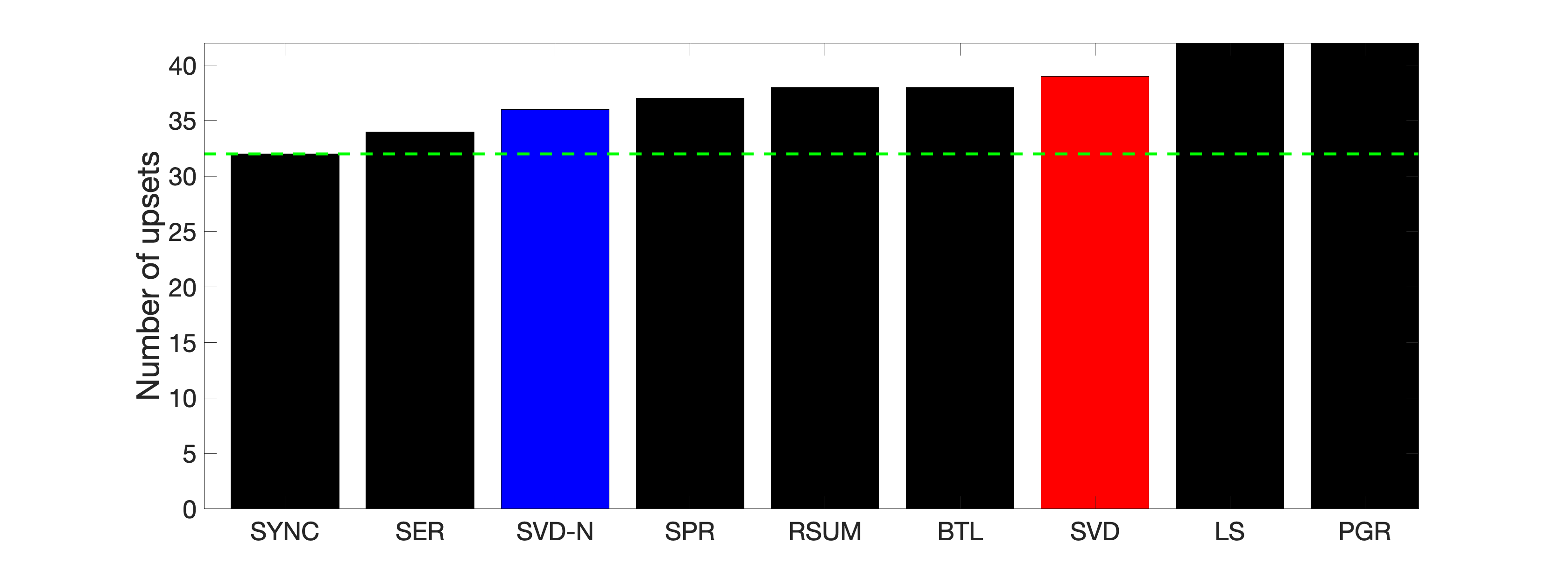}
& \includegraphics[width=0.248\columnwidth]{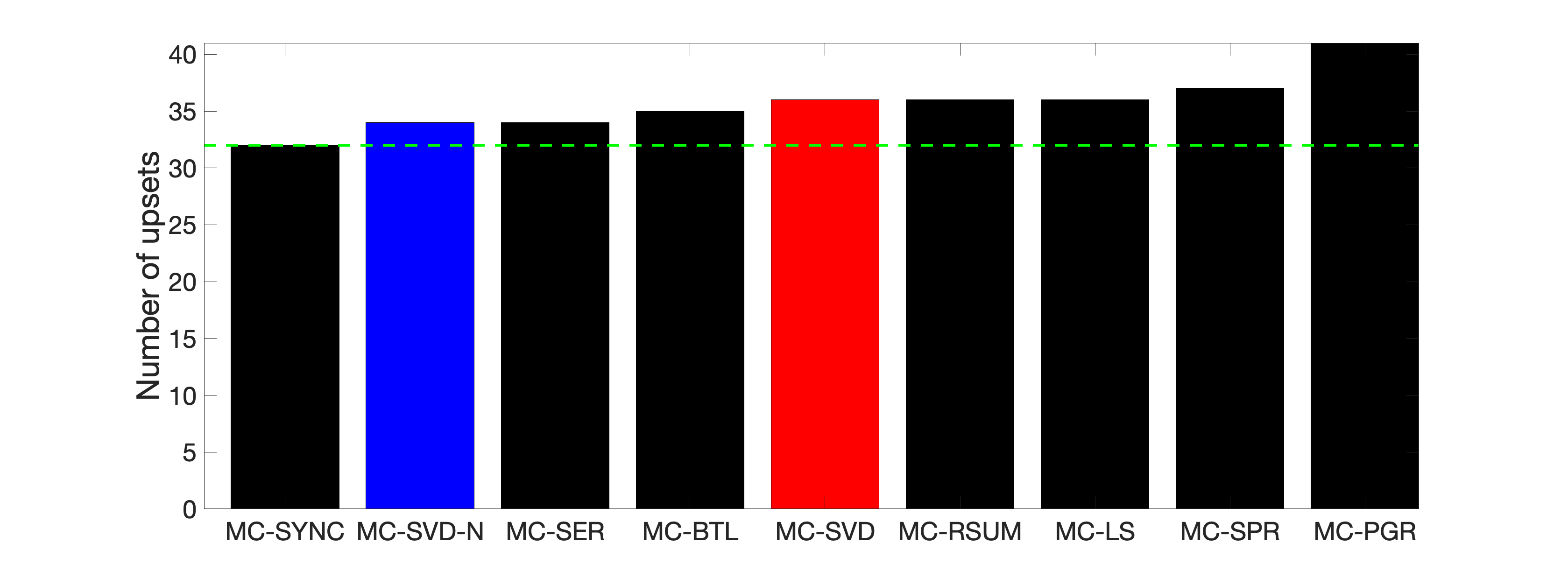}  
& \includegraphics[width=0.248\columnwidth]{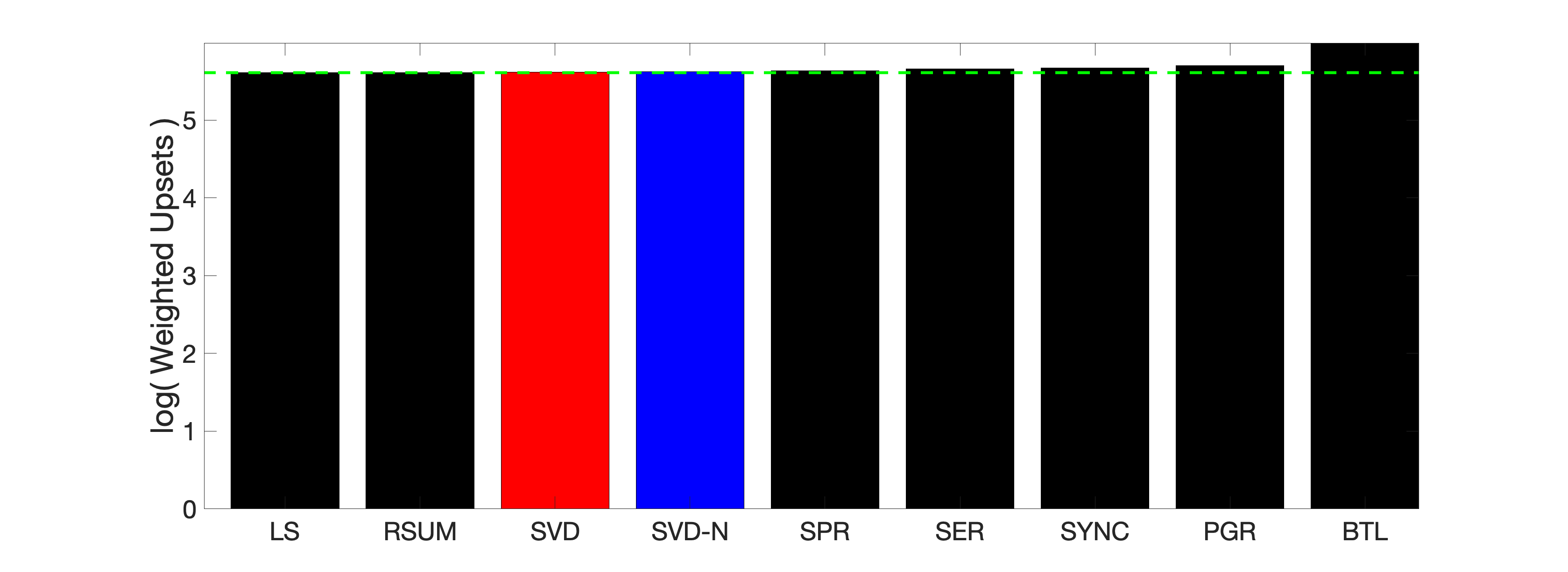}
& \includegraphics[width=0.248\columnwidth]{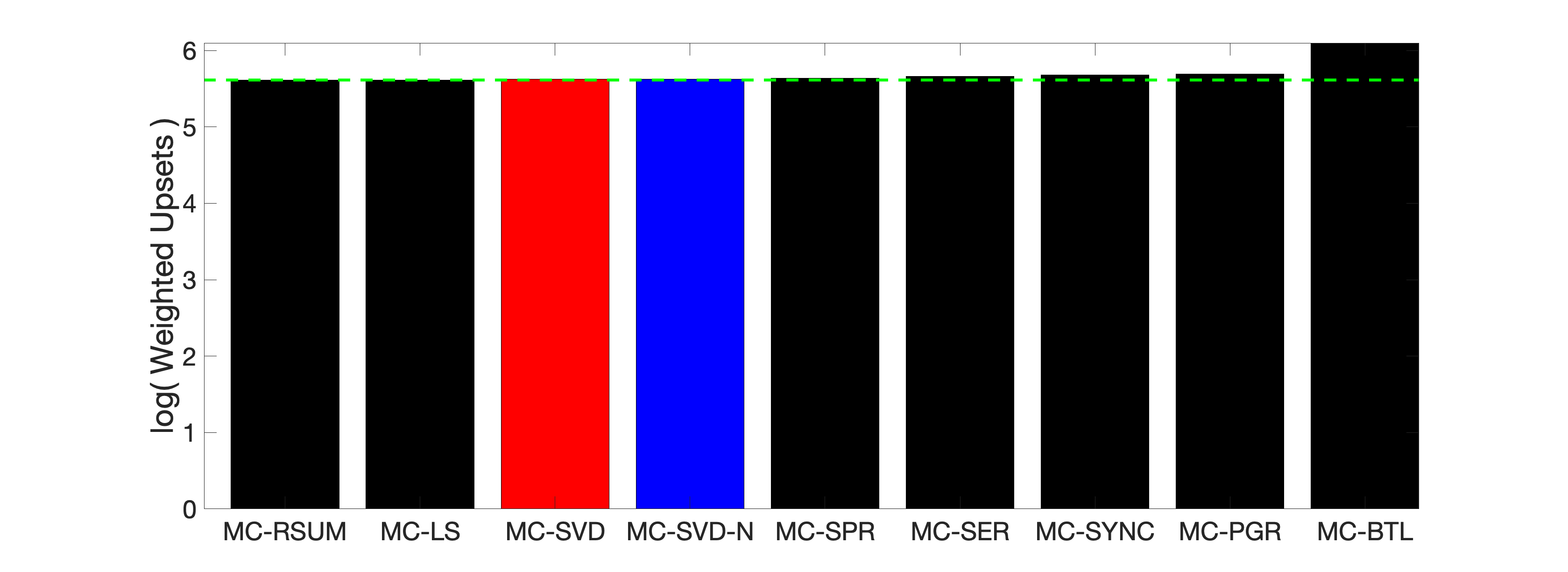}  \\ 
\end{tabular}
\captionsetup{width=0.99\linewidth}
\vspace{-4mm}
\captionof{figure}{Performance comparison for the Premier League data set, for four seasons during 2009-2013. We compare both in terms of the number of upsets (first two columns) and weighted upsets (last two columns). Recovery performance after the matrix completion step is shown in columns two and four, while columns one and three do not rely on matrix completion.
}
\label{fig:PremierLeague}
\vspace{-3mm}
\end{table*}


\FloatBarrier

\section{Conclusion and future directions}
\label{sec:conclusion}
This paper considered the problems of ranking and time synchronization given a subset of noisy pairwise comparisons, and proposed an  \textsc{SVD}-based algorithmic pipeline to solve both tasks.
 
We analyzed the robustness of \textsc{SVD-RS} in the form of $\ell_2$  and $\ell_{\infty}$ recovery guarantees for the score vector $r$, against sampling sparsity and noise perturbation, using tools from matrix perturbation and random matrix theory. The $\ell_{\infty}$ analysis of \textsc{SVD-RS} leads to guarantees for rank recovery in terms of the maximum displacement error with respect to the  ground truth.
We also introduced \textsc{SVD-NRS}, a normalized version of \textsc{SVD-RS}, and provided $\ell_2$ recovery guarantees for the score vector $r$.

We have augmented our theoretical analysis with a comprehensive set of numerical experiments on both synthetic and real data (five different real data sets, which altogether contained 42 distinct problem instances/comparison graphs), showcasing the competitiveness of our approach when compared to other seven algorithms from the literature. In particular, \textsc{SVD-NRS} was shown to perform particularly well on many of the real data sets, often ranking in the top three algorithms in terms of performance, out of the nine algorithms considered. 
%

There are several avenues for future work. 
%
%
An interesting direction pertains to extending our 
analysis to the setting of very sparse graphs, with $p$ on the order of $\frac{1}{n}$, by leveraging recent regularization techniques \cite{joseph2013impactBinYu, le2015sparse_Vershynin}.
%
Analysis of other, perhaps more realistic, noise models would also be of interest (such as the the multiplicative noise models), as well as obtaining theoretical guarantees for the pipeline with low-rank matrix completion as a pre-processing step.

\section*{Acknowledgements}
A.A. is at the d\'epartement d’informatique de l’ENS, Ecole normale sup\'erieure, UMR CNRS 8548, PSL Research University, 75005 Paris, France, and INRIA, and would like to acknowledge support from the ML and Optimisation joint research initiative with the fonds AXA pour la recherche and Kamet Ventures as well as a Google focused award. A part of this work was done while H.T. was affiliated to The Alan Turing Institute, London, and the School of Mathematics, University of Edinburgh. M.C. and H.T. also acknowledge support from the  EPSRC grant EP/N510129/1 at The Alan Turing Institute.

\bibliographystyle{plain}
\bibliography{A_references}


\appendix
\section{Matrix perturbation analysis} 
Let $A, \tilde A \in \mathbb{C}^{m \times n}$ where we assume $m \geq n$ w.l.o.g. 
Let 
$$\sigma_1(A) \geq \dots \geq \sigma_n(A), \quad \sigma_1(\tilde A) \geq \dots \geq \sigma_n(\tilde A)$$ 
denote the singular values of $A$ and $\tilde A$ respectively, and denote $E = \tilde A - A$.

To begin with, we would like to quantify the perturbation of the singular values of $\tilde{A}$ 
with respect to those of $A$. Weyl's inequality \cite{Weyl1912} is a very useful result in this regard.
\begin{theorem}[Weyl's inequality \cite{Weyl1912}] \label{thm:weyl_sing_values}
It holds that 
$$\abs{\sigma_i(\tilde A) - \sigma_i(A)} \leq \norm{E}_2, \quad i=1,\dots,n.$$
\end{theorem}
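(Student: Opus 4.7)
The plan is to prove this via the Courant--Fischer min-max characterization of singular values, which reduces the perturbation bound to a simple triangle inequality applied uniformly on unit vectors. Recall that for $A \in \mathbb{C}^{m \times n}$ with $m \geq n$, the $i$-th singular value admits the variational representation
\begin{equation*}
\sigma_i(A) \;=\; \max_{\substack{S \subseteq \mathbb{C}^n \\ \dim S = i}} \; \min_{\substack{x \in S \\ \|x\|_2 = 1}} \|Ax\|_2,
\end{equation*}
since $\sigma_i(A)^2 = \lambda_i(A^*A)$ and the min-max theorem applies to the Hermitian PSD matrix $A^*A$.

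First I would establish a pointwise Lipschitz-type estimate: for every unit vector $x \in \mathbb{C}^n$, the triangle inequality gives
\begin{equation*}
\bigl| \|\tilde A x\|_2 - \|A x\|_2 \bigr| \;\leq\; \|(\tilde A - A) x\|_2 \;=\; \|E x\|_2 \;\leq\; \|E\|_2.
\end{equation*}
This bound is uniform in $x$, which is the key feature that makes the min-max argument go through cleanly.

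Next, for any fixed $i$-dimensional subspace $S \subseteq \mathbb{C}^n$, taking the minimum of the above inequality $\|\tilde A x\|_2 \leq \|A x\|_2 + \|E\|_2$ over unit vectors $x \in S$ yields $\min_{x \in S, \|x\|_2=1} \|\tilde A x\|_2 \leq \min_{x \in S, \|x\|_2=1} \|A x\|_2 + \|E\|_2$. Taking the maximum over all $i$-dimensional subspaces $S$ and invoking the variational formula, I obtain $\sigma_i(\tilde A) \leq \sigma_i(A) + \|E\|_2$. Interchanging the roles of $A$ and $\tilde A$ (noting that the perturbation is still $E$ up to a sign, so $\|E\|_2$ is unchanged) produces the reverse inequality $\sigma_i(A) \leq \sigma_i(\tilde A) + \|E\|_2$, and combining these two gives the claim.

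Honestly, there is no serious obstacle here --- the whole argument is two lines once the min-max characterization is invoked. The only minor care needed is in stating the correct variational formula for singular values of a \emph{rectangular} matrix (equivalently, working with eigenvalues of $A^*A$ versus $AA^*$), and in noting that the Lipschitz bound $\bigl|\|\tilde A x\|_2 - \|A x\|_2\bigr| \leq \|E\|_2$ is uniform in the unit vector $x$, which is what allows both the inner $\min$ and outer $\max$ to be preserved under the perturbation.
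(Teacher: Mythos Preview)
Your proof is correct and is the standard modern argument via the Courant--Fischer min-max principle. Note, however, that the paper does not actually supply a proof of this statement: Weyl's inequality is quoted in the appendix as a classical tool with a citation to \cite{Weyl1912}, and is used as a black box in the analysis (e.g.\ in bounding $\sigma_i(H)$ in terms of $\sigma_i(\eta p C)$). So there is nothing to compare against --- you have simply filled in a proof the authors chose to omit, and your argument is fine as written.
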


Next, let us write the singular value decomposition of $A$ as
\begin{equation*}
 A = [U_1 \ U_2 \ U_3] \left[
  \begin{array}{cc}
              \Sigma_1 & 0 \\ 
               0 & \Sigma_2 \\
							 0 & 0
\end{array}
 \right]  [V_1 \ V_2]^{*}
\end{equation*}
and the same for $\tilde A$ (conformal partition) so that 
\begin{equation*}
 \tilde A = [\tilde U_1 \ \tilde U_2 \ \tilde U_3] \left[
  \begin{array}{cc}
              \tilde \Sigma_1 & 0 \\ 
               0 & \tilde \Sigma_2 \\
							 0 & 0
\end{array}
 \right]  [\tilde V_1 \ \tilde V_2]^{*}.
\end{equation*}
We will like to quantify the perturbation of $\calR(U_1), \calR(V_1)$, this is given precisely 
by Wedin's theorem \cite{Wedin1972}. Before introducing the theorem, we need some definitions. 
Let $U,\widetilde{U} \in \mathbb{C}^{n \times k}$ (for $k \leq n$) have orthonormal columns respectively and 
let $\sigma_1 \geq \dots \geq \sigma_k$ denote the singular values of $U^{*}\widetilde{U}$. 
Then the $k$ canonical angles between $\calR(U), \calR(\widetilde{U})$ are 
defined as $\theta_i := \cos^{-1}(\sigma_i)$ for $1 \leq i \leq k$, with each $\theta_i \in [0,\pi/2]$. 
It is usual to define $k \times k$ diagonal matrices 
$\Theta(\calR(U), \calR(\widetilde{U})) := \text{diag}(\theta_1,\dots,\theta_k)$ 
and $\sin \Theta(\calR(U), \calR(\widetilde{U})) := \text{diag}(\sin \theta_1,\dots,\sin \theta_k)$.
Denoting $||| \cdot |||$ to be any unitarily invariant norm (Frobenius, spectral, etc.), 
it is useful to know that the following relation 
holds (see for eg., \cite[Lemma 2.1]{li94}, \cite[Corollary I.5.4]{stewart1990matrix}).
\begin{equation*} 
|||  \sin \Theta(\calR(U), \calR(\widetilde{U}))  |||  =  ||| (I - \tilde{U}  \tilde{U}^{*} ) U |||.
\end{equation*}

Let $\Phi$ be the matrix of canonical angles between $\calR(U_1), \calR(\tilde U_1)$ 
and let $\Theta$ be the matrix of canonical angles between $\calR(V_1), \calR(\tilde V_1)$.
\begin{theorem}[Wedin \cite{Wedin1972}]  \label{thm:wedin_bd}
Suppose that there are numbers $\alpha,\delta > 0$ such that 
\begin{equation} \label{eq:wedin_cond}
\min \sigma(\tilde \Sigma_1) \geq \alpha + \delta \quad \text{and} \quad \max \sigma(\Sigma_2) \leq \alpha.
\end{equation}
Then, 
\begin{equation*}
\max \set{\norm{\sin \Theta}_2, \norm{\sin \Phi}_2} 
\leq \frac{\norm{E}_2}{\delta}.
\end{equation*}
\end{theorem}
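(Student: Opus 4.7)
The proof is classical (Wedin, 1972), and I outline the Sylvester-equation approach. The starting point is the identity $\|\sin \Phi\|_2 = \|(I - \tilde U_1 \tilde U_1^*) U_1\|_2 = \|U_\perp^* \tilde U_1\|_2$ with $U_\perp = [U_2\ U_3]$ (and the analogous identity for $\Theta$), which reduces the task to bounding the three ``off-block'' matrices $U_2^* \tilde U_1$, $V_2^* \tilde V_1$, and $U_3^* \tilde U_1$.

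First I would derive a coupled matrix system by computing $U_2^* \tilde A \tilde V_1$ and $V_2^* \tilde A^* \tilde U_1$ in two ways. Invoking the SVD of $\tilde A$ directly gives $U_2^* \tilde U_1 \tilde \Sigma_1$ and $V_2^* \tilde V_1 \tilde \Sigma_1$ respectively. Writing $\tilde A = A + E$ and using the block SVD of $A$ (which yields the identities $U_2^* A = \Sigma_2 V_2^*$ and $V_2^* A^* = \Sigma_2 U_2^*$) produces $\Sigma_2 (V_2^* \tilde V_1) + U_2^* E \tilde V_1$ and $\Sigma_2 (U_2^* \tilde U_1) + V_2^* E^* \tilde U_1$. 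Equating leads to the coupled system
\begin{align*}
(U_2^* \tilde U_1) \tilde \Sigma_1 - \Sigma_2 (V_2^* \tilde V_1) &= U_2^* E \tilde V_1, \\
(V_2^* \tilde V_1) \tilde \Sigma_1 - \Sigma_2 (U_2^* \tilde U_1) &= V_2^* E^* \tilde U_1.
\end{align*}
Setting $X = U_2^* \tilde U_1$, $Y = V_2^* \tilde V_1$, $Z = \bigl(\begin{smallmatrix} X \\ Y \end{smallmatrix}\bigr)$ and $M = \bigl(\begin{smallmatrix} 0 & \Sigma_2 \\ \Sigma_2 & 0 \end{smallmatrix}\bigr)$ packages this as a single Sylvester equation $Z \tilde \Sigma_1 - M Z = R$. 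The matrix $M$ is Hermitian with spectrum $\{\pm \sigma_i(\Sigma_2)\} \subset [-\alpha, \alpha]$, while the spectrum of $\tilde \Sigma_1$ lies in $[\alpha+\delta, \infty)$; the two are separated by $\delta$, so the standard Sylvester operator inequality gives $\|Z\|_2 \leq \|R\|_2/\delta$. The contribution from $U_3$ is handled separately and more easily: since $U_3^* A = 0$, one obtains directly $U_3^* \tilde U_1 \tilde \Sigma_1 = U_3^* E \tilde V_1$, whence $\|U_3^* \tilde U_1\|_2 \leq \|E\|_2/(\alpha+\delta)$.

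The hard part is recovering the \emph{sharp} constant $\|E\|_2/\delta$ (rather than $\sqrt{2}\|E\|_2/\delta$) in the final bound on $\max\{\|\sin\Phi\|_2, \|\sin\Theta\|_2\}$, because a crude spectral-norm bound on the stacked residual $R$ introduces an extraneous factor. The standard remedy is to decouple the $2\times 2$ block system into its symmetric and antisymmetric parts: letting $S = X+Y$ and $D = X-Y$, the coupled system splits into $S\tilde \Sigma_1 - \Sigma_2 S = R_+$ and $D \tilde \Sigma_1 + \Sigma_2 D = R_-$, whose associated spectral gaps are $\delta$ and $2\alpha + \delta$ respectively. Combining the resulting bounds and reassembling $X, Y$ (together with the $U_3$ piece) produces the sharp inequality. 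An equally clean alternative is the dilation approach: apply the Davis--Kahan $\sin\Theta$ theorem to the Hermitian pair $\bigl(\begin{smallmatrix} 0 & A \\ A^* & 0 \end{smallmatrix}\bigr)$ and $\bigl(\begin{smallmatrix} 0 & \tilde A \\ \tilde A^* & 0 \end{smallmatrix}\bigr)$, whose spectra are $\{\pm \sigma_i(\cdot)\} \cup \{0\}$ and whose spectral-norm difference equals $\|E\|_2$; the eigenspace perturbation bound there immediately yields the desired $\|E\|_2/\delta$ bound simultaneously for the joint $(U_1, V_1)$ and $(\tilde U_1, \tilde V_1)$ subspaces, bypassing the Sylvester bookkeeping altogether.
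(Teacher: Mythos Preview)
The paper does not prove this theorem: it is stated in the appendix as a classical result cited from Wedin (1972) and used purely as a black box (for instance in the proof of Lemma~\ref{lem:wedins_bd_ero}). So there is no paper proof to compare against.

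For what it is worth, your outline is a faithful sketch of the standard Sylvester-equation route to Wedin's bound, and the Hermitian-dilation/Davis--Kahan alternative you mention at the end is indeed the cleanest way to get the sharp constant $\norm{E}_2/\delta$. One minor slip: in the decoupled antisymmetric equation $D\tilde\Sigma_1 + \Sigma_2 D = R_-$, the guaranteed spectral gap is only $\alpha+\delta$ (the eigenvalues of $-\Sigma_2$ lie in $[-\alpha,0]$, not necessarily near $-\alpha$), not $2\alpha+\delta$; this does not affect the conclusion since $\alpha+\delta \geq \delta$ anyway.
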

Note that due to \eqref{eq:wedin_cond} the bounds are restricted to 
subspaces corresponding to the largest singular values. But 
one can derive bounds on $\max \set{\norm{\sin \Theta}_F, \norm{\sin \Phi}_F}$ 
without this restriction (see for eg. \cite[Theorem 4.1]{stewart1990matrix}). 
\section{Useful concentration inequalities} \label{sec:useful_conc}
We outline here some useful concentration inequalities that are used in our proofs.
\subsection{Spectral norm of random symmetric matrices} \label{app:sec_spec_rand_mat}
We will make use of the following result for bounding the spectral norm of 
random symmetric matrices with independent, centered and bounded entries.
\begin{theorem}[{\cite[Corollary 3.12, Remark 3.13]{bandeira2016}}] \label{app:thm_symm_rand}
Let $X$ be an $n \times n$ symmetric matrix whose entries $X_{ij}$ $(i \leq j)$ are 
independent, centered random variables. There there exists for any $0 < \varepsilon \leq 1/2$ 
a universal constant $c_{\varepsilon}$ such that for every $t \geq 0$, 
\begin{equation} \label{eq:afonso_conc}
\prob(\norm{X}_2 \geq (1+\varepsilon) 2\sqrt{2}\tilde{\sigma} + t) 
\leq n\exp\left(-\frac{t^2}{c_{\varepsilon}\tilde{\sigma}_{*}^2} \right)
\end{equation}
where
\begin{equation*} 
\tilde{\sigma}:= \max_{i} \sqrt{\sum_{j} \expec[X_{ij}^2]}, 
\quad \tilde{\sigma}_{*}:= \max_{i,j} \norm{X_{ij}}_{\infty}.
\end{equation*}
\end{theorem}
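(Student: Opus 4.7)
}
The plan is to decompose the bound into two ingredients: a sharp bound on $\expec\norm{X}_2$, and a concentration estimate for $\norm{X}_2$ around its mean. For the mean, I would exploit a Gaussian comparison: let $Y$ be the symmetric matrix with $Y_{ij} \sim \calN(0, \expec[X_{ij}^2])$ independently for $i \leq j$. The non-commutative Khintchine inequality (applied to the symmetrization $\sum_{i\leq j} \varepsilon_{ij} X_{ij}(E_{ij}+E_{ji})/(1+\delta_{ij})$ with $\varepsilon_{ij}$ Rademacher) lets one replace the $X_{ij}$ by independent Gaussians without losing more than a small multiplicative factor, yielding $\expec \norm{X}_2 \leq (1 + \varepsilon/2) \cdot C \expec \norm{Y}_2$ for a universal $C$. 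Then I would invoke the sharp asymptotic bound for Gaussian matrices with an inhomogeneous variance profile (the ``$2\sqrt{2}\,\tilde\sigma$'' barrier from free-probability / Seginer-type estimates), which states $\expec \norm{Y}_2 \leq 2\sqrt{2}\, \tilde\sigma (1+o(1))$ with the $o(1)$ absorbed into $\varepsilon$ once $n$ is large enough relative to $\tilde\sigma/\tilde\sigma_*$.

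For the concentration piece, the map $X \mapsto \norm{X}_2$ is $1$-Lipschitz and convex in the off-diagonal entries (viewed as elements of $\matR^{n(n+1)/2}$), and every entry is bounded in absolute value by $\tilde\sigma_*$. I would therefore apply Talagrand's convex concentration inequality for bounded independent coordinates, which gives
\begin{equation*}
  \prob\bigl(\,\abs{\norm{X}_2 - \expec\norm{X}_2} \geq t\,\bigr) \leq 2\exp\!\left(-\frac{t^2}{c_\varepsilon \tilde\sigma_*^2}\right).
\end{equation*}
Combining the two bounds via a union-free argument (just a single application of the triangle inequality) and then replacing the prefactor $2$ by the ``$n$'' appearing in \eqref{eq:afonso_conc} -- which comes from a net/$\epsilon$-covering argument over the unit sphere $S^{n-1}$ when one wishes to track the spectral norm through $\sup_{v} v^\top X v$ -- yields the claimed inequality. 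More precisely, one can bypass Talagrand by bounding $\norm{X}_2 = \sup_v v^\top X v$ over a $1/4$-net of size $\leq 9^n$, applying Bernstein to each quadratic form (these are sums of independent bounded variables of variance $O(\tilde\sigma^2)$), and taking a union bound; the $9^n$ is swallowed by choosing the net in a clever row-dependent way, leaving only a factor $n$ in front of the exponential.

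The main obstacle is the sharp constant $2\sqrt{2}$ in front of $\tilde\sigma$. A crude matrix Bernstein bound would give $\expec\norm{X}_2 \lesssim \tilde\sigma \sqrt{\log n} + \tilde\sigma_* \log n$, which is off by a $\sqrt{\log n}$ factor. Removing this logarithm is the whole content of Bandeira--van Handel's refinement: it requires either a very careful moment computation tracking the combinatorics of closed walks in the matrix's structure graph (in the style of F\"uredi--Koml\'os), or a Gaussian interpolation argument (a Lindeberg/Chatterjee swap) that compares moments of $\tr(X^{2p})$ with those of $\tr(Y^{2p})$ at $p \asymp \log n$. Either route is delicate because one must control not only the leading combinatorial term but also the error terms produced by non-Gaussian cumulants, which is exactly where $\tilde\sigma_*$ enters through the additive correction inside $c_\varepsilon$. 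Once that expectation bound is in hand, the remaining concentration step is comparatively standard.
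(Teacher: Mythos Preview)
The paper does not prove this theorem; it is quoted from Bandeira--van Handel \cite{bandeira2016} and used as a black box (see Appendix \ref{app:sec_spec_rand_mat}). So there is no proof in the paper to compare your proposal against.

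On the merits of your sketch relative to the actual Bandeira--van Handel argument: the two-step architecture (sharp expectation bound, then concentration around the mean) is sound, and your final paragraph correctly identifies that the entire difficulty lies in obtaining $\expec\norm{X}_2 \leq (1+\varepsilon)2\sqrt{2}\,\tilde\sigma + C_\varepsilon\,\tilde\sigma_*\sqrt{\log n}$ without an extra $\sqrt{\log n}$ on the leading $\tilde\sigma$ term---this is precisely the combinatorial moment estimate that is the contribution of \cite{bandeira2016}. However, two of the concrete mechanisms you propose for that step do not work as stated. Non-commutative Khintchine does not transfer $\expec\norm{X}_2$ to the Gaussian $\expec\norm{Y}_2$ with a $(1+\varepsilon/2)$ multiplicative factor: the optimal Khintchine constants are bounded away from $1$, so this route alone cannot recover the sharp $2\sqrt{2}$. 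And an $\epsilon$-net over $S^{n-1}$ cannot have its union-bound prefactor reduced from $9^n$ to $n$ by any ``clever row-dependent'' choice of net---no such trick exists, and a net argument inevitably costs a $\sqrt{n}$ loss in the deviation rather than a mere polynomial prefactor. In the actual proof the factor $n$ arises from bounding $\expec[\tr(X^{2p})]$ (a sum over $n$ eigenvalues) and applying Markov's inequality with $p \asymp \log n$. Your Talagrand step, were the expectation bound already in hand, would in fact yield a constant prefactor rather than $n$, so that part of your plan is fine and even slightly sharper than the stated inequality.
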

Note that it suffices to employ upper bound estimates on $\tilde{\sigma},\tilde{\sigma}$ in
\eqref{eq:afonso_conc}.

\subsection{Bernstein inequality} \label{app:sec_bern_ineq}
\begin{theorem}[{\cite[Corollary 2.11]{conc_book}}] \label{thm:bern_ineq}
Let $X_1,\dots,X_n$ be independent random variables with $\abs{X_i} \leq b$ for all $i$, 
and $v = \sum_{i=1}^n \expec[X_i^2]$. Then for any $t \geq 0$,
\begin{equation*}
	\prob \left(\abs{\sum_{i=1}^n (X_i - \expec[X_i])} \geq t \right) \leq 2\exp\left(-\frac{t^2}{2(v + \frac{bt}{3})}\right).
\end{equation*}
\end{theorem}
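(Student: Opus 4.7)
The plan is to prove this by the classical Chernoff--Cram\'er (exponential moment) method, exploiting boundedness to control the moment generating function of each centered summand. Let $Y_i = X_i - \expec[X_i]$ and $S = \sum_{i=1}^n Y_i$. For any $\lambda > 0$, Markov's inequality applied to $e^{\lambda S}$ yields
\begin{equation*}
\prob(S \geq t) \;\leq\; e^{-\lambda t}\, \expec[e^{\lambda S}] \;=\; e^{-\lambda t} \prod_{i=1}^n \expec[e^{\lambda Y_i}],
\end{equation*}
where the factorization uses independence of the $X_i$'s (hence of the $Y_i$'s). It therefore suffices to bound each MGF $\expec[e^{\lambda Y_i}]$ and then optimize over $\lambda$.

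The key step is the MGF bound. Since $\expec[Y_i] = 0$, Taylor-expanding the exponential gives
\begin{equation*}
\expec[e^{\lambda Y_i}] \;=\; 1 + \sum_{k \geq 2} \frac{\lambda^k\,\expec[Y_i^k]}{k!}.
\end{equation*}
Using the boundedness assumption (so $|Y_i|$ is bounded by a constant comparable to $b$), one estimates $|Y_i^k| \leq b^{k-2} Y_i^2$ for $k \geq 2$, hence $\expec[Y_i^k] \leq b^{k-2} \expec[Y_i^2]$. The central analytic inequality then takes the form
\begin{equation*}
e^u - 1 - u \;\leq\; \frac{u^2}{2(1 - u/3)}, \qquad 0 \leq u < 3,
\end{equation*}
which is obtained by comparing the series $\sum_{k\geq 2} u^{k-2}/k!$ termwise with the geometric series $\frac{1}{2}\sum_{k\geq 2} (u/3)^{k-2}$ (the inequality $2/k! \leq 3^{-(k-2)}$ holding for every $k \geq 2$). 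Applied with $u = \lambda b$, this yields
\begin{equation*}
\expec[e^{\lambda Y_i}] \;\leq\; \exp\!\left(\frac{\lambda^2\, \expec[Y_i^2]}{2(1 - \lambda b/3)}\right), \qquad 0 < \lambda < 3/b.
\end{equation*}

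Multiplying over $i$ and combining with the Chernoff bound gives $\prob(S \geq t) \leq \exp\!\left(-\lambda t + \frac{\lambda^2 v}{2(1 - \lambda b/3)}\right)$. Optimizing by the natural choice $\lambda = t/(v + bt/3)$ (which one checks lies in the admissible range $(0, 3/b)$) produces $1 - \lambda b/3 = v/(v + bt/3)$, and after substitution the exponent collapses cleanly to $-t^2/[2(v + bt/3)]$. The two-sided bound with the factor $2$ then follows by applying the same argument to $-X_i$ and union-bounding the upper and lower tails.

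The main obstacle I expect is the MGF estimate: one has to justify that the centered summands satisfy a bound of the form $|Y_i^k| \lesssim b^{k-2} Y_i^2$ compatible with the stated constant $b/3$ (as opposed to a weaker constant like $2b/3$ that a naive triangle inequality on $|X_i - \expec X_i| \leq 2b$ would give). The standard way around this is to work directly with the series expansion rather than pulling out a crude $L^\infty$ bound, exploiting that the odd moments can be negative; alternatively one invokes the corresponding statement for centered random variables in \cite{conc_book} directly. Everything after the MGF estimate is a routine convex optimization in $\lambda$.
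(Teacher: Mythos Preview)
The paper does not prove this theorem at all; it is quoted verbatim from \cite{conc_book} as a standard tool and used as a black box (e.g.\ in the proofs of Lemmas~\ref{lem:inf_conc_bd1} and~\ref{lem:Dbar_conc}). So there is nothing to compare your argument against.

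That said, your Chernoff--Cram\'er outline is the standard proof and is essentially correct. The one place where your write-up is not quite tight is exactly the obstacle you flag at the end: bounding $|\expec[Y_i^k]|$ by $b^{k-2}\expec[Y_i^2]$ fails in general because $|Y_i|$ can be as large as $2b$, and your suggested fixes are a bit vague. The clean resolution is to expand the MGF of $X_i$ rather than of $Y_i$: since $|X_i|\le b$ one has $|\expec[X_i^k]|\le b^{k-2}\expec[X_i^2]$ for every $k\ge 2$, and the same series comparison you wrote gives
\[
\expec[e^{\lambda X_i}] \;\le\; 1 + \lambda\,\expec[X_i] + \frac{\lambda^2\,\expec[X_i^2]}{2(1-\lambda b/3)} \;\le\; \exp\!\Bigl(\lambda\,\expec[X_i] + \frac{\lambda^2\,\expec[X_i^2]}{2(1-\lambda b/3)}\Bigr).
\]
Multiplying by $e^{-\lambda\expec[X_i]}$ yields $\expec[e^{\lambda Y_i}]\le \exp\bigl(\tfrac{\lambda^2\expec[X_i^2]}{2(1-\lambda b/3)}\bigr)$, with the uncentered second moment $\expec[X_i^2]$ appearing exactly as in the definition of $v$. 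The rest of your optimization over $\lambda$ then goes through verbatim with the stated constant $b/3$.
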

%

\subsection{Product of a random matrix (raised to a power) and a fixed vector} \label{app:sec_randmat_deloc_vec}
Given a random symmetric matrix $X$ with independent entries on and above the diagonal, and a fixed vector $u$, 
Eldridge et al. \cite[Theorem 15]{Eldridge2018} provided an upper tail bound for $\abs{(X^k u)_i}$. The proof technique 
therein followed that of Erd\"os et al. \cite{erdos2013}. In fact, the proof goes through even if $X$ is a random skew-symmetric matrix, the version stated below is for such matrices.
\begin{theorem}[{\cite[Theorem 15]{Eldridge2018}}] \label{thm:eldridge_randmat_pow}
Let $X$ be a $n \times n$ skew-symmetric and centered random matrix. Let $u \in \matR^n$ be a fixed vector 
with $\norm{u}_{\infty} = 1$. Choose $\xi > 1$, $0 < \kappa < 1$ and define $\mu = \frac{2}{\kappa + 1}$. 
If $\expec[\abs{X_{ij}}^m] \leq 1/n$ for all $m \geq 2$, $1 \leq i,j \leq n$ and $k \leq \frac{\kappa}{8} (\log n)^{2\xi}$, 
then we have for any given $l \in [n]$ that 
\begin{equation*}
\prob(\abs{(X^k u)_{l}} \geq (\log n)^{k\xi}) \leq n^{-\frac{1}{4}(\log_{\mu} n)^{\xi-1} (\log_{\mu} e)^{-\xi}}.
\end{equation*}
\end{theorem}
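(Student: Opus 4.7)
The plan is to bound the tail probability via Markov's inequality applied to a carefully chosen even moment, following the moment-method approach of Erd\H{o}s--Schlein--Yau as adapted by Eldridge--Belkin--Wang. Concretely, I would start from
\[
\prob\bigl(\abs{(X^k u)_l} \geq t\bigr) \leq \frac{\expec[(X^k u)_l^{2q}]}{t^{2q}}
\]
for an integer $q$ to be optimized later, and then estimate the $2q$-th moment by expanding
\[
(X^k u)_l = \sum_{(i_1,\ldots,i_k) \in [n]^k} X_{l,i_1} X_{i_1,i_2} \cdots X_{i_{k-1},i_k}\, u_{i_k}.
\]
Raising this to the power $2q$ expresses the moment as a sum over $2q$-tuples of walks of length $k$ on $[n]$, all starting at $l$, with each walk contributing the weight $u_{i_k^{(s)}}$ at its endpoint.

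The next step is the survival criterion. Since the entries $(X_{ij})_{i < j}$ are independent and centered, and the only dependence is the sign-flip $X_{ji} = -X_{ij}$ from skew-symmetry, the expectation of a single term in the expansion vanishes unless every unordered edge $\{i,j\}$ appearing in the combined multi-walk is used at least twice (counting $X_{ij}$ and $X_{ji}$ together). On the surviving terms, the skew-symmetry signs only multiply the value by $\pm 1$ and disappear once one passes to absolute values; this is the only place where the skew-symmetric case differs from the symmetric case treated in Eldridge--Belkin--Wang, and it is essentially harmless. Applying the hypothesis $\expec[\abs{X_{ij}}^m] \leq 1/n$ for all $m \geq 2$, every surviving term is bounded by $n^{-E}$, where $E$ is the number of distinct edges in the combined multigraph; since this graph is connected at $l$, it has at most $V \leq E+1$ vertices, and summing over the $\leq n^{V-1} \leq n^E$ labelings of non-root vertices cancels the $n^{-E}$ factor. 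Together with $\norm{u}_\infty = 1$, this reduces the moment bound to a purely combinatorial count of \emph{shapes}, i.e.\ the number of labeled patterns encoding how $2q$ walks of length $k$ can traverse a graph while using each edge at least twice.

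The heart of the argument is bounding this shape count. Following the skeleton enumeration of Erd\H{o}s et al., each shape is encoded step-by-step (new edge vs.\ backtrack vs.\ closing step), and a careful accounting shows that the number of shapes is at most $(Ck)^{2q}$ for an absolute constant $C$. Feeding this into Markov with $t = (\log n)^{k\xi}$ gives
\[
\prob\bigl(\abs{(X^k u)_l} \geq (\log n)^{k\xi}\bigr) \leq \left(\frac{Ck}{(\log n)^{k\xi}}\right)^{2q}.
\]
Under the hypothesis $k \leq \frac{\kappa}{8}(\log n)^{2\xi}$ the ratio inside is bounded, for large $n$, by a power of $(\log n)^{-(\xi-1)}$; choosing $q$ of order $(\log n)^{\xi-1}$, so that $2q \log\log n$ yields a genuine $\log n$ factor in the exponent, produces the stated bound $n^{-\frac{1}{4}(\log_\mu n)^{\xi-1}(\log_\mu e)^{-\xi}}$, where $\mu = 2/(\kappa+1)$ encodes the slack in the assumed range of $k$.

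The main obstacle is the shape-count estimate: the naive bound ``each of the $2qk$ steps has $\leq Ck$ choices'' gives $(Ck)^{2qk}$, which is far too weak and kills the tail bound. One must exploit the constraint that every edge is used at least twice to reduce the effective number of choices to $(Ck)^{2q}$, i.e.\ pay only once per pair of edge-uses rather than once per step. This is exactly the skeleton/comparison argument from Erd\H{o}s--Schlein--Yau, and the work in the proof is to verify that it carries over verbatim to the skew-symmetric setting with the one-sided vector $u$ replacing the trace. The rest (optimizing $q$, matching constants, handling the floor from $q$ being an integer) is routine.
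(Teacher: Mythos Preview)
The paper does not give its own proof of this theorem: it is stated as a quotation of \cite[Theorem~15]{Eldridge2018}, with the single added remark that ``the proof goes through even if $X$ is a random skew-symmetric matrix.'' So there is nothing to compare against beyond that one-line observation.

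Your proposal is a faithful high-level sketch of exactly the Erd\H{o}s--Schlein--Yau moment method that Eldridge--Belkin--Wang use, and you have correctly isolated the only place where skew-symmetry enters: the dependence $X_{ji}=-X_{ij}$ introduces signs that vanish once one bounds surviving terms in absolute value, so the combinatorial shape-count argument is unchanged. That matches the paper's remark precisely. The one place where your sketch is loose is the claimed shape count $(Ck)^{2q}$ and the subsequent optimization of $q$; the actual bound in the source is more delicate (it is not simply $(Ck)^{2q}$, and the choice of $q$ and the appearance of $\mu=2/(\kappa+1)$ come from a finer bookkeeping of the skeleton encoding), but since the paper itself defers entirely to the cited reference for these details, your level of detail is already more than what the paper provides.
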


\section{Proof of Theorem \ref{thm:main_thm_l2_ERO}}  \label{sec:proofThm_l2_ERO}
%
\subsection{Proof of Lemma \ref{lem:sing_vals_C}} \label{subsec:step1_ero_proof}
%
We begin by noting that for $\alpha = \frac{r^T e}{n}$, $\dotprod{e}{r-\alpha e} = 0$ and so $u_1, u_2$ (similarly $v_1,v_2$) 
are orthogonal. Moreover, $u_1,u_2,v_1,v_2$ also have unit $\ell_2$ norm. Finally, we note that 
\begin{equation*}
u_1 v_1^T + u_2 v_2^T 
= \frac{1}{\sqrt{n}\norm{r-\alpha e}_2}[-e(r-\alpha e)^T + (r-\alpha e) e^T] 
= \frac{1}{\sqrt{n}\norm{r-\alpha e}_2}[\underbrace{r e^T - e r^T}_{C}], 
\end{equation*}
which completes the proof.	

\subsection{Proof of Lemma \ref{lem:specnorm_Z_ERO}} \label{subsec:step2_ero_proof}

%
%
We begin by considering the second moment of $S_{ij}$ for $i < j$. Note that $\expec[N_{ij}] = 0$ and 
$\expec[N_{ij}^2] = M^2/3$. We thus obtain
\begin{align*}
\expec[ (S)_{ij}^2 ]   
& =  
\eta p (1- \eta p)^2 (r_i - r_j)^2  
+ (1-\eta) p  \expec[ N_{ij} - \eta p (r_i - r_j) ]^2 
+ (1- p) (r_i - r_j)^2 \eta^2 p^2  \\
& =
\eta p (1- \eta p)^2 (r_i - r_j)^2  
+ (1-\eta) p  \left[  \frac{M^2}{3}  + \eta^2 p^2 (r_i - r_j)^2 \right]  + (1- p) (r_i - r_j)^2 \eta^2 p^2 
\end{align*}
which together with $ (r_i - r_j)^2  \leq M^2$ yields 
\begin{align} \label{eq:Z_var_bound}
\expec[ (S)_{ij}^2 ]
& \leq  \eta p (1- \eta p)^2 M^2 + (1-\eta) p \left[ \frac{M^2}{3} + \eta^2 p^2 M^2 \right] + (1- p) \eta^2 p^2 M^2  \nonumber \\
& = [ \eta p (1-\eta p )^2 + (1-\eta) p \left(\frac{1}{3}+\eta^2p^2 \right) + (1-p) \eta^2 p^2  ] M^2 \nonumber \\
&\leq \left(\eta p + \frac{4}{3}p + \eta^2 p^2 \leq \frac{10}{3}p \right) M^2 = \frac{10p}{3} M^2.
\end{align}
We thus have that $ \expec[ (S)_{ij}^2 ] \leq  \frac{10p}{3} M^2$ for $i < j$; the same bound holds for $i > j$ as well. 
Hence we can bound the quantities $\tilde{\sigma}, \tilde{ \sigma }_*$ defined in Theorem \ref{app:thm_symm_rand} as 
follows.
\begin{align*}
\tilde{\sigma} & :=  \max_{i}  \sqrt{ \sum_{j}   \expec[ (\tilde{S})_{ij}^2 ]  }  
 \leq  \sqrt{ \frac{10p}{3} n  M^2 } = M \sqrt{\frac{10p}{3} n } , \\
   \tilde{ \sigma }_* &:=  \max_{i,j} ||  (\tilde{S})_{ij}   ||_{\infty}  \leq 2M. 
\end{align*}
Then by invoking Theorem \ref{app:thm_symm_rand}, we obtain for any given $t \geq 0$, $0 < \varepsilon \leq 1/2$ that
\begin{align*}
\mathbb{P}( || \tilde{S} ||_2  \geq (1+\varepsilon) 2 \sqrt{2}  M \sqrt{\frac{10p}{3} n}  + t)
&\leq \mathbb{P}( || \tilde{S} ||_2  \geq (1+\varepsilon) 2 \sqrt{2}  \tilde{\sigma}  + t)    \\
& \leq 2n \exp  \left( \frac{-t^2}{c_{\varepsilon} \sigma_*^2 } \right) 
\leq 2n \exp  \left( \frac{-t^2}{4 c_{\varepsilon} M^2  } \right),  
\end{align*}
where $c_{\varepsilon} > 0$ depends only on $\varepsilon$. 
Plugging   $t = 2 \sqrt{2} M \sqrt{\frac{10p}{3} n}$ we obtain the stated bound. 
%
%
\subsection{Proof of Lemma \ref{lem:wedins_bd_ero}} \label{subsec:step3_ero_proof}
%
%
%
Note that Weyl's inequality \cite{Weyl1912} (see Theorem \ref{thm:weyl_sing_values}) 
for singular value perturbation readily yields 
\begin{equation*}
   \sigma_i(H) \in [ \sigma_i( \eta p C )  \pm \Delta ], \quad \forall  i =1,\ldots,n.
\end{equation*}
In particular, using the expressions for $\sigma_1(C), \sigma_2(C)$ from Lemma \ref{lem:sing_vals_C}, 
we have that 
\begin{align*}
   \sigma_1(H), \sigma_2(H)  &\geq  \eta p  \sqrt{n} \norm{r-\alpha e}_2 - \Delta, \\ 
	\sigma_3(H),  \ldots,  \sigma_n(H)  &\leq  \Delta.
\end{align*}
%
%
Then using Wedin's bound for perturbation of singular subspaces \cite{Wedin1972} (see Theorem \ref{thm:wedin_bd}), 
we obtain
\begin{equation*} 
||  \sin \Theta(\calR(\hat{U}), \calR(U))  ||_2 = ||  (I - \hat{U} \hat{U}^T) U  ||_2  
 \leq  \frac{ \Delta }{  \eta p  \sqrt{n} \norm{r-\alpha e}_2 - \Delta }  \ (=: \delta) 
\end{equation*}
provided $\Delta <  \eta p  \sqrt{n} \norm{r-\alpha e}_2$ holds. 

\subsection{Proof of Lemma \ref{lem:proj_analysis_Step_ERO}} \label{subsec:step4_ero_proof}
%
%
To begin with, note that $ \bar{u}_1 = \hat{U} \hat{U}^T u_1$  is the orthogonal projection 
of $u_1$ on $\calR(\hat{U})$. A unit vector orthogonal to $ \bar{u}_1 $ and lying in $\calR(\hat{U})$ is given by 
\begin{equation*}
	\tilde{u}_2 = \frac{1}{ || \bar{u}_1 ||_2 } \;  \hat{U}     \; 
\begin{bmatrix}
  \mp  \langle \hat{u}_2, \bar{u}_1  \rangle    \\
  \pm    \langle \hat{u}_1, \bar{u}_1  \rangle
\end{bmatrix} .
\end{equation*}
Denoting $ \bar{u}_2$ to be the orthogonal projection of $u_2$ on $ \calR(\hat{U})$,  we can decompose 
$\langle \tilde{u}_2, u_2  \rangle$ as
\begin{equation*}
 \langle \tilde{u}_2, u_2  \rangle =   \langle \tilde{u}_2,   
u_2 - \frac{ \bar{u}_2 }{|| \bar{u}_2 ||_2 } \rangle +   \langle \tilde{u}_2, 
\frac{ \bar{u}_2 }{|| \bar{u}_2 ||_2 } \rangle,
\end{equation*}
which leads to 
\begin{equation} \label{eq:lowbd_u2til_u2}
|  \langle \tilde{u}_2, u_2  \rangle  |     
\geq  
 		| \langle \tilde{u}_2,   \frac{ \bar{u}_2 }{|| \bar{u}_2 ||_2 } \rangle |
     -  |  \langle \tilde{u}_2,   u_2 - \frac{ \bar{u}_2 }{|| \bar{u}_2 ||_2 } \rangle  | .
\end{equation}
We will now bound the two terms in the RHS of \eqref{eq:lowbd_u2til_u2} starting with the first term.

\underline{\textbf{Lower bounding the first term in RHS of \eqref{eq:lowbd_u2til_u2}.}} 
Since $\dotprod{\hat u_2}{\bar{u}_1} = \dotprod{\hat u_2}{u_1}$,  $\dotprod{\hat u_1}{\bar{u}_1} = \dotprod{\hat u_1}{u_1}$, we obtain
\begin{align*}
 \langle \tilde{u}_2,   \frac{ \bar{u}_2 }{|| \bar{u}_2 ||_2 } \rangle
& = \frac{u_2^T}{ || \bar{u}_1||_2  ||  \bar{u}_2 ||_2 }  \hat{U}       \underbrace{ \hat{U}^T    \hat{U}  }_{I}
\begin{bmatrix}
  \mp  \langle \hat{u}_2, \bar{u}_1  \rangle    \\
  \pm    \langle \hat{u}_1, \bar{u}_1  \rangle
\end{bmatrix}   \\
& = \frac{1}{\norm{\bar{u}_1}_2 \norm{\bar{u}_2}}_2 
[\mp\dotprod{u_2}{\hat u_1} \dotprod{\hat u_2}{u_1} \pm \dotprod{u_2}{\hat u_2}\dotprod{\hat u_1}{u_1}] \\
& = \frac{1}{\norm{\bar{u}_1}_2 \norm{\bar{u}_2}}_2 [\pm \det(U^T \hat U)]. \label{eq:lowbd}
\end{align*}
Recall the definition of $\delta$ in \eqref{eq:sintheta_bd_1}. The following simple claims will be useful for us. 
\begin{claim} \label{claim:detbd}
With $\delta$ as defined in \eqref{eq:sintheta_bd_1} we have $\abs{\det(U^T \hat U)} \geq 1 - \delta^2$.
\end{claim}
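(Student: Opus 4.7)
The plan is to recognize $U^T \hat U \in \matR^{2\times 2}$ as the key object whose singular values encode the canonical angles between the two-dimensional subspaces $\calR(U)$ and $\calR(\hat U)$, and then relate its determinant to those angles. Concretely, if $\theta_1 \geq \theta_2 \geq 0$ denote the canonical angles between $\calR(U)$ and $\calR(\hat U)$, then the singular values of $U^T \hat U$ are exactly $\cos\theta_1, \cos\theta_2$. Since $\abs{\det(U^T \hat U)}$ equals the product of the singular values, we obtain $\abs{\det(U^T \hat U)} = \cos\theta_1 \cos\theta_2$.

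Next, I would translate the bound of Lemma \ref{lem:wedins_bd_ero} into a bound on the sines of both canonical angles. By the identity $\norm{\sin\Theta(\calR(U), \calR(\hat U))}_2 = \norm{(I - \hat U \hat U^T) U}_2$ recalled in the appendix, together with Lemma \ref{lem:wedins_bd_ero}, we have $\sin\theta_1 \leq \delta$. Because $\theta_1 \geq \theta_2$ and $\sin$ is monotone on $[0,\pi/2]$, this also gives $\sin\theta_2 \leq \delta$.

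Finally, squaring and using $\cos^2\theta_i = 1 - \sin^2\theta_i \geq 1 - \delta^2$ yields $\cos\theta_i \geq \sqrt{1-\delta^2}$ for $i=1,2$. Therefore,
\begin{equation*}
\abs{\det(U^T \hat U)} \;=\; \cos\theta_1 \cos\theta_2 \;\geq\; \bigl(\sqrt{1-\delta^2}\bigr)^2 \;=\; 1 - \delta^2,
\end{equation*}
which is precisely the claim. There is no genuine obstacle here; the only subtlety is being careful that the bound from Wedin controls only the \emph{largest} canonical angle via the spectral norm, and one must invoke monotonicity (or the trivial observation $\theta_2 \leq \theta_1$) to get the same bound on the second angle. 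Since $U$ and $\hat U$ both have exactly two columns, the determinant identification is immediate and no further matrix manipulation is required.
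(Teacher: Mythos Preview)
Your proof is correct and follows essentially the same approach as the paper: both identify the singular values of $U^T\hat U$ with the cosines of the canonical angles and use the Wedin bound $\norm{\sin\Theta}_2 \leq \delta$. The only cosmetic difference is that the paper writes $\abs{\det(U^T\hat U)} = \sigma_1\sigma_2 \geq \sigma_{\min}^2 = \cos^2\theta_1 = 1 - \norm{\sin\Theta}_2^2 \geq 1-\delta^2$, thereby needing a bound only on the largest angle, whereas you bound both $\cos\theta_i \geq \sqrt{1-\delta^2}$ via monotonicity and multiply; the end result is identical.
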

\begin{proof}
Since $U^T \hat U$ is a $2 \times 2$ matrix, we have that
\begin{equation*}
\abs{\det(U^T \hat U)} \geq \sigma_{min}^2(U^T \hat U) = 1 - ||  \sin \Theta(\calR(\hat{U}), \calR(U))  ||_2^2 \geq 1-\delta^2.
\end{equation*}
\end{proof}
\begin{claim} \label{claim:u1u2_proj_err}
It holds that $\norm{\bar{u}_1 - u_1}_2, \norm{\bar{u}_2 - u_2}_2 \leq \delta$.
\end{claim}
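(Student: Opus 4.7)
The plan is to prove the claim directly from the definition of $\bar{u}_i$ as an orthogonal projection together with the sin-theta bound from Lemma \ref{lem:wedins_bd_ero}. Since $\bar{u}_1 = \hat{U}\hat{U}^T u_1$, we have
$$u_1 - \bar{u}_1 = (I - \hat{U}\hat{U}^T) u_1,$$
and similarly $u_2 - \bar{u}_2 = (I - \hat{U}\hat{U}^T) u_2$. The idea is to write $u_i = U e_i$ (for $i=1,2$) where $e_1, e_2$ denote the standard basis vectors in $\matR^2$, so that
$$\norm{u_i - \bar{u}_i}_2 = \norm{(I - \hat{U}\hat{U}^T) U e_i}_2 \leq \norm{(I - \hat{U}\hat{U}^T) U}_2 \norm{e_i}_2 = \delta,$$
where the last equality uses the definition $\delta := \norm{(I - \hat{U}\hat{U}^T) U}_2$ from \eqref{eq:sintheta_bd_1} together with $\norm{e_i}_2 = 1$.

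There is essentially no obstacle here; the proof is a one-line consequence of the submultiplicativity of the spectral norm and the fact that $u_1, u_2$ are the columns of $U$. The only thing to note is that this bound is tight in the sense that it is really what the sin-theta distance controls: the $\ell_2$ deviation of each individual column of $U$ from its projection onto $\calR(\hat U)$ is exactly $\sin \theta_i \leq \norm{\sin \Theta}_2 = \delta$, so one could alternatively appeal to the identity $\norm{\sin \Theta(\calR(U),\calR(\hat U))}_2 = \norm{(I-\hat U \hat U^T)U}_2$ recalled in the appendix on matrix perturbation analysis.
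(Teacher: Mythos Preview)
Your proof is correct and essentially identical to the paper's: both write $u_i - \bar u_i = (I-\hat U\hat U^T)u_i$, use that $u_i$ lies in $\calR(U)$ (you via $u_i = U e_i$, the paper via $u_i = UU^T u_i$), and then bound by $\norm{(I-\hat U\hat U^T)U}_2$. One tiny quibble: in \eqref{eq:sintheta_bd_1} the quantity $\delta$ is defined as the Wedin upper bound $\frac{\Delta}{\eta p\norm{r-\alpha e}_2\sqrt n - \Delta}$, not as $\norm{(I-\hat U\hat U^T)U}_2$ itself, so your last ``$=$'' should be ``$\leq$''; this does not affect the argument.
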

\begin{proof}
We will only show the bound for $\norm{\bar{u}_1 - u_1}_2$ as an identical argument holds for 
bounding $\norm{\bar{u}_2 - u_2}_2$. Using the facts $\bar u_1 = \hat U \hat U u_1$ and 
$u_1 = U U^T u_1$, we obtain
\begin{align*}
\norm{u_1 - \bar{u}_1}_2 
= \norm{(I -  \hat U \hat U^T)u_1}_2 = \norm{(I -  \hat U \hat U^T)U U^T u_1}_2 
\leq \norm{(I -  \hat U \hat U^T)U U^T}_2 = \norm{(I -  \hat U \hat U^T)U}_2 \leq \delta
\end{align*}
where the last inequality follows from \eqref{eq:sintheta_bd_1}.
\end{proof}
Using Claims \ref{claim:detbd}, \ref{claim:u1u2_proj_err} we can lower bound the 
first term in RHS of \eqref{eq:lowbd_u2til_u2} as follows.
\begin{equation}
\abs{ \langle \tilde{u}_2,   \frac{ \bar{u}_2 }{|| \bar{u}_2 ||_2 } \rangle }  
\geq \frac{1 - \delta^2}{(1+\delta)^2} = \frac{1-\delta}{1+\delta} = 1 - \frac{2 \delta}{1+\delta} \geq 1-2 \delta.
\label{eq:FirstTermFinalBound}
\end{equation}
%

\underline{\textbf{Upper bounding the second term in RHS of \eqref{eq:lowbd_u2til_u2}.}}
We can do this as follows.
\begin{align}
\abs {\langle \tilde{u}_2, u_2 - \frac{ \bar{u}_2  }{  || \bar{u}_2 ||_2 } \rangle }  
&\leq  \norm{   u_2 - \frac{ \bar{u}_2  }{  || \bar{u}_2 ||_2 }   }_2 \quad 
(\text{Cauchy-Schwartz and since } \norm{\tilde{u}_2}_2 = 1) \nonumber  \\
&  = \norm{  u_2 - \bar{u}_2 + \bar{u}_2 - \frac{ \bar{u}_2 }{ ||  \bar{u}_2 ||_2 } }_2  \nonumber \\ 
&\leq \norm{u_2 - \bar{u}_2}_2 + \norm{\bar{u}_2 - \frac{ \bar{u}_2 }{ ||  \bar{u}_2 ||_2}}_2 \nonumber  \\
&\leq  \delta + \abs{ 1 - \frac{ 1 }{ || \bar{u}_2 ||_2 }  }  \quad (\text{ Using Claim \ref{claim:u1u2_proj_err}}) \nonumber  \\
&= \delta + \frac{ \abs{ || \bar{u}_2 ||_2 - 1 } }{ || \bar{u}_2 ||_2 } \nonumber  \\
&  \leq \delta + \frac{\delta}{1-\delta}  \quad 
(\text{since } || \bar{u}_2 ||_2 \in [1-\delta,1+\delta]  \text{ from Claim \ref{claim:u1u2_proj_err}}) \nonumber \\
&\leq 3 \delta  \quad  (\mbox{whenever } \; \delta \leq 1/2).  \label{eq:SecTermFinalBound}
\end{align}
Altogether, we conclude that if $\delta \leq 1/2$ then using \eqref{eq:FirstTermFinalBound}, 
\eqref{eq:SecTermFinalBound} in \eqref{eq:lowbd_u2til_u2}, we obtain
\begin{equation*}
| \langle \tilde{u}_2,  u_2 \rangle | \geq  1 - 5 \delta,  
\end{equation*}
and consequently, there exists $ \beta \in \{ -1, 1 \}$ such that  
\begin{equation} \label{eq:l2normbd_1}
|| \tilde{u}_2 - \beta  u_2  ||_2^2 = 2 - 2 \langle \tilde{u}_2,  \beta u_2 \rangle  \leq 10 \delta.
\end{equation}
\section{Proof of Lemmas from Section \ref{sec:proofOutline_thm_linf_ERO}} \label{sec:proofs_lems_linf_svdrs}
\subsection{Proof of Lemma \ref{linf_ero_step1}} 
\begin{proof}
Recall from the proof of Lemma \ref{lem:proj_analysis_Step_ERO} that 
\begin{equation} \label{eq:linf_step1_temp1}
	\tilde{u}_2 = \frac{1}{ || \bar{u}_1 ||_2 } \;  \hat{U}     \; 
\begin{bmatrix}
  \mp  \langle \hat{u}_2, \bar{u}_1  \rangle    \\
  \pm    \langle \hat{u}_1, \bar{u}_1  \rangle
\end{bmatrix} 
= \frac{1}{ || \bar{u}_1 ||_2 } \;  \hat{U}     \; 
\begin{bmatrix}
  0 & \mp1 \\
	\pm1 & 0
\end{bmatrix} \Uhat^T \bar{u}_1 
= \frac{1}{ || \bar{u}_1 ||_2 } \;  \hat{U}     \; 
\begin{bmatrix}
  0 & \mp1 \\
	\pm1 & 0
\end{bmatrix} \Uhat^T u_1, 
\end{equation}
where the last equality follows from the fact 
$\dotprod{\hat u_2}{\bar{u}_1} = \dotprod{\hat u_2}{u_1}$,  $\dotprod{\hat u_1}{\bar{u}_1} = \dotprod{\hat u_1}{u_1}$.
Let $O$ be any $2 \times 2$ orthogonal matrix, clearly, $UO$ also corresponds to the two largest left singular vectors 
of $\expec[H]$ (since $\sigma_1(\expec[H]) = \sigma_2(\expec[H])$). Denote $P = \Uhat - UO$. Plugging in \eqref{eq:linf_step1_temp1}, we obtain 
\begin{align}
	\tilde{u}_2 &= \frac{1}{ || \bar{u}_1 ||_2 } \;  \hat{U}     \; 
\underbrace{\begin{bmatrix}
  0 & \mp1 \\
	\pm1 & 0
\end{bmatrix}}_{D} \Uhat^T \bar{u}_1  [O^T U^T + P] u_1 \nonumber \\
&= \pm \det(O) u_2 + \underbrace{\frac{1}{ || \bar{u}_1 ||_2 } UO D P^T u_1}_{w_1} + \underbrace{\frac{1}{ || \bar{u}_1 ||_2 } PDO^T U^T u_1}_{w_2} 
+ \underbrace{\frac{1}{ || \bar{u}_1 ||_2 } PDP^T u_1}_{w_3} \nonumber \\
\Rightarrow \norm{\tilde{u}_2 \mp \det(O) u_2}_{\infty} &\leq \norm{w_1}_{\infty} + \norm{w_2}_{\infty} + \norm{w_3}_{\infty}. \label{eq:u2til_inf_bd_init}
\end{align}
Denoting 
\begin{equation*}
P = [p_1 \ p_2], \quad O = \begin{bmatrix}
  o_{11} & o_{12} \\
	o_{21} & o_{22}
\end{bmatrix}
\end{equation*}
we will now bound $\norm{w_1}_{\infty} , \norm{w_2}_{\infty} , \norm{w_3}_{\infty}$. 

To begin with, 
\begin{align}
w_1 &= \frac{1}{\norm{\bar{u}_1}_2} [u_1 \ u_2] O 
\begin{bmatrix}
  0 & \mp1 \\
	\pm1 & 0
\end{bmatrix} 
\begin{bmatrix}
  \dotprod{p_1}{u_1} \\
	\dotprod{p_2}{u_1}
\end{bmatrix} \nonumber \\
&= \frac{1}{\norm{\bar{u}_1}_2} [u_1 \ u_2] O 
\begin{bmatrix}
  \mp \dotprod{p_2}{u_1} \\
	\pm \dotprod{p_1}{u_1}
\end{bmatrix} \nonumber \\ 
&= \frac{1}{\norm{\bar{u}_1}_2} [u_1 \ u_2]
\begin{bmatrix}
  \mp o_{11}\dotprod{p_2}{u_1} \pm o_{12} \dotprod{p_1}{u_1} \\
	\mp o_{21}\dotprod{p_2}{u_1} \pm o_{22} \dotprod{p_1}{u_1}
\end{bmatrix} \nonumber \\
\Rightarrow \norm{w_1}_{\infty} &\leq 
\frac{1}{\norm{\bar{u}_1}_2} \left[(\abs{\dotprod{p_2}{u_1}} + \abs{\dotprod{p_1}{u_1}})(\norm{u_1}_{\infty} + \norm{u_2}_{\infty}) \right]. \label{eq:linf_step1_temp2}
\end{align} 
Clearly, $\norm{u_1}_{\infty} = \frac{1}{\sqrt{n}}$ and $\norm{u_2}_{\infty} \leq \frac{M - \alpha}{\norm{r - \alpha e}_2}$. 
Using H\"older's inequality,
\begin{equation*}
	\abs{\dotprod{p_i}{u_1}} \leq \norm{P}_{\max} \norm{u}_1 = \norm{P}_{\max}\sqrt{n}; \quad i=1,2.
\end{equation*}
Moreover, from Claim \ref{claim:u1u2_proj_err}, we know that $\norm{\bar{u}_1}_2 \geq 1 - \delta$ where we recall $\delta$ defined in \eqref{eq:sintheta_bd_1}. 
We saw in \eqref{eq:Delta_bd_ERO_outline} that $\Delta \leq \frac{\eta p ||r-\alpha e||_2 \sqrt{n}}{3} \Leftrightarrow \delta \leq 1/2$, which implies 
$\norm{\bar{u}_1}_{2} \geq 1/2$. Plugging these bounds in \eqref{eq:linf_step1_temp2}, we obtain
\begin{equation} \label{eq:linf_w1_bd}
  \norm{w_1}_{\infty} \leq 4\norm{P}_{\max}\left(1 + \frac{\sqrt{n}(M-\alpha)}{\norm{r-\alpha e}_2} \right).
\end{equation}
%

Next, we can bound $\norm{w_2}_{\infty}$ as follows.
\begin{align}
w_2 &= \frac{1}{\norm{\bar{u}_1}_2} [p_1 \ p_2] 
\begin{bmatrix}
  0 & \mp1 \\
	\pm1 & 0
\end{bmatrix} 
O^T
\begin{bmatrix}
  1 \\
	0
\end{bmatrix} \nonumber \\
&= \frac{1}{\norm{\bar{u}_1}_2} [p_1 \ p_2]
\begin{bmatrix}
  \mp o_{12} \\
	\pm o_{11}
\end{bmatrix} \nonumber \\ 
\Rightarrow \norm{w_2}_{\infty} &\leq \frac{2\norm{P}_{\max}}{\norm{\bar{u}_1}_2} \leq 4\norm{P}_{\max} \label{eq:w2_infty_bd}
\end{align}
since $\Delta \leq \frac{\eta p ||r-\alpha e||_2 \sqrt{n}}{3}$. 

We now bound $\norm{w_3}_{\infty}$. 
\begin{align}
 w_3 &= \frac{1}{\norm{\bar{u}_1}_2} [p_1 \ p_2] 
\begin{bmatrix}
  0 & \mp1 \\
	\pm1 & 0
\end{bmatrix} 
\begin{bmatrix}
  \dotprod{p_1}{u_1} \\
	\dotprod{p_2}{u_1}
\end{bmatrix} \nonumber \\ 
&= 
\frac{1}{\norm{\bar{u}_1}_2} [p_1 \ p_2] 
\begin{bmatrix}
  \mp \dotprod{p_2}{u_1} \\
	\pm \dotprod{p_1}{u_1}
\end{bmatrix} \nonumber \\
\Rightarrow \norm{w_3}_{\infty} 
&\leq \frac{1}{\norm{\bar{u}_1}_2} ((\abs{\dotprod{p_2}{u_1}} + \abs{\dotprod{p_1}{u_1}}) \norm{P}_{\max})  \nonumber \\
&\leq 4\norm{P}_{\max}^2 \norm{u_1}_1 = 4 \sqrt{n} \norm{P}_{\max}^2 
\quad (\text{since } \Delta \leq \frac{\eta p ||r-\alpha e||_2 \sqrt{n}}{3} \text{ and using H\"older's inequality}). \label{eq:w3_infty_bd}
\end{align}
Plugging \eqref{eq:linf_w1_bd}, \eqref{eq:w2_infty_bd}, \eqref{eq:w3_infty_bd} in \eqref{eq:u2til_inf_bd_init}, we obtain
\begin{align*}
\norm{\tilde{u}_2 \mp \det(O) u_2}_{\infty} 
&\leq 4\norm{P}_{\max}\left(2 + \frac{\sqrt{n}(M-\alpha)}{\norm{r-\alpha e}_2} \right) + 4 \sqrt{n} \norm{P}_{\max}^2 \\
&= 4\norm{\Uhat - UO}_{\max}\left(2 + \frac{\sqrt{n}(M-\alpha)}{\norm{r-\alpha e}_2} \right) + 4 \sqrt{n} \norm{\Uhat - UO}_{\max}^2.  
\end{align*}
The above bound is true for any orthogonal matrix $O$, it is not difficult to see that there exists an 
orthogonal matrix $O = O^*$ such that $\Uhat$ is ``aligned'' with $UO^*$. 
Indeed, for any orthogonal $O$, we first obtain via triangle inequality that 
\begin{align} \label{eq:linf_temp_4}
  \norm{\Uhat - UO}_2 \leq \norm{(I-UU^T) \Uhat}_2 + \norm{UU^T \Uhat - UO}_2 
	= \norm{\sin \Theta(\calR(U), \calR(\Uhat))}_2 + \norm{U^T \Uhat - O}_2.
\end{align}
Denoting $\tilde{U} \tilde{D} \tilde{V}^T$ to be the SVD of $U^T \Uhat$, we 
choose $O = O^* = \tilde{U} \tilde{V}^T$ (so $O^*$ is orthogonal). Denoting $\theta_p$ to be the 
principal angle between $\calR(U), \calR(\Uhat)$, we obtain   
\begin{align*}
  \norm{U^T \Uhat - O^*}_2 = \norm{I - \tilde{D}}_2 = 1- \cos \theta_p \leq \sin \theta_p = \norm{\sin \Theta(\calR(U), \calR(\Uhat))}_2.
\end{align*}
Plugging this in \eqref{eq:linf_temp_4} leads to 
\begin{align*}
\norm{\Uhat - UO^*}_2 
&\leq 2 \norm{\sin \Theta(\calR(U), \calR(\Uhat))}_2 \\
&\leq \frac{ 2\Delta }{  \eta p  \sqrt{n} \norm{r-\alpha e}_2 - \Delta } \quad \text{(see proof of Lemma \ref{lem:wedins_bd_ero})} \\
&\leq \frac{3\Delta}{\eta p \sqrt{n} \norm{r-\alpha e}_2}. \quad \text{(since } \Delta \leq \frac{\eta p ||r-\alpha e||_2 \sqrt{n}}{3} \text{)}
\end{align*}
\end{proof}

%
\subsection{Proof of Lemma \ref{lem:linf_ero_step2a}}
\begin{proof}
\begin{enumerate}
\item \textbf{(Bounding $\norm{E_1}_{\max}$)} Denoting $(\Uhat - UO^*)_i$ to be the $i^{th}$ column of $\Uhat - UO^*$, we note that 
\begin{align*}
E_1 &= -\expec[H]^2 (\Uhat - U O^*) \Sighat^{-2} \\
&= \sigma^2 U U^T (\Uhat^T - UO^*) \Sighat^{-2} \\
&= (u_1 u_1^T + u_2 u_2^T) 
\begin{bmatrix}
(\Uhat - UO^*)_1 \frac{\sigma^2}{\sighat_1^{2}} & (\Uhat - UO^*)_2 \frac{\sigma^2}{\sighat_2^{2}}
\end{bmatrix} \\
&= \begin{bmatrix}
u_1 \dotprod{u_1}{(\Uhat - UO^*)_1} \frac{\sigma^2}{\sighat_1^{2}} + u_2 \dotprod{u_2}{(\Uhat - UO^*)_1} \frac{\sigma^2}{\sighat_1^{2}} 
& u_1 \dotprod{u_1}{(\Uhat - UO^*)_2} \frac{\sigma^2}{\sighat_2^{2}} + u_2 \dotprod{u_2}{(\Uhat - UO^*)_2} \frac{\sigma^2}{\sighat_2^{2}}
\end{bmatrix}.
\end{align*}
For the $i^{th}$ entry in the first column ($i=1,\dots,n$), we have that 
\begin{align*}
&\abs{u_{1,i} \dotprod{u_1}{(\Uhat - UO^*)_1} \frac{\sigma^2}{\sighat_1^{2}} + u_{2,i} \dotprod{u_2}{(\Uhat - UO^*)_1} \frac{\sigma^2}{\sighat_1^{2}}} \\
&\leq \norm{u_1}_{\infty} \underbrace{\norm{(\Uhat - UO^*)_1}_2}_{\leq \norm{\Uhat - UO^*}_2} \frac{\sigma^2}{\sighat_1^{2}} 
+ \norm{u_2}_{\infty} \underbrace{\norm{(\Uhat - UO^*)_1}_2}_{\leq \norm{\Uhat - UO^*}_2} \frac{\sigma^2}{\sighat_1^{2}}.
\end{align*}
Since $\norm{u_1}_{\infty} = \frac{1}{\sqrt{n}}$, $\norm{u_2}_{\infty} \leq \frac{M-\alpha}{\norm{r-\alpha e}_2}$, and 
$\sighat_1 \geq \sigma - \Delta \geq \frac{2\sigma}{3}$, we obtain
\begin{align*}
&\abs{u_{1,i} \dotprod{u_1}{(\Uhat - UO^*)_1} \frac{\sigma^2}{\sighat_1^{2}} + u_{2,i} \dotprod{u_2}{(\Uhat - UO^*)_1} \frac{\sigma^2}{\sighat_1^{2}}} \\
&\leq \frac{9}{4} \norm{\Uhat - UO^*}_2 \left(\frac{1}{\sqrt{n}} + \frac{M-\alpha}{\norm{r-\alpha e}_2}\right) \\
&\leq \frac{27 \Delta}{4 \sigma} \left(\frac{1}{\sqrt{n}} + \frac{M-\alpha}{\norm{r-\alpha e}_2}\right)
\end{align*}
where in the last inequality, we used $\norm{\Uhat - UO^*}_2 \leq \frac{3\Delta}{\sigma}$ (from Lemma \ref{linf_ero_step1}). 
The same entry-wise bound holds for the second column, leading to the stated bound on $\norm{E_1}_{\max}$.

\item \textbf{(Bounding $\norm{E_2}_{\max}$)} Denoting the $(i,j)^{th}$ entry of $O^*$ by $o^*_{ij}$, we have that 
\begin{align*}
E_2 &= \expec[H]^2 UO^{*} (\sigma^{-2} I - \Sighat^{-2}) \\
&= -UU^T  \sigma^2 (U O^*) 
\begin{bmatrix}
\sigma^{-2} - \sighat_1^{-2} & 0 \\
0 & \sigma^{-2} - \sighat_2^{-2}
\end{bmatrix} \\
&= -\sigma^2 \begin{bmatrix}
(\sigma^{-2} - \sighat_1^{-2}) (u_1 o^{*}_{11} + u_2 o^{*}_{21}) & 
(\sigma^{-2} - \sighat_2^{-2}) (u_1 o^{*}_{12} + u_2 o^{*}_{22})
\end{bmatrix}.
\end{align*}
Hence the $i^{th}$ entry of the first column can be bounded as follows.
\begin{align*}
 \sigma^2 \abs{\sigma^{-2} - \sighat_1^{-2}} \abs{u_{1,i} o^{*}_{11} + u_{2,i} o^{*}_{12}} 
&\leq \frac{\Delta(2\sigma + \Delta)}{(\sigma - \Delta)^2} (\norm{u_1}_{\infty} + \norm{u_2}_{\infty}) \quad \text{(using $\abs{\sighat_1 - \sigma} \leq \Delta$)} \\
&\leq \frac{21\Delta}{4\sigma} \left(\frac{1}{\sqrt{n}} + \frac{M-\alpha}{\norm{r-\alpha e}_2}\right) \quad \text{(using $\Delta \leq \frac{\sigma}{3}$).}
\end{align*}
The same entry-wise bound holds for the second column, leading to the stated bound on $\norm{E_2}_{\max}$.

\item \textbf{(Bounding $\norm{E_3}_{\max}$)} We have that 
\begin{align*}
E_3 &= -\expec[H] Z \Uhat \Sighat^{-2} \\
&= -\sigma (u_1 v_1^T + u_2 v_2^T) Z 
\begin{bmatrix}
\sighat_1^{-2} \uhat_1 & \sighat_2^{-2} \uhat_2
\end{bmatrix} \\
&= -\sigma \begin{bmatrix}
u_1\dotprod{v_1}{Z \uhat_1}\sighat_1^{-2} + u_2\dotprod{v_2}{Z \uhat_1}\sighat_1^{-2}
 & u_1\dotprod{v_1}{Z \uhat_2}\sighat_2^{-2} + u_2\dotprod{v_2}{Z \uhat_2}\sighat_2^{-2}
\end{bmatrix}.
\end{align*}
The $i^{th}$ entry of the first column can be bounded as follows.
\begin{align*}
\sigma\abs{u_{1,i} \dotprod{v_1}{Z \uhat_1}\sighat_1^{-2} + u_{2,i}\dotprod{v_2}{Z \uhat_1}\sighat_1^{-2}} 
&\leq \frac{\sigma}{\sighat_1^2}(\norm{u_1}_{\infty} \abs{\dotprod{v_1}{Z \uhat_1}} + \norm{u_2}_{\infty} \abs{\dotprod{v_2}{Z \uhat_1}}) \\
&\leq \frac{\sigma}{\sighat_1^2} \norm{Z}_2 \left(\frac{1}{\sqrt{n}} + \frac{M-\alpha}{\norm{r-\alpha e}_2}\right) \quad \text{(using Cauchy-Schwarz)} \\
&\leq \frac{9\Delta}{4\sigma} \left(\frac{1}{\sqrt{n}} + \frac{M-\alpha}{\norm{r-\alpha e}_2}\right) \quad \text{(using $\est{\sigma}_1 \geq 2\sigma/3$)}.
\end{align*}
The same entry-wise bound holds for the second column, leading to the stated bound on $\norm{E_3}_{\max}$.

\item \textbf{(Bounding $\norm{E_4}_{\max}$)} We have that 
\begin{align*}
E_4 = Z \expec[H] \Uhat \Sighat^{-2} 
&= \sigma Z (u_1 v_1^T + u_2 v_2^T) [\sighat_1^{-2} \uhat_1 \ \sighat_2^{-2} \uhat_2] \\
&= \sigma \begin{bmatrix} 
Z u_1 \frac{v_1^T \uhat_1}{\sighat_1^2} + Z u_2 \frac{v_2^T \uhat_1}{\sighat_1^2}
& Z u_1 \frac{v_1^T \uhat_2}{\sighat_2^2} + Z u_2 \frac{v_2^T \uhat_2}{\sighat_2^2}
\end{bmatrix}.
\end{align*}
The magnitude of the $i^{th}$ entry of the first column can be bounded as 
\begin{align*}
\frac{\sigma}{\sighat_1^2} \abs{(v_1^T \uhat_1) (Z u_1)_i + (v_2^T \uhat_1) (Z u_2)_i} 
&\leq \frac{\sigma}{\sighat_1^2} (\norm{Z u_1}_{\infty} + \norm{Z u_2}_{\infty}) \quad \text{(Using Cauchy-Schwarz)} \\
&\leq \frac{9}{4\sigma} (\norm{Z u_1}_{\infty} + \norm{Z u_2}_{\infty}) \quad \text{(using $\est{\sigma}_1 \geq 2\sigma/3$)}.
\end{align*}
The same entry-wise bound holds for the second column, leading to the stated bound on $\norm{E_4}_{\max}$. 

\item \textbf{(Bounding $\norm{E_5}_{\max}$)} We can write
\begin{align*}
 E_5 = Z^2 \Uhat \Sighat^{-2} = Z^2 (\Uhat - UO^*)\Sighat^{-2} + Z^2 (UO^*)\Sighat^{-2} 
\end{align*}
which in turn implies $\norm{E_5}_{\max} \leq \norm{Z^2 (\Uhat - UO^*)\Sighat^{-2}}_{\max} + \norm{Z^2 (UO^*)\Sighat^{-2}}_{\max}$. 
We will now bound these terms individually, below. To begin with, 
\begin{align*}
\norm{Z^2 (\Uhat - UO^*)\Sighat^{-2}}_{\max} 
&\leq \norm{Z^2 (\Uhat - UO^*)\Sighat^{-2}}_2 \\
&\leq \Delta^2 \norm{\Uhat - UO^*}_2 \norm{\Sighat^{-2}}_2 \quad \text{(sub-multiplicativity of $\norm{\cdot}_2$)} \\
&\leq \frac{3\Delta^3}{\sigma} \norm{\Sighat^{-2}}_2 \quad \text{(from Lemma \ref{linf_ero_step1})} \\
&\leq \frac{27\Delta^3}{4\sigma^3} \quad \text{(using $\est{\sigma}_i \geq 2\sigma/3$)}.
\end{align*}
Additionally, denoting $O^* = [o^*_1 \ o^*_2]$, we can write 
\begin{align*}
Z^2 (UO^*) \Sighat^{-2} 
= \underbrace{[Z^2 u_1 \ \ Z^2 u_2]}_{\widetilde{A}} [\sighat_1^{-2} o^{*}_1 \ \ \sighat_2^{-2} o^{*}_2] 
= [\sighat_1^{-2} \widetilde{A} o^{*}_1 \ \ \sighat_2^{-2}\widetilde{A} o^{*}_2]
\end{align*}
which in turn implies 
\begin{align*}
\norm{Z^2 (UO^*) \Sighat^{-2}}_{\max} 
&\leq \max\set{\frac{\norm{\widetilde{A} o^{*}_1}_{\infty}}{\sighat_1^2}, \frac{\norm{\widetilde{A} o^{*}_2}_{\infty}}{\sighat_2^2}} \\
&\leq \frac{9}{4\sigma^2} \max\set{\norm{\widetilde{A} o^{*}_1}_{\infty}, \norm{\widetilde{A} o^{*}_2}_{\infty}} \quad \text{(using $\est{\sigma}_i \geq 2\sigma/3$)} \\
&\leq \frac{9}{4\sigma^2}(\norm{Z^2 u_1}_{\infty} + \norm{Z^2 u_2}_{\infty}).
\end{align*}
\end{enumerate}
\end{proof}

\subsection{Proof of Lemma \ref{lem:inf_conc_bd1}}
\begin{proof}
\begin{enumerate}
\item For a fixed $i$, consider $\sum_{j=1}^n Z_{ij} u_{1j}$ which is a sum of independent random variables. 
From the definition of $Z_{ij}$, we can see that $\abs{Z_{ij} u_{1j}} \leq \frac{2M}{\sqrt{n}}$ for each $j$. 
Moreover, $\expec[(Z_{ij} u_{1j})^2] =\frac{1}{n} \expec[Z_{ij}^2] \leq \frac{10p M^2}{3n}$ where the last inequality was 
shown in the proof of Lemma \ref{lem:specnorm_Z_ERO} (see \eqref{eq:Z_var_bound}). Hence $\sum_{i=1}^n \expec[(Z_{ij} u_{1j})^2] \leq \frac{10p M^2}{3}$ 
and it follows from Bernstein's inequality (see Theorem \ref{app:sec_bern_ineq}) and the union bound that
\begin{equation} \label{eq:ineq_bd_temp1}
  \prob(\norm{Z u_1}_{\infty} \geq t) \leq 2n \exp \left(-\frac{3t^2}{20 p M^2 + \frac{4t M}{\sqrt{n}}} \right).
\end{equation}
We want to choose $t$ such that
\begin{equation} \label{eq:ineq_bd_temp3}
3t^2 \geq 2\log n \left(20 p M^2 + \frac{4t M}{\sqrt{n}} \right) \Leftrightarrow 
3t^2 - 40p M^2 \log n - \frac{8tM \log n}{\sqrt{n}} \geq 0.
\end{equation} 
Since $t \geq 0$, the above inequality is achieved iff 
\begin{equation} \label{eq:ineq_bd_temp2}
		t \geq \frac{\frac{8M \log n}{\sqrt{n}} + \sqrt{\frac{64 M^2 \log^2 n}{n} + 480 pM^2 \log n}}{6}.
\end{equation}
Hence if 
\begin{equation*}
 \frac{64 M^2 \log^2 n}{n} \leq 480 pM^2 \log n \Leftrightarrow p \geq \frac{2\log n}{15 n}
\end{equation*}
holds, then the RHS of \eqref{eq:ineq_bd_temp2} is upper bounded by
\begin{align*}
 \frac{\frac{8M \log n}{\sqrt{n}} + \sqrt{960 pM^2 \log n}}{6} \leq \frac{\frac{8M \log n}{\sqrt{2\log n}} \sqrt{15 p} + 8M \sqrt{15p \log n} }{6} = \frac{2\sqrt{2} + 4}{3} M \sqrt{15 p \log n}.
\end{align*}
Therefore if $t \geq \frac{2\sqrt{2} + 4}{3} M \sqrt{15 p \log n}$ is satisfied, then it implies that 
\eqref{eq:ineq_bd_temp3} holds. Hence plugging $t = \frac{2\sqrt{2} + 4}{3} M \sqrt{15 p \log n}$ in \eqref{eq:ineq_bd_temp1} 
leads to the statement of the Lemma.

\item For a fixed $i$, consider now $\sum_{j=1}^n Z_{ij} u_{2j}$ which is also a sum of independent random variables. 
One can verify that $\abs{Z_{ij} u_{2j}} \leq 2M \sqrt{B}$ for each $j$, with $B$ as defined in the Lemma. 
Moreover, $\expec[(Z_{ij} u_{2j})^2] \leq B \expec[Z_{ij}^2] \leq \frac{10p M^2 B}{3}$ (see \eqref{eq:Z_var_bound}) and hence, 
$\sum_{i=1}^n \expec[(Z_{ij} u_{2j})^2] \leq \frac{10p M^2 B n}{3}$ and it follows from Bernstein's inequality 
(see Theorem \ref{app:sec_bern_ineq}) and the union bound that
\begin{equation} \label{eq:ineq_bd_temp21}
  \prob(\norm{Z u_2}_{\infty} \geq t) \leq 2n \exp \left(-\frac{3t^2}{20 p M^2 Bn + 4t \sqrt{B} M} \right).
\end{equation}
Proceeding identically as before, the reader is invited to verify that if $p \geq \frac{2 \log n}{15 n}$, then the stated bound on $\norm{Z u_2}_{\infty}$ 
is obtained by plugging $t = \frac{2\sqrt{2} + 4}{3} M \sqrt{15 p B n \log n}$ in \eqref{eq:ineq_bd_temp21}. 
\end{enumerate}
\end{proof}

\subsection{Proof of Lemma \ref{lem:inf_conc_bd2}}
\begin{proof}
\begin{enumerate}
\item For a given $l \in [n]$, we can write 
\begin{align}
\abs{(Z^2 u_1)_l} 
&= \norm{u_1}_{\infty} \abs{(Z^2 \underbrace{\frac{u_1}{\norm{u_1}_{\infty}}}_{\util_1})_l} \nonumber \\
&= \frac{1}{\sqrt{n}} \abs{(Z^2 \util_1)_l} \quad \text{(with $\norm{\util_1}_{\infty} = 1$)} \nonumber \\
&= \frac{\phi^2}{\sqrt{n}} \abs{\left((Z/\phi)^2 \util_1\right)_l} \label{eq:inf_conc_bd2_tem1}
\end{align}
for any $\phi > 0$. For any integer $m \geq 2$, and $1 \leq i,j \leq n$, we have that 
\begin{align*}
\expec\left[\left(\frac{\abs{Z_{ij}}}{\phi}\right)^m\right] 
&= \frac{1}{\phi^m}\left[\eta p\abs{r_i - r_j}^m (1-\eta p)^m 
+ (1-\eta)p \expec[\abs{N_{ij} - \eta p(r_i - r_j)}^m] + (1-p)\abs{\eta p (r_i - r_j)}^{m}\right] \\
&\leq \frac{1}{\phi^m} [\eta p M^m + p (2M)^m + (\eta p)^m M^m] \\
&\leq  \frac{1}{\phi^m}[2^{m+1} p M^m].
\end{align*}
Setting $\phi = 2^{3/2} \sqrt{pn} M$ leads to
\begin{equation*}
\expec\left[\left(\frac{\abs{Z_{ij}}}{\phi}\right)^m\right] 
\leq \frac{2^{m+1} p M^m}{(2)^{3m/2} p^{m/2} M^m n^{m/2}} 
= \frac{1}{2^{\frac{m}{2} - 1}}  \frac{1}{p^{\frac{m}{2} - 1}}  \frac{1}{n^{\frac{m}{2}}}.
\end{equation*}
It is easy to check that $p \geq \frac{1}{2n}$ implies 
$\expec\left[\left(\frac{\abs{Z_{ij}}}{\phi}\right)^m\right] \leq 1/n$. Therefore invoking 
Theorem \ref{thm:eldridge_randmat_pow} for $k = 2$, we obtain for the stated choices of 
$\xi,\mu,\kappa$ that $\abs{\left((Z/\phi)^2 \util_1\right)_l} \leq (\log n)^{2\xi}$ holds with probability 
at least $1-n^{-\frac{1}{4}(\log_{\mu} n)^{\xi-1} (\log_{\mu} e)^{-\xi}}$. Plugging this in \eqref{eq:inf_conc_bd2_tem1} 
with the expression for $\phi$, and using the union bound, we obtain the statement of the Lemma.

\item The steps are identical to those above, with the only difference being $\norm{u_2}_{\infty} = \frac{M-\alpha}{\norm{r-\alpha e}_2}$. 

\end{enumerate}
\end{proof}

\section{Proofs of Lemmas from Section \ref{sec:proofOutline_thm_svdn_l2_ERO}}  \label{sec:proof_app_N_Thm_l2_ERO}
\subsection{Proof of Lemma \ref{lem:Dbar_conc}}
Note that $\Dbar_{ii} = \sum_{j=1}^n \abs{H_{ij}}$ is the sum of independent random variables, where 
$\abs{H_{ij}} \leq M$ $\forall i,j$. Moreover, 
\begin{align*}
\sum_{j=1}^n \expec[\abs{H_{ij}}^2] 
= \eta p \sum_{j=1}^n (r_i - r_j)^2 + (1-\eta) \frac{np M}{2} 
\leq \eta p n M^2 + (1-\eta) np \frac{M}{2} 
= np A(\eta,M).
\end{align*}
Invoking Bernstein's inequality, it follows for any given $i$ and $t \geq 0$ that 
\begin{equation*}
 \prob(\abs{\Dbar_{ii} - \expec[\Dbar_{ii}]} \geq t) \leq 2\exp\left(-\frac{t^2}{2(np A(\eta,M) + \frac{M t}{3})} \right).
\end{equation*}
In order to get a high probability bound, we require $t^2 \geq 4\log n (np A(\eta,M) + \frac{Mt}{3})$. Since $t \geq 0$, 
this is equivalent to saying that 
\begin{equation} \label{eq:tcond_temp1}
t \geq \frac{\frac{4M}{3} \log n + \sqrt{(\frac{4M}{3} \log n)^2 + 16 pn \log n A(\eta,M)}}{2}.
\end{equation}
If $(\frac{4M}{3} \log n)^2 \leq 6 pn \log n A(\eta,M)$ or equivalently $p \geq \frac{M^2}{9 A(\eta,M)} \frac{\log n}{n}$, 
then the RHS of \eqref{eq:tcond_temp1} is bounded by 
\begin{equation*}
\sqrt{16 pn \log n A(\eta,M)} \frac{\sqrt{2} + 1}{2} = 2(\sqrt{2} + 1) \sqrt{pn \log n} \sqrt{A(\eta,M)}.
\end{equation*}
Hence taking $t = 2(\sqrt{2} + 1) \sqrt{pn \log n} \sqrt{A(\eta,M)}$ and applying the union bound, we obtain the statement of the Lemma.

\end{document}